\DeclareMathOperator*{\argmin}{arg\,min}
\DeclareMathOperator*{\argmax}{arg\,max}
\newtheorem{example}{Example} 
\newtheorem{theorem}{Theorem}
\newtheorem{lemma}[theorem]{Lemma}
\newtheorem{corollary}[theorem]{Corollary}
\newtheorem{definition}[theorem]{Definition}
\newcommand{\env}{\mathcal{E}}
\newcommand{\ospace}{\mathcal{O}}
\newcommand{\sspace}{\mathcal{S}}
\newcommand{\aspace}{\mathcal{A}}
\newcommand{\actions}{\mathcal{A}}
\newcommand{\transition}{\rho}
\newcommand{\xspace}{\mathcal{X}}
\newcommand{\yspace}{\mathcal{Y}}
\newcommand{\Ent}{\mathbb{H}}
\newcommand{\Prob}{\mathbb{P}}
\newcommand{\E}{\mathbb{E}}
\newcommand{\KL}{{\mathbf d}_{\rm KL}}
\newcommand{\I}{\mathbb{I}}
\newcommand{\Z}{\mathbb{Z}}
\definecolor{red}{RGB}{255,0,0}
\definecolor{blue}{RGB}{0,0,255}
\definecolor{green}{RGB}{0,255,0}
\definecolor{orange}{RGB}{255,165,0}
\definecolor{purple}{RGB}{128,0,128}
\definecolor{teal}{RGB}{0,128,128}
\definecolor{x}{RGB}{255,102,102}
\definecolor{darkgreen}{rgb}{0.0,0.5,0.0}
\newcommand{\saurabh}[1]{{\color{black} #1}}
\newcommand{\henrik}[1]{{\color{black} #1}}
\definecolor{lightgray}{RGB}{230, 230, 230}
\newmdenv[linecolor=lightgray, backgroundcolor=lightgray,innerbottommargin=15pt,innertopmargin=10pt,skipabove=20pt,splittopskip=20pt,splitbottomskip=15pt]{graybox}
\newenvironment{summary}
    {\begin{graybox}
    \addcontentsline{toc}{subsection}{Summary}
    \subsection*{Summary}
    }
    {
    \end{graybox}
    }
\title{Continual Learning as \\ Computationally Constrained Reinforcement Learning}
\author[1\footnote{Equal contribution. Correspondence to \texttt{szk@stanford.edu}.}]{Saurabh Kumar}
\author[1*]{Henrik Marklund}
\author[1]{Ashish Rao}
\author[2]{Yifan Zhu}
\author[1]{\\ Hong Jun Jeon}
\author[3]{Yueyang Liu}
\author[2,3]{Benjamin Van Roy}
\affil[1]{Department of Computer Science, Stanford University}
\affil[2]{Department of Electrical Engineering, Stanford University}
\affil[3]{Department of Management Science and Engineering, Stanford University}
\date{}
\begin{document}

\maketitle

\begin{abstract}
An agent that accumulates knowledge to develop increasingly sophisticated skills over a long lifetime could advance the frontier of artificial intelligence capabilities.  The design of such agents, which remains a long-standing challenge, is addressed by the subject of continual learning.  This monograph clarifies and formalizes concepts of continual learning, introducing a framework and tools to stimulate further research. 
 We also present a range of empirical case studies to illustrate the roles of forgetting, relearning, exploration, and auxiliary learning.

Metrics presented in previous literature for evaluating continual learning agents tend to focus on particular behaviors that are deemed desirable, such as avoiding catastrophic forgetting, retaining plasticity, relearning quickly, and maintaining low memory or compute footprints. 
In order to systematically reason about design choices and compare agents, a coherent, holistic objective that encompasses all such requirements would be helpful. To provide such an objective, we cast continual learning as reinforcement learning with limited compute resources. In particular, we pose the continual learning objective to be the maximization of infinite-horizon average reward subject to a computational constraint. Continual supervised learning, for example, is a special case of our general formulation where the reward is taken to be negative log-loss or accuracy. Among the implications of maximizing average reward are that remembering all information from the past is unnecessary, forgetting non-recurring information is not ``catastrophic,” and learning about how an environment changes over time is useful.

Computational constraints give rise to informational constraints in the sense that they limit the amount of information used to make decisions.  A consequence is that, unlike \henrik{in more common }%
framings of machine learning in which per-timestep regret vanishes as an agent accumulates information, the regret experienced in continual learning typically persists.  
\henrik{Related to this is that even in stationary environments, informational constraints can incentivize perpetual adaptation. }
Informational constraints also give rise to the familiar stability-plasticity dilemma, which we formalize in information-theoretic terms.
\end{abstract}

\clearpage

\tableofcontents

\clearpage

\section{Introduction}\label{sec:introduction}

\henrik{Continual learning remains a long-standing challenge. Success requires continuously ingesting new knowledge while retaining old knowledge that remains useful. Consider an AI personal assistant which may need to adapt to a user's evolving needs and preferences as well as learn new skills as new challenges arise. For example, an assistant that manages a user's schedule would need to retain information about constant habits while adapting to changing events (e.g., changing work hours, travel arrangements, etc). As another example, consider an ``AI scientist" used to accelerate scientific discovery. This system would need to build upon past knowledge by incorporating evolving data collected over time, reformulating and updating hypotheses, and designing new types of experiments with evolving tools and methodology. More generally, an agent needs to efficiently accumulates knowledge to develop increasingly sophisticated skills over a long lifetime~\citep{HADSELL20201028,khetarpal2022towards,ring2005toward,thrun1998learning}.} 

\henrik{Existing incremental machine learning techniques fall short of these ambitions of continual learning, as a major challenge has been to develop scalable systems that judiciously control what information they ingest, retain, or forget.}
Indeed, catastrophic forgetting (ejecting useful information from memory) and implasticity (forgoing useful new information) are recognized as obstacles to effective continual learning.

\henrik{When applying machine learning techniques on stationary data distributions, it is common to view a machine learning algorithm as acquiring knowledge about a fixed latent variable.}
The aim is to develop methods that quickly learn about the latent variable, which we will refer to as the {\it learning target}, as data accumulates. For instance, in supervised learning, the learning target could be an unknown function mapping inputs to labels \henrik{\citep{russell2016artificial}}. In the reinforcement learning literature, the learning target is often taken to be the unknown transition matrix of a Markov decision process\henrik{~\citep{sutton2018reinforcement}}. \henrik{In these settings}, an agent can be viewed as driving per-timestep regret -- the performance shortfall relative to what could have been if the agent began with perfect knowledge of the learning target -- to zero.  If the agent is effective, regret vanishes as the agent accumulates knowledge.  When regret becomes negligible, the agent is viewed as ``done'' with learning.

In contrast, continual learning addresses environments in which there may be no natural fixed learning target and an agent ought to never stop acquiring new knowledge.  %
\henrik{To} perform well, an agent must constantly adapt its behavior in response to evolving patterns.

There is a large gap between the state of the art in continual learning and what may be possible, making the subject ripe for innovation.  This difference becomes evident when examining an approach in common use, which entails periodically training a new model from scratch on buffered data.  To crystalize this gap, consider as a hypothetical example an agent designed to \henrik{predict electricity prices. Because more recent data may have stronger predictive power than older data, the agent's neural network is trained continuously on new data. To prioritize recent data, at} the end of each month, this agent trains a new neural network model from scratch on data observed during the previous twelve months. This new model replaces the old model and governs predictions over the next month.  This agent serves as a simple baseline that affords opportunity for improvement. For example, training each month's model from scratch is likely wasteful since it does not benefit from computation invested over previous months.  Further, by limiting knowledge ingested by each model to that available from data acquired over the preceding twelve months, the agent forgoes the opportunity to acquire complex skills that might only be developed over a much longer duration.

While it ought to be possible to design more effective continual learning agents, how to go about that or even how to assess improvement remains unclear.  Work on deep learning suggests that agent performance improves with increasing sizes of models, datasets, and inputs.  However, computational resource requirements scale along with these and become prohibitive.  A practically useful objective must account for computation.  \henrik{There are two primary goals of this monograph: (1) propose such an objective for continual learning and (2) understand key factors to consider in designing a performant continual learning agent. Rather than offer definitive methods, we aim to stimulate research toward identifying them.}

Metrics presented in previous literature for evaluating continual learning agents tend to focus on particular behaviors that are deemed desirable, such as avoiding catastrophic forgetting, retaining plasticity, relearning quickly, and maintaining low memory or compute footprints~\citep{ashley2021does,dohare2021continual,fini2020online,kirkpatrick2017overcoming}. For instance, the most common evaluation metric measures prediction accuracy on previously seen tasks to study how well an agent retains past information~\citep{wang2023comprehensive}. However, the extent to which each of these behaviors matters is unclear. In order to systematically reason about design decisions and compare agents, a coherent, holistic objective that reflects and encompasses all such requirements would be helpful. 

In this monograph, we view continual learning under the lens of reinforcement learning \citep{agarwal2019reinforcement,bertsekas1996neuro,meyn2022control,sutton2018reinforcement,szepesvari2010algorithms} to provide a formalism for what an agent is expected to accomplish.  Specifically, we consider maximization of infinite-horizon average reward subject to a computational constraint.  Average reward emphasizes long-term performance, which is suitable for the purpose of designing long-lived agents.  The notion of maximizing average reward generalizes that of online average accuracy, as used in some literature on continual supervised learning \citep{cai2021online,ghunaim2023real,hammoud2023rapid,hu2022drinking,lin2021clear,prabhu2023online,xu2022revealing}.

As reflected by average reward, an agent should aim to perform well on an ongoing basis in the face of incoming data it receives from the environment.  Importantly, remembering all information from the past is unnecessary, and forgetting non-recurring information is not ``catastrophic.''  An agent can perform well by remembering the subset that continues to remain useful.  Although our objective relaxes the requirement of retaining all information to only retaining information useful in the future, even this remains difficult, or even impossible, in practice.  Computational resources limit an agent's capacity to retain and process information.  The computational constraint in our continual learning objective reflects this gating factor.  This is in line with recent work highlighting the need to consider computational costs in continual learning~\citep{prabhu2023computationally}.

The remainder of this monograph is organized as follows. In Section 2, we introduce our framing of continual learning as reinforcement learning with an objective of maximizing average reward subject to a computational constraint.  In Section 3, we introduce information-theoretic tools inspired by \cite{jeon2023informationtheoretic,lu2021reinforcement} to offer a lens for studying agent behavior and performance.  
 In Section 4, we formalize the concepts of stability and plasticity to enable a coherent analysis of trade-offs between these conflicting goals.  
 In Section 5, we interpret in information-theoretic terms what it means for an agent to perpetually learn rather than drive regret to zero and be ``done'' with learning.  This line of thought draws inspiration from \citet{Abel2023definition}, which defines a notion of convergence and associates continual learning with non-convergence.
 Finally, in Section 6, to highlight the implications of our continual learning objective, we study simulation results from a set of case studies.

\section{An Objective for Continual Learning} \label{sec:objective}

Continual learning affords the never-ending acquisition of skills and knowledge \citep{Ring:1994}.  An agent operating over an infinite time horizon can develop increasingly sophisticated skills, steadily building on what it learned earlier.  On the other hand, due to computational resource constraints, as such an agent observes an ever-growing volume of data, it must forgo some skills to prioritize others.  Designing a performant continual learning agent requires carefully trading off between these considerations.  A suitable mathematical formulation of the design problem must account for that.  While many metrics have been proposed in the literature, they have tended to focus on particular behaviors that are deemed desirable.  A coherent, holistic objective would help researchers to systematically reason about design decisions and compare agents.  In this section, we formulate such an objective in terms of computationally constrained reinforcement learning (RL).

The subject of RL addresses the design of agents that learn to achieve goals through interacting with an environment \citep{sutton2018reinforcement}.  As we will explain, the {\it general} RL formulation subsumes the many perspectives that appear in the continual learning literature.  In this section, we first review RL and its relation to continual learning.  We then highlight the critical role of computational constraints in capturing salient trade-offs that arise in continual learning.  Imposing a computational constraint on the general RL formulation gives rise to a coherent objective for continual learning.  Finally, we reflect on several implications of framing continual learning in this manner.

\henrik{As a running example throughout this section, we will consider the agent from Section~\ref{sec:introduction} that each day predicts the next day's average electricity price. These predictions may help in tasks like trading energy stocks or allocating energy resources. This example will help contextualize the concepts introduced, including our formalization of the environment, agent, and reward function, and the notion of computational constraint.}

\saurabh{
\subsection{Background Knowledge}\label{sec:background}
This monograph draws upon terminology and background from probability, information theory, reinforcement learning, and neural networks. While the monograph is relatively self-contained with respect to information theory and machine learning, it does assume familiarity with the basics of measure-theoretic probability. Here we provide additional references that can be helpful to the reader. 
\begin{itemize}
    \item Probability: ~\citet{kolmogorov2018foundations} and Section 3.1 of~\citet{jeon2024information}.
    \item Information Theory: Chapters 2 and 10 of~\citet{cover2006elements}.
    \item Reinforcement Learning: Chapters 1--3 and Chapter 6 of~\citet{sutton2018reinforcement}.
\end{itemize}
}

\subsection{Continual Interaction}\label{sec:continual_interaction}

We consider continual interaction across a general agent-environment interface as illustrated in Figure \ref{fig:agent-environment-interface}.  At each time step $t=0,1,2,\ldots$, an agent executes an action $A_t$ and then observes a response $O_{t+1}$ produced by the environment. Actions take values in an action set $\aspace$.  Observations take values in an observation set $\ospace$.  The agent's experience through time $t$ forms a sequence $H_t = (A_0, O_1, A_1, O_2, \ldots, A_{t-1}, O_t)$, which we refer to as its {\it history}.  We denote the set of possible histories by $\mathcal{H} =  \cup_{t=0}^\infty (\aspace \times \ospace)^t$.

\begin{figure}
\centering
\includegraphics[width=3.75in]{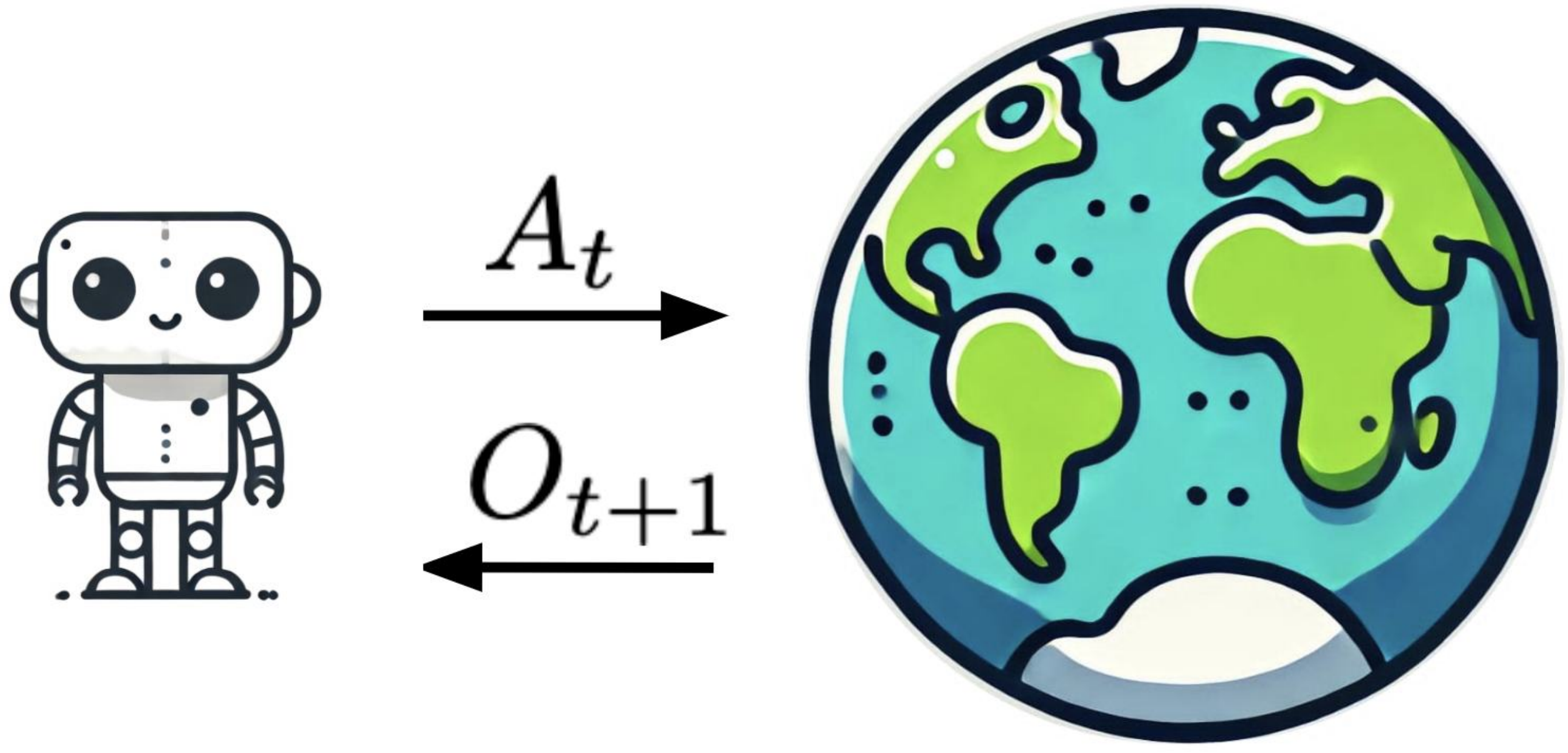}
\caption{The agent-environment interface.}
\label{fig:agent-environment-interface}
\end{figure}

\henrik{All random variables are defined with respect to a common probability space $(\Omega, \mathcal{F}, \mathbb{P})$.}

An environment is characterized by a triple $\env = (\aspace, \ospace, \transition)$, where $\rho$ is an observation probability distribution, such that $\rho(\cdot | H_t, A_t)$ is the distribution over next observations, given history $H_t$ and action $A_t$. We can think of the environment as generating the next observation $O_{t+1}$ by sampling from $\rho(\cdot | H_t, A_t)$.

The agent generates each action $A_t$ based on the previous history $H_t$.  This behavior is characterized by a policy $\pi$, for which $\pi(\cdot|H_t)$ is a probability measure over actions. We refer to this policy, which characterizes the agent's behavior, as the {\it agent policy}. While the agent may carry out sophisticated computations to determine each action, \henrik{the agent can also be described as simply drawing samples $A_t$ from $\pi(\cdot |H_t)$.} 

\henrik{
Formally, we will denote the probability measure over histories induced by $\rho$ and $\pi$ as $\mathbb{P}_{\rho, \pi}$. Thus, $\rho$ and $\pi$ satisfy $\rho(\cdot|H_t,A_t) = \Prob_{\rho, \pi}(O_{t+1} \in \cdot|H_t,A_t)$ and $\pi(\cdot|H_t) = \Prob_{\rho, \pi}(A_t \in \cdot| H_t)$, respectively. When $\rho$ or $\pi$ are clear from context, we will drop either of them, and just write $\Prob_\pi(O_{t+1} \in \cdot|H_t,A_t)$ or $\Prob(O_{t+1} \in \cdot|H_t,A_t)$.  %
}

\henrik{%
We take event probabilities to represent uncertainty from the agent designer's perspective. Let $\tilde{\ospace} \subseteq \ospace$ be a set of observations. Then, $\mathbb{P}(O_{t+1} \in \tilde{\ospace} |H_t,A_t)$ is the designer's subjective assessment of the chance that the next observation will fall in the set $\tilde{\ospace}$ conditioned on $H_t$ and $A_t$. Similarly, let $\tilde{\aspace} \subseteq \aspace$  be a set of actions. Then, $\mathbb{P}_\pi(A_{t+1} \in \tilde{\aspace} |H_t)$ is the designer's subjective assessment of the chance that the next action will fall in the set $\tilde{\aspace}$ assuming the agent implements $\pi$ and conditioned on history $H_t$. }With some abuse of notation, as shorthand, with singleton sets $\tilde{\aspace} = \{a\}$ and $\tilde{\ospace} = \{o\}$, we write $\pi(a|h) \equiv \pi(\{a\}|h)$ and $\rho(o|h,a) \equiv \rho(\{o\} | h,a)$.

\henrik{In order to illustrate how this formalism can express practical agent-environment interactions, we now apply our formalism of environment and agent to the electricity price prediction example from the start of this section.
\begin{itemize}
    \item \textbf{Observation space.} The observation space is $\mathcal{O} = \mathbb{R}^d$, where $d \in \mathbb{Z}_{++}$, the set of positive integers. For $o \in \mathcal{O}$, each component of the observation vector is a measurement of a different quantity. These quantities may include the current spot price, weather and climate data (temperature, humidity, wind speed, rainfall amount, etc), and energy supply data (power plant functionality, fuel prices, renewable energy availability, etc).
    \item \textbf{Action space.} The action space is $\mathcal{A} = \mathbb{R_+}$. The action $A_t$ corresponds to the agent's prediction about the stock price at the next timestep $t+1$. In a more sophisticated application, the action may be whether or not to buy or sell electricity at a certain price.
    \item \textbf{Observation probability function.} The observation probability function $\rho$ represents the agent designer's beliefs about how observations are generated. This function is necessary to make it a coherent decision-making problem. In any real-world application, however, this function typically cannot be written down. Thus, it is primarily when studying an algorithm either theoretically or in simulation that $\rho$ is written down explicitly.
    \item \textbf{Agent policy.} Consider an agent using a recurrent neural network (RNN) to predict the next day's price. At each timestep, the RNN takes as input the current hidden state and the most recent observation. The network outputs a prediction of the next day's price. To train this network, stochastic gradient descent (SGD) is used to continuously update the RNN. Then, for any finite history $H_t$, the RNN together with the training procedure, induces a distribution over predictions. That is, the RNN, together with the training procedure, induces the agent policy $\pi(\cdot | H_t)$.
\end{itemize}}

The following simple example offers a \henrik{more} concrete instantiation of our formulation and notation.
\begin{example}
\label{ex:coins-fixed-biases}
{\bf (coin tossing)} Consider an environment with two actions $\aspace = \{1,2\}$, each of which identifies a distinct coin.  At each timestep $t$, an action $A_t$ determines which coin to toss next.  Observations are binary, meaning $\ospace = \{0,1\}$, with $O_{t+1}$ indicating whether the selected coin lands heads.  The coin biases $(p_1, p_2)$ are independent but initially unknown, and the designer's uncertainty prescribes prior distributions.  These environment dynamics is characterized by a function $\rho$ for which \henrik{$\rho(1|h_t,a) = \mathbb{E}[p_a | H_t = h_t]$ is the probability that coin $a$ lands heads, given some history $h_t$}. %
\end{example}
As a concrete special case, suppose the prior distribution over each coin's bias is uniform over the unit interval.  Then, at each timestep $t$, each bias $p_a$ is distributed $\mathrm{beta}(\alpha_{t,a},\beta_{t, a})$, with parameters initialized at $\alpha_{0,a}=\beta_{0,a}=1$ and updated according to
$(\alpha_{t+1,A_t}, \beta_{t+1,A_t}) = (\alpha_{t,A_t} + O_{t+1}, \beta_{t,A_t} + 1 - O_{t+1})$.  Hence, $\rho(1|H_t,A_t) = \alpha_{t,A_t} / (\alpha_{t,A_t} + \beta_{t,A_t})$.

\subsubsection*{Characterizing the Environment}
The form of interaction we consider is fully general \henrik{in the sense that} each observation can exhibit {\it any} sort of dependence on history.  Notably, we do not assume that observations identify the state of the environment.  While such an assumption -- \henrik{that the sequence of observations obeys the Markov property} -- %
is common to much of the RL literature \citep{sutton2018reinforcement}, a number of researchers have advocated for the general action-observation interface, especially when treating design of generalist agents for complex environments \citep{daswani2013q,daswani2014feature,dong2022simple,hutter2007universal,lu2021reinforcement,mccallum1995instance,Ring:1994,ring2005toward}. \henrik{This is in contrast to making strong assumptions about the environment dynamics, such as assuming observations obey the Markov property, which may be useful in more specialized and narrow applications.}

\henrik{There are two equivalent ways to characterize the environment. First, in the Bayesian framing, there is an observation probability function $\rho_\theta$, typically parameterized by some unknown parameter $\theta \in \Theta$. There are two primitives: a prior over $\theta \in \Theta$, and a ``model" that prescribes an observation probability function $\rho_\theta(O_{t+1} \in \cdot | H_t, A_{t+1})$ for each realization of $\theta$. Because $\theta$ is a random variable, so is $\rho_\theta$. %
Alternatively, we can characterize the same environment through a single primitive: the so-called \textit{posterior predictive distribution}. In particular, observations generated by a function $\rho_\theta$ that is a random variable are indistinguishable from observations generated by a known function $\rho$ (not a random variable) defined as $\rho(\cdot|H_t, A_t) = \E[\rho_\theta(\cdot|H_t,A_t) | H_t, A_t]$. While there is no unique decomposition into a prior and likelihood, the posterior predictive distribution is unique. In Section \ref{se:continual-vs-convergent}, we discuss how some environments have a more ``natural" decomposition into a prior and likelihood compared to other environments. In our most general formulation above, we opt to characterize the environment through a single primitive $\rho$.      %

To make this concrete, consider the environment of Example \ref{ex:coins-fixed-biases}. This environment is characterized by a unknown parameter $\theta = (p_1, p_2)$ and observation probability function $\rho_{\theta} (1| h, a) = p_a$ for each $h \in \mathcal{H}$ and $a \in \mathcal{A} = \{1, 2\}$. The prior distribution over $\theta$ induces a prior distribution over $\rho_{\theta}$.  This environment can alternatively be characterized by the known (not a random variable) function $\rho$ for which $\rho(1|H_t, A_t) = \E[\rho_\theta(1|H_t,A_t)|H_t,A_t]$. 
If the prior distribution over coin biases is uniform then $\rho(1|h, a)=\frac{\alpha_{a,t}}{\alpha_{a,t} + \beta_{a,t}} = 1 - \rho(0|h,a)$ for each $h$ and $a$.  
}

\henrik{
Our definition of an environment ($\env = (\aspace, \ospace, \rho)$) is compatible with any modeling choice. For instance, sometimes it is helpful to model the environment as a \textit{partially observable Markov Decision Process} (POMDP)~\citep{russell2016artificial}. In the POMDP framing, we think of the environment as characterized by a latent MDP that generates observations. While we will not elaborate on it here, any POMDP can be characterized by a tuple $\env = (\aspace, \ospace, \rho)$ and is therefore compatible with our definition. Finally, we note that there will always be many equivalent ways to characterize an environment. We chose our definition of an environment because it is minimal and does not commit to any specific ``view'' of how observations are generated. In particular, our definition does not pose any specific latent structure. }

\subsection{Average Reward}\label{sec:avg_reward}

The agent designer's preferences are expressed through a reward function $r : \mathcal{H} \times \aspace \times \ospace \rightarrow \mathbb{R}$. The agent computes a reward $R_{t+1} = r(H_t,A_t,O_{t+1})$ at each timestep.  As is customary to the RL literature, this reward indicates whether the agent is achieving its purpose, or goals.

A coherent objective requires trading off between short and long term rewards because decisions expected to increase reward at one point in time may reduce reward at another.  As a continual learning agent engages in a never-ending process, a suitable objective ought to emphasize {\it the long game}.  In other words, a performant continual learning agent should attain high expected rewards over asymptotically long horizons.  This behavior is incentivized by an average reward objective function:
\begin{equation}\label{eq:avg_reward_objective}
\overline{r}_\pi = \liminf_{T\rightarrow\infty} \E_\pi\left[\frac{1}{T} \sum_{t=0}^{T-1} R_{t+1} \right].
\end{equation}
\henrik{
The subscript $\pi$ of $\E_\pi$ indicates that the expectation is calculated under $\Prob_\pi$. That is, the expectation integrates with respect to probability distributions prescribed \henrik{by $\pi$ and $\rho$.}}
We will frame the goal of agent design to be maximizing average reward subject to a computation constraint that we will later introduce and motivate.

\henrik{We note that we use the \textit{limit inferior} (denoted by $\liminf$) rather than the limit (which would be denoted by $\lim$) in the objective function. The reason is that the sequence of partial sums of rewards may oscillate and therefore not converge (or diverge). In the case when the sequence is oscillating, $\liminf$ will pick out the smallest value of the limits of all convergent subsequences. Thus, we can think of the objective function as computing a kind of ``worst-case'' long-term average reward achieved by the agent policy.}

\henrik{In the context of our electricity prediction example at the beginning of this section, one possible reward function could be $r(H_t, A_t, O_{t+1}) = -(p_{t+1} - A_t)^2$, the negative squared error between the agent's prediction and the electricity price $p_{t+1}$ at the next timestep. Note that in some applications, the reward function may not be differentiable. When using SGD to train a neural network, a proxy function may be used that is different from the reward function.}

\henrik{
\subsubsection*{Implications of Average Reward}
}

While some work on continual learning has advocated for average reward as an objective \citep{chen2022you,sharma2021autonomous}, discounted reward has attracted greater attention \citep{khetarpal2022towards,ring2005toward}.  Perhaps this is due to the technical burden associated with the design and analysis of agents that aim to maximize average reward.  We find average reward to better suit the spirit of continual learning as it emphasizes long-term performance.  Continual learning affords the possibility of learning very sophisticated skills that build on experience accumulated over a long lifetime, and emphasizing long-term behavior incentivizes the design of agents that learn such skills if possible.

The continual learning literature has spawned a multitude of other metrics for assessing agent performance. Rather than performing a holistic evaluation, prominent criteria tend to focus on detecting particular behaviors, \henrik{such as the ability to more quickly learn to perform a new task, the ability to generalize previously learned knowledge to new tasks, and the ability to perform an old task well after learning a new task~\citep{vallabha2022lifelong}}. %
Average reward subsumes such criteria when they are relevant to online operation over an infinite time horizon. \henrik{For instance, if knowledge from a previous task transfers to a future task, boosting the agent's initial performance on that task, average reward will increase. On the other hand, the ability to perform well on an old task that will never reappear is irrelevant, and average reward is appropriately insensitive to that.}

\henrik{
Assessing performance in terms of average reward has three important implications for desired agent behavior:
\begin{enumerate}
    \item The agent benefits from remembering information from previous tasks only to the extent that this information helps decision making at future timesteps. This may seem obvious, but we emphasize this implication since one of the most common metrics in the continual learning literature is performance on previous tasks. This metric may emphasize the agent's ability to remember past information more than necessary.
    \item Even if the agent forgets recurring information, the agent can do well if it can relearn quickly when needed. This is intuitive: a competent software engineer who forgets a programming language can, when required for a new year-long project, quickly relearn it and successfully complete the project.
    \item The agent can benefit from predicting changing patterns. Modeling dynamics can help the agent decide what skills to retain. For instance, certain recurrence is periodic and therefore predictable, like queries about ice cream during the summer. Something not recurring may also be predictable such as when an elected official finishes their term and steps out of the limelight. An agent could in principle learn to predict future events to prioritize skills. 
\end{enumerate}

Our theoretical and empirical analyses will further elucidate these implications of the average reward objective.

}

\henrik{
\subsubsection*{Implications on Finite Time Behavior}

In some environments, average reward may seem like a strange choice of objective for assessing agent performance. For instance, in the environment described in Example \ref{ex:coins-fixed-biases}, two policies that eventually converge on the better coin will achieve maximum average reward, \textit{regardless} of how quickly they converge. The fact that our objective cannot discriminate between these policies poses a limitation. 

It is therefore common to supplement the average reward objective function with additional criteria that are sensitive to finite-time performance. For instance, under the Blackwell optimality criterion, one policy is superior to another policy if there exists some $\lambda \in [0,1)$, such that for all discount factors greater than $\lambda$ the policy is optimal~\citep{dekker1988average,dewanto2020average}. This kind of objective is very selective and will take into account finite behavior. In fact, it will also imply average reward optimality.

While additional criteria may sometimes be useful, we suspect that in many continual learning environments, such criteria may not be necessary. We provide two reasons. First, in continual learning, we consider environments in which an agent ought to continually acquire new information rather than effectively complete its learning after some time.  As a result, our objective function will tend to distinguish between agents that learn quickly and agents that learn slowly; learning slowly will reduce average reward. As a concrete example, consider the following modification to Example \ref{ex:coins-fixed-biases} in which the agent is incentivized to continually learn.
\begin{example}
\label{ex:coins-evolving-biases}
{\bf (coin replacement)} Recall the environment of Example \ref{ex:coins-fixed-biases}, but suppose that, at each timestep $t$, before action $A_t$ is executed, each coin $a$ is replaced by a new coin with some fixed probability $q_a$.  Coin replacement events are independent, and each new coin's bias is independently sampled from its prior distribution. With this change, biases of the two available coins can vary over time.  Hence, we introduce time indices and denote biases by $(p_{t,1}, p_{t,2})$.
\end{example}
In this environment, agents that learn quickly attain higher average reward. In particular, there is benefit to quickly learning about coin biases and capitalizing on that knowledge for as long as possible before the coins are replaced. Our objective function appropriately incentivizes this kind of behavior.

That said, even in this modified coin-flipping environment, our objective function does not distinguish between certain policies that we may wish it would.  For instance, consider two agents that are identical except that the second agent \textit{only} takes action $1$ for the first $10,000$ timesteps, and therefore performs very poorly. Both agents will achieve the same average reward since the reward for any finite sequence of timesteps will have no impact on the longterm average reward. As we mentioned earlier, one solution to this limitation is to introduce a second criterion. However, we hypothesize that due to capacity constraints on the agent, a secondary criterion may be unnecessary. We elaborate on this in Appendix \ref{app:objective}.

}

\henrik{We apply our formalism of environment, agent, and reward function to some simple examples of continual learning agents in Appendix~\ref{app:cl_agents}.}

\subsection{Computational Constraints}\label{sec:computational_constraints}

\henrik{A continual learning agent processes an endless data stream.} An agent with bounded computational resources, regardless of scale, cannot afford to query every data point in its history $H_t$ at each timestep because this dataset grows indefinitely.  Instead, such an agent must act based on a more concise representation of historical information.  In complex environments, this representation will generally forgo knowledge, some of which could have been used given greater computational resources.  The notion that more compute ought to always be helpful in complex environments may be intuitively obvious.  Nevertheless, it is worth pointing out corroboration by extensive empirical evidence from training large models on text corpi, where performance improves steadily along the range of feasible compute budgets  \citep{Brown202Language,hoffmann2022training,rae2022scaling,smith2022using,thoppilan2022lamda}. \henrik{Note that if data is limited, more computation will likely come with significant diminishing returns; that is, after some point, additional computational resources add very little benefit. In this monograph, we have in mind very complex environments where data is abundant relative to the complexity of the agent.}

In order to reflect the gating nature of computational resources in continual learning, we introduce a per-timestep computational constraint, as considered, for example, by \cite{bagus2022supervised,lesort2020continual,prabhu2023computationally}.  This specializes the continual learning problem formulation of \cite{Abel2023definition}, which recognizes that constraints on the set of feasible agent policies give rise to continual learning behavior but does not focus on computation as the gating resource.  As an objective for continual learning we propose maximization of average reward subject to this constraint on per-timestep computation.  We believe that such a constraint is what gives rise to salient challenges of continual learning. 
\henrik{
To see this, let us compare some consequences of having unlimited versus limited compute resources.
\begin{enumerate}
    \item \textbf{Unlimited computational resources.} If given an unlimited computational budget (and assuming unlimited memory), an agent can both remember everything and continually ingest new information.  With unlimited computational budgets, catastrophic forgetting as it occurs with neural networks could be avoided by just retraining the whole network from scratch after receiving every data point. This would also remove any loss of plasticity experienced over time. Even better, with unlimited resources the agent could perform perfect Bayesian inference.
    \item \textbf{Limited computational resources.} With limited resources, and assuming that the environment is ``changing" over time, forgetting old knowledge in order to ingest new knowledge can improve the agent's performance. The agent will be incentivized to retain and acquire information that will help it do well in the future and forget information that will not recur or will not be useful until the far future. 
\end{enumerate}
}

Constraining per time step compute gives rise to our formulation of computation-constrained reinforcement learning:
\begin{equation}\label{eq:objective}
\begin{split}
\max_\pi & \quad \overline{r}_\pi \\
\text{s.t.} & \quad \text{computational constraint}.
\end{split}
\end{equation}
of practical computational constraints and which ones are binding vary with prevailing technology and agent designs.  For example, if calculations are carried out in parallel across an ample number of processors, it could be the channels for communication among them that pose a binding constraint on overall computation.  Or, if agents are designed to use a very large amount of computer memory, that can become the binding constraint.  Rather than study the capabilities of contemporary computer technologies to accurately identify current constraints, our above formulation intentionally leaves the constraint ambiguous.  For the purposes of theoretical analysis and case studies presented in the remainder of the monograph, we will usually assume all computation is carried out on a single processor that can execute a fixed number of serial floating point operations per timestep and that this is the binding constraint.  We believe that insights generated under this assumption will largely carry over to formulations involving other forms of computational constraints.

While maximizing average reward subject to a computational constraint offers a coherent objective, an exact solution even for simple, let alone complex, environments is likely to be intractable.  Nevertheless, a coherent objective is valuable for assessing and comparing alternative agent designs.  \henrik{Indeed, algorithmic ingredients often embedded in RL agents are helpful because they enable a favorable tradeoff between average reward and computation. These algorithmic ingredients include Q-learning, neural networks, SGD, and exploration schemes such as optimism and Thompson sampling.
For instance, it seems like using SGD with Adam to train neural networks enables a favorable trade-off as compared to many other second order optimization methods that use more compute without offering much better performance.}  Further, these \henrik{techniques} are scalable in the sense that they can leverage greater computational resources when available.  As \cite{sutton2019bitter} argues, with steady advances in computer technology, agent designs that naturally improve due to these advances will stand the test of time, while those that do not will phase out.

\henrik{Returning to our electricity price prediction example from the start of this section, the computational constraint may take the form of a weekly compute budget. Specifically, the agent has a budget for how many floating point operations it can do per week. For simplicity, we can think of the computational constraint as being in terms of number of parameters multiplied by number of gradient steps. An important implication of this constraint is that it is not possible to retrain the agent's neural network from scratch every day if the network is sufficiently large.}

\henrik{
\subsubsection*{Continual Learning in ``Stationary" Environments}
\citet{sutton2007role} highlights that even if the world can be thought of as ``unchanging" in the sense that it is identified by a finite number of bits, computational constraints may incentivize continual learning behavior. Roughly, suppose the world is much more complex than the agent, and the data stream has temporal correlations. The agent will be incentivized to retain and acquire information that will be useful in the somewhat near-term future and forget information that will only be useful in the far future.  We elaborate on this in Section \ref{sec:subsection_cl}.

To make this more concrete, let us return again to our electricity price prediction example. Let us assume the environment is ``stationary" in the following sense: electricity price movement is seasonal, and there are only two seasons. Consider a resource-rich agent which takes as input the current season and the past $k$ observations. The parameters of this agent will eventually converge, having learned the two season-modulated functions of the past $k$ observations. Since the agent eventually stops learning, it is not an example of what we think of as a continual learning agent. 

Now, consider a smaller agent with a small neural network that cannot store both functions. This smaller agent will be incentivized to learn each season's function anew as the season arrives. This agent will be what we would consider a continual learning agent. More generally, this type of agent is incentivized to acquire information that will recur in the somewhat shorter term and forget information that will not recur, or will not recur in the short term. (Humans are constrained agents also, and therefore exhibit re-learning behavior; for instance, a person may not remember exactly how to fill out their tax forms and may have to relearn every year. This is despite the task being the same.)
}

\subsection{Continual Supervised Learning}\label{sec:csl_special_case}
Much of the literature on continual learning focuses on supervised learning (SL).  As noted by \cite{khetarpal2022towards}, continual SL is a special case of reinforcement learning.  In particular, \henrik{in supervised classification problems, we} take each observation to be a data pair $O_t = (Y_t, X_t)$, consisting of (1) a label $Y_t$ assigned to the previous input $X_{t-1}$ and (2) a next input $X_t$.  Labels take values in a finite set $\yspace$ and inputs take values in a set $\xspace$ which could be finite, countable, or uncountable.  The set $\ospace$ of observations is a product $\yspace \times \xspace$.  Take each action $A_t$ to be a predictive distribution $P_t$, which assigns a probability $P_t(y)$ to each label $y \in \yspace$.  Hence, $P_t$ takes values in a unit simplex $\Delta_\yspace$.  We view $P_t$ as a prediction of the label $Y_{t+1}$ that will be assigned to the input $X_t$.  The observation probability function $\rho$ samples the next data pair $O_{t+1} = (Y_{t+1}, X_{t+1})$ in a manner that depends on history only through past observations, not past actions.  Finally, take the reward function to be
$r(H_t, A_t, O_{t+1}) = \ln P_t(Y_t)$.
With this formulation, average reward is equivalent to average negative log-loss:
\begin{equation}\label{eq:negative_log_loss}
\overline{r}_\pi = \liminf_{T \rightarrow \infty} \E_\pi\left[\frac{1}{T} \sum_{t=0}^{T-1} \ln P_t(Y_{t+1}) \right].
\end{equation}
This is a common objective used in %
\henrik{online supervised classification~\citep{fogel2017problem} and sequential prediction problems~\citep{shkel2018sequential}.}
The following agent is designed to minimize log-loss.  Recall that $\Delta_{\mathcal{Y}}$ denotes the unit simplex or, equivalently, the set of probability vectors with one component per element of $\mathcal{Y}$.
\begin{example}
\label{ex:SGD}
{\bf (\henrik{online classification with neural networks})}
Consider an input space $\mathcal{X}$ and finite set of labels $\mathcal{Y}$.  This agent is designed to interface with actions $\aspace = \Delta_{\mathcal{Y}}$ and observations $\ospace = \mathcal{Y} \times \mathcal{X}$.  Consider a 
neural network with a softmax output %
\henrik{layer.}
The inference process maps an input $X_t$ to a predictive distribution $P_t(\cdot) = f_{\theta_t}(\cdot|X_t)$.  Here, $f$ is an abstract representation of the neural network and $\theta_t$ is the vector of parameters (weights and biases) at timestep $t$.  Trained online via stochastic gradient descent (SGD) with a fixed stepsize to reduce log-loss, these parameters evolve according to
$$\theta_{t+1} = \theta_t + \alpha \nabla \ln f_{\theta_t}(Y_{t+1}|X_t).$$
This is a special case of our general reinforcement learning formulation, with action $A_t = P_t$, observation $O_{t+1} = (Y_{t+1}, X_{t+1})$, and reward $r(H_t, A_t, O_{t+1}) = \ln P_t(Y_{t+1})$.
\end{example}
Note that we could alternatively consider average accuracy as an objective by taking the action to be a label $A_t = \hat{Y}_{t+1}$.  This label  could be generated, for example, by sampling uniformly from $\argmax_{y \in \mathcal{Y}} P_t(y)$.  A reward function $r(H_t, A_t, O_{t+1}) = \mathbbm{1}(Y_{t+1} = \hat{Y}_{t+1})$ can then be used to express accuracy.  This is perhaps the objective most commonly used in classification.

The online classification agent is designed for a prototypical supervised learning environment where the relationship between inputs and labels is characterized by a random latent function $F$\henrik{$: \mathcal{X} \rightarrow \Delta_{\mathcal{Y}}$}. In particular, \henrik{as a random variable, $F$ is independent of the inputs} and 
$\Prob(Y_{t+1} \in \cdot | F, H_t) = F(\cdot|X_t)$.

The agent can also be applied to a nonstationary supervised learning environment, where the latent function varies over time, taking the form of a stochastic process $(F_t: t \in \mathbb{Z}_+)$.  With this variation, %
\henrik{$F_t$ is independent of the inputs}
and $\Prob(Y_{t+1} \in \cdot | F_t, H_t) = F_t(\cdot|X_t)$.  However, due to loss of plasticity, incremental learning with neural networks does not perform as well in such an environment as one would hope \citep{dohare2021continual}.  In particular, while the nonstationarity makes it important for the agent to continually learn, its ability to learn from new data degrades over time.  A simple alternative addresses this limitation by periodically replacing the model under use with a new one, trained from scratch on recent data.
\begin{example}
\label{ex:rejuvenation}
{\bf (model replacement)}
Given a neural network architecture and algorithm that trains the model on a fixed batch of $N$ data pairs, one can design a continual supervised learning agent as follows.  At each timestep $t = 0, \tau, 2\tau, \ldots$, reinitialize the neural network parameters and train on the $N$ most recent data pairs $(X_{t-n}, Y_{t+1-n}: n = 1,\ldots,N)$.  No further training occurs until time $t+\tau$, when the model is reinitialized and retrained.  Each prediction $A_t = P_t$ is given by $P_t(\cdot) = f_{\theta_t}(\cdot|X_t)$, where $\theta_t$ parameterizes the most recent model.  In particular, $\theta_{t+1} = \theta_t$ unless $t$ is a multiple of $\tau$.  The hyperparameters $\tau$ and $N$ specify the replacement period and number of data pairs in each training batch.
\end{example}
This approach to continual learning is commonly used in production systems. Consider the example at the start of this section in which an agent predicts the next day's average electricity price $Y_{t+1} $\henrik{(the set $\mathcal{Y}$ is not necessarily finite)}. A prototypical system might, at the end of each month, initialize a neural network and train it on, \henrik{for example}, the preceding twelve months of data. Then, this model could be used over the subsequent month, at the end of which the next replacement arrives. The reason for periodically replacing the model is that very recent data is most representative of future price patterns, which evolve with the changing electricity market. 

The reason for not replacing the model more frequently is the cost of training.  There are a couple reasons for training only on recent history, in this case over the past twelve months.  One is that recent data tends to best represent patterns that will recur in the future.  However, this does not in itself prevent use of more data; given sufficient computation, it may be beneficial to train on all history, with data pairs suitably weighted to prioritize based on recency.  The binding constraint is on computation, which scales with the amount of training data.  

While model replacement is a common approach to continual learning, it is wasteful in and limited by its use of computational resources.  In particular, each new model does not leverage computation invested in past models because it is trained from scratch.  Developing an incremental training approach that affords a model benefits from all computation carried out since inception remains an important challenge to the field. 
\henrik{Further, the limitations of the model replacement approach} highlight the need for computational constraints in formulating a coherent objective for continual learning that incentivizes better agent designs.

\subsection{Learning Complex Skills over a Long Lifetime}\label{sec:learning_skills}

An aspiration of continual learning is to design agents that exhibit ever more complex skills, building on skills already developed \citep{Ring:1994}.  As opposed to paradigms that learn from a fixed data set, this aspiration is motivated by the continual growth of the agent's historical dataset and thus, information available to the agent.  With this perspective, continual learning researchers often ask whether specific design ingredients are really needed or if they should be supplanted by superior skills that the agent can eventually learn.  For example, should an agent implement a hard-coded exploration scheme or learn to explore?  Or ought an agent apply SGD to update its parameters rather than learn its own adaptation algorithm?

\henrik{There is always room to improve average reward by designing the agent to learn more sophisticated skills.}  However, as we will discuss in the next section, this complexity is constrained by the agent's information capacity, which is gated by computational constraints.  There are always multiple ways to invest this capacity.  For example, instead of maintaining statistics required by a stepsize adaptation scheme, a designer could increase the size of the neural network, which might also increase average reward.

Related to this is the stability-plasticity dilemma \citep{mccloskey1989catastrophic,ratcliff1990connectionist}, a prominent subject in the continual learning literature.  Stability is the resilience to forgetting useful skills, while plasticity is the ability to acquire new skills.  Empirical studies demonstrate that agents do forget and that, as skills accumulate, become less effective at acquiring new skills \citep{dohare2021continual,goodfellow2013empirical,kirkpatrick2017overcoming,lesort2020continual}.   Researchers have worked toward agent designs that improve stability and plasticity.  But limited information capacity poses a fundamental tradeoff.  For example, \cite{mirzadeh2022wide} and \cite{dohare2021continual} demonstrate that larger neural networks forget less and maintain greater plasticity.  And as we will further discuss in Section \ref{se:stability_plasticity}, in complex environments, constrained agents must forget and/or lose plasticity, with improvements along one dimension coming at a cost to the other.

\begin{summary}
\begin{itemize}
\item An {\bf environment} is characterized by a tuple $(\aspace, \ospace, \rho)$, comprised of a set of {\bf actions}, a set of {\bf observations}, and an {\bf observation probability function}.
\item The agent's experience through time $t$ forms a {\bf history} $H_t = (A_0,O_1,\ldots,A_{t-1},O_t)$.
\item \henrik{Observations are sampled as}
$$O_{t+1} \sim \rho(\cdot | H_t, A_t).$$
\item The behavior of an agent is characterized by an {\bf agent policy} $\pi$.  \henrik{Actions are sampled as} $$A_t \sim \pi(\cdot|H_t).$$
\item The designer's preferences are encoded in terms of a {\bf reward function} $r$, which generates {\bf rewards} 
$$R_{t+1} = r(H_t, A_t, O_{t+1}).$$
\item The {\bf average reward} attained by an agent policy $\pi$ in an environment $(\aspace, \ospace, \rho)$ is
$$\overline{r}_\pi = \liminf_{T \rightarrow \infty} \E_\pi\left[\frac{1}{T} \sum_{t=0}^\infty R_{t+1}\right].$$
\item Design of a continual learning agent can be framed as {\bf maximizing average reward subject to a per-timestep computational constraint}:
\begin{equation*}
\begin{split}
\max_\pi & \quad \overline{r}_\pi \\
\text{s.t.} & \quad \text{computational constraint}.
\end{split}
\end{equation*}

\henrik{An agent policy is induced by a choice of algorithm. This algorithm, however, may require more computational resources than the computation constraint allows for. Therefore, by restricting the choice of algorithm, the computational constraint also restricts what agent policies are acceptable.}

\item {\bf Continual supervised learning with log-loss} is a special case in which 
\begin{itemize}
\item $\ospace = \mathcal{X} \times \mathcal{Y}$, where $\mathcal{X}$ and $\mathcal{Y}$ are input and label sets,
\item $\aspace = \Delta_{\mathcal{Y}}$ is the unit simplex of predictive distributions,
\item each observation is a pair $O_{t+1} = (Y_{t+1}, X_{t+1})$ comprising a label assigned to the previous input $X_t$ and the next input $X_{t+1}$,
\item the observation distribution $\rho(\cdot|H_t,A_t)$ depends on $(H_t,A_t)$ only through past observations $O_{1:t}$,
\item the reward function expresses the negative log-loss $R_{t+1} = \ln P_t(Y_{t+1})$.    
\end{itemize}
Another common reward function used in supervised learning expresses the accuracy $R_{t+1} = \mathbbm{1}(Y_{t+1} = \hat{Y}_{t+1})$, where the action is a label $A_t = \hat{Y}_{t+1}$.
\end{itemize}
\end{summary}
\clearpage

\section{Agent State and Information Capacity}
\label{sec:information}

Practical agent designs typically maintain a bounded summary of history, which we refer to as the {\it agent state} and which is used to select actions.  Information encoded in the agent state is constrained to regulate computational requirements.  In this section, we formalize these concepts in information-theoretic terms, along the lines of \cite{jeon2023informationtheoretic,lu2021reinforcement}, and explore their relation to agent performance.  The tools we develop allow us to more clearly distinguish continual from convergent learning and define and analyze stability and plasticity, as we do in Sections \ref{se:stability_plasticity} and \ref{se:continual-vs-convergent}.

\subsection{Agent State}

Computational constraints prevent an agent from processing every element of history at each timestep because the dataset grows indefinitely.  To leverage more information than can be efficiently accessed from history, the agent needs to maintain a representation of knowledge that enables efficient computation of its next action.  In particular, the agent must implement a policy $\pi$ that depends on a statistic $U_t$ derived from $H_t$, rather than directly on $H_t$ itself.  Such a policy samples each action according to
$$A_t \sim \pi(\cdot|U_t).$$

The statistic $U_t$ must itself be computed using budgeted resources. An agent that computes $U_t$ directly from $H_t$ would run into constraints of the same sort that motivated construction of agent state in the first place.  In particular, the agent cannot access all history within a timestep and must maintain an agent state that facilitates efficient computation of the next agent state. To this end, $U_t$ serves two purposes: computation of $A_t$ and $U_{t+1}$.  Specifically, there must be an update function $\psi$ such that
$$U_{t+1} \sim \psi(\cdot|U_t, A_t, O_{t+1}).$$
This sort of incremental updating allows $U_{t+1}$ to selectively encode historical information while amortizing computation across time. Since $U_t$ includes all information about $H_t$ that the agent will subsequently use, it can be thought of as a state; thus the term {\it agent state}. \henrik{So far, we have not added any information constraint, which means that the agent could in principle store the full history. In subsequent sections, we will introduce a constraint on the agent state $U_{t}$ limiting how much information the agent can store.}

\henrik{The agent presented in the \henrik{online classification} example in Section~\ref{sec:csl_special_case} (Example \ref{ex:SGD}) uses a highly compressed representation of history as agent state. In particular, the agent state $U_t = (X_t, \theta_t)$ consists of the current input vector and a vector of neural network parameters.
If the agent were to also maintain a replay buffer $B_t$ of recent action-observation pairs for supplemental training, the replay buffer would also reside within the agent state $U_t = (X_t, \theta_t,B_t)$.  In each of these examples, the agent state is updated incrementally according to $U_{t+1} \sim \psi(\cdot|U_t, A_t, O_{t+1})$ for some function $\psi$. In Appendix \ref{app:cl_agents}, we present additional examples of continual learning agents (tracking, bandit learning, and Q-learning), all of which maintain agent states as compressed representations of history.}

\subsection{Information Content}
Intuitively, the agent state retains information from history to select actions and update itself.  But how ought this information be quantified?  In this section, we offer a formal approach based on information theory \citep{shannon1948mathematical}.  Attributing precise meaning to {\it information} affords coherent characterization and analysis of information retained by an agent as well as forgetting and plasticity, which are subjects we will discuss in subsequent sections.

\textbf{Quantifying information as length of lossless encoding.} \\
The agent state $U_t$ is a random variable since it depends on other random variables. First, it depends on the agent's experience $H_t$ up till time $t$. Second, it may also depend on algorithmic randomness arising in agent state transitions.  An intuitive way of quantifying the information content of $U_t$ is in terms of the minimal number of bits that suffices for a lossless encoding.  Or, using a unit of measurement more convenient for analysis of machine learning, the number of nats; there are $\log_2 e$ bits per nat.

\textbf{Entropy as a proxy for nats required.} \\
\henrik{To facilitate analysis, we will use entropy $\Ent(U_t)$ as a proxy for lossless encoding length; that is, the number of nats that is necessary and sufficient for storing the agent state $U_t$. Thus, we will use entropy $\Ent(U_t)$ as a proxy for information content. }

\henrik{The entropy of a random variable $B$ with countable range is defined by
$$\Ent(B) = \sum_{b \in \mathrm{range(B)}} - \Prob(B=b) \ln \Prob(B=b).$$

More generally, if the range is uncountable then entropy is defined by $\Ent(B) = \sup_{f \in \mathcal{F}_{\mathrm{finite}}} \Ent(f(B))$, where $\mathcal{F}_{\mathrm{finite}}$ is the set of functions that map $\mathcal{U}$ to a finite range.\footnote{This derives from the master definition of mutual information $\Ent(B) = \sup_{f \in \mathcal{F}_{\mathrm{finite}}} \Ent(f(B))$ \citep[Chapter 8]{cover2006elements} and the relation $\Ent(B) = \mathbb{I}(B,B)$.}
}

\henrik{
\textbf{Short motivation.} \\
We offer a short story to motivate why entropy is a plausible proxy for the lossless encoding length.  Suppose the agent state $U_t$ can be expressed in plain English.  This expression can modeled as a stochastic process, which generates a sequence of characters $X_1, X_2, X_3, \ldots$.  There is some fixed number of characters $N$ that suffices to describe the agent's knowledge; that is, $U_t$ is fully expressed by $X_1,\ldots,X_N$.

Let $p_n(\cdot) = \Prob(X_1,\ldots,X_n = \cdot)$ be the probability mass function of the first $n$ characters generated by the stochastic process.  For each $n$, there is a code that uses at most one more than $- \log p_n(X_1,\ldots,X_n)$ bits \citep[Section~5.3]{cover2006elements}, or $- \ln p_n(X_1,\ldots,X_n)$ nats. This code is optimal in the sense that no other code attains smaller expected encoding length \citep[Section~5.4]{cover2006elements}. Under suitable technical conditions, 
$\frac{1}{n} \ln p_n(X_1,\ldots,X_n)$ converges to the entropy rate of the stochastic process, which is defined by $h = \lim_{n\rightarrow\infty} \frac{1}{n} \Ent(X_1,\ldots,X_n)$ \citep{mcmillan1953basic,breiman1957individual}.  Convergence holds in multiple senses, one of which is convergence in probability; in particular, for all $\epsilon > 0$ and $\delta > 0$, there exists some $n^*$ such that for all $n \geq n^*$,
$$\Prob\left(\left|-\frac{1}{n}\ln p_n(X_1,\ldots,X_n) - h \right| > \epsilon\right) < \delta.$$
Setting $\epsilon = \delta = 0.01$, this implies that after some number of characters $n$, there is a $99\%$ chance that the optimal lossless encoding length differs from $h n$, which varies with the realization of $X_1,\ldots,X_n$, by no more than $1\%$ of $n$.  Fleshing out this story calls for answers to a couple questions: (1) how large does $n$ need to be to attain this low level encoding length variability and (2) is it plausible for the number of characters $N$ that describe the agent state to exceed that.

There is no definitive answer to (1).  However, \cite{takahira2016entropy} suggests that, even with an uninformative prior, less than ten gigabytes of text data is sufficient to estimate the entropy rate.  We would expect the designer's prior to be fairly informed, especially when it comes to comprehension of English text.  Based on this, it is plausible that, in the context of describing agent state, a number of characters $n$ far less than a trillion ought to suffice for attaining a very low level of variability in encoding length.

To answer (2), we need to speculate on the number of characters required to describe the state of an agent.  Modern large language models such as GPT4 use a trillion or so trainable parameters to learn from a trillion or so words.  So it is natural to consider an agent state that requires a trillion words to describe.  Typical English text averages close to five characters per word, so we would be talking about $N$ being five trillion characters or so.}

\subsection{Information Capacity}
\label{se:information-capacity}

Recall our continual learning objective:
\begin{align*}
\max_{\pi} & \quad \overline{r}_\pi \\
\text{s.t.} & \quad \text{computational constraint}.
\end{align*}
The nature of the computational constraint was purposely left ambiguous. If computer memory is binding, that directly constrains information content. \henrik{For many large-scale applications, such as training large language models, compute resources are binding rather than memory in the form of disk space. Training the model is much more costly than the disk space necessary for storing the model and data. Therefore,} we will assume for the remainder of this monograph that the binding constraint is the number of floating point operations (FLOP) that can be carried out per timestep.  This does not necessarily constrain information content.  For example, even if we take the agent state to be $U_t = H_t$ and the entropy $\Ent(U_t)$ grows indefinitely, an agent can efficiently select each action based on sparsely queried data, perhaps by randomly sampling a small number of action-observation pairs from history.  However, as a practical matter, common agent designs apply computation in ways that limit the amount of information that the agent retains.  We refer to the constraint on information content of agent state as the {\it information capacity}.

\saurabh{
Constraining the information content limits how much information from the history can be retained. This constraint is independent of how the agent chooses to represent its knowledge. Consider an agent that observes a sequence of coin flips and tries to predict the next coin flip at each timestep. Clearly, if the information content is constrained, the agent cannot store the full history of observations. But often times, for good performance, it is sufficient that the agent stores the average of the past observations. Even storing the average is ruled out by a constraint on information content. Indeed, even when storing the average, the memory requirements are still a function of history length: as the number of coin flips increases, the number of possible sample averages increases. This means that the encoding length of the sample average would increase. Thus, the agent cannot actually store the average of an ever-growing sequence. If the agent uses a floating point representation of numbers, this will correspond to saying that the agent has finite floating-point precision.
}

\saurabh{
\subsection{Compute versus Information Capacity}\label{sec:compute_vs_infocapacity}

In practical agent designs, computational constraints bind information capacity. To see this, we will consider three common ways to increase information capacity in AI agents using neural networks:   
\begin{enumerate}
    \item Increase the number of neural network parameters.
    \item Increase the number of neural networks in a sparse mixture of experts. 
    \item Increase the size of an external memory buffer.
\end{enumerate}
For each approach, we will discuss how computational constraints bind information capacity.

\textbf{Single neural network.} First, consider an agent equipped with a single neural network. As described in Example \ref{ex:SGD} in Section~\ref{sec:csl_special_case}, the agent carries out a single SGD step over each $t$th timestep:
$$\theta_{t+1} = \theta_t + \alpha \nabla \ln f_{\theta_t}(Y_{t+1}|X_t).$$
Each data pair $(X_t, Y_{t+1})$ is immediately processed when observed, then discarded. Compute per timestep is determined by and grows proportionally with the number of model parameters. Hence, a computation constraint restricts the number of model parameters. The number of model parameters, in turn, will constrain information content. If each parameter is encoded by $K$ nats, a neural network with $N$ parameters can encode $N K$ nats of information.  We refer to this as the {\it physical capacity} of the neural network. In this example, the information capacity grows with the size of the neural network which quickly increases the compute required for making a prediction as well as updating the parameters. 

\textbf{Sparse mixture of experts.} In our second example, we consider an agent implemented as a so-called sparse mixture of experts. With such an approach, the agent consists of a set of $M$ neural networks with each network having $N$ parameters. Each neural network is said to be an \textit{expert}. A \textit{gating mechanism} selects which expert to use for a given input. There are many ways a gating mechanism can be implemented. We consider a gating mechanism implemented as a neural network that for each input outputs $M$ scores, one per expert. The expert with the highest score is selected (see~\citet{shazeer2017outrageously} for a broader discussion of how such neural networks might be implemented and trained). At each timestep, the agent uses the gating mechanism to select one of the $M$ networks and updates that network via an SGD step. As we increase the number of experts, the information capacity grows. However, the compute requirement also grows: the gating network needs to output scores for more experts. 

\textbf{Agent with external memory.} 

\henrik{
As mentioned before, computational constraints do not, in principle, bind the agent's information capacity. For instance, consider an agent equipped with, in addition to a neural network, an external memory used to store a large number of data pairs.  Technically, this agent's information capacity can scale with the size of the external memory in a manner that does not depend on compute resources.  However, for practical agent designs that make use of such a memory, computational constraints do induce capacity constraints.
Compared to using only gradient descent, the memory may enable the agent to utilize each data sample more effectively by storing and retrieving them as needed. Given a new sample, the agent first queries similar examples from the memory that may help in making predictions on the new sample. The neural network then receives both the current sample and the similar examples as input. Using an external memory in a similar manner is a common practice when designing chatbots~\citep{lewis2020retrieval,gao2023retrieval}. However, as we increase the size of the memory, the compute required to search over the memory to select relevant data points increase. The exact nature of the scaling factor depends on how the data is inserted and retrieved from the memory. Thus, a constraint on compute induces an effective capacity constraint. 
}

\textbf{Further remarks.}
More generally, at any point in time, the agent must, implicitly or explicitly, implement a function that maps the current observation to an action. In the case of supervised learning, this is a function mapping from $X_t$ to an estimate of $Y_{t+1}$. Importantly, this function must be implementable on a computer. Suppose we adopt a general model of computation, such as the Turing machine. Then, a computational constraint corresponds to imposing an upper bound on the number of operations the machine can execute per timestep. If we impose such a bound, then the set of implementable functions is necessarily finite. Consequently, since the number of functions is finite, the agent's information capacity must also effectively be bounded. In this sense, computational constraints would bind information capacity.

A similar argument can be extended to other models of computation. For example,~\citet{sontag1998vc} bound the VC dimension of a 1-layer neural networks in terms of the number of neurons, or equivalently, the number of elementary operations. In this case, the set of possible functions is infinite, implying that the total information content may be unbounded. However, since the VC dimension is finite, the ``effective" complexity remains bounded.
}

\subsection{Physical Capacity versus Information Capacity}
While information content is constrained not to exceed the physical capacity, large neural networks trained via SGD \henrik{typically} %
use only a fraction of their physical capacity to retain information garnered from data. Much of the \henrik{neural network capacity (the $NK$ nats) instead serves to facilitate optimization}. %
\henrik{For instance, in some applications, even with over-parameterized neural networks, scaling the neural network improves performance. As another example, consider the case of \textit{distillation} \citep{hinton2015distilling}. You train a small network on the original dataset but use as labels the outputs of the large network. Then, this small network typically performs much better relative to the same small network trained directly on the original labels.} \henrik{All in all,} the physical capacity \henrik{often} constrains the information capacity to far fewer nats: $\Ent(U_t) \ll N K$.

Similar reasoning applies if the agent state is expanded to include a replay buffer.  In this case, the physical capacity is $N K + B$ nats, if $B$ nats are used to store the replay buffer. Again, the information capacity is \henrik{often} constrained to far fewer nats than the physical capacity. As before, only a fraction of the neural network's physical capacity stores information content. But also, a replay buffer that stores raw data from history can typically be compressed losslessly to occupy a much smaller number of nats. \henrik{This suggests that the information content in the replay buffer is much smaller than the physical capacity of the buffer.}

\subsection{Performance versus Information Capacity}
\label{se:information-capacity-vs-performance}

It follows from the definitions of performance and information capacity that if the agent makes efficient use of its information capacity, its performance will increase as this constraint is loosened as long as there is room left for improvement. Information theory offers an elegant \henrik{characterization} of this relation.  To illustrate this, let us work through this \henrik{characterization} for the case of continual supervised learning (SL).

\subsubsection{Prediction Error}

Recall that in our continual SL formulation the agent's action is a predictive distribution $A_t = P_t$ and the reward is taken to be $r(H_t, A_t, O_{t+1}) = \ln P_t(Y_{t+1})$.  Hence, the objective is to minimize average log-loss.  To enable an elegant analysis, we define a prediction
\begin{equation}\label{eq:prediction}
P^*_t = \Prob(Y_{t+1} = \cdot | H_t) = \argmax_{Q \in \Delta_{\mathcal{Y}}} \E[\ln Q(Y_{t+1}) | H_t]
\end{equation}
as a {\it gold standard}. The expected reward $\E[\ln P^*_t(Y_{t+1}) | H_t]$ represents the largest that a computationally unconstrained agent can attain given the history $H_t$.  The difference between this gold standard value and the expected reward attained by the agent is expressed by the KL-divergence:
\henrik{
\begin{align*}
\KL(P^*_t \| P_t) &=  \sum_{y \in \mathcal{Y}} P^*_t(y) \ln \frac{P^*_t(y)}{P_t(y)} \\
&= \E[\ln \frac{ P^*_t(Y_{t+1})}{ P_t(Y_{t+1})} | H_t]   \text{   (by Equation \ref{eq:prediction})} \\
&= \E[\ln P^*_t(Y_{t+1}) - \ln P_t(Y_{t+1}) | H_t].
\end{align*}}

This KL-divergence serves as a measure of error between the agent's prediction $P_t$ and the gold standard $P^*_t$.  Maximizing \henrik{expected} average reward is equivalent to minimizing this prediction error, since
\henrik{
\begin{equation}
\underbrace{\E[\ln P_t(Y_{t+1})]}_{\text{expected reward}} = \underbrace{\E[\ln P^*_t(Y_{t+1})]}_{\text{optimal expected reward}} - \underbrace{\E[\KL(P^*_t \| P_t)]}_{\text{expected prediction error}}
\end{equation}
}
and $\E[\ln P^*_t(Y_{t+1})]$ does not depend on $P_t$.

When making its prediction $P_t$, the agent only has information supplied by its agent state $U_t$.  The best prediction that can be generated based on this information is 
\begin{equation}
\tilde{P}_t = \Prob(Y_{t+1} = \cdot | U_t) = \argmax_{Q \in \Delta_{\mathcal{Y}}} \E[\ln Q(Y_{t+1}) | U_t].
\end{equation}
If the agent's prediction $P_t$ differs from $\tilde{P}_t$, the error attained by the agent decomposes into informational versus inferential components, as established by the following result.
\begin{restatable}{theorem}{error-decomposition}\label{th:error-decomposition}
For all $t$,
$$\underbrace{\E[\KL(P^*_t \| P_t)]}_{\mathrm{prediction\ error}} = \underbrace{\E[\KL(P^*_t\|\tilde{P}_t)]}_{\mathrm{informational\ error}} + \underbrace{\E[\KL(\tilde{P}_t\|P_t)]}_{\mathrm{inferential\ error}}.$$
\end{restatable}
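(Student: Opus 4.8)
The plan is to prove the decomposition in its \emph{expectation} form directly, since the corresponding pointwise identity $\KL(P^*_t\|P_t)=\KL(P^*_t\|\tilde P_t)+\KL(\tilde P_t\|P_t)$ is false in general (it would force $P^*_t=\tilde P_t$). Expanding each KL-divergence via its definition, one has pointwise $\KL(P^*_t\|P_t)-\KL(P^*_t\|\tilde P_t)=\sum_{y\in\yspace}P^*_t(y)\ln\frac{\tilde P_t(y)}{P_t(y)}$, while $\KL(\tilde P_t\|P_t)=\sum_{y\in\yspace}\tilde P_t(y)\ln\frac{\tilde P_t(y)}{P_t(y)}$. So, writing $g(y):=\ln\frac{\tilde P_t(y)}{P_t(y)}$, it suffices to show
\[
\E\!\left[\sum_{y\in\yspace}P^*_t(y)\,g(y)\right]\;=\;\E\!\left[\sum_{y\in\yspace}\tilde P_t(y)\,g(y)\right],
\]
and I will prove both sides equal $\E\!\left[g(Y_{t+1})\right]$ via the tower property, after which rearranging gives the statement.

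First consider the $\tilde P_t$ side. The function $g$ is measurable with respect to $U_t$ together with whatever algorithmic randomness the agent uses to turn $U_t$ into its action $P_t$: by definition $\tilde P_t=\Prob(Y_{t+1}=\cdot\mid U_t)$ is a function of $U_t$, and $P_t$ is sampled from $\pi(\cdot\mid U_t)$. Because this action-selection randomness is independent of the environment given $U_t$, we have $\Prob(Y_{t+1}=y\mid U_t,\text{algorithmic randomness})=\tilde P_t(y)$, hence $\sum_{y}\tilde P_t(y)g(y)=\E[g(Y_{t+1})\mid U_t,\text{algorithmic randomness}]$, and taking the outer expectation yields $\E[\sum_y\tilde P_t(y)g(y)]=\E[g(Y_{t+1})]$. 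For the $P^*_t$ side, recall that $U_t$ is itself derived from $H_t$ via the update map $\psi$ (and algorithmic randomness), so $g$ is also measurable with respect to $H_t$ together with algorithmic randomness; conditioning on $H_t$ is a refinement of conditioning on $U_t$. In the continual SL formulation, $\rho(\cdot\mid H_t,A_t)$ depends on history and action only through past observations $O_{1:t}\subseteq H_t$, so $Y_{t+1}$ is conditionally independent of the current action and of all algorithmic randomness given $H_t$; therefore $\Prob(Y_{t+1}=y\mid H_t,\text{algorithmic randomness})=P^*_t(y)$, giving $\sum_{y}P^*_t(y)g(y)=\E[g(Y_{t+1})\mid H_t,\text{algorithmic randomness}]$ and hence $\E[\sum_yP^*_t(y)g(y)]=\E[g(Y_{t+1})]$ as well.

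Assembling the pieces,
\[
\E[\KL(P^*_t\|P_t)]=\E[\KL(P^*_t\|\tilde P_t)]+\E\!\left[\sum_yP^*_t(y)g(y)\right]=\E[\KL(P^*_t\|\tilde P_t)]+\E[g(Y_{t+1})]=\E[\KL(P^*_t\|\tilde P_t)]+\E[\KL(\tilde P_t\|P_t)],
\]
which is the claim. I do not expect an analytic obstacle here; the delicate part is bookkeeping, namely being precise about which $\sigma$-algebra each of $P^*_t,\tilde P_t,P_t$ is measurable against and why $U_t$ is a coarsening of $H_t$ (this is exactly what makes both weighted averages of $g$ collapse to the same unconditional $\E[g(Y_{t+1})]$), together with the continual-SL structural assumption that observations do not depend on the current action. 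One should also state the identity in the extended reals, or assume $P_t\ll P^*_t$ (and $\tilde P_t\ll P^*_t$) almost surely, so that all three expectations are simultaneously well defined and the rearrangement above is legitimate; and it is worth flagging explicitly that the expectation cannot be removed, since the pointwise Pythagorean identity does not hold.
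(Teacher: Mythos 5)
Your proof is correct and is essentially the same argument as the paper's: both add and subtract $\ln \tilde P_t(Y_{t+1})$ and apply the tower property, conditioning on $H_t$ to identify the first piece with $\E[\KL(P^*_t\|\tilde P_t)]$ and on $U_t$ to identify the second with $\E[\KL(\tilde P_t\|P_t)]$. Your version merely makes explicit the measurability and conditional-independence bookkeeping (algorithmic randomness, actions not influencing $Y_{t+1}$) that the paper's three-line proof leaves implicit.
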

\begin{proof}
For all $t$,
\begin{align*}
\E[\KL(P^*_t \| P_t)]
=& \E[\ln P^*_t(Y_{t+1}) - \ln P_t(Y_{t+1})] \\
=& \E[\E[\ln P^*_t(Y_{t+1}) - \ln \tilde{P}_t(Y_{t+1})|H_t]] + \E[\E[\ln \tilde{P}_t(Y_{t+1}) - \ln P_t(Y_{t+1}) | U_t]] \\
=& \E[\KL(P^*_t\|\tilde{P}_t)] + \E[\KL(\tilde{P}_t\|P_t)].
\end{align*}
\end{proof}

\subsubsection{Informational Error Quantifies Absent Information}

The informational error can be interpreted as historical information absent from the agent state $U_t$ \henrik{that} would be useful for predicting $Y_{t+1}$. \saurabh{We can also think of informational error as being the error incurred due to capacity constraints.} This can be expressed in elegant information-theoretic terms. 

\saurabh{First, a comment on notation: since the entropy function $\Ent$ depends on $\Prob$, which is the probability measure over histories induced by $\rho$ and $\pi$, $\Ent$ is also induced by $\rho$ and $\pi$. Just as we write $\Prob(O_{t+1} \in \cdot|H_t,A_t)$ as shorthand for  $\Prob_{\rho, \pi}(O_{t+1} \in \cdot|H_t,A_t)$, we write $\Ent(U_t)$ as shorthand for $\Ent_{\rho, \pi}(U_t)$.
}

We now review a few information measures for which Figure \ref{fig:info_venn} illustrates intuitive relationships.  Let $B$ and $C$ be random variables, and to simplify, let us assume for this discussion that each has countable range. The concepts extend to uncountable ranges. The conditional entropy of $B$ conditioned on $C$ is defined by
\henrik{$$\Ent(B|C) = \E\left[- \ln \Prob_{B|C}(B| C)\right]$$
where $\Prob_{B|C}(b | c) = \Prob(B=b | C=c)$.}

It follows from the definition of conditional probability that $\Ent(B|C) = \E\left[- \ln \Prob_{B,C}(B,C) + \ln \Prob_C(C)\right] = \Ent(B,C) - \Ent(C)$.
This represents the expected number of nats that remain to be revealed by $B$ after $C$ is observed, or the union of the two discs in the venn diagram minus the content of the blue disc.  The mutual information between $B$ and $C$ is defined by
$$\I(B; C) = \Ent(B) - \Ent(B|C) = \Ent(C) - \Ent(C|B) = \I(C; B).$$
This represents the number of nats shared by $B$ and $C$, depicted as the intersection between the two discs.  If the variables are independent then $\I(B; C) = 0$.  Finally, the mutual conditional information between $B$ and $C$, conditioned on a third random variable $D$, is defined by
$$\I(B; C | D) = \I(B; C, D) - I(B; D).$$
This represents information remaining \henrik{between $B$ and $C$ after $D$ is observed}. %

\begin{figure}
\centering
\includegraphics[width=3in]{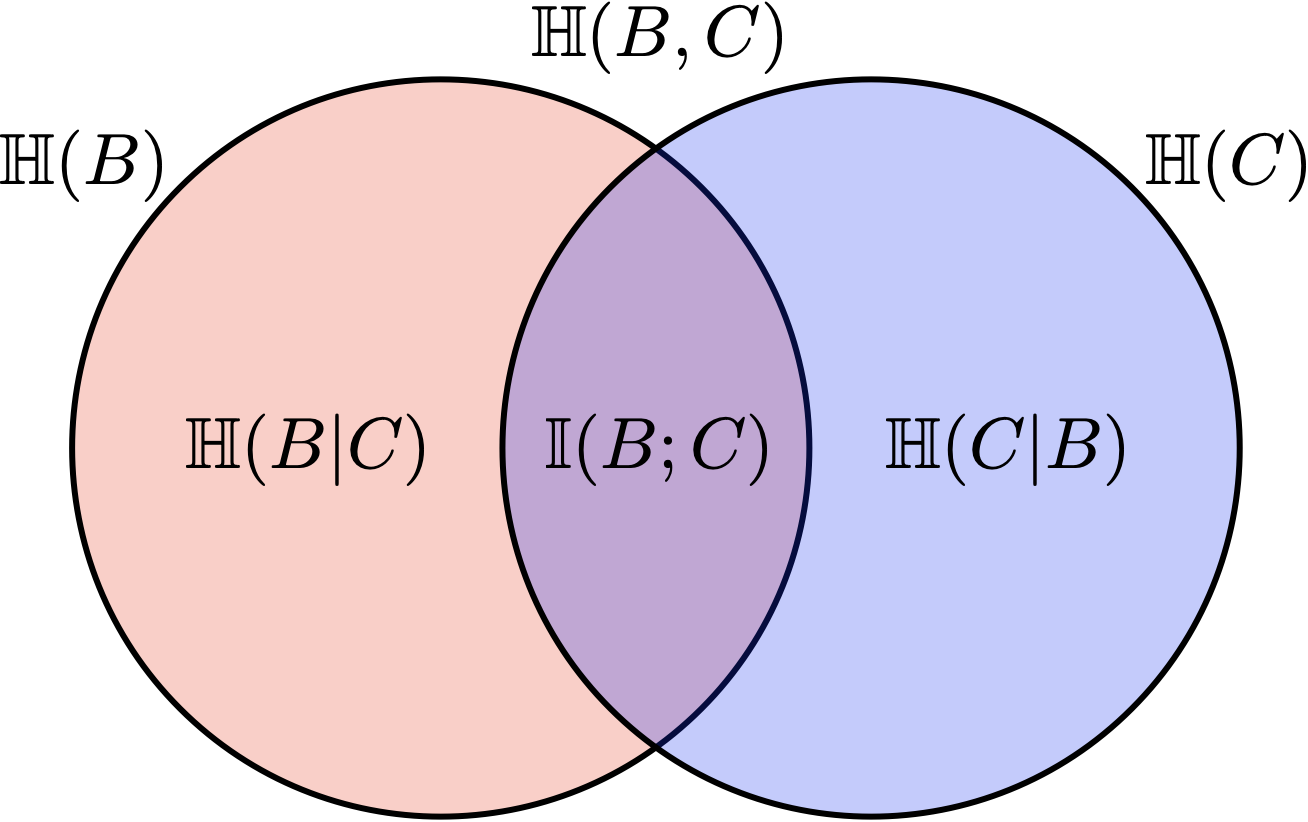}
\caption{Venn diagram \henrik{showing the additive relationships between various information measures. For instance, the mutual information between $B$ and $C$ is defined by
$\I(B; C) = \Ent(B) - \Ent(B|C)$.}}
\label{fig:info_venn}
\end{figure}

The following result establishes that the informational error $\E[\KL(P^*_t \| \tilde{P}_t)]$ equals the information $\I(Y_{t+1};H_t|U_t)$ that the history $H_t$ presents about $Y_{t+1}$ but that is absent from $U_t$.
\begin{restatable}{theorem}{error-information}\label{th:error-information}
For all $t$,
$$\underbrace{\E[\KL(P^*_t \| \tilde{P}_t)]}_{\mathrm{informational\ error}} = \underbrace{\I(Y_{t+1};H_t|U_t)}_{\mathrm{absent\ info}}.$$
\end{restatable}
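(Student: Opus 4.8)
The plan is to rewrite the informational error $\E[\KL(P^*_t \| \tilde{P}_t)]$ as a difference of two conditional entropies and then recognize that difference as the conditional mutual information $\I(Y_{t+1}; H_t \mid U_t)$. The whole argument is short; the only substantive point is a conditional-independence fact that I isolate below.

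First I would unfold the KL divergence. Since $\mathcal{Y}$ is finite,
$$\KL(P^*_t \| \tilde{P}_t) = \sum_{y \in \mathcal{Y}} \Prob(Y_{t+1} = y \mid H_t)\,\ln \Prob(Y_{t+1}=y\mid H_t) \;-\; \sum_{y\in\mathcal{Y}}\Prob(Y_{t+1}=y\mid H_t)\,\ln \Prob(Y_{t+1}=y\mid U_t).$$
Taking expectations, the first sum contributes $-\Ent(Y_{t+1}\mid H_t)$ by definition. For the second sum I would use $Y_{t+1}\perp U_t \mid H_t$ to replace $\Prob(Y_{t+1}=y\mid H_t)$ by $\Prob(Y_{t+1}=y\mid H_t, U_t)$, then apply the tower property over $\sigma(H_t,U_t)$ to collapse it to $\E[\ln\Prob(Y_{t+1}\mid U_t)] = -\Ent(Y_{t+1}\mid U_t)$. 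This gives $\E[\KL(P^*_t\|\tilde{P}_t)] = \Ent(Y_{t+1}\mid U_t) - \Ent(Y_{t+1}\mid H_t)$. Next, by the definition of conditional mutual information, $\I(Y_{t+1}; H_t\mid U_t) = \Ent(Y_{t+1}\mid U_t) - \Ent(Y_{t+1}\mid H_t, U_t)$; invoking $Y_{t+1}\perp U_t\mid H_t$ once more turns $\Ent(Y_{t+1}\mid H_t, U_t)$ into $\Ent(Y_{t+1}\mid H_t)$, and combining the two displays finishes the proof.

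The step that needs care — and that I expect to be the main obstacle — is establishing $Y_{t+1}\perp U_t\mid H_t$. The agent state is produced by the recursion $U_{s}\sim\psi(\cdot\mid U_{s-1},A_{s-1},O_s)$, so $U_t$ is a function of $H_t$ together with the agent's internal algorithmic randomness (and $U_0$); it is \emph{not} a deterministic function of $H_t$, which is precisely why the manipulations above cannot simply pull $\tilde{P}_t$ out of a conditional expectation given $H_t$ and instead route through conditional independence. In the continual SL setting, $\rho(\cdot\mid H_t,A_t)$ depends on $(H_t,A_t)$ only through the past observations $O_{1:t}$, so $O_{t+1}$ — hence $Y_{t+1}$ — is conditionally independent of $(A_t,U_t)$ given $H_t$, using also that the environment's draw of $O_{t+1}$ is independent of the agent's internal randomness.

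Finally, I would note that finiteness of $\mathcal{Y}$ keeps every entropy appearing here bounded by $\ln|\mathcal{Y}|$, so the additive identities among $\Ent(Y_{t+1}\mid H_t)$, $\Ent(Y_{t+1}\mid U_t)$, $\Ent(Y_{t+1}\mid H_t,U_t)$, and $\I(Y_{t+1};H_t\mid U_t)$ are all well defined even though $H_t$ and $U_t$ may have uncountable range; no $\infty-\infty$ subtlety arises.
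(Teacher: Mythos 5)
Your proof is correct and follows essentially the same route as the paper's: express the expected KL divergence as $\Ent(Y_{t+1}\mid U_t)-\Ent(Y_{t+1}\mid H_t)$ and then use $Y_{t+1}\perp U_t\mid H_t$ to identify the difference with $\I(Y_{t+1};H_t\mid U_t)$. Your explicit justification of that conditional independence (the observation distribution depending only on past observations, plus independence of the agent's internal randomness) is a point the paper leaves implicit, but the argument is the same.
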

\begin{proof}
From the definitions of conditional entropy, $P^*_t$, and $\tilde{P}_t$, we have
$\Ent(Y_{t+1}|H_t) = \E[-\ln P^*_t(Y_{t+1})]$ and $\Ent(Y_{t+1}|U_t) = \E[-\ln \tilde{P}_t(Y_{t+1})]$.
It follows that,
\begin{align*}
\E[\KL(P^*_t \| \tilde{P}_t)] 
=& \E[\ln P^*_t(Y_{t+1}) - \ln \tilde{P}_t(Y_{t+1})] \\
=&  \Ent(Y_{t+1} | U_t) - \Ent(Y_{t+1} | H_t) \\
=& \Ent(Y_{t+1} | U_t) - \Ent(Y_{t+1} | U_t, H_t) \\
=& \I(Y_{t+1};H_t|U_t).
\end{align*}
The third equality follows from the fact that $Y_{t+1} \perp U_t | H_t$.
\end{proof}

\subsubsection{Information Capacity Constrains Performance}

It is natural to think that information capacity can constrain performance. This relationship is formalized by the following result.
\begin{restatable}{theorem}{error-capacity}\label{th:error-capacity}
For all $t$,
$$\underbrace{\E[\KL(P^*_t\|P^*_0)]}_{\mathrm{initial\ error}} - \underbrace{\E[\KL(P^*_t\|\tilde{P}_t)]}_{\mathrm{informational\ error}} = \underbrace{\I(Y_{t+1};U_t)}_{\mathrm{useful\ info}} \leq \underbrace{\Ent(U_t)}_{\mathrm{info}}.$$
\end{restatable}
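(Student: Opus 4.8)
The plan is to reduce the equality to the basic identity $\E[\ln Q(Y_{t+1})]=-\Ent(Y_{t+1}\mid\cdot)$ for the relevant gold‑standard predictors $Q$, exactly as in the proofs of Theorems~\ref{th:error-decomposition} and~\ref{th:error-information}, and then close with the elementary bound on mutual information. Concretely, for the equality I would compute the left‑hand side directly. Since $\KL(P^*_t\|P^*_0)$ and $\KL(P^*_t\|\tilde{P}_t)$ share the same $\sum_{y}P^*_t(y)\ln P^*_t(y)$ term, their difference is $\sum_{y}P^*_t(y)\bigl(\ln\tilde{P}_t(y)-\ln P^*_0(y)\bigr)$, which sidesteps any convergence concern about the (possibly large) self‑entropy term. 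Because $\tilde{P}_t$ is $U_t$‑measurable and $P^*_0$ is deterministic, taking expectations and using $P^*_t(\cdot)=\Prob(Y_{t+1}=\cdot\mid H_t)$ with the tower property gives
\[
\E[\KL(P^*_t\|P^*_0)]-\E[\KL(P^*_t\|\tilde{P}_t)]=\E[\ln\tilde{P}_t(Y_{t+1})]-\E[\ln P^*_0(Y_{t+1})].
\]
By the definition $\tilde{P}_t=\Prob(Y_{t+1}=\cdot\mid U_t)$ we have $\E[\ln\tilde{P}_t(Y_{t+1})]=-\Ent(Y_{t+1}\mid U_t)$, just as in the proof of Theorem~\ref{th:error-information}. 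It then remains to identify $\E[-\ln P^*_0(Y_{t+1})]=\Ent(Y_{t+1})$, after which $\I(Y_{t+1};U_t)=\Ent(Y_{t+1})-\Ent(Y_{t+1}\mid U_t)$ yields the claimed equality.

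The step I expect to be the crux is exactly $\E[-\ln P^*_0(Y_{t+1})]=\Ent(Y_{t+1})$, i.e.\ that the uninformed prediction $P^*_0$ agrees with the marginal law of $Y_{t+1}$. By Gibbs' inequality this is an equality precisely in that case and otherwise only a ``$\ge$'', which would weaken the theorem's equality to an inequality; so I would invoke the standing structure of the continual SL environment (successive labels sharing a common marginal, e.g.\ identically distributed inputs with $H_0$ carrying no input) so that $P^*_0=\Prob(Y_{t+1}\in\cdot)$. Equivalently, this is the assertion $\E[\KL(P^*_t\|P^*_0)]=\I(Y_{t+1};H_t)$ — the uninformed error is exactly the information that the full history carries about $Y_{t+1}$. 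This also furnishes a more conceptual derivation of the same equality: by the chain rule, $\I(Y_{t+1};H_t)=\I(Y_{t+1};U_t,H_t)=\I(Y_{t+1};U_t)+\I(Y_{t+1};H_t\mid U_t)$, using $Y_{t+1}\perp U_t\mid H_t$, and Theorem~\ref{th:error-information} identifies $\I(Y_{t+1};H_t\mid U_t)$ with the informational error; subtracting gives the statement.

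The inequality $\I(Y_{t+1};U_t)\le\Ent(U_t)$ is then immediate: $\I(Y_{t+1};U_t)=\Ent(U_t)-\Ent(U_t\mid Y_{t+1})$ with conditional entropy nonnegative (and the bound is vacuous when $\Ent(U_t)=\infty$, e.g.\ for an uncompressed $U_t=H_t$), where for uncountable‑range $U_t$ one reads $\Ent$ through the quantization definition given earlier in the paper. In short, apart from the identification of $P^*_0$ with the marginal of $Y_{t+1}$ — the only genuinely delicate ingredient — the argument is bookkeeping that parallels the proofs of Theorems~\ref{th:error-decomposition} and~\ref{th:error-information} together with a one‑line entropy bound.
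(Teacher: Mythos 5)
Your proposal is correct and, in its second (``more conceptual'') derivation, reproduces the paper's own proof exactly: apply Theorem~\ref{th:error-information}, expand $\I(Y_{t+1};H_t|U_t)$ via the chain rule using the fact that $U_t$ is determined by $H_t$ and independent algorithmic randomness (so $\I(Y_{t+1};H_t,U_t)=\I(Y_{t+1};H_t)$), identify $\I(Y_{t+1};H_t)$ with $\E[\KL(P^*_t\|P^*_0)]$, and bound the resulting mutual information by $\Ent(U_t)$. You are also right that the one delicate step is identifying $P^*_0$ with the marginal law of $Y_{t+1}$; the paper handles this only by the remark following the theorem that $P^*_0=\Prob(Y_{t+1}=\cdot\mid H_0)=\Prob(Y_{t+1}=\cdot)$, which is precisely the assumption you flag.
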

\begin{proof}
For all $t$,
\begin{align*}
\E[\KL(P^*_t\|\tilde{P}_t)]
=& \I(Y_{t+1};H_t|U_t) \\
=& \I(Y_{t+1};H_t,U_t) - \I(Y_{t+1};U_t) \\
=& \I(Y_{t+1};H_t) - \I(Y_{t+1};U_t) \\
=& \E[\KL(P^*_t\|P^*_0)] - \I(Y_{t+1};U_t).
\end{align*}
We arrive at the \henrik{equality} in our result by rearranging terms. The \henrik{inequality} follows from the fact that mutual information between two random variables is bounded by the entropy of each.
\end{proof}
Note that $P^*_0 = \Prob(Y_{t+1} = \cdot | H_0) = \Prob(Y_{t+1} = \cdot)$ is an uninformed prediction, which is based on no data. The left-hand-side expression $\E[\KL(P^*_t\|P^*_0)] - \E[\KL(P^*_t\|\tilde{P}_t)]$ is the reduction in error relative to an uninformed prediction. 

\henrik{
The theorem above can be interpreted in two steps:
\begin{enumerate}
    \item If the agent capacity $\Ent(U_t)$ is increased and that capacity is maximally used then $\I(Y_{t+1}, U_t)$ is increased. The expression $\I(Y_{t+1}; U_t)$ is the degree to which the agent state $U_t$ informs the agent about $Y_{t+1}$.
    \item If $\I(Y_{t+1};U_t)$ increases, then the error of the idealized agent, $\E[\KL(P^*_t\|\tilde{P}_t)]$ goes down.
\end{enumerate}
}

\begin{summary}
\begin{itemize}
\item An {\bf agent state} $U_t$ is a summary of the history $H_t$ maintained to facilitate efficient computation of each action
$$A_t \sim \pi(\cdot| U_t),$$
and subsequent agent state
$$U_{t+1} \sim \psi(\cdot|U_t, A_t, O_{t+1}).$$
The agent policy of an agent designed in this way is characterized by the pair $(\psi,\pi)$.
\item The {\bf information content} of agent state is the number of nats required to encode it.  At large scale and under plausible technical conditions, this is well-approximated by the {\bf entropy} $\Ent(U_t)$.
\item An agent's {\bf information capacity} is a constraint on the information content and is typically limited by computational resources.
\item Information capacity limits agent performance, which we measure in terms of average reward.
\item In the special case of {\bf continual supervised learning} with rewards $R_{t+1} = \ln P_t(Y_{t+1})$, expected reward is determined by prediction error via
$$\underbrace{\E[\ln P_t(Y_{t+1})]}_{\text{reward}} = \underbrace{\E[\ln P^*_t(Y_{t+1})]}_{\text{optimal reward}} - \underbrace{\E[\KL(P^*_t \| P_t)]}_{\text{prediction error}},$$
where $P_t$ denotes the agent's prediction and $P_t^*(\cdot) = \Prob(Y_t = \cdot|H_t)$ is the optimal prediction.
Prediction error decomposes into informational and inferrential errors:
$$\underbrace{\E[\KL(P^*_t \| P_t)]}_{\mathrm{prediction\ error}} = \underbrace{\E[\KL(P^*_t\|\tilde{P}_t)]}_{\mathrm{informational\ error}} + \underbrace{\E[\KL(\tilde{P}_t\|P_t)]}_{\mathrm{inferential\ error}},$$
where $\tilde{P}_t(\cdot) = \Prob(Y_t = \cdot|U_t)$ is the best prediction that can be produced based on the agent state.  
The informational error is equal to the information absent from agent state that would improve the prediction:
$$\underbrace{\E[\KL(P^*_t \| \tilde{P}_t)]}_{\mathrm{informational\ error}} = \underbrace{\I(Y_{t+1};H_t|U_t)}_{\mathrm{absent\ info}}.$$
\henrik{Reduction in error is bounded} by the information capacity according to
$$\underbrace{\E[\KL(P^*_t\|P^*_0)]}_{\mathrm{initial\ error}} - \underbrace{\E[\KL(P^*_t\|\tilde{P}_t)]}_{\mathrm{informational\ error}} = \underbrace{\I(Y_{t+1};U_t)}_{\mathrm{useful\ info}} \leq \underbrace{\Ent(U_t)}_{\mathrm{info}}.$$
\end{itemize}
\end{summary}
\clearpage

\section{Stability Versus Plasticity}\label{se:stability_plasticity}

\henrik{The subjects of {\it catastrophic forgetting} and {\it loss of plasticity} have attracted a great deal of attention in the continual learning literature.} Catastrophic forgetting refers to elimination of useful information. Loss of plasticity refers to an inability to ingest useful new information.  In this section, we build on information-theoretic tools introduced in the previous section to formalize these concepts and clarify the interaction between information capacity, stability, and plasticity. To keep the analysis simple, we restrict attention to the special case of continual supervised learning, as presented in Section~\ref{sec:csl_special_case}.  Recall that each action is  a predictive distribution $A_t = P_t$ and each reward $R_{t+1} = \ln P_t(Y_{t+1})$ is the logarithm of the probability assigned to the realized label.

\henrik{Note that in this section, we study idealized agents that perform perfect predictions given the information retained by the agent. Thus, our treatment abstracts away the optimization challenges specific to continual learning, such as running SGD on a non-iid data stream.}

\subsection{Stability-Plasticity Decomposition}

If the agent had infinite information capacity and could maintain history as its agent state $U_t = H_t$ then the informational error would be $\E[\KL(P^*_t \| \tilde{P}_t)]  = \I(Y_{t+1}; H_t|H_t) = 0$.  However, with an agent state that retains partial information, the error will typically be larger.  In this way, and as expressed by Theorem \ref{th:error-capacity}, a constraint on capacity can induce error.  This happens through requiring that the agent either forget some old information, forgo ingestion of some new information, or both.  

We can formalize the relation between these quantities in information-theoretic terms.  At each timestep $t$, the informational error $\E[\KL(P^*_t \| \tilde{P}_t)]$ arises as a consequence of useful information forgotten or forgone due to implasticity over previous timesteps.  As shorthand, let $H_{t-k:t} = (O_{t-k}, \ldots, O_t)$.
The error due to information forgotten $k$ timesteps earlier can be expressed as $I(Y_{t+1};U_{t-k-1}|U_{t-k}, H_{t-k:t})$: the information about $Y_{t+1}$ that is available in $U_{t-k-1}$ but not the next agent state $U_{t-k}$ or the subsequent experience $H_{t-k+1:t}$.  In other words, this is the information lost in transitioning from $U_{t-k-1}$ to $U_{t-k}$ and not recoverable by timestep $t$.  The error due to implasticity $k$ timesteps earlier can be expressed as $\I(Y_{t+1};O_{t-k}|U_{t-k}, H_{t-k+1:t})$: the information about $Y_{t+1}$ that is available in $O_{t-k}$ but absent from the preceding agent state $U_{t-k}$ and the subsequent experience $H_{t-k+1:t}$.  In other words, this is information presented by $O_{t-k}$ but not $U_{t-k}$ and that is not otherwise available through timestep $t$.

The following theorem provides a formal decomposition, attributing error to forgetting and implasticity.  Note that actions are omitted because in the case of supervised learning they do not impact observations.

\begin{restatable}{theorem}{capacityErrorDecomp}\label{th:cap_error_decomp}
For all $t\in \Z_{+}$,
$$\underbrace{\E[\KL(P^*_t \| \tilde{P}_t)]}_{\mathrm{error}} = \sum_{k=0}^t \left(\underbrace{\I(Y_{t+1};U_{t-k-1}|U_{t-k}, H_{t-k:t})}_{\mathrm{forgetting\ at\ lag\ } k} + \underbrace{\I(Y_{t+1};O_{t-k}|U_{t-k}, H_{t-k+1:t})}_{\mathrm{implasticity\ at\ lag\ } k}\right).$$
\end{restatable}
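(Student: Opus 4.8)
The plan is to reduce the left-hand side to a single conditional mutual information, then expand it by a telescoping application of the mutual-information chain rule. By Theorem~\ref{th:error-information}, $\E[\KL(P^*_t\|\tilde P_t)] = \I(Y_{t+1};H_t\mid U_t)$. Applying the chain rule twice, $\I(Y_{t+1};H_t\mid U_t) = \I(Y_{t+1};H_t,U_t) - \I(Y_{t+1};U_t)$ and $\I(Y_{t+1};H_t,U_t) = \I(Y_{t+1};H_t) + \I(Y_{t+1};U_t\mid H_t)$; since $Y_{t+1}\perp U_t\mid H_t$ --- the same independence used in the proof of Theorem~\ref{th:error-information}, which holds because in supervised learning the observations (hence $Y_{t+1}$) are generated from the environment's randomness and past observations alone, independently of the agent's action and update randomness --- the last term is $0$. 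Because the initialization $U_0$ is likewise independent of the environment, we may freely adjoin it: the error equals $\I(Y_{t+1};U_0,H_t) - \I(Y_{t+1};U_t)$.

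I then telescope between $\I(Y_{t+1};U_t)$ and $\I(Y_{t+1};U_0,H_t)$ through the intermediate quantities $\Phi_j := \I(Y_{t+1};U_{t-j},H_{t-j+1:t})$, $j = 0,\dots,t$, where $H_{t+1:t} := \emptyset$, so that $\Phi_0 = \I(Y_{t+1};U_t)$ and $\Phi_t = \I(Y_{t+1};U_0,H_t)$. The crux is a ``peeling'' identity: for each $1\le s\le t$,
\[
\I(Y_{t+1};U_s \mid U_{s-1},H_{s:t}) = 0,
\]
so that $U_s$ may be added to the conditioning set without changing the mutual information with $Y_{t+1}$. This holds because $U_s$ is a (possibly randomized) function of $(U_{s-1},A_{s-1},O_s)$, $A_{s-1}$ is a randomized function of $U_{s-1}$, and the policy/update randomness entering these is fresh and independent of the environment's randomness; combined with the supervised-learning property that $Y_{t+1}$ and $O_{s:t}$ depend only on past observations and environment noise, $U_s$ carries no information about $Y_{t+1}$ beyond what $(U_{s-1},O_s)\subseteq(U_{s-1},H_{s:t})$ already provides. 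Given the peeling identity, $\Phi_{j+1} = \I(Y_{t+1};U_{t-j-1},H_{t-j:t}) = \I(Y_{t+1};U_{t-j},U_{t-j-1},H_{t-j:t})$, and one chain-rule step gives $\Phi_{j+1}-\Phi_j = \I(Y_{t+1};\,U_{t-j-1},O_{t-j}\mid U_{t-j},H_{t-j+1:t})$; a further chain-rule step (peeling off $O_{t-j}$ first, using $H_{t-j:t}=\{O_{t-j}\}\cup H_{t-j+1:t}$) splits this into $\I(Y_{t+1};O_{t-j}\mid U_{t-j},H_{t-j+1:t})$, the implasticity term at lag $j$, plus $\I(Y_{t+1};U_{t-j-1}\mid U_{t-j},H_{t-j:t})$, the forgetting term at lag $j$.

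Summing $\Phi_{j+1}-\Phi_j$ over $j = 0,\dots,t-1$ yields $\sum_{k=0}^{t-1}(\text{forgetting}_k + \text{implasticity}_k)$; the remaining $k=t$ summand $\I(Y_{t+1};U_{-1}\mid U_0,H_{0:t}) + \I(Y_{t+1};O_0\mid U_0,H_{1:t})$ is zero under the natural boundary convention that there is no pre-initial agent state and no observation before $O_1$ (so $U_{-1}$ and $O_0$ are degenerate), which gives the stated identity. I expect the main obstacle to be stating and justifying the peeling identity precisely --- isolating exactly which independencies among the environment's and the agent's randomness are in play and verifying that the supervised-learning assumption is what forces $\I(Y_{t+1};U_s\mid U_{s-1},H_{s:t})$ to vanish; the rest is routine chain-rule bookkeeping.
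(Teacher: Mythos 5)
Your proof is correct and follows essentially the same route as the paper's: both reduce the error to $\I(Y_{t+1};H_t\mid U_t)$ via Theorem~\ref{th:error-information}, invoke the conditional independence of the agent states from $Y_{t+1}$ given history, and use the fact that $U_s$ is determined by $(U_{s-1},O_s)$ together with fresh algorithmic randomness to drop redundant information/conditioning; your telescoping through the $\Phi_j$ is just the paper's single chain-rule expansion of $\I(Y_{t+1};H_t,U_{0:t-1}\mid U_t)$ unrolled one lag at a time. Your boundary bookkeeping (the degenerate $k=t$ summand) likewise matches the paper's convention that $U_{-1}$ and $O_0$ are trivial.
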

\begin{proof}
We have
\begin{align*}
\E[\KL(P^*_t \| \tilde{P}_t)]
&\overset{(a)}{=} \I(Y_{t+1}; H_t | U_t) \\
&\overset{(b)}{=} \I(Y_{t+1}; H_t, U_{0:t-1} | U_t) \\
&\overset{(c)}{=} \sum_{k=0}^t \I(Y_{t+1}; O_{t-k}, U_{t-k-1} | U_{t-k:t}, H_{t-k+1:t}) \\
&\overset{(d)}{=} \sum_{k=0}^t \I(Y_{t+1}; O_{t-k}, U_{t-k-1} | U_{t-k}, H_{t-k+1:t}) \\
&\overset{(e)}{=} \sum_{k=1}^t \left(\I(Y_{t+1}; U_{t-k-1} | U_{t-k}, H_{t-k:t}) + \I(Y_{t+1}; O_{t-k} | U_{t-k}, H_{t-k+1:t}) \right),
\end{align*}
where $(a)$ follows from Theorem \ref{th:error-information}; $(b)$ follows from $U_{0:t-1} \perp Y_{t+1} | U_t, H_t$; $(c)$ follows from the chain rule of mutual information \henrik{(reproduced in Lemma~\ref{lem:chain-rule-mutual} in Appendix \ref{app:information_theory})}, and we take $Y_0$ and $U_{-1}$ to be the singleton $\emptyset$; \henrik{$(d)$ follows from the fact that for all $j$, $U_{j+1}$ is a function of $(U_{j}, O_{j+1})$;} and $(e)$ follows from chain rule of mutual information \henrik{(Lemma~\ref{lem:chain-rule-mutual})}.
\end{proof}

Implasticity and forgetting terms are indexed by a lag $k$ relative to the current timestep $t$.  In the process of updating the agent state from $U_{t-k-1}$ to $U_{t-k}$ in response to an observation, information may be ingested and/or forgotten. The implasticity and forgetting terms measure how this ultimately impacts the agent's ability to predict $Y_{t+1}$.  Summing over lags $k$ produces the immediate error $\I(Y_{t+1};H_t|U_t)$.

The implasticity at lag $k$ measures the amount of information presented by $O_{t-k}$ that is useful for predicting $Y_{t+1}$ that the agent fails to ingest and is absent from the intermediate data $H_{t-k+1:t}$.  Note that, due to the conditioning on the data $H_{t-k+1:t}$, this term only penalizes the inability to extract information about $Y_{t+1}$ \henrik{from $O_{t-k}$} that will not be again available from observations between \henrik{timesteps $t-k+1$ and $t$}.

The forgetting at lag $k$ derives from the agent ejecting information that is relevant to predicting $Y_{t+1}$ when updating its agent state from $U_{t-k-1}$ to $U_{t-k}$. Again, due to conditioning on $H_{t-k:t}$, this term only penalizes for forgotten information that cannot be recovered from other observations to be made before timestep $t$.

Our decomposition indicates that errors due to implasticity and forgetting are {\it forward looking} in the sense that they only impact predictions at subsequent timesteps; thus the lag $k$ relative to prediction error. 
 This is in contrast with much of the continual SL literature, which aims to develop agents that remember information that would have been useful in their past, even if that information is unlikely to be useful to their future.  Indeed, the term {\it catastrophic forgetting} typically refers to loss of useful information, whether useful in the future or past.

Our expressions for implasticity and forgetting are complicated by sums over lags. The expressions simplify greatly when each input $X_{t+1}$ is independent of history, i.e., $X_{t+1}\perp H_t$, and when the sequence of agent states and observations forms a stationary stochastic process\footnote{\henrik{Note that we overload the word stationary here to refer to a stochastic process whose law doesn't change under time shifts. In other places in the monograph, the term stationary will be used loosely to refer to cases when there is no fixed learning target upon which the agent can converge. See Section \ref{se:continual-vs-convergent} for an elaboration.}}:
\henrik{
\begin{definition}[\emph{stationary} stochastic process]\label{def:stationary_stochastic_process}
    A \emph{stationary} stochastic process is a stochastic process $\{X_t\}_{t=0}^\infty$ s.t. for all $\tau \geq 0$ and $t \in \Z_{+}$,
    $$\Prob(X_{0}, \ldots, X_{t})\ =\ \Prob(X_{\tau}, \ldots, X_{t+\tau}).$$
\end{definition}
}
\henrik{
    \begin{figure}[h]
        \centering
        \includegraphics[width=0.45\linewidth]{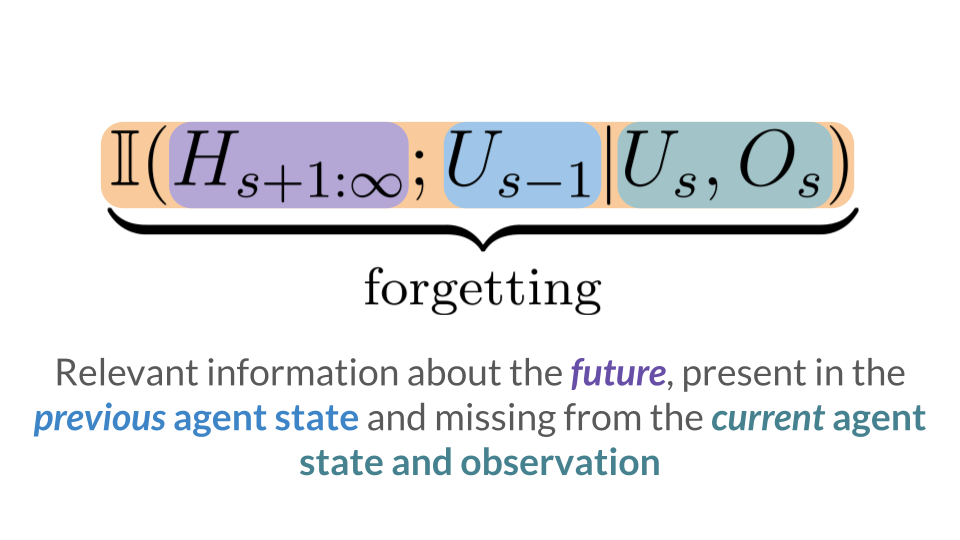}
        \includegraphics[width=0.45\linewidth]{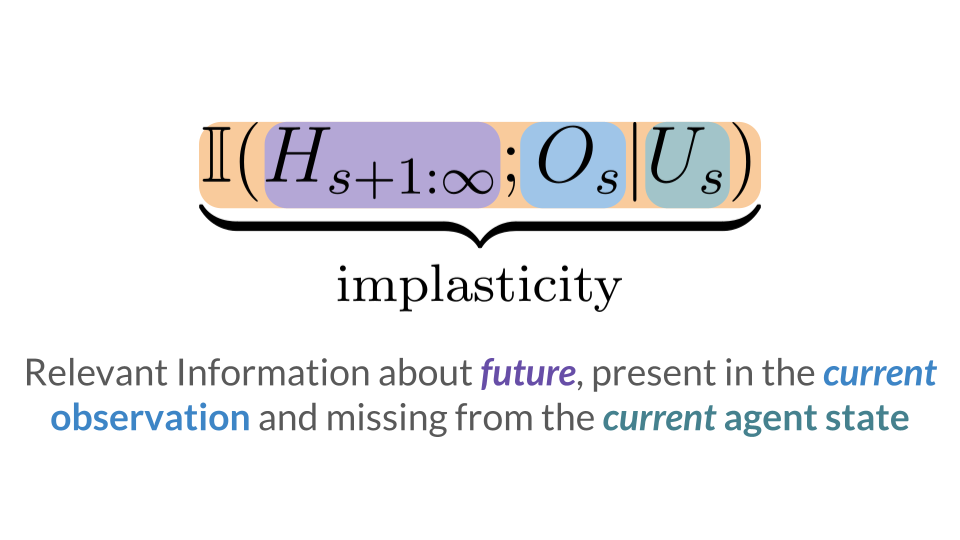}
        \caption{The above illustrates the intuitive definitions of forgetting and implasticity afforded by our framework.}
        \label{fig:forget_plasticity}
    \end{figure}
}

\henrik{This corresponds to a nonstationary supervised learning problem in which the inputs are independently and identically distributed but the function mapping inputs to outputs is changing over time according to a stationary stochastic process (see Section~\ref{sec:lms-ar1} for an example).}

As a reminder, $\mathbb{Z}_+$ denotes the set of non-negative integers and $\mathbb{Z}_{++}$ denotes the set of positive integers.

\henrik{
\begin{theorem}\label{thm:errors-simplification}
Suppose $((U_t, O_t): t \in \mathbb{Z}_+)$ is a stationary process and, for all $t \in \mathbb{Z}_{++}$, $X_{t+1} \perp H_t$.  Then, for all $s \in \mathbb{Z}_{++}$
$$\lim_{t\to\infty}\underbrace{\E[\KL(P^*_t \| \tilde{P}_t)]}_{\mathrm{error}} = \underbrace{\I(H_{s+1:\infty}; U_{s-1}|U_s, O_s)}_{\mathrm{forgetting}} + \underbrace{\I(H_{s+1:\infty};O_s|U_s)}_{\mathrm{implasticity}}.$$
\end{theorem}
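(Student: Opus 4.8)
The plan is to derive the identity by summing the finite-horizon decomposition of Theorem~\ref{th:cap_error_decomp} over prediction times rather than over lags. By Theorem~\ref{th:error-information} the error equals $\phi_t := \I(Y_{t+1}; H_t \mid U_t)$, which in the supervised-learning setting equals $\I(Y_{t+1}; O_{1:t}\mid U_t)$ since each $A_j$ is a function of $U_j$ and independent internal randomness. Reindexing the sum in Theorem~\ref{th:cap_error_decomp} by the absolute update time $s = t-k$, and writing $H_{a:b} = (O_a,\dots,O_b)$, we have $\phi_t = \sum_{s=0}^{t}(F_{t,s}+P_{t,s})$ with $F_{t,s} = \I(Y_{t+1}; U_{s-1}\mid U_s, H_{s:t})$ the forgetting at update time $s$ as felt by the time-$t$ prediction and $P_{t,s} = \I(Y_{t+1}; O_s\mid U_s, H_{s+1:t})$ the corresponding implasticity. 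I would then fix $s$ and sum over all $t \geq s$.

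The core step is the identity
\[
\sum_{t \geq s}\bigl(F_{t,s}+P_{t,s}\bigr) \;=\; \I(H_{s+1:\infty}; U_{s-1}\mid U_s, O_s)\;+\;\I(H_{s+1:\infty}; O_s\mid U_s)\;=:\;\Psi_s .
\]
To obtain it, write the left-hand side as $\sum_{j\geq1}(F_{s+j-1,s}+P_{s+j-1,s})$. The hypothesis $X_{t+1}\perp H_t$ — together with the independence structure of the prototypical nonstationary-SL model this section concerns (iid inputs, independent of the label-generating process and of the agent's internal randomness) — makes each input $X_{s+j}$ jointly independent of every variable appearing in the $j$-th summand, so the target $Y_{s+j}$ may be replaced by the whole observation $O_{s+j} = (Y_{s+j}, X_{s+j})$ without changing its value. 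The chain rule of mutual information, applied along $O_{s+1}, O_{s+2}, \dots$, then reassembles $\sum_{j\geq1}\I(O_{s+j}; U_{s-1}\mid U_s, H_{s:s+j-1})$ into $\I(O_{s+1:\infty}; U_{s-1}\mid U_s, O_s) = \I(H_{s+1:\infty}; U_{s-1}\mid U_s, O_s)$, and similarly the implasticity terms give $\I(H_{s+1:\infty}; O_s\mid U_s)$. By stationarity of $((U_t,O_t):t\in\mathbb{Z}_+)$ the joint law of $(U_{s-1}, U_s, O_s, O_{s+1}, \dots)$ is independent of $s$, so $\Psi_s = \Psi$ is constant; note that $\Psi$ is precisely the right-hand side of the theorem evaluated at any $t$.

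It remains to pass from $\{\Psi_s\}$ back to $\lim_t \phi_t$. Summing the decomposition over $t$ and interchanging the nonnegative double sum, $\sum_{t=0}^{T-1}\phi_t = \sum_{s=0}^{T-1}\Psi_s^{(T-1)}$, where $\Psi_s^{(n)} := \sum_{t=s}^{n}\bigl(F_{t,s}+P_{t,s}\bigr)$ increases to $\Psi_s = \Psi$ as $n\to\infty$. A sandwich — using $\Psi_s^{(T-1)}\leq\Psi$, and $\Psi_s^{(T-1)}\geq\Psi_s^{(s+m)} = \Psi^{(m)}$ (an $s$-independent partial sum, by stationarity) for $1\leq s\leq T-1-m$, with $\Psi^{(m)}\uparrow\Psi$ — gives $\tfrac1T\sum_{t=0}^{T-1}\phi_t \to \Psi$. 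Independently, a one-step shift of the stationary process gives $\phi_{t+1} = \I(Y_{t+2}; O_{1:t+1}\mid U_{t+1}) \geq \I(Y_{t+2}; O_{2:t+1}\mid U_{t+1}) = \phi_t$, so $\phi_t$ is non-decreasing; a non-decreasing sequence whose averages $\tfrac1T\sum_{t<T}\phi_t$ converge must itself converge to the same value. Hence $\lim_t\phi_t = \Psi = \lim_t\bigl(\I(H_{t+1:\infty}; U_{t-1}\mid U_t, O_t)+\I(H_{t+1:\infty}; O_t\mid U_t)\bigr)$, as claimed (the case $\Psi=\infty$ being covered in the same way).

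The step I expect to be the main obstacle is the independence bookkeeping in the core identity: carefully checking that inserting the contemporaneous input into each target and the later inputs into the conditioning sets leaves every mutual information unchanged — that is, pinning down exactly which conditional independences the ``iid inputs'' hypothesis supplies, and making the chain-rule manipulation on the infinite observation tail rigorous. The remaining combinatorics — reindexing lags by update time, swapping the two summations, and the monotone/averaging passage to the limit — are routine precisely because every quantity in play is a nonnegative (conditional) mutual information.
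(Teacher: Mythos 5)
Your proposal is correct and follows essentially the same route as the paper: the same lag decomposition from Theorem~\ref{th:cap_error_decomp}, the same use of stationarity and of $X_{t+1}\perp H_t$ to replace the target $Y$ by the full observation $O$, and the same chain-rule reassembly of the summands into tail mutual informations. The only real difference is the final limit passage: the paper observes that, after the stationarity shift, $\E[\KL(P^*_t\|\tilde{P}_t)]$ is itself the $t$-th partial sum of the fixed nonnegative series you compute (hence increases directly to the tail quantity $\I(H_{t+1:\infty};U_{t-1}|U_t,O_t)+\I(H_{t+1:\infty};O_t|U_t)$), which renders your Ces\`aro-average-plus-monotonicity detour unnecessary, though not incorrect.
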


\begin{proof}
We establish the result by taking limits of the expressions from Theorem \ref{th:cap_error_decomp} for cumulative forgetting and implasticity.
The forgetting term yields
    \begin{align*}
        \lim_{t\to\infty}\sum_{k=0}^t\I(Y_{t+1};U_{t-k-1}|U_{t-k}, H_{t-k:t})
        & \overset{(a)}{=} \lim_{t\to\infty}\sum_{k=0}^t\I(Y_{t+k+1}; U_{t-1}|U_{t}, H_{t:t+k})\\
        & \overset{(b)}{=} \lim_{t\to\infty}\sum_{k=0}^t\I(X_{t+k+1}, Y_{t+k+1}; U_{t-1}|U_{t}, H_{t:t+k})\\
        & \overset{(c)}{=} \lim_{t\to\infty}
        \I(H_{t+1:2t+1}; U_{t-1}|U_{t}, O_t)\\
        & \overset{(d)}{=} \lim_{t\to\infty}
        \I(H_{s+1:s+t+1}; U_{s-1}|U_{s}, O_s)\\
        & =
        \I(H_{s+1:\infty}; U_{s-1}|U_{s}, O_s),
    \end{align*}
    where $(a)$ follows from the fact that since $(U_t, O_t)$ is stationary, $\I(Y_{t+1};U_{t-k-1}|U_{t-k}, H_{t-k:t})$ is a constant independent of $t$; $(b)$ follows Lemma~\ref{lem:mutual-information-unchanged-1} in Appendix~\ref{app:information_theory}, taking $A=Y_{t+k+1}$, $B=U_{t-1}$, $C=(U_t, H_{t:t+k})$, $D=X_{t+k+1}$, and noting that $X_{t+k+1}\perp U_{t-1} | (U_t, H_{t:t+k}, Y_{t+k+1})$; $(c)$ follows from the chain rule of mutual information (Lemma~\ref{lem:chain-rule-mutual}); and $(d)$ follows from stationarity.
    
The implasticity term yields
\begin{align*}
        \lim_{t\to\infty}\sum_{k=0}^t \I(Y_{t+1};O_{t-k}|U_{t-k}, H_{t-k+1:t})
        & \overset{(a)}{=} \lim_{t\to\infty}\sum_{k=0}^t \I(Y_{t+k+1};O_{t}|U_{t}, H_{t+1:t+k})\\
        & \overset{(b)}{=} \lim_{t\to\infty}\sum_{k=0}^t \I(X_{t+k+1}, Y_{t+k+1};O_{t}|U_{t}, H_{t+1:t+k})\\
        & \overset{(c)}{=}\lim_{t\to\infty}\I(H_{t+1:2t+1};O_{t}|U_{t})\\
        & \overset{(d)}{=}\lim_{t\to\infty}\I(H_{s+1:s+t+1};O_{s}|U_{s})\\
        & = \I(H_{s+1:\infty};O_{s}|U_{s}),
    \end{align*}
    where $(a)$ follows from the fact that since $(U_t, O_t)$ is stationary, $\I(Y_{t+1};O_{t-k}|U_{t-k}, H_{t-k+1:t})$ is a constant independent of $t$; $(b)$ follows Lemma~\ref{lem:mutual-information-unchanged-1} in Appendix~\ref{app:information_theory}, taking $A=Y_{t+k+1}$, $B=O_t$, $C=(U_t, H_{t+1:t+k})$, $D=X_{t+k+1}$, and noting that $X_{t+k+1}\perp O_t | (U_t, H_{t+1:t+k}, Y_{t+k+1})$; $(c)$ follows from the chain rule of mutual information (Lemma~\ref{lem:chain-rule-mutual}); and $(d)$ follows from stationarity.
\end{proof}
}
The first term on the right-hand-side equates error due to forgetting with the information available in the previous agent state $U_{t-1}$, but neither in the subsequent agent state $U_t$ or the subsequent observation $O_t$.
The second term characterizes error due to implasticity as the information about the future $H_{t+1:\infty}$ available in the current observation $O_t$ but not ingested into the current agent state $U_t$.

\subsection{A Didactic Example}
\label{se:lms}

We will illustrate concretely through a simple example how agent and environment dynamics influence implasticity and forgetting errors.  
We focus here on insight that can be drawn from analysis of the example, deferring details of the analysis to Appendix \ref{apdx:lms}.
\henrik{
We do \emph{not} argue for the practical adoption of the particular algorithms proposed in this section.  Rather, we use them as illustrative examples to show how standard algorithms might need to be adapted when considering capacity constraints.
}
\subsubsection{LMS with an AR(1) Process}
\label{sec:lms-ar1}

We consider a very simple instance of continual SL in which the input set is a singleton $\mathcal{X} = \{\emptyset\}$ and the label set $\mathcal{Y} = \mathbb{R}$ is of real numbers.  Since inputs $X_0,X_1,\ldots$ are uninformative, we take the history to only include labels $H_t = (Y_1, \ldots, Y_t)$.  Each label is generated according to
$$Y_{t+1} = \theta_t + W_{t+1},$$
where $\theta_t$ is a latent variable that represents the state of the process and $W_{t+1}$ is a sample of an iid $\mathcal{N}(0, \sigma^2)$ sequence.  The sequence $\theta_t$ is initialized with $\theta_0 \sim \mathcal{N}(0,1)$ and evolves according to
$$\theta_{t+1} = \eta \theta_t + V_{t+1},$$
where $\eta$ is a fixed parameter and $V_{t+1}$ is a sample of an iid $\mathcal{N}(0,1-\eta^2)$ sequence.  Note that, for all $t$, the marginal distribution of $\theta_t$ is standard normal.

Consider an agent that maintains a real-valued agent state, initialized with $U_0 \sim \mathcal{N}(0,1)$ and updated according to the least mean squares (LMS) algorithm \citep{Widrow1960LMS}
$$U_{t+1} = U_t + \alpha (Y_{t+1} - U_t),$$
with a fixed stepsize $\alpha \in (0,1)$.  This agent can then generate predictions
$P_t(\cdot) = \Prob(Y_{t+1} \in \cdot|U_t)$, which are Gaussian distributions.
This agent does not necessarily make optimal use of history, meaning that $\KL(P^*_t\|P_t) > 0$.  However, for a suitably chosen stepsize, the agent attains zero error in steady state: $\lim_{t \rightarrow \infty} \KL(P^*_t\|P_t) = 0$.  With this optimal stepsize the LMS algorithm becomes a steady-state Kalman filter \citep{Kalman1960Filter}.

\subsubsection{Constraining Information Content}

Note that, for our agent, the forgetting error $\I(Y_{t+1}; U_{t-k-1} | U_{t-k}, H_{t-k:t})$ is always zero because $U_{t-k-1}$ is determined by the next agent state $U_{t-k}$ and the label $Y_{t-k}$, which makes up part of $H_{t-k:t}$, according to
$U_{t-k-1} = (U_{t-k} - \alpha Y_{t-k})/(1-\alpha)$.
This degeneracy stems from the agent's infinite capacity -- since the agent state is a real number, it can encode an infinite amount of information.  As discussed earlier, practical designs of agents subject to computational constraints limit information capacity.  As a microcosm that may yield insight relevant to such contexts, we will consider a variant of LMS with bounded information capacity.

More relevant qualitative behavior emerges when the agent restricts the information content $\Ent(U_t)$ of the agent state to operate with limited information capacity.  Such a constraint arises, for example, if instead of retaining a continuous-valued variable the agent must quantize, encoding an approximation to the agent state with a finite alphabet.  The state of such an agent might evolve according to
\begin{equation}
\label{eq:lms-agent-update}
U_{t+1} = U_t + \alpha (Y_{t+1} - U_t) + Q_{t+1},
\end{equation}
where $Q_{t+1}$ represents quantization error. We can think of this error as quantization noise, which perturbs results of agent state updates.  In particular, it is the difference between the real value $U_t + \alpha (Y_{t+1} - U_t)$ that the agent would store as $U_{t+1}$, if it could, and the quantized value that it actually stores.

Because the effects of quantization noise can be difficult to quantify, for the purpose of analysis, it is common in information theory to approximate quantization noise as an independent Gaussian random variable.  \henrik{Often this does not impact qualitative insights (see e.g., Theorem 10.3.2 in \citet{cover2012elements}).}  For analytical tractability, we will approximate the quantization noise $(Q_t: t \in \mathbb{Z}_{++})$ as an iid $\mathcal{N}(0, \delta^2)$ sequence, for some $\delta > 0$, which we will refer to as the {\it quantization noise intensity}.  

Like actual quantization, this Gaussian noise moderates information content of the agent state.  However, while actual quantization keeps the information content $\Ent(U_t)$ bounded, with Gaussian noise, $\Ent(U_t)$ becomes infinite.  This is because $U_t$ encodes infinite irrelevant information expressed by the Gaussian noise itself.  When Gaussian noise is used to approximate quantization, rather than $\Ent(U_t)$, it is more appropriate to take the information content to be $\I(U_t; H_t)$.  This represents the number of nats of information from the history $H_t$ retained by the agent state $U_t$.  In particular, $\I(U_t; H_t)$ excludes irrelevant information expressed by the Gaussian noise $Q_t$.  

As one would expect, for our setting of LMS with an AR(1) process, $\I(U_t; H_t)$ is infinite when $\delta = 0$, then monotonically decreases, vanishing as the quantization noise intensity $\delta$ increases.  We impose a constraint $\I(U_t; H_t) \leq C$ to express a fixed information capacity $C$.  Given a choice of stepsize $\alpha$, we take the quantization noise intensity to be the value $\delta_*$ so that this constraint is binding: $\I(U_t; H_t) = C$.  This models a quantization scheme that maximizes information content subject to the information capacity.  As established by Theorem \ref{thm:lms-capacity-noise} in Appendix \ref{apdx:lms-reparam}, 
\begin{equation}
\label{eq:quantization-noise-adaptation}
\delta^2_*(\alpha)= \alpha^2  \frac{\sigma^2 (1 - \eta+\eta\alpha) + 1 + \eta-\eta\alpha}{1 - \eta+\eta\alpha} \frac{\exp(-2C)}{1-\exp(-2C)}.
\end{equation}

\subsubsection{Analysis}

Figure~\ref{fig:lms-errors} plots errors due to implasticity $\I(Y_{t+1:\infty};Y_{t}|U_{t})$ and forgetting $\I(Y_{t+1:\infty}; U_{t-1}|U_{t}, Y_{t})$ versus stepsize, for asymptotically large $t$ and observation noise standard deviation $\sigma=0.5$, an information capacity $C=2$, and autoregressive model coefficients $\eta=0.9, 0.95, 0.99$.
\begin{figure}
\centering
\begin{tabular}{ccc}
\begin{subfigure}{0.32\linewidth}
\includegraphics[width=\linewidth]{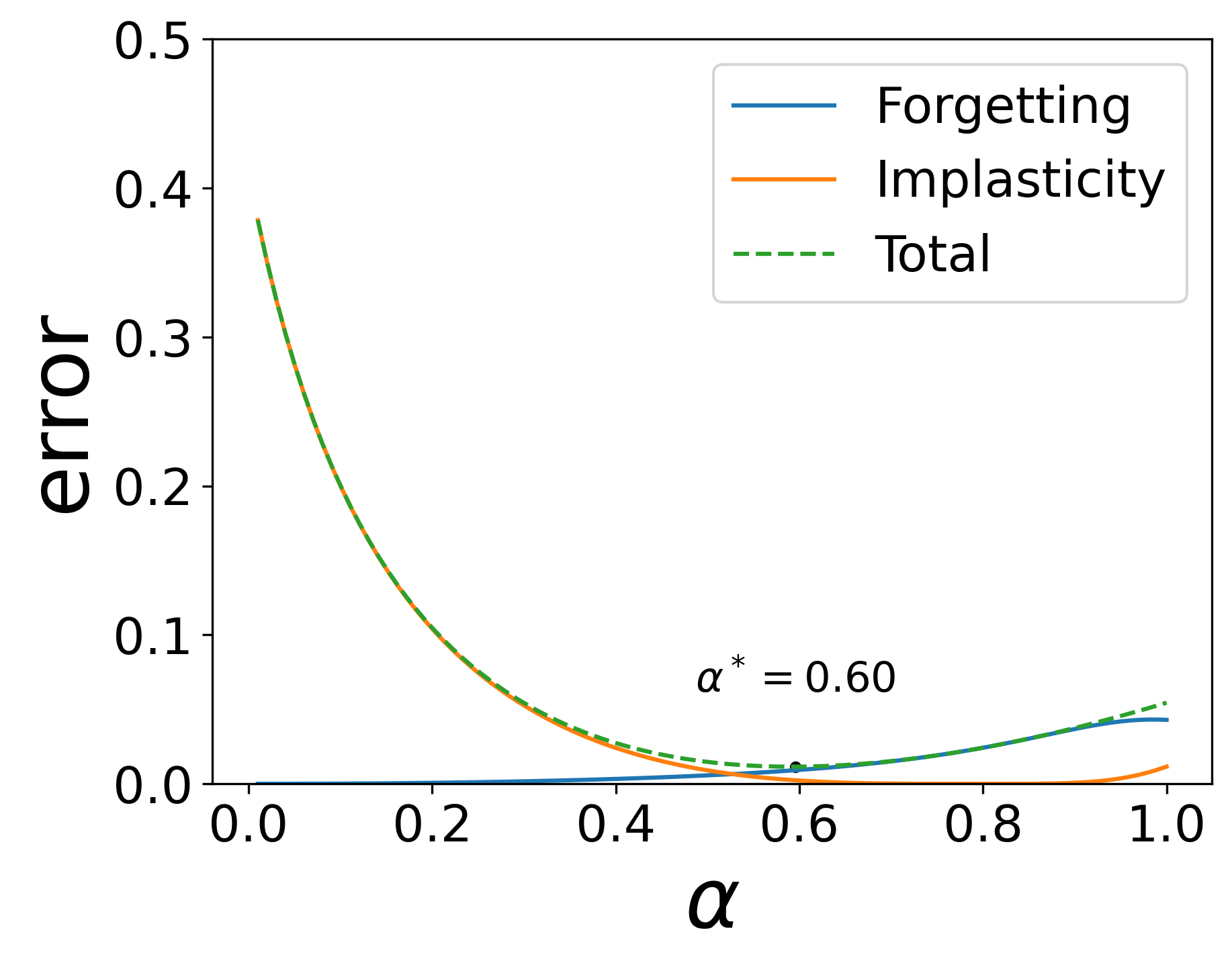}
\caption{$\eta=0.9$}
\end{subfigure} 
& \begin{subfigure}{0.32\linewidth}
\includegraphics[width=\linewidth]{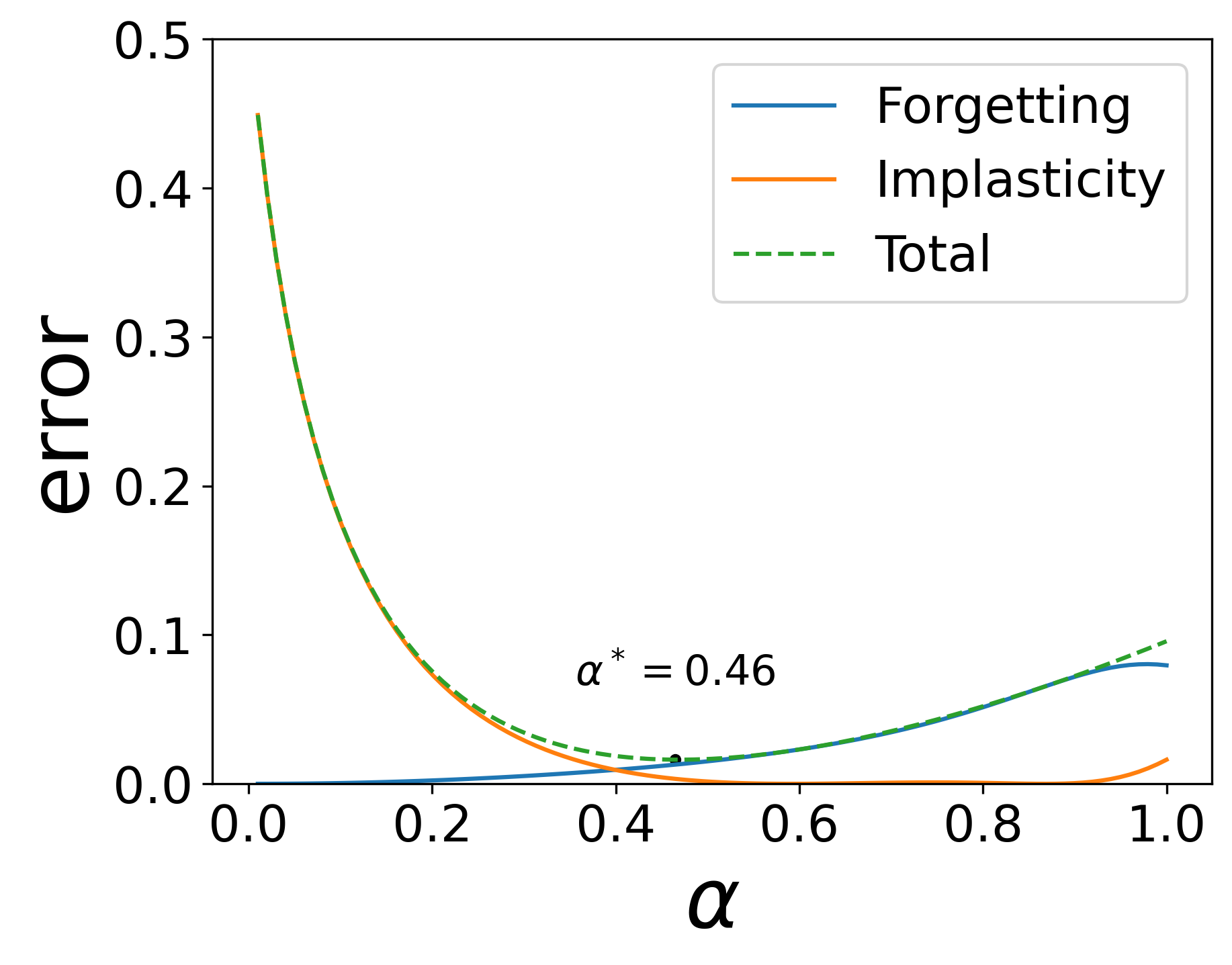}
\caption{$\eta=0.95$}
\end{subfigure}
& \begin{subfigure}{0.32\linewidth}
\includegraphics[width=\linewidth]{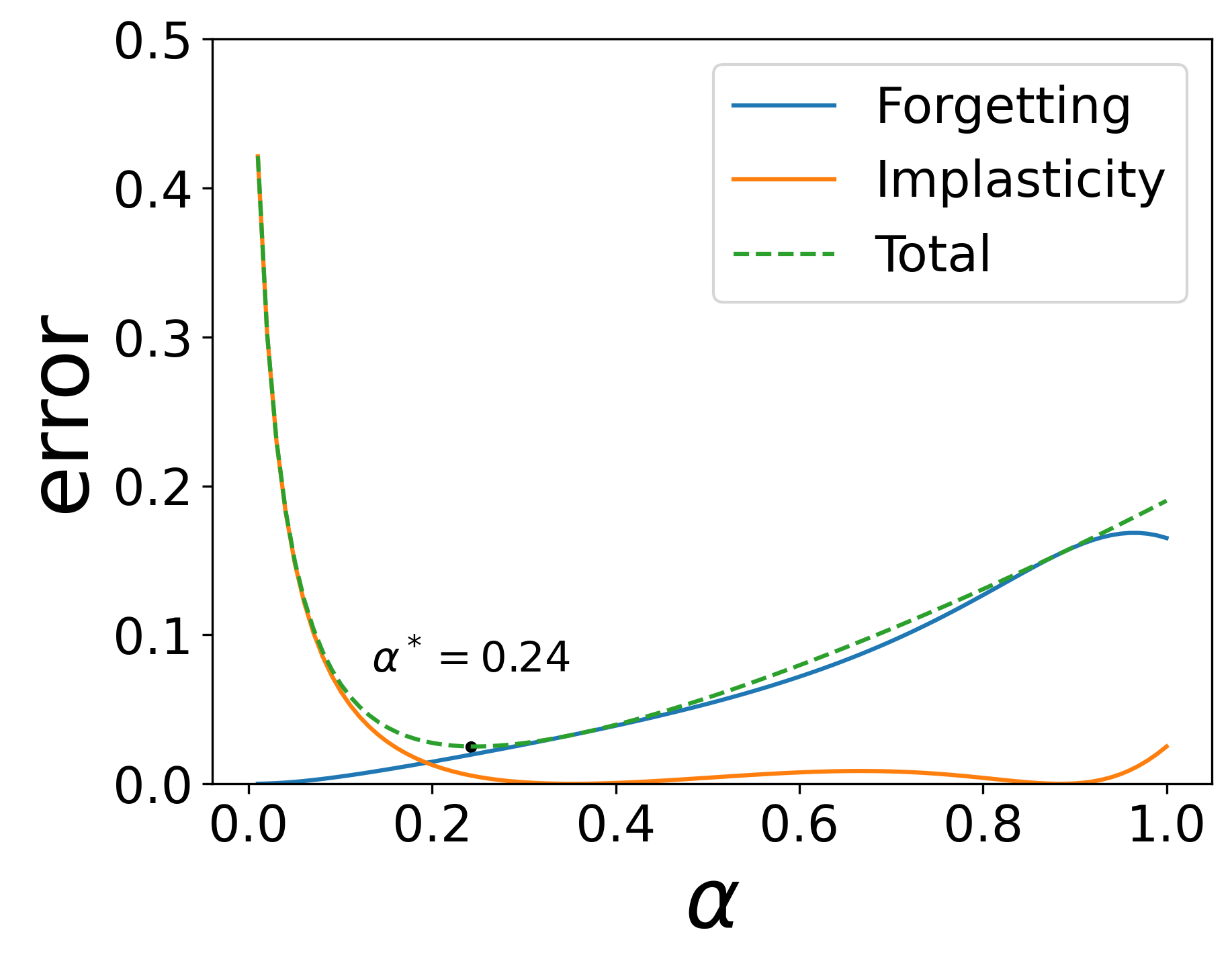}
\caption{$\eta=0.99$}
\end{subfigure}
\end{tabular}
\caption{Average forgetting and implasticity errors versus the stepsize $\alpha$ for environments with observation noise standard deviation $\sigma=0.5$, agent capacity $\I(U_t;H_t)=2$, and autoregressive model coefficients $\eta=0.9, 0.95, 0.99$.  At first, as $\alpha$ increases, the forgetting error increases and the implasticity error decreases. Then, when $\alpha$ is close to $1$, the forgetting error begins to decrease and the implasticity error increases, albeit to much lesser extents.  As $\eta$ increases, the optimal stepsize $\alpha^*$ decreases.
}
\label{fig:lms-errors}
\end{figure}
As $\alpha$ increases, the forgetting increases and the implasticity decreases. Then, as $\alpha$ approaches $1$, the former decreases and the latter increases, albeit to much lesser extents.  It is natural for a small stepsize to reduce forgetting since with a small stepsize there is less weight placed on recent versus previously observed data. For the same reason, as the figure indicates, the optimal stepsize $\alpha^*$ decreases as $\eta$ increases.  \henrik{This is because a larger coefficient $\eta$ makes the $\theta_t$ change slower, and this warrants placing more weight on less recent observations to increase the duration over which they influence predictions.}

Figure~\ref{fig:lms-eta} plots the \henrik{optimal stepsize that minimizes limiting error} as a function of $\eta$ (Figure~\ref{fig:lms-eta-alpha}), the information capacity (Figure~\ref{fig:lms-eta-capacity}), and the quantization noise intensity $\delta$ (Figure~\ref{fig:lms-eta-quantization}). Note that we use $\alpha^*$ to represent the optimal stepsize for fixed $C$ and $\tilde{\alpha}$ for fixed $\delta$.
As suggested by Figure \ref{fig:lms-errors}, as $\eta$ increases, the optimal stepsize decreases.  
\henrik{
This makes sense since $\theta_t$ changes slower as $\eta$ increases, while the duration over which LMS averages observations scales with $1/\alpha$. Hence, as $\eta$ increases, the optimal stepsize $\alpha^*$ decreases to induce averaging over longer durations.} Interestingly, as demonstrated by the second plot, the optimal stepsize does not vary with the information capacity $C$.  This observation is generalized and formalized by Theorem \ref{thm:lms-learning-rate-independent}.  It is interesting to note, however, that this does not imply invariance of the optimal stepsize to the quantization noise intensity $\delta$.  
In particular, when information content is unconstrained, the optimal stepsize varies with $\delta$, as demonstrated in Figure~\ref{fig:lms-eta-quantization}.
\henrik{
\begin{restatable}{theorem}{lmsLearningRateIndep}\label{thm:lms-learning-rate-independent}
    For all AR(1) processes parameterized by $\eta$ and $\sigma$, the optimal learning rate $\alpha^*$ is independent of the information capacity $\I(U_t;H_t)$. In particular, it is equal to the optimal learning rate for the infinite capacity agent without any quantization noise.
\end{restatable}
\begin{proof}
    Proof can be found in Appendix \ref{apdx:opt_learning_rate}.
\end{proof}
}
\begin{figure}
\centering
\begin{tabular}{ccc}
\begin{subfigure}{0.3\linewidth}
\includegraphics[width=\linewidth]{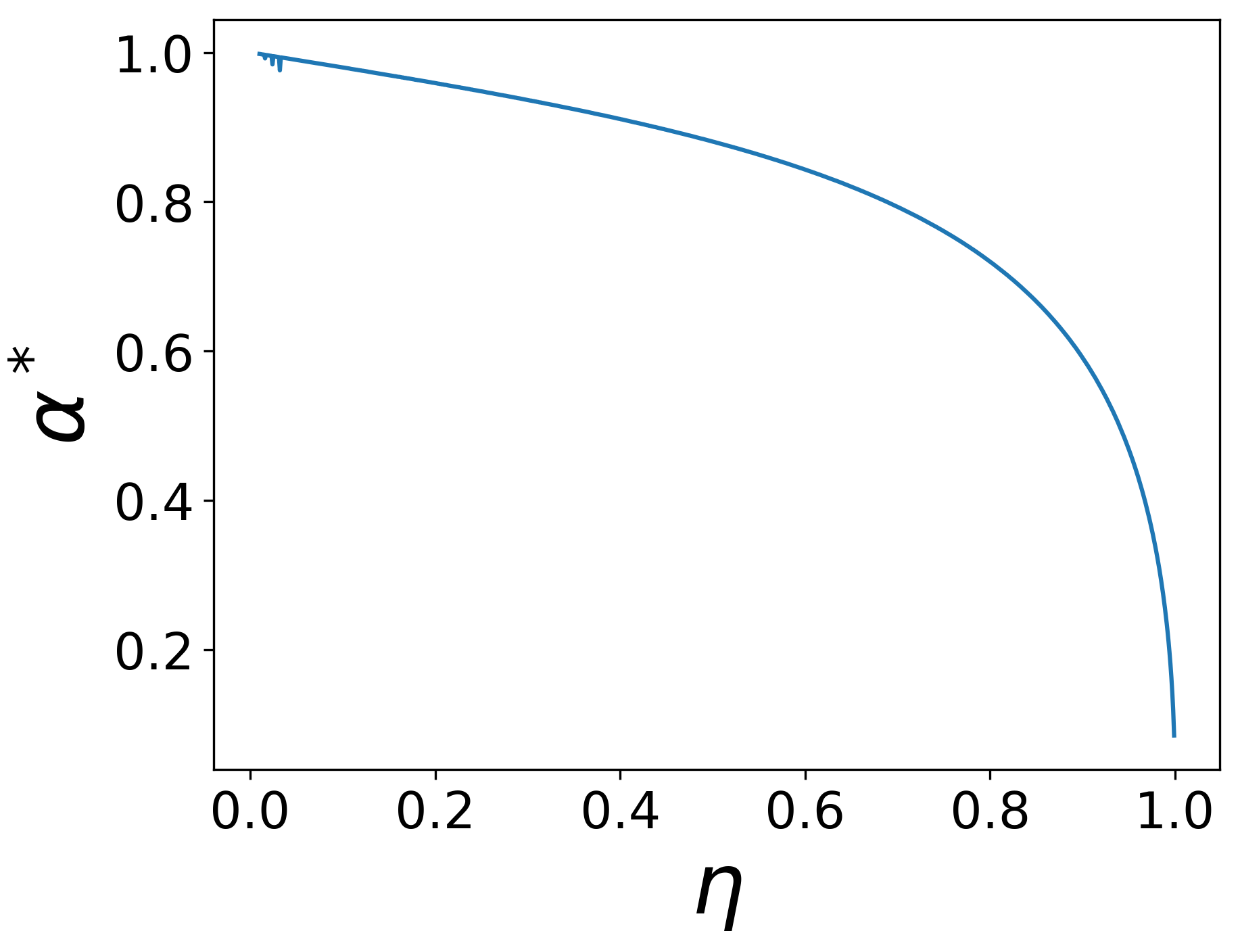}
\caption{$\alpha^*$ decreases $\eta$ increases.}
\label{fig:lms-eta-alpha}
\end{subfigure} 
& \begin{subfigure}{0.32\linewidth}
\includegraphics[width=\linewidth]{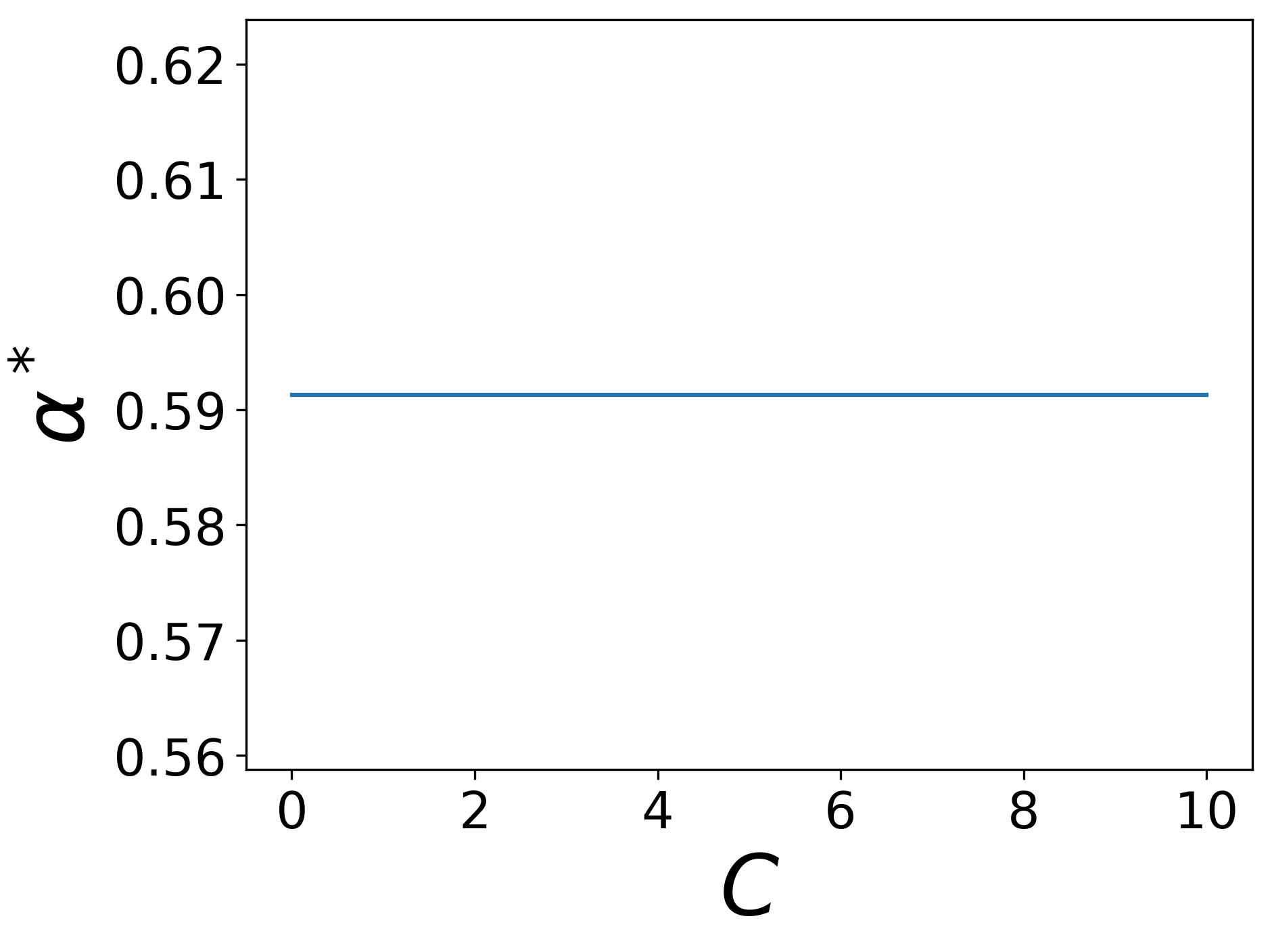}
\caption{$\alpha^*$ does not depend on $C$.}
\label{fig:lms-eta-capacity}
\end{subfigure}
& \begin{subfigure}{0.32\linewidth}
\includegraphics[width=\linewidth]{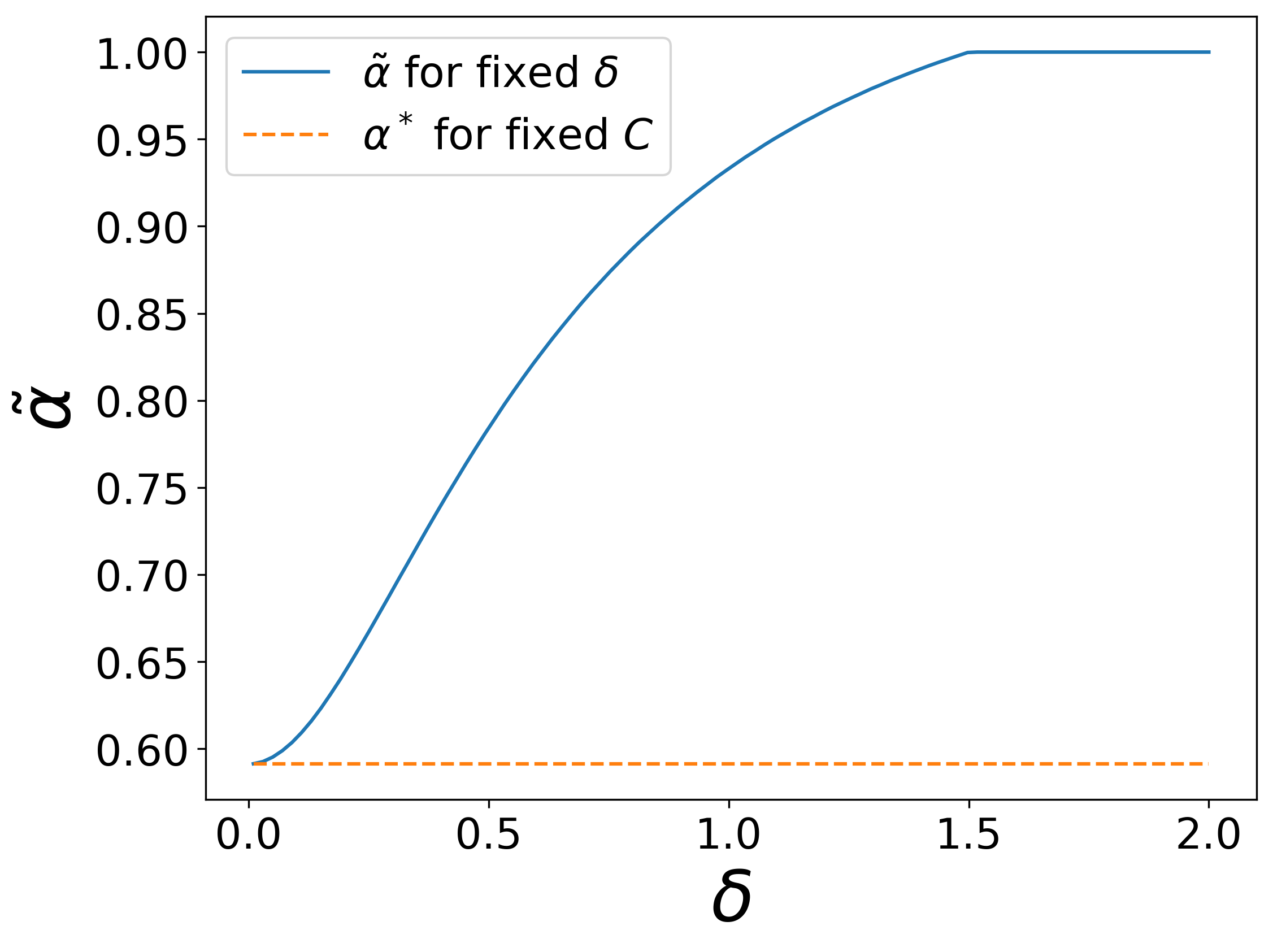}
\caption{$\tilde{\alpha}$ increases as $\delta$ increases.}
\label{fig:lms-eta-quantization}
\end{subfigure}
\end{tabular}
\caption{\henrik{As $\eta$ increases, $\theta_t$ changes slower.} The optimal stepsize $\alpha^*$ decreases in response to induce averaging over a longer duration.  The information capacity $C$ does not impact the optimal stepsize $\alpha^*$.  However, the optimal stepsize $\tilde{\alpha}$ increases with the quantization noise standard deviation $\delta$.  These plots are generated with observation noise $\sigma=0.5$.  The center and right plots are generated with $\eta=0.9$.  For the center plot, $\delta$ is chosen to meet the capacity constraint.}
\label{fig:lms-eta}
\end{figure}

\subsubsection{Stepsize Adaptation}

Given a capacity constraint, calculating $\alpha^*$ requires knowledge of $\eta$.  A stepsize adaptation scheme can alleviate this need, instead incrementally adjusting the stepsize to produce a sequence, aiming to converge on $\alpha^*$.  As an example, we consider a variation of IDBD \citep{Sutton1992IDBD}.  In particular, suppose we update the agent state according to
$$U_{t+1} = U_t + \alpha_{t+1} (Y_{t+1} - U_t) + Q_{t+1},$$
where $\alpha_{t+1} = e^{\beta_{t+1}}$ and $(\beta_t: t\in \mathbb{Z}_{++})$ is a scalar sequences generated according to
$$\beta_{t+1} = \beta_t + \zeta (Y_{t+1} - U_t) h_t - \frac{1}{2} \zeta \alpha_t \frac{d}{d \alpha} \delta_*^2(\alpha_t),$$
$$h_{t+1} = \alpha_{t+1} (Y_{t+1} - U_t) + (1 - \alpha_{t+1})_+ h_t.$$
The standard version of IDBD \citep{Sutton1992IDBD} does not include the term that depends on $\delta_*$.  This term serves to adjust the quantization noise variance in response to changes in the stepsize $\alpha_t$.  For any particular stepsize $\alpha$, $\delta_*(\alpha)$ is the intensity at which the information capacity constraint $\I(U_t; H_t) \leq C$ becomes binding.  Hence, ours is a capacity-constrained version of IDBD.  The functional form of our extra term is derived in Appendix \ref{se:capacity-constrained-IDBD-derivation}.

Figure \ref{fig:IDBD-stepsize-convergence} demonstrates that capacity-constrained IDBD converges on the optimal stepsize for any given information capacity constraint.  This limit of convergence identifies not only a stepsize $\alpha^*$ but also a quantization noise intensity $\delta_*(\alpha^*)$.  The standard version of IDBD, even if executed with quantization noise intensity fixed at $\delta = \delta_*(\alpha^*)$, converges on a different stepsize, which is optimal for that intensity but not optimal subject to the capacity constraint.  Consequently, as Figure \ref{fig:IDBD-error-convergence} indicates, the error attained  by capacity-constrained IDBD approaches the optimal error under the agent capacity but does not with the standard version of IDBD.

\begin{figure}
\centering
\begin{tabular}{cc}
\begin{subfigure}{0.4\linewidth}
\includegraphics[width=\linewidth]{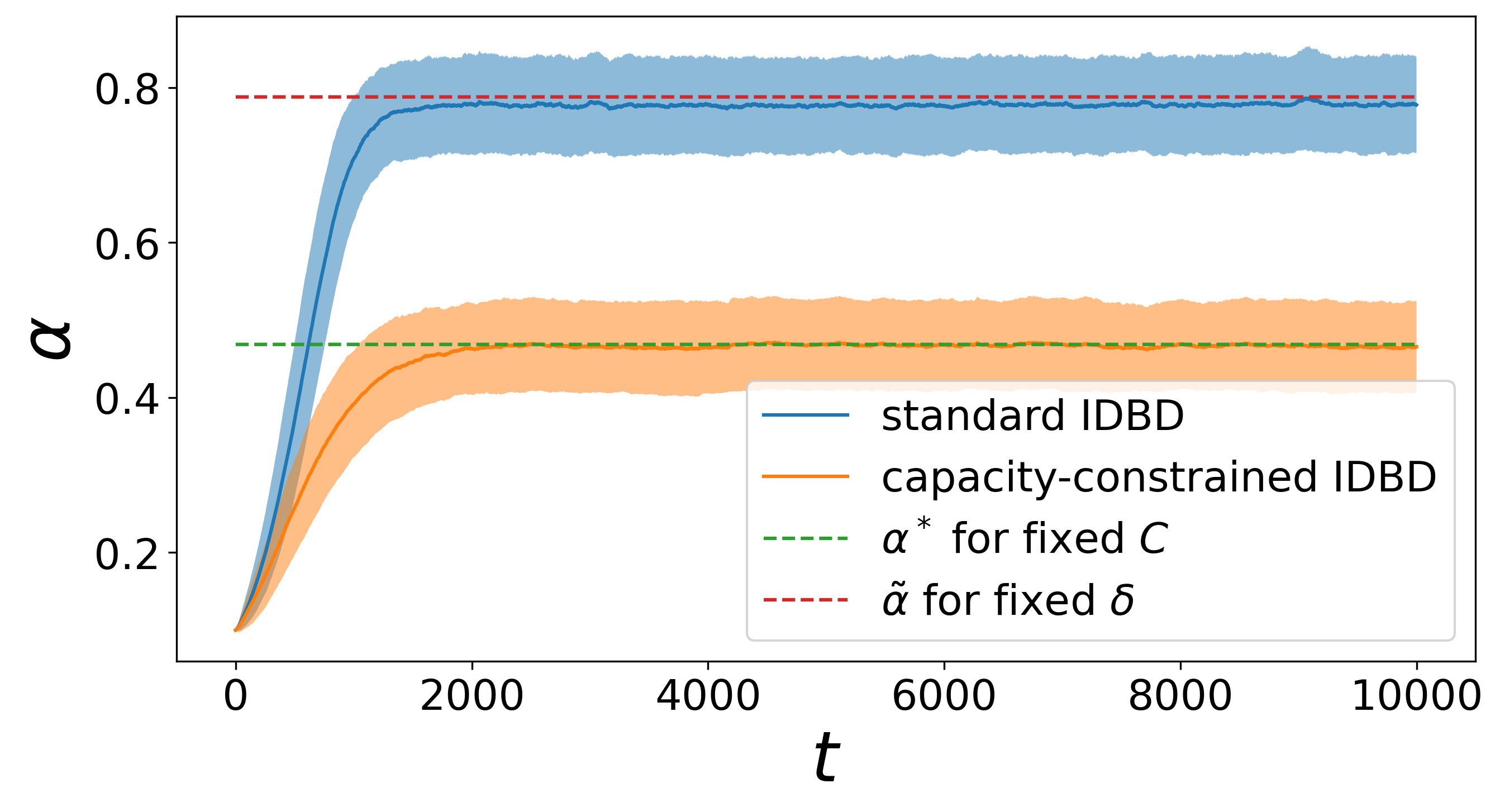}
\caption{stepsize convergence.}
\label{fig:IDBD-stepsize-convergence}
\end{subfigure} 
& \begin{subfigure}{0.4\linewidth}
\includegraphics[width=\linewidth]{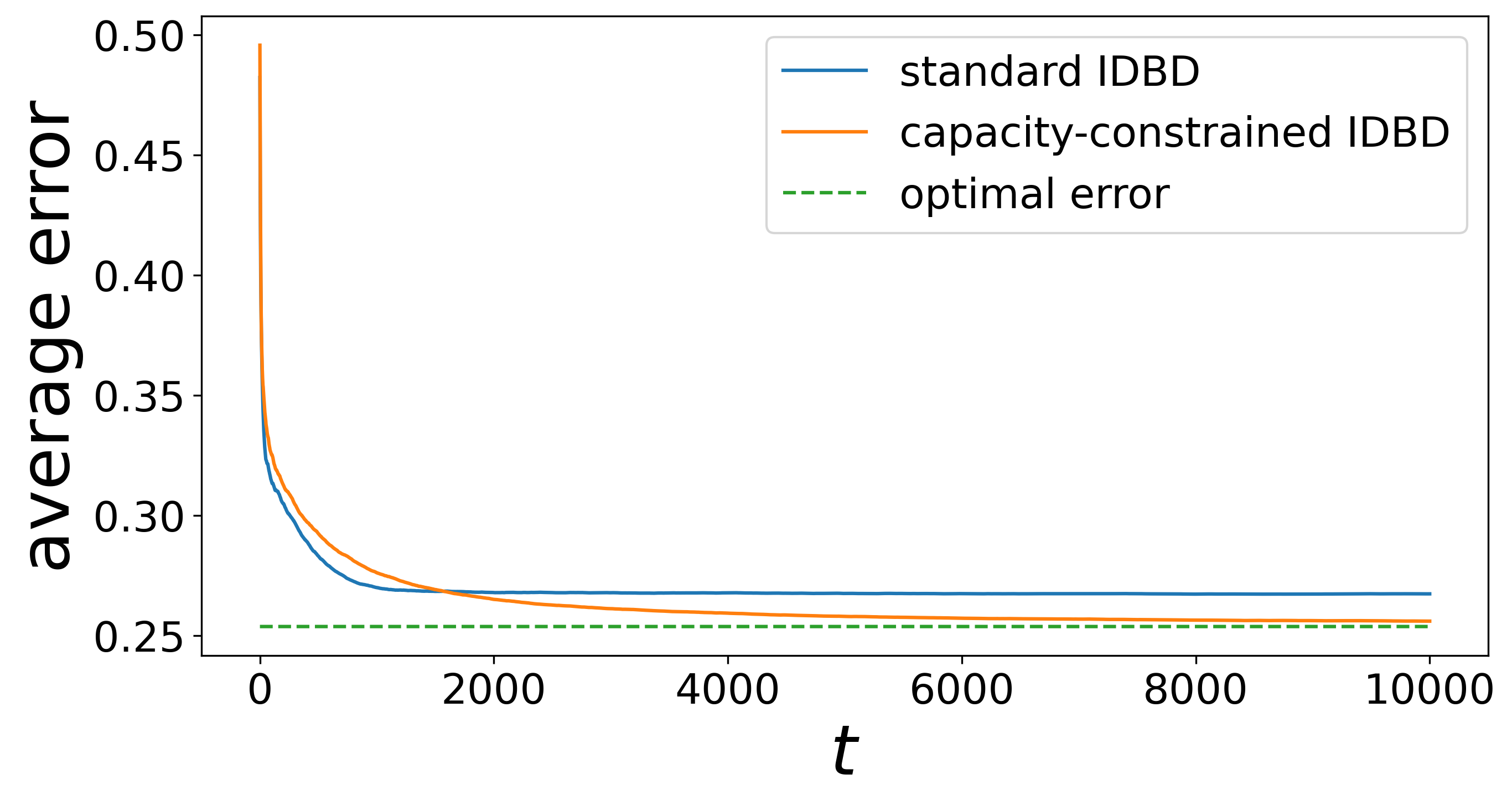}
\caption{error convergence.}
\label{fig:IDBD-error-convergence}
\end{subfigure}
\end{tabular}
\caption{Capacity-constrained IDBD converges on the optimal stepsize for the given capacity constraint.  Even if the quantization noise intensity $\delta$ is fixed at the limiting value associated with this convergence, the standard version of IDBD converges on a different stepsize, which is optimal for that intensity.  As such, while the error attained  by capacity-constrained IDBD approaches the optimal error under the agent capacity, it does not with the standard version of IDBD.
The plots are generated with $\zeta=0.01$, $\eta=0.95$, $\sigma=0.5$, $C=0.5$, and corresponding $\delta$ given by Theorem~\ref{thm:lms-capacity-noise}.}
\label{fig:IDBD-convergence}
\end{figure}

Updating $\beta_t$ relies on knowledge of the function $\delta_*$, which depends on $\eta$.  This runs counter to the purpose of stepsize adaptation schemes, which ought to arrive at $\alpha^*$ without knowledge of $\eta$.  However, the Gaussian noise $Q_{t+1}$ represents a simplified abstraction of quantization effects that would manifest if an agent were to encode information in finite memory.  With a real encoding algorithm, the capacity constraint is physical and inevitable.  As such, the agent need not itself derive the quantization noise intensity and thus may not require knowledge of $\eta$.  How to devise effective stepsize adaptation schemes for practical capacity-constrained agents presents an interesting problem for future research.

A flaw in the narrative of this section is that it does not account for information capacity required to maintain $\alpha_t$, $\beta_t$, and $h_t$, which should be incorporated into the agent state $U_t$ if the agent implements capacity-constrained IDBD.  In that event, the information capacity $C$ must support a joint quantized estimate of $\theta_t$ and these additional parameters. How information capacity should be managed in such situations remains an interesting subject for future research.

\begin{summary}
\begin{itemize}
    \item In continual supervised learning, the informational error $\E[\KL(P^*_t \| \tilde{P}_t)]$ can be decomposed into \textbf{forgetting} and \textbf{implasticity}:
    $$\underbrace{\E[\KL(P^*_t \| \tilde{P}_t)]}_{\mathrm{error}} = \sum_{k=0}^t \left(\underbrace{\I(Y_{t+1};U_{t-k-1}|U_{t-k}, H_{t-k:t})}_{\mathrm{forgetting\ at\ lag\ } k} + \underbrace{\I(Y_{t+1};O_{t-k}|U_{t-k}, H_{t-k+1:t})}_{\mathrm{implasticity\ at\ lag\ } k}\right).$$
    \begin{itemize}
    \item
    The \emph{implasticity} at lag $k$ measures the amount of information presented by $O_{t-k}$ that is useful for predicting $Y_{t+1}$ that the agent fails to ingest and is absent from the intermediate data $H_{t-k+1:t}$.
    \item
    The \emph{forgetting} at lag $k$ measures the amount of information useful for predicting $Y_{t+1}$ that the agent ejects when updating its agent state from $U_{t-k-1}$ to $U_{t-k}$ and is absent from the observations to be made before timestep $t$, $H_{t-k:t}$.
    \end{itemize}
    \item 
    If each input $X_t$ is independent of history, i.e., $X_{t+1}\perp H_t$, and the sequence of agent states and observations forms a stationary stochastic process, the stability-plasticity decomposition implies that
    $$\lim_{t\to\infty} \underbrace{\E[\KL(P^*_t \| \tilde{P}_t)]}_{\mathrm{error}} = \lim_{t\to\infty} \left(\underbrace{\I(H_{t+1:\infty}; U_{t-1}|U_t, O_t)}_{\mathrm{forgetting}} + \underbrace{\I(H_{t+1:\infty};O_t|U_t)}_{\mathrm{implasticity}}\right).$$
    \begin{itemize}
    \item 
    The first term equates error due to \emph{forgetting} with the information available in the previous agent state $U_{t-1}$, but unavailable in either in the subsequent agent state $U_t$ or the subsequent observation $O_t$.
    \item
    The second term characterizes error due to \emph{implasticity} as the information about the future $H_{t+1:\infty}$ available in the current observation $O_t$ but not ingested into the current agent state $U_t$.
    \end{itemize}
\end{itemize}
\end{summary}
\clearpage

\section{Vanishing-Regret Versus Continual Learning}
\label{se:continual-vs-convergent}

\henrik{In this section, we contrast \textit{vanishing-regret} learning with continual learning. We summarize the distinction as follows:
\begin{enumerate}
    \item In vanishing-regret learning, we view the agent as accumulating knowledge about a fixed latent variable, which can be referred to as a {\it learning target}~\citep{lu2021reinforcement,arumugam2021deciding,arumugam2021value}.  For example, in supervised learning, the learning target is typically taken to be a mapping from input to label probabilities. With infinite data, the agent converges on the learning target. Agents designed in this vein are viewed as essentially ``done'' with learning, eventually.  %
    \item In continual learning, there is no fixed learning target that the agent is learning about; rather, the agent can be interpreted as tracking a time-varying learning target. Consequently, in contrast to vanishing-regret learning, the pace of learning does not taper off. Instead, in a never-ending process, some information is retained and some forgotten as new information is ingested about the time-varying learning target.
\end{enumerate}

Computational constraints play a key role in this section: even if the environment can be characterized by a fixed latent, constraints may necessitate that the agent tracks a time-varying learning target. %

In this section, we elaborate on this distinction between vanishing-regret and continual learning.  To do so, we formalize the notion of a learning target, \textit{regret}, and what it means for regret to vanish.  We then discuss how computational constraints incentivize different qualitative behaviors, which offers insight into how and why continual learning agents should be designed differently.  
}

\subsection{Vanishing-Regret Learning}%

We begin by motivating the notion of a learning target and then discuss the role of vanishing regret in traditional machine learning.  We leverage information theory as a lens that affords general yet simple interpretations of these concepts.

\subsubsection{Learning Targets, Target Policies, and Vanishing Regret}\label{se:learning-targets}

Recall the coin tossing environment of Example \ref{ex:coins-fixed-biases}.  It is natural to think of an agent as learning about the coin biases from toss outcomes, with an eye toward settling on a simple policy that selects the coin with favorable bias.  This framing can serve to guide comparisons among alternative agents and give rise to insight on how design decisions impact performance.  For example, one could study whether each agent ultimately behaves as though it learned the coin biases.  Such an analysis assesses agent performance relative to a benchmark, which is framed in terms of an agent with privileged knowledge of a {\it learning target} comprised of coin biases and a {\it target policy} that selects the favorable coin.

In the most abstract terms, a learning target is a random variable, which we will denote by $\chi$.  A target policy $\tilde{\pi}_\chi$ assigns to each realization of $\chi$ a policy.
In particular, $\tilde{\pi}_\chi$ is a random variable that takes values in the set of policies.  Intuitively, $\chi$ represents an interpretation of what an agent aims to learn about, and $\tilde{\pi}_\chi$ is the policy the agent would use if $\chi$ were known.  Note that the target policy selects actions with privileged knowledge of the learning target.  We will denote the average reward of the target policy conditioned on the learning target by
\begin{equation}
\overline{r}_{\chi, \tilde{\pi}} = \liminf_{T \rightarrow \infty} \E_{\tilde{\pi}_\chi}\left[\frac{1}{T} \sum_{t=0}^{T-1} R_{t+1} \Big| \chi\right],
\end{equation}
where the subscript $\tilde{\pi}_\chi$ indicates the policy under which the conditional expectation is evaluated.  Note that $\overline{r}_{\chi, \tilde{\pi}}$ is a random variable because it depends on $\chi$.
This level of reward may be not attainable by any viable agent policy.  This is because it can rely on knowledge of $\chi$, which the agent does not observe.  An agent policy, on the other hand, generates each action $A_t$ based only on the history $H_t$.  Rather than offer a viable agent policy, the learning target and target policy serve as conceptual tools that more loosely guide agent design and analysis.

For each duration $T \in \mathbb{Z}_{++}$, let
$$\overline{r}_{\pi,T} = \E_\pi\left[\frac{1}{T} \sum_{t=0}^{T-1} R_{t+1}\right] \qquad \text{and} \qquad \overline{r}_{\chi, \tilde{\pi},T} = \E_{\tilde{\pi}_\chi}\left[\frac{1}{T} \sum_{t=0}^{T-1} R_{t+1} \Big| \chi\right],$$
so that $\overline{r}_\pi = \liminf_{T\rightarrow \infty} \overline{r}_{\pi,T}$ and $\overline{r}_{\chi, \tilde{\pi}} = \liminf_{T\rightarrow \infty} \overline{r}_{\chi, \tilde{\pi}, T}$.  
Like $\overline{r}_{\chi, \tilde{\pi}, T}$, the limit $\overline{r}_{\chi, \tilde{\pi}}$ is a random variable.  We define the {\it average regret} over duration $T$ incurred by $\pi$ with respect to $(\chi, \tilde{\pi})$ to be
\begin{equation}
    \overline{\mathrm{Regret}}_{\chi, \tilde{\pi}}(T | \pi) = \E[\overline{r}_{\chi, \tilde{\pi},T} - \overline{r}_{\pi,T}].
\end{equation}
This represents the expected per-timestep shortfall of the agent policy $\pi$ relative to the target policy $\tilde{\pi}_\chi$, which is afforded the advantage of knowledge about the learning target $\chi$.

We say $\pi$ exhibits {\it vanishing regret} with respect to $(\chi, \tilde{\pi})$ if $\limsup_{T \rightarrow \infty} \overline{\mathrm{Regret}}_{\chi, \tilde{\pi}}(T | \pi) \leq 0$.  If rewards are bounded and limits of $\overline{r}_{\chi, \tilde{\pi},T}$ and $\overline{r}_{\pi,T}$ exist then, by the dominated convergence theorem, $\limsup_{T \rightarrow \infty} \E[\overline{r}_{\chi,\tilde{\pi},T} - \overline{r}_{\pi,T}] \leq \E[\overline{r}_{\chi, \pi} - \overline{r}_{\pi}]$, and therefore, vanishing regret is implied by $\E[\overline{r}_{\chi, \tilde{\pi}} - \overline{r}_\pi] \leq 0$.
Intuitively, the notation of vanishing regret indicates that $\pi$ eventually performs at least as well as the target policy $\tilde{\pi}_\chi$ in spite of the latter's privileged knowledge of $\chi$.

\henrik{The choice of learning target and target policy are not uniquely determined by an agent-environment pair.}
Rather, they are chosen \henrik{only} as means to interpret the agent's performance in the environment. In particular, given $(\chi, \tilde{\pi})$, we can analyze how the agent learns about $\chi$ and uses that knowledge to make effective decisions.  In this regard, three properties make for useful choices:
\begin{enumerate}
\item Given knowledge of the learning target $\chi$, an agent can execute $\tilde{\pi}_\chi$ in a computationally efficient manner.
\item The target policy $\tilde{\pi}_\chi$ attains a desired level of average reward.
\item An agent can learn enough about $\chi$ in reasonable time to perform about as well as $\tilde{\pi}_\chi$.
\end{enumerate}
The first property ensures that knowledge of $\chi$ is actionable, and the second requires that resulting actions are performant to a desired degree.  The third property ensures that acquiring useful knowledge about $\chi$ is feasible.  These properties afford analysis of performance in terms of whether and how quickly an agent learns about $\chi$.  We will further explore this sort of analysis in Section \ref{se:regret-analysis}.  But we close this section with a simple example of logit data and an agent that produces optimal predictions conditioned on history.  This serves as an introduction to the notion of a learning target and what makes one useful.
\begin{example}
\label{ex:logit-learning-targets}
{\bf (learning targets for logit data)}
Consider a binary observation sequence $(O_t: t\in \mathbb{Z}_{++})$ that is iid conditioned on a latent variable $\theta$.  In particular, let $\ospace = \{0,1\}$, $\Prob(O_{t+1} = 1 | \theta, H_t,A_t) = e^\theta / (1+e^\theta)$, and $\Prob(\theta \in \cdot) \sim \mathcal{N}(0,1)$.  Each action is a predictive distribution $A_t = P_t$ and results in reward $R_{t+1} = \ln P_t(O_{t+1})$.  We consider an optimal agent policy $\pi_*$, which generates predictive distributions $P^*_t \sim \pi_*(\cdot|H_t)$ that perfectly condition on history: $P^*_t(\cdot) = \Prob(O_{t+1} = \cdot | H_t)$.  We will consider three different choices of learning target $\chi$ together, in each case, with the target policy  $\tilde{\pi}_\chi$ that assigns all probability to $P_t^\chi = \Prob(O_{t+1} = \cdot | \chi, H_t)$.  This target policy executes actions that are optimally conditioned on knowledge of $\chi$ in addition to history.

For the ``obvious'' learning target $\chi = \theta$, $\overline{\mathrm{Regret}}_{\chi, \tilde{\pi}}(T | \pi)$ vanishes at a reasonable rate, which we will characterize in the next section.  Intuitively, this is because, as data accumulates, the agent is able to produce estimates of $\theta$ that suffice for accurate predictions.

For contrast, let us now consider two poor choices of learning targets, each of which represents a different extreme.  One is the ``uninformative'' learning target $\chi = \emptyset$, for which $\overline{r}_{\chi,\tilde{\pi},T} = \overline{r}_{\pi,T}$ for all $T$.  Convergence is as fast as can be, and in fact, instant.  However, the role of $\chi$ is vacuous.  At the other extreme, suppose the learning target $\chi = (O_{t+1}: t \in \mathbb{Z}_+)$ includes all observations from the past, present, and future.  With this privileged knowledge, predictions $P^\chi_t(\cdot) = \mathbbm{1}(O_{t+1}=\cdot)$ perfectly anticipate observations.  
However, with this ``overinformative'' learning target, $\overline{\mathrm{Regret}}_{\chi, \tilde{\pi}}(T | \pi)$ does not vanish and instead converges to $\E[\ln(1+e^{-\theta})] > 0$.  This is because the agent never learns enough to compete with the target policy, which has privileged access to future observations. 
\end{example}

\subsubsection{Regret Analysis}
\label{se:regret-analysis}

The machine learning literature offers a variety of mathematical tools for regret and sample complexity analysis.  These tools typically study how agents accumulate information about a designated learning target and make decisions that become competitive with those that could be made with privileged knowledge of the learning target.  To more concretely illustrate the nature of this analysis and the role of learning targets, we will present in this section examples of results in this area.  In order to keep the exposition simple and transparent, we will restrict attention to supervised learning.

Rather than cover results about specific agents, we will review results about an optimal agent policy, which produces predictions $P^*_t(\cdot) = \Prob(Y_{t+1} = \cdot | H_t)$.  Such an agent maximizes $\E_\pi[\frac{1}{T} \sum_{t=0}^{T-1} R_{t+1}]$ over every duration $T$, with rewards given by $R_{t+1} = \ln P^*_t(Y_{t+1})$.  Hence, the results we review pertain to what is possible rather than what is attained by a particular algorithm.

For any learning target $\chi$, we will take the target policy to be that which generates actions $P_t^\chi(y) = \Prob(Y_{t+1} = y | \chi, H_t)$.  This represents an optimal prediction for an agent with privileged knowledge of $\chi$ in addition to the history $H_t$.  For this target policy, average regret satisfies
\begin{align*}
\overline{\mathrm{Regret}}_{\chi, \tilde{\pi}}(T | \pi)
=& \overline{r}_{\chi, \tilde{\pi}, T} - \overline{r}_{\pi,T} \\
=& \E\left[\E\left[\frac{1}{T} \sum_{t=0}^{T-1} \ln P^\chi_t(Y_{t+1}) \Big| \chi\right]\right] - \E\left[\frac{1}{T} \sum_{t=0}^{T-1} \ln P^*_t(Y_{t+1}) \right] \\
=& \E\left[\frac{1}{T} \sum_{t=0}^{T-1} \E\left[\ln \frac{P^\chi_t(Y_{t+1})}{P^*_t(Y_{t+1})} \Big| \chi\right] \right] \\
=& \E\left[\frac{1}{T} \sum_{t=0}^{T-1} \KL(P_t^\chi \| P_t^*)\right].
\end{align*}

To characterize regret incurred by an optimal agent, we start with a basic result of \cite[Theorem 9]{jeon2023informationtheoretic}:
\begin{equation}
\label{eq:entropy-bound}
\overline{\mathrm{Regret}}_{\chi, \tilde{\pi}}(T | \pi) \leq \frac{\Ent(\chi)}{T}.
\end{equation}
The right-hand side is the entropy of the learning target divided by the duration $T$.  If this entropy is finite then, as $T$ grows, the right-hand side, and therefore regret, vanishes.  That the bound increases with the entropy of the learning target is intuitive: if there is more to learn, regret ought to be larger for longer.

While the aforementioned regret bound offers useful insight, it becomes vacuous when the $\Ent(\chi) = \infty$.  Entropy is typically infinite when $\chi$ is continuous-valued.  In order to develop tools that enable analysis of continuous variables and that lead to much tighter bounds, even when $\Ent(\chi) = \infty$, let us introduce a new concept: the rate-distortion function.  Let
$$\Theta_\epsilon = \left\{\tilde{\chi}: \E\left[\KL\left(P_t^\chi \| P_t^{\tilde{\chi}}\right)\right] \leq \epsilon \text{ for all } t \right\}$$
be the set of all random variables $\tilde{\chi}$ that enable predictions close to those afforded by the $\chi$.  The \emph{rate} of $\chi$ with \emph{distortion} tolerance $\epsilon$ is defined by the rate-distortion function:
\begin{equation}
\label{eq:rate-distortion}
\Ent_\epsilon(\chi) = \inf_{\tilde{\chi} \in \Theta_\epsilon} \I(\chi; \tilde{\chi}).
\end{equation}
Loosely speaking, this is the number of nats required to identify a useful approximation to the learning target.  The rate-distortion function $\Ent_\epsilon$ serves an alternative upper bound as well as a lower bound \cite[Theorem 12]{jeon2023informationtheoretic}:
\begin{equation}
\label{eq:rate-distortion-bound}
\sup_{\epsilon \geq 0} \min\left(\frac{\Ent_\epsilon(\chi)}{T}, \epsilon\right) \leq \overline{\mathrm{Regret}}_{\chi, \tilde{\pi}}(T | \pi)
\leq \inf_{\epsilon \geq 0} \left(\frac{\Ent_\epsilon(\chi)}{T} + \epsilon\right).
\end{equation}
Even when $\Ent(\chi) = \infty$, the rate $\Ent_\epsilon(\chi)$ can increase at a modest pace as $\epsilon$ vanishes.  We build on Example \ref{ex:logit-learning-targets} to offer a simple and concrete illustration.  While that example was not framed as one of supervised learning, it can be viewed as such by taking each observation $O_{t+1}$ to encode only a label $Y_{t+1}$ resulting from a non-informative input $X_t$.  We will characterize the rate-distortion function and regret for each of the three learning targets considered in Example \ref{ex:logit-learning-targets}.
\begin{example}
{\bf (rate-distortion and convergence for logit data)}
\label{ex:rate-distortion-logit}
Recall the environment of Example \ref{ex:logit-learning-targets}, in which binary labels are generated according to $\Prob(O_{t+1} = 1 | \theta, H_t, A_t) = e^\theta / (1+e^\theta)$ based on a latent variable $\theta \sim \mathcal{N}(0,1)$.

For the ``obvious'' learning target $\chi = \theta$ and all $\tilde{\chi}$, $\E[\KL(P^{\chi}_t\|P^{\tilde{\chi}}_t)] \leq \E[(\chi - \E[\chi|\tilde{\chi}])^2]$. As a result, $\tilde{\chi} \sim \mathcal{N}(\chi, \epsilon)$ is an element
of $\Theta_\epsilon$, and therefore,
$$\Ent_{\epsilon}(\chi) \leq \frac{1}{2}\ln\left(1+\frac{1}{\epsilon}\right).$$
It follows from Equation \ref{eq:rate-distortion-bound} that
$$\overline{\mathrm{Regret}}_{\chi, \tilde{\pi}}(T | \pi) \leq\ \inf_{\epsilon \geq 0} \left(\frac{1}{2T} \ln\left(1+\frac{1}{\epsilon}\right) + \epsilon \right) \ \leq\ \frac{\ln\left(1+2T\right) + 1}{2T}.$$

For the ``uninformative'' learning target $\chi = \emptyset$, $\Ent_\epsilon(\chi) = 0$, and therefore $\overline{\mathrm{Regret}}_{\chi, \tilde{\pi}}(T | \pi) = 0$ for all $T$.  For the ``overinformative'' learning target $\chi = (O_{t+1} :t \in \mathbb{Z}_+)$, on the other hand, for $\epsilon < \E[\ln(1+e^\theta)/(1+e^\theta)]$, $\Ent_\epsilon(\chi) = \infty$ and $\overline{r}_{\chi, T} - \overline{r}_{*,T} > \E[\ln(1+e^\theta)/(1+e^\theta)]/2 > 0$.
\end{example}
For this example, though the ``obvious'' choice of learning target $\chi = \theta$ yields $\Ent(\chi) = \infty$, the rate $\Ent_\epsilon(\chi)$ is modest even for small positive values of $\epsilon$.  Because of this, an optimal agent converges quickly, as expressed by $O((\ln T)/T)$ bound.  Further, knowledge of $\theta$ enables efficient computation of predictions $P_t(1) = \Prob(Y_{t+1} = 1 | \theta, H_t) = e^\theta / (1+e^\theta)$.  On the other hand, the ``underinformed'' learning target $\chi = \emptyset$ is not helpful, and with respect to the ``overinformed'' learning target, regret does not vanish.

\subsection{Continual Learning}\label{sec:subsection_cl}

\henrik{Unlike in more common framings of machine learning with vanishing-regret}, continual learning does not generally afford an obvious choice of learning target.  \henrik{In vanishing-regret learning}, the agent is considered ``done'' with learning when it has gathered enough information about a learning target to make effective decisions.  \henrik{In contrast, performant continual learning agents perpetually ingest new information.}  %
\henrik{There are two reasons for why this unending process may be incentivized:
\begin{enumerate}
    \item There is no fixed learning target that an unconstrained agent can converge on. To do well, the agent need to continually learn.
    \item Even if the environment is characterized by a fixed latent variable, constraints may incentivize the agent to continually forget and learn anew.
\end{enumerate}}

\henrik{
Regret-based analysis of algorithms in special types of nonstationary environments has been conducted extensively in the literature. In particular, several works on nonstationary MDPs~\citep{domingues2021kernel,fei2020dynamic}, nonstationary bandits~\citep{min2023information,liu2023definition,luo2018efficient,bogunovic2016time}, and online convex optimization~\citep{zhang2018adaptive,van2016metagrad,besbes2015non,daniely2015strongly} study nonstationary environments. Several regret-based metrics have been designed for the nonstationary settings. An especially common type of benchmark for regret-based analysis in this context is a so-called ``dynamic regret." This metric computes regret with respect to a changing learning target.

In contrast to the previous work cited above, in this section we emphasize the role of computational constraints in incentivizing continual learning behavior. Further, we highlight that typically there is often no obvious fixed learning target nor a time-varying learning target that can serve as a useful benchmark for regret-based analysis. In addition, what constitutes a reasonable learning target will depend on the computational constraints of the agents under consideration. 
}

\subsubsection{Constraints Induce Persistent Regret}

Recall that practical agent designs typically maintain an agent state, with behavior characterized by functions $\psi$ and $\pi$.  In particular, the agent state evolves according to $U_{t+1} \sim \psi(\cdot | U_t, A_t, O_{t+1})$ and actions are sampled according to $A_t \sim \pi(\cdot | U_t)$.  Hence, the agent policy is encoded in terms of the pair $(\psi, \pi)$.  Let $\overline{r}_{\psi,\pi} = \E_{\pi,\psi}[\frac{1}{T} \sum_{t=0}^{T-1} R_{t+1}]$ denote the average reward attained by such an agent.

The information content of $U_t$ is quantified by the entropy $\Ent(U_t)$, which is constrained by the agent's information capacity.  As discussed in Section \ref{sec:compute_vs_infocapacity}, for common scalable agent designs, computation grows with this information content. 
 Hence, computational constraints limit information capacity.  To understand implications of this restriction, in this section, we focus on the problem of agent design with fixed information capacity $C$:
\begin{align}
\label{eq:capacity-constrained-rl}
\begin{split}
\max_{\psi,\pi} & \quad \overline{r}_{\psi,\pi} \\
\text{s.t.} & \quad \sup_t \Ent(U_t) \leq C.
\end{split}
\end{align}

The following simple example, which is similar to one presented in \cite[Section 2]{sutton2007role}, illustrates how such a constraint induces persistent regret.
\begin{example}
\label{ex:bit-flipping}
{\bf (bit flipping)}
Consider an environment that generates a sequence of binary observations $\mathcal{O} = \{0,1\}$, initialized with $O_1$ distributed $\mathrm{Bernoulli}(1/2)$, and evolving according to
$\Prob(O_{t+1} = 1 | p, H_t, A_t) = p (1 - O_t) + (1-p) O_t$, 
where $p$ is a random variable.  Note that $p$ governs the probability of a bit flip: \henrik{the probability of the next observation $O_{t+1}$ being different from the current observation $O_{t}$}.  Each action is a binary prediction $A_t$ of the next observation $O_{t+1}$ and yields reward $R_{t+1} = \mathbbm{1}(A_t = O_{t+1})$.  In other words, the agent predicts the next bit and receives a unit of reward if its prediction is correct.

It is natural to consider $\chi = p$ as a learning target.  In particular, with privileged knowledge of this learning target, an agent can act according to a target policy $\tilde{\pi}$ that generates a prediction $A_t = 1$ if and only if $\Prob(O_{t+1}=1 | p, H_t) > 1/2$ or, equivalently, $p (1-O_t) + (1-p) O_t > 1/2$.  The optimal unconstrained agent policy $\pi$ generates action $A_t = 1$ if and only if $\Prob(O_{t+1}=1 | H_t) > 1/2$ and satisfies $\overline{\mathrm{Regret}}_{\chi, \tilde{\pi}}(T | \pi) \rightarrow 0$.  This is because, if not hindered by constraints, an agent can identify $p$ over time from observations.

Now suppose the agent is constrained by an information capacity of $C = \ln 2 \mathrm{\ nats} = 1 \mathrm{\ bit}$. \henrik{In this case, the agent can at most store one observation from the past. Therefore,} encoding an accurate approximation of $p$ in the agent state becomes infeasible.  \henrik{Instead, the optimal solution is to store only the most recent observation, and use that together with the prior over $p$ to predict either $0$ or $1$.} \henrik{That is,} the optimal solution to Equation \ref{eq:capacity-constrained-rl} is \henrik comprised of an agent state update function $U_t = O_t$ and a policy function for which $A_t = 1$ if and only if $\E[p] (1-U_t) + \E[1-p] U_t > 1/2$.  In other words, the agent simply retains its most recent observation as agent state and predicts a bit flip with probability $\E[p]$.  This agent does not aim to identify $\chi = p$ nor, for that matter, any nontrivial learning target.  And regret with respect to $\overline{\mathrm{Regret}}_{\chi, \tilde{\pi}}(T | \pi)$ does not vanish.
\end{example}

\subsubsection{Nonstationary Learning Targets}

Continual learning is often characterized as addressing ``nonstationary" environments. \henrik{However, as discussed in the previous section, 
even in a ``stationary" environment (in the sense that it can be well-characterized by a fixed latent variable), a capacity-constrained agent will be incentivized to do continual learning.}   %
Motivated by this view, we interpret {\it nonstationarity} as indicating agent behavior that is naturally explained by a time-varying learning target.
Example \ref{ex:bit-flipping} illustrates how this sort of behavior is incentivized by restricting information capacity.  The optimal unconstrained agent learns about the latent variable $p$, which constitutes a fixed learning target.  On the other hand, the capacity-constrained agent learns at each time about $O_t$.  This bit of information is ingested into the agent state and can be viewed as a nonstationary learning target $\chi_t = O_t$.

Our narrative is aligned with that of \cite{sutton2007role}, who suggest that tracking a nonstationary learning target is warranted even in a stationary environment if the environment is sufficiently complex.  
Consider a supervised learning environment for which an enormous amount of information is required to attain reasonable performance.  In other words, for any reasonable $\epsilon$, $\Ent_\epsilon(\chi^*)$ is huge.  An agent with modest capacity $C \ll \Ent_\epsilon(\chi^*)$ ought not aim to accumulate all information required to attain error $\epsilon$ because of its insufficient information capacity.  However, it may be possible to attain error $\epsilon$ by only retaining at each timestep information that will be helpful in the near term.  This amounts to tracking a nonstationary learning target.

Our notion of a nonstationary learning target relates to the work of \cite{Abel2023definition}, which proposes a definition of continual reinforcement learning.  This definition offers a formal expression of what it means for an agent to never stop learning.  The characterization is subjective in that it defines non-convergence with respect to a {\it basis}, which is a fixed set of policies.  An agent is interpreted as searching among these and then of converging if it settles on one that it follows thereafter.  The work associates continual learning with non-convergence.  A learning target offers an alternative subjective construct for interpretation of agent behavior.  Loosely speaking, if an agent converges on an element of the basis, that can be thought of as converging to a fixed learning target.  Perpetually transitioning among elements of the basis is akin to pursuing a nonstationary learning target.

\henrik{
\subsubsection{On the Choice of Learning Target}
}

\henrik{In Bayesian framings of machine learning, it is common to characterize the environment as an unknown latent variable. The problem formulation consists of two primitives: a prior over an unknown latent variable $\chi$ and a likelihood model that generates observations. The agent is then interpreted as learning about the environment -- that is, this latent variable -- from its experience.  Indeed, the random environment can serve as a learning target in the vein of vanishing-regret learning.  
In supervised learning, if the input distribution is known, this latent variable may be the unknown function that maps inputs to distributions over labels. 
In reinforcement learning, it may be the unknown transition probability matrix of a Markov Decision Process (MDP).

\henrik{In supervised learning, the reason why the prior and likelihood model are taken as primitives, is because the stream of pairs are assumed to be \textit{exchangeable}}. The data $\{X_t, Y_{t+1}\}_{t=0}^\infty$ being exchangeable means that for all horizons $T$, all permutations of the sequence $\{X_t, Y_{t+1}\}_{t=0}^T$ have the same distribution. Consequently, de Finetti's Theorem establishes existence of a learning target $\chi$ conditioned on which the data pairs are iid \citep{de1929funzione}.  It is natural to think of $\chi$ as identifying an unknown environment and bringing to bear interpretation and analysis afforded by vanishing-regret learning.  Specifically, assuming the input distribution is known, $\chi$ is often taken to be an unknown function that maps inputs to distributions over labels. A similar story goes for the common MDP framing of reinforcement learning.  Under a weaker exchangeability condition presented in Appendix \ref{app:mdp_exchangeability}, it is natural to characterize dynamics in terms of an unknown MDP $\chi$.  In both cases, an agent can learn over time to do about as well as if $\chi$ were known. Thus, $\chi$ can serve as a useful learning target in regret-based analysis.}

\henrik{In contrast, environments of the sort considered in continual learning tend not to give rise to an obvious choice of fixed learning target. In particular, regret generally does not vanish with respect to any particular learning target that enables efficient computation of performant actions. 
For example, \citet{liu2023definition} discusses how regret does not vanish with respect to common choices of learning targets in non-stationary bandit learning. 
\henrik{This paper further discusses how dynamic regret can be a misleading metric depending on what the time-varying learning target is taken to be; there may not be an obvious choice of latent process that can serve as a useful time-varying learning target.}

To help interpret the behavior of continual learning agents, it can often be intuitive to characterize a specific environment in terms of a changing latent variable that the agent is tracking~\citep{sutton2007role}. While this latent variable may not serve as a useful learning target for regret-analysis, such a characterization can be helpful to facilitate agent design.  For instance, many of the examples in this monograph are described as latent AR(1) processes, such as the scalar tracking problem in Appendix \ref{sec:tracking}.  %
}\\

\begin{summary}
    \begin{itemize}
        \item A \textbf{learning target} (denoted by $\chi$) is random variable that represents what an agent aims to learn about.
        \item A \textbf{target policy} (denoted by $\tilde{\pi}_\chi$) is a random variable that represents the policy the agent would use if $\chi$ were known.
        \item For each duration $T \in \mathbb{Z}_{++}$,
        $$\overline{r}_{\pi, T} = \E_{\pi}\left[\frac{1}{T}\sum_{t=0}^{T-1}R_{t+1}\right]\qquad \text{and}\qquad
        \overline{r}_{\chi,\tilde{\pi},T} = \E_{\tilde{\pi}_\chi}\left[\frac{1}{T}\sum_{t=0}^{T-1}R_{t+1}\Big| \chi\right].$$
        \item The \textbf{average regret} over duration $T$ incurred by $\pi$ w.r.t $(\chi, \tilde{\pi})$ is
        $$\overline{\mathrm{Regret}}_{\chi, \tilde{\pi}}(T|\pi) = \E\left[\overline{r}_{\chi, \tilde{\pi}, T} - \overline{r}_{\pi, T}\right].$$
        \item $\pi$ exhibits \textbf{vanishing regret} w.r.t. $(\chi, \tilde{\pi})$ if $\limsup_{T\to\infty}\overline{\mathrm{Regret}}_{\chi, \tilde{\pi}}(T|\pi) \leq 0$.
        \item In the special case of \textbf{continual supervised learning}, $P_t^*(y) = \Prob(Y_{t+1}=y|H_t)$ maximizes $\overline{r}_{\pi, T}$ and $P_t^{\chi}(y) = \Prob(Y_{t+1}=y|\chi, H_t)$ maximizes $\overline{r}_{\chi, \tilde{\pi}, T}$ for every duration $T$.
        For the above optimal choices of $\pi, \tilde{\pi}_{\chi}$,
        $$\overline{\mathrm{Regret}}_{\chi, \tilde{\pi}}(T|\pi) = \E\left[\frac{1}{T}\sum_{t=0}^{T-1}\KL(P_t^{\chi}\|P_t^*)\right].$$
        For all durations $T$,
        $$\overline{\mathrm{Regret}}_{\chi, \tilde{\pi}}(T|\pi) \leq \frac{\Ent(\chi)}{T}.$$
        For continuous-valued $\chi$, $\Ent(\chi)$ is often $\infty$, leading to a vacuous upper bound. The bound can be improved via \emph{rate-distortion theory}.
        For any $\epsilon \geq 0$, let 
        $$\Theta_{\epsilon} = \left\{\tilde{\chi}: \E[\KL(P_t^{\chi}\|P_t^{\tilde{\chi}})] \leq \epsilon \text{ for all } t\right\}$$
        be the set of all random variables $\tilde{\chi}$ which enable predictions with \emph{distortion} at most $\epsilon$.
        The \emph{rate} of $\chi$ with \emph{distortion} tolerance $\epsilon$ is defined by the \textbf{rate-distortion function}:
        $$\Ent_{\epsilon}(\chi) = \inf_{\tilde{\chi}\in\Theta_\epsilon}\I(\chi;\tilde{\chi}).$$
        For all durations $T$,
        $$\sup_{\epsilon \geq 0} \min\left(\frac{\Ent_\epsilon(\chi)}{T}, \epsilon\right) \leq \overline{\mathrm{Regret}}_{\chi, \tilde{\pi}}(T | \pi)
\leq \inf_{\epsilon \geq 0} \left(\frac{\Ent_\epsilon(\chi)}{T} + \epsilon\right).$$
        \end{itemize}
\end{summary}
\newpage

\section{Case Studies}

Continual learning has broad scope, with potential applications ranging from recommendation and dialogue systems to robotics and autonomous vehicles. In Section~\ref{sec:computational_constraints}, we framed continual learning as computationally constrained RL with an objective of maximizing average reward subject to constraints (Equation~\ref{eq:objective}). 
This objective is designed to encompass real-world requirements of continual learning systems across applications. In this section, we study the implications of our objective on the design of performant continual learning agents.

To study these implications, we perform three case studies.
Each case study highlights a different facet of continual learning: continual supervised learning, continual exploration, and continual learning with delayed consequences.
In our first case study, we consider the special case of continual learning where the agent's actions do not influence future observations. %
In other environments, such as bandits, the agent's actions influence the agent's immediate observations and therefore what information the agent is exposed to. Therefore, when selecting its next action, an agent should not only take into account the immediate reward (like in supervised learning), but also what information it can gain in order to perform well in the long term. Selecting actions for the purpose of acquiring new information is known as \textit{exploration}. In our second case study, which is on \textit{continual exploration}, we study the implications of \henrik{nonstationary learning targets} %
for exploration. Our third case study focuses on the broader class of environments in which actions induce not only immediate consequences but also delayed consequences. We call this \textit{continual learning with delayed consequences}. In our final case study on continual auxiliary learning, we study the benefits of learning auxiliary tasks that are distinct from, though possibly helpful to, the primary task of maximizing average reward. %
In each case study, we perform simple illustrative experiments to develop intuitions for what behaviors a performant continual learning agent might need.

\subsection{Continual Supervised Learning} \label{sec:supervised_learning}
\newcommand{\noreset}{\texttt{No Reset}}
\newcommand{\noresetsmall}{\texttt{No Reset + Limited Memory}}
\newcommand{\reset}{\texttt{Reset}}
\newcommand{\largememory}{\texttt{Large Memory}}
\newcommand{\smallmemory}{\texttt{Small Memory}}

In this section, we consider the continual supervised learning setting, which has been the focus of most prior work in continual learning. In continual supervised learning, an agent receives a nonstationary\footnote{\henrik{Note that we use nonstationary in the loose sense of Section \ref{se:continual-vs-convergent} to refer to a case where there is no fixed learning target that the agent can converge on.}} sequence of data pairs from its environment, and the agent's actions do not influence the data sequence. \henrik{In such data sequences, the data distribution changes over time in different ways. The input distribution may change, the target function may change, or the label distribution may change. For an elaboration on these types of distribution shift, see Appendix~\ref{app:types_of_nonstationarity}}. The agent's goal is to make accurate predictions. As an example, consider a fraud detection system for credit card transactions. The goal of such a system is to classify transactions as fraudulent or not fraudulent. The techniques people use to commit fraud and evade detection may evolve over time, and the system must adapt to these changing fraud patterns. \henrik{In this case, each observation $O_{t+1} \in \mathcal{O}$ has two parts, the label $Y_t$ for the previous transaction and the next transaction $X_{t+1}$. The label $Y_t$ denotes whether the previous example was fraudulent (if that is known). The input $X_{t+1}$ is the transaction to be classified. The actions are $\mathcal{A} = \{ \mathrm{fraudulent}, \mathrm{not\ fraudulent} \}$. This problem can then be modeled as a tracking problem where the agent is tracking a time-varying learning target $f_t$ that maps each transaction to a probability of the transaction being fraudulent. The observation probability function $\rho$ captures everything that is known about how the input distribution as well $f_t$ evolves over time. As explained in Section \ref{sec:continual_interaction} it may not be posssible to write down $\rho$ explicitly.}

Research in continual supervised learning over the years has proposed synthetic problems for comparing and evaluating different agent designs. For instance, an agent may be learning to classify digits and over time the digits may rotate~\citep{buzzega2020dark}. 
On such problems, a common evaluation protocol is to periodically measure the agent’s performance on all types of data seen so far. In the digit classification example, the agent’s prediction accuracy is evaluated on digits with previously encountered rotation angles. Though not stated explicitly, this means that the agent’s objective is to learn about and remember everything in the past.

However, as we have mentioned before, this objective --- to remember everything --- 
is neither feasible nor necessary. In the real world, the goal is to do well on future tasks, not on those in the past. In the fraud detection example, some fraud techniques may become outdated or impossible to use due to new security measures, and the system may not need to remember how to handle those types of cases. Generally, in a dynamic environment, some knowledge becomes obsolete and not necessary to remember. \henrik{As an example, consider a question-answering system about computer repairs. If a new computer model is produced, eventually the new model becomes widely adopted. Thus, the agent can predict that information about repairs for the old model will become obsolete since there will be fewer and fewer questions about that as time progresses.} Further, intelligent systems will be computationally constrained in practice and cannot remember everything. Our average-reward objective under computational constraints tries to capture these real-world requirements. Importantly, this objective has multiple implications for continual supervised learning, especially with regard to forgetting and plasticity.

In this section, we perform a set of simple experiments on a synthetic problem to highlight some of the implications of our average-reward objective in the continual supervised learning setting. We specifically study the impact of forgetting when information recurs and when the agent is computationally constrained. Through our experiments, we make the following points: (1) a performant agent can forget non-recurring information, (2)  a performant agent can forget recurring information if it relearns that information quickly, and (3) under tight computational constraints, forgetting may be helpful.

\subsubsection*{Environment}

We perform our experimental evaluation on a modified version of Permuted MNIST, a common benchmark from the continual learning literature \citep{goodfellow2013empirical}. Permuted MNIST is characterized by a sequence of training datasets. Each dataset is constructed in two steps: (1) we randomly sample a permutation over all pixels, and (2) we apply this permutation to every image from the standard MNIST training dataset. Each such permuted training set is referred to as a {\it task}.

\saurabh{
Since each Permuted MNIST task applies a random permutation to the input pixels, neural networks may struggle to leverage information from previous tasks to do well on future tasks. That is, the amount of forward transfer may be low. To reduce the forward transfer further, we create a modified version of Permuted MNIST in which we additionally permute the labels randomly in each task. For instance, all the images with the label $3$ may get assigned a different label, such as $5$. All in all, we expect there to be little forward transfer when using neural networks. For additional discussion about forward transfer, see Appendix \ref{app:forward_transfer}.}

In this setup, a continual learning data sequence 
consists of a sequence of tasks. \henrik{In turn, each task lasts for $k$ timesteps, where at each timestep a single batch of images arrives. We call the number of timesteps $k$ a task occurs before switching to the next task the task \textit{duration}.}
Each incoming batch from the data sequence contains $b_\text{env}$ images, where the subscript env is shorthand for ``environment." 
In our experiments, $b_\text{env} = 16$.

While the Permuted MNIST benchmark is typically characterized by a sequence of permutations where each permutation occurs once, we consider a particular type of environment dynamics that exhibits \textit{periodic recurrence}. Specifically, in our version of Permuted MNIST, the first permutation recurs periodically: every alternate task is the first permutation, while all other tasks are determined by new permutations which do not recur. As an example, given $100$ permutations P1, P2, ..., P100 
the sequence of tasks may look like P1, P2, P1, P3, P1, P4, P1, P5, ....
\henrik{We hope to study the effect of information recurrence specifically induced by the first permutation recurring. We expect that this effect can be isolated from other sources of forward transfer since we have permuted both input pixels and labels, which should reduce forward transfer between permutations when using neural networks.}

\subsubsection*{Agents}
A common agent design that improves an agent's ability to retain past information in continual supervised learning is to equip the agent with a replay buffer~\citep{aljundi2019gradient,buzzega2020dark,chaudhry2019continual,chrysakis2020online,yoon2021online}. The agent may use this buffer to store past data pairs and continue extracting information from them. In line with this approach, we consider three simple agents which store data in a replay buffer and are trained using SGD. We constrain all agents to perform a single SGD step per timestep. The agents perform each SGD step on a batch containing both a batch of incoming data ($b_{\text{env}}$ data pairs) and a batch of data sampled uniformly from the replay buffer ($b_\text{replay}$ data pairs). 
All agents use so-called \textit{reservoir insertion} to add data to the replay buffer, which ensures that the full buffer is a random sample from the full history \citep{vitter1985random}. Specifically, given the buffer size $B$ and length $T$ of the entire history $H_T$, reservoir insertion guarantees each data pair has the same probability $\frac{B}{T}$ of being stored in the buffer, without knowing the length of the entire data stream in advance. This strategy is common practice in the design of continual learning agents with a limited capacity replay buffer \citep{buzzega2020dark, koh2023online}.

The three agents are \largememory \space, which has a buffer size of $B = 1$ million data pairs, \smallmemory, which can store only $B = 1,000$ data pairs, and \reset, which has the same replay buffer size as \largememory \space but periodically resets all agent components (the neural network parameters are re-initialized and the buffer is emptied). In experiments where we study the effect of resetting, we refer to the \largememory \space agent as \noreset. For all agents, we use a $2$ hidden layer neural network with hidden layer width $1000$, and we set $b_\text{replay} = 16$.

\subsubsection*{Evaluation Protocol}
Most works in continual supervised learning evaluate agents on their performance on previous tasks. In contrast, we evaluate agents on their ability to maximize average reward, which is in line with previous work that considers the online continual learning setting \citep{cai2021online,ghunaim2023real,prabhu2023online}. Because the primary goal in supervised learning is often to achieve good accuracy, we let the reward function be online accuracy. Our objective is thus to maximize average online accuracy under computational constraints. Using the notation presented in Section \ref{sec:objective}, this objective can be written as follows: 

\begin{align*}
\begin{array}{ll}
\max_\pi & \liminf_{T\rightarrow\infty} \E_\pi \left[\frac{1}{T} \sum_{t=0}^{T-1} \mathbbm{1}(Y_{t+1} = \hat{Y}_{t+1}) \right] \\
\text{s.t.} & \text{computational constraint}
\end{array}
\end{align*}

where at each timestep $t$, $Y_{t+1}$ is the true label of observation $X_t$, $\hat{Y}_{t+1} \in \argmax_y P_t(y \in \cdot)$ is the agent's predicted label, and $\mathbbm{1}$ is the indicator function.

Our average reward objective is the limit of an expectation that integrates over a growing sequence, which in most cases is infeasible to compute. Instead, we will evaluate an approximation by (1) limiting the duration of interaction and (2) approximating the expectation by averaging over a finite number of sequences. We propose a variation of the train/test protocol used in supervised learning. First, we split the data into two parts: a single development sequence and a set of evaluation sequences. We use the development sequence to tune an agent by varying hyperparameters. After selecting the best hyperparameters based on the development sequence, we reset the agent (re-initialize the neural network and replay buffer) and train the agent on the evaluation sequences. Finally, the agent's performance is averaged across all evaluation sequences.

In accordance with this evaluation protocol, on our modified Permuted MNIST environment, we split all data into two subsets, one used to generate the development sequence and the second used to generate evaluation sequences. Each subset contains $100$ permutations, and the two subsets do not share any permutations. Consequently, there is no overlap in permutations between the development sequence and the evaluation sequences. Each permutation has $400$ unique data pairs, for a total of $40,000$ unique data pairs per subset of data.
On each subset of data, we train agents over $3$ random seeds, where the seed determines both the initialization of the agent and the generated sequence of data pairs the agent receives from that subset. We perform hyperparameter tuning on the development sequence and report evaluation performance averaged over all $3$ seeds (and therefore $3$ evaluation sequences).
For additional details on the environments and evaluation protocol, see Appendix \ref{apx:sl-experiments}.

\subsubsection*{Results}

\begin{figure}
\centering
\begin{tabular}{cc}
\begin{subfigure}{0.48\linewidth}
\includegraphics[width=\linewidth]{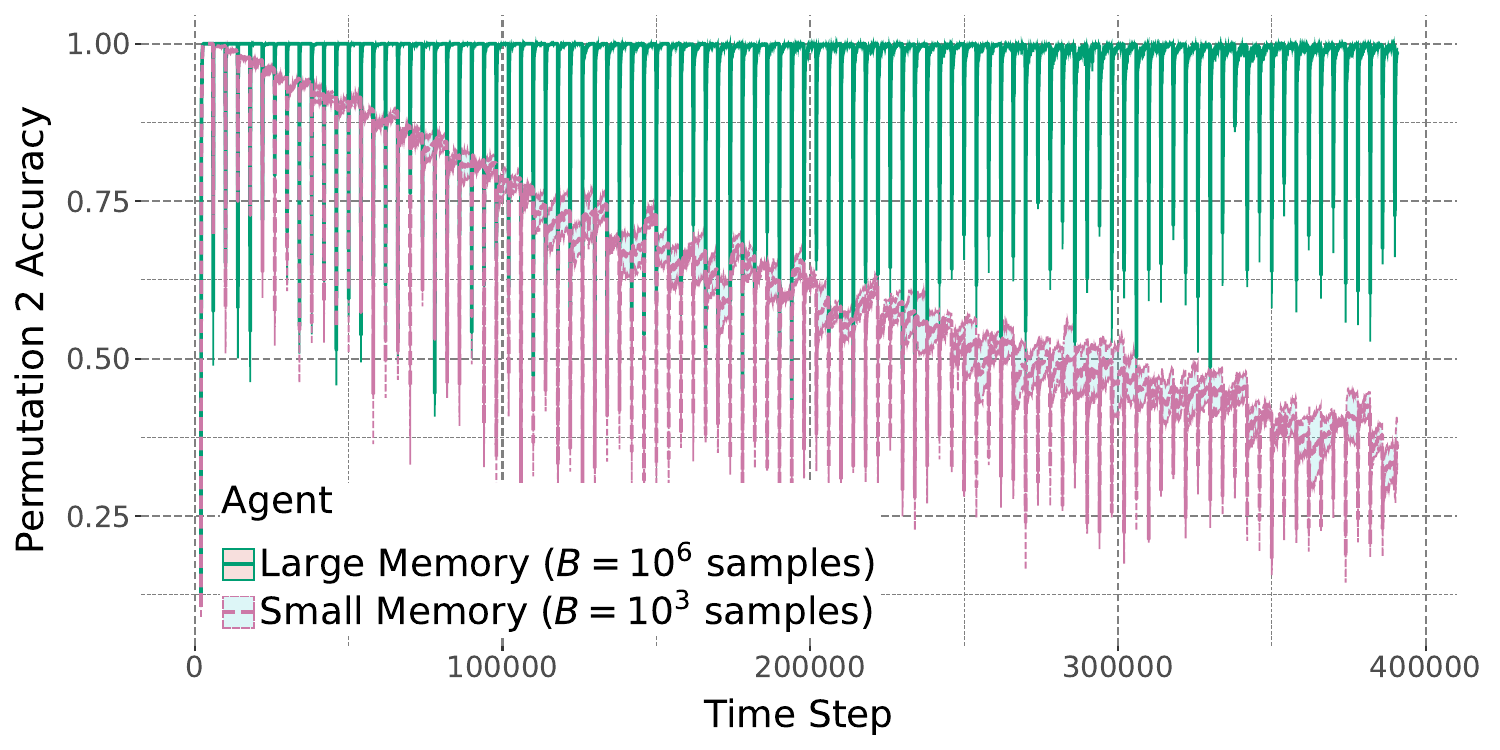}
\caption{Performance on Permutation $2$}
\end{subfigure} 
& \begin{subfigure}{0.48\linewidth}
\includegraphics[width=\linewidth]{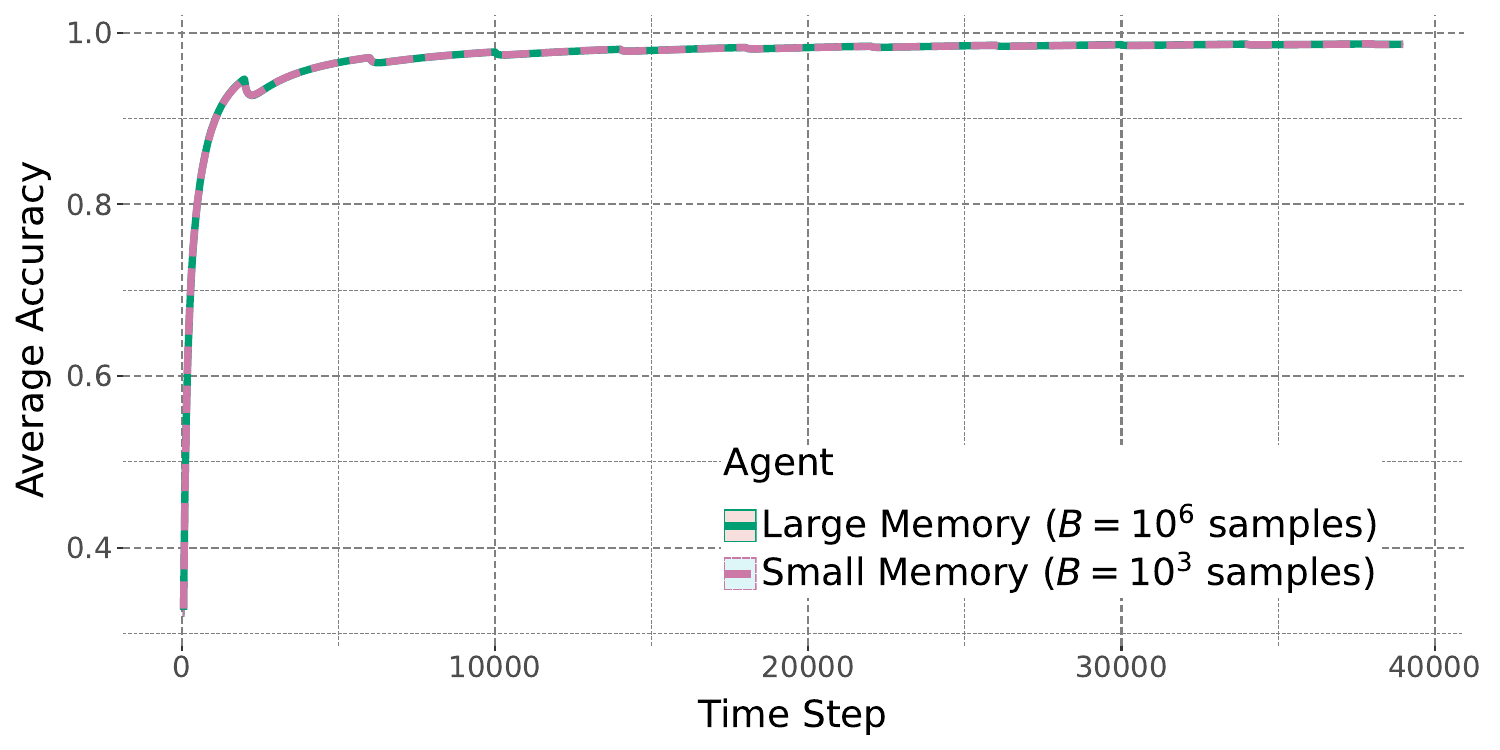}
\caption{Average Accuracy}
\end{subfigure}
\end{tabular}
\caption{(a) Performance on Permutation $2$, which is a non-recurring task. For this plot, the agent's accuracy is computed and averaged across all data points of Permutation $2$. \largememory \space remembers previous information, whereas \smallmemory \space forgets. \henrik{The variability in performance is due to the instability of SGD.} (b) Average accuracy. Despite this difference in forgetting, the two agents perform similarly under our objective. The key reason for this result is that in this environment there is little benefit to remembering non-recurring tasks in order to perform well.}
\label{fig:forgetting-is-ok}
\end{figure}

\textbf{A performant agent can forget non-recurring information.}
We consider a case in which each task has a duration \henrik{of $k=2,000$ timesteps}. This corresponds to the agent seeing each data pair from a task $80$ times \henrik{($2,000 \text{ timesteps} * 16 \text{ samples per timestep} / 400 \text{ unique samples} = 80$)}, which allows the agent to achieve $100 \%$ accuracy on each task. We consider this setting so that we can study the effects of forgetting after an agent completely learns each task.

On this variant of Permuted MNIST, we evaluate the \largememory \space and \smallmemory \space agents (Figure \ref{fig:forgetting-is-ok}). While \smallmemory \space forgets previous non-recurring information quickly relative to \largememory, both agents perform similarly under our objective. This result highlights that a performant agent can forget information that is \textit{non-recurring}. In particular, if a task does not occur more than once and the information required to successfully complete the task is not useful for any future tasks, there is no benefit to remembering this information. Since our version of Permuted MNIST permutes both input pixels and labels, we expect there to be little transfer of information between classifying digits with one permutation versus another. Therefore, it is reasonable to forget how to accurately predict labels for non-recurring permutations. 

While this experiment is simple, the result helps clarify the role of forgetting in continual supervised learning. In the continual learning literature, catastrophic forgetting is highlighted as a critical issue, and performance on previous tasks is the primary metric used in prior work to evaluate methods~\citep{wang2023comprehensive}.  
We argue that when discussing forgetting, it is important to recognize that the usefulness of a (perhaps large) subset of information in real-world applications is transient. Forgetting this information is not catastrophic.

\begin{figure}
\centering
\begin{tabular}{cc}
\begin{subfigure}{0.48\linewidth}
\includegraphics[width=\linewidth]{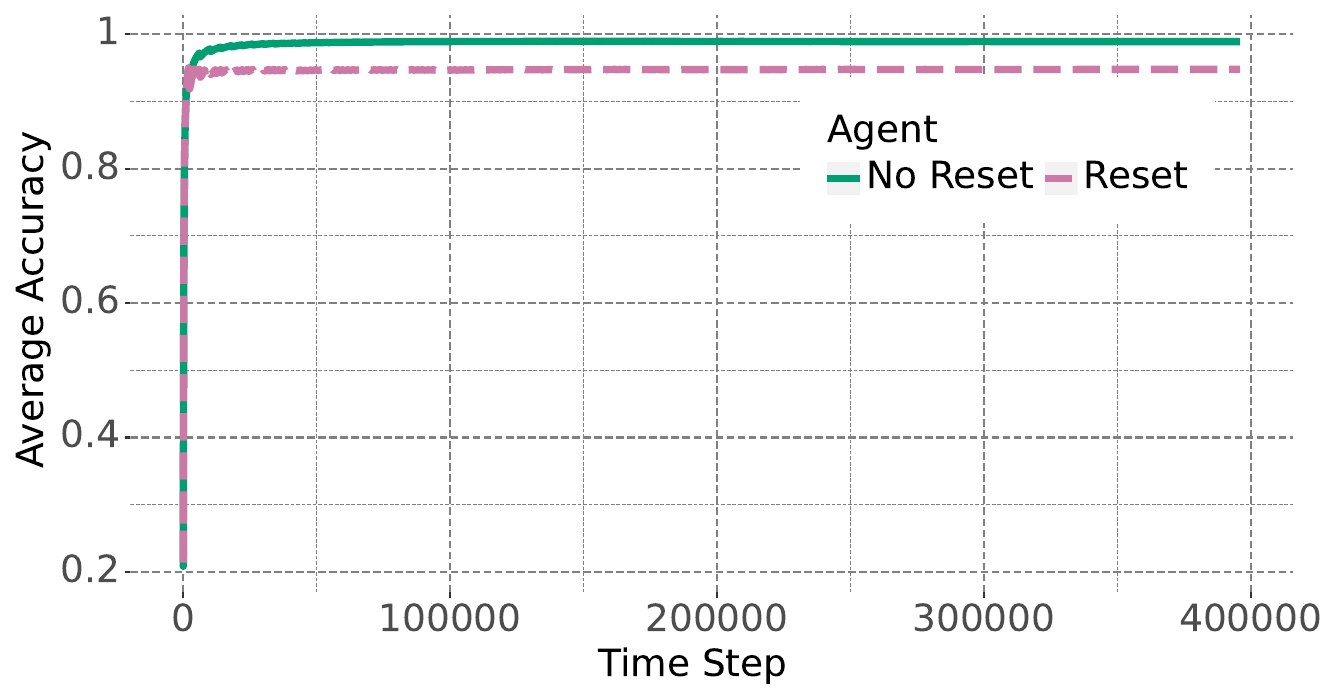}
\caption{Short Task Duration ($2$k timesteps)}
\end{subfigure} 
& \begin{subfigure}{0.48\linewidth}
\includegraphics[width=\linewidth]{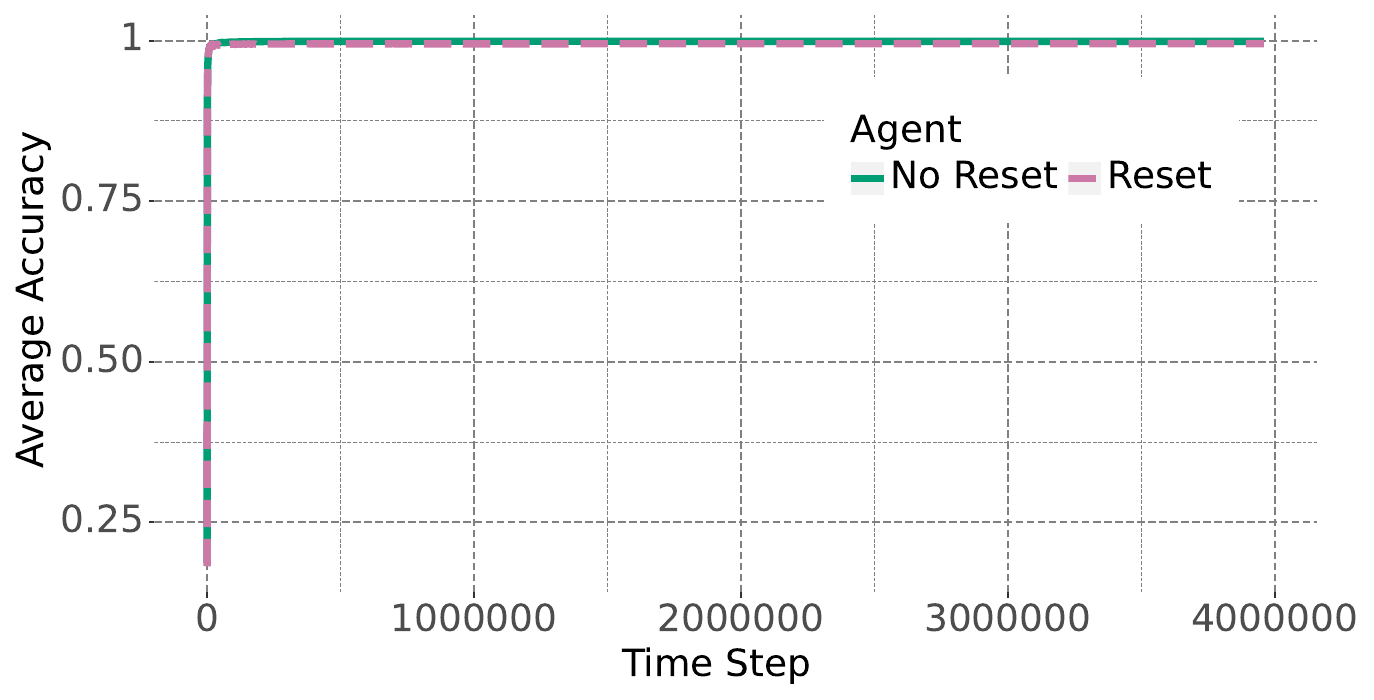}
\caption{Long Task Duration ($20$k timesteps)}
\end{subfigure} 
\end{tabular}
\caption{(a) Average accuracy when each task duration is short. When each task duration is short there is relatively little time to exploit information after it has been learned, and the performance of \reset \space suffers relative to \noreset, which doesn't forget. (b) Average accuracy when each task duration is long. When each task duration is long, the time to learn is short relative to the task duration. Therefore, \reset \space and \noreset \space perform similarly.} 
\label{fig:learning-quickly}
\end{figure}

\textbf{A performant agent can forget recurring information if it relearns that information quickly.} We evaluate the \noreset \space and \reset \space agents with two different permutation durations: $2,000$ timesteps and $20,000$ timesteps (Figure \ref{fig:learning-quickly}). The \reset \space agent forgets all information, both recurring and non-recurring, after each permutation. 
However, when permutation durations are long, its performance under our objective only suffers slightly compared to the performance of \noreset, which remembers recurring information. This is because the \reset \space agent is able to relearn the recurring task quickly relative to the duration of the task. 
This result is in line with \citet{ashley2021does} which highlights that we need to consider different measures when considering forgetting, including retention and the ability to relearn quickly. Further, the average reward objective resolves the extent to which each of these agent characteristics matters. For instance, the duration of the information's utility is an important factor.

\textbf{Under tight computational constraints, forgetting may be helpful.} 
In machine learning, an agent is often parameterized by a neural network, and all parameters of the neural network are updated when taking gradient steps on data pairs. Given this protocol, a computational constraint per timestep, which limits the number of FLOPs when updating the neural network, effectively limits the physical capacity of the neural network the agent can use. Tight computational constraints therefore induce capacity constraints.

To study this setting, we consider reductions in the size of the neural network so that the SGD step of each iteration can be executed within tighter computation budgets.
Concretely, in addition to a hidden size of $1000$, we use smaller hidden sizes of $100$, $25$, and $10$. With these smaller network architectures, we evaluate the \noreset \space and \reset \space agents where each task has a duration of $20,000$ timesteps (Figure \ref{fig:varying-capacity}). We find that as the capacity decreases, \reset \space begins to outperform \noreset. We see that when the hidden size is $10$, the performance of \noreset \space decreases over time as there is insufficient capacity to both remember everything and continue updating on new data. In particular, \noreset \space suffers from loss of plasticity, a characteristic of neural networks that has been studied in recent work~\citep{dohare2021continual,lyle2023understanding,nikishin2023deep}. In contrast, because \reset \space re-initializes the neural network and replay buffer periodically, it retains high plasticity and therefore outperforms \noreset. 

\begin{figure}
\centering
\begin{tabular}{cc}
\begin{subfigure}{0.48\linewidth}
\includegraphics[width=\linewidth]{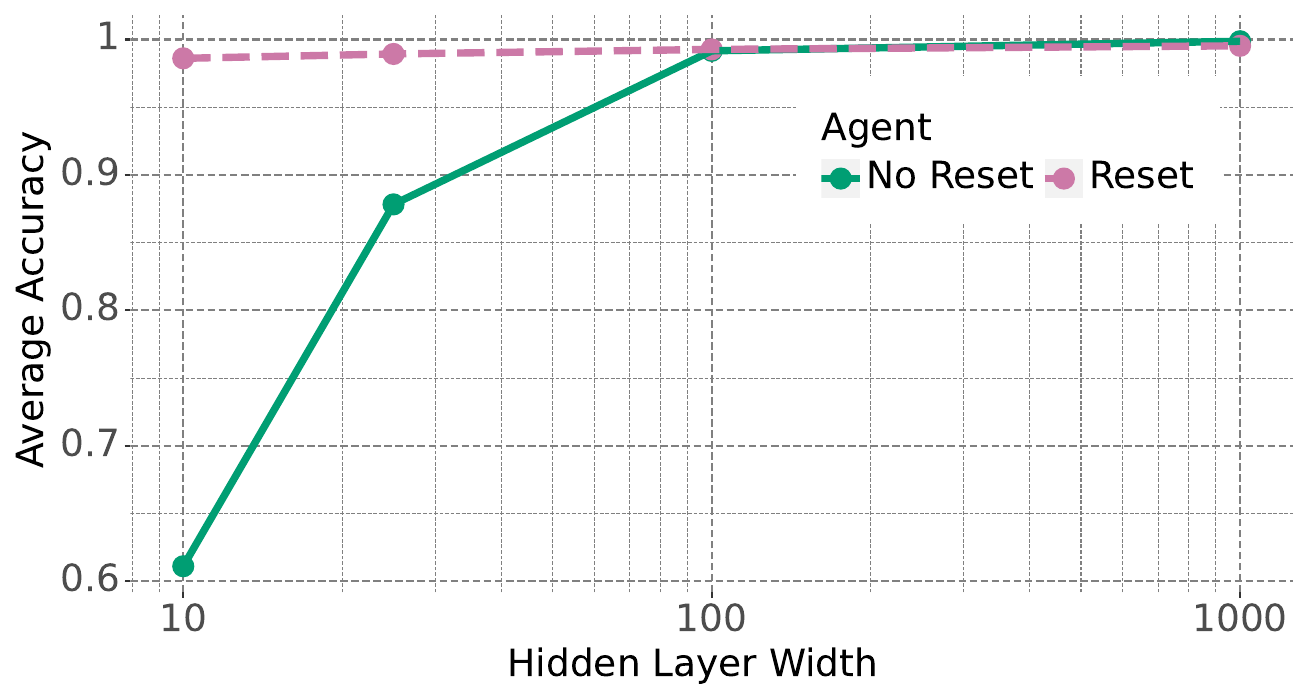}
\caption{Hidden Layer Width vs Average Accuracy}
\end{subfigure} 
& \begin{subfigure}{0.48\linewidth}
\includegraphics[width=\linewidth]{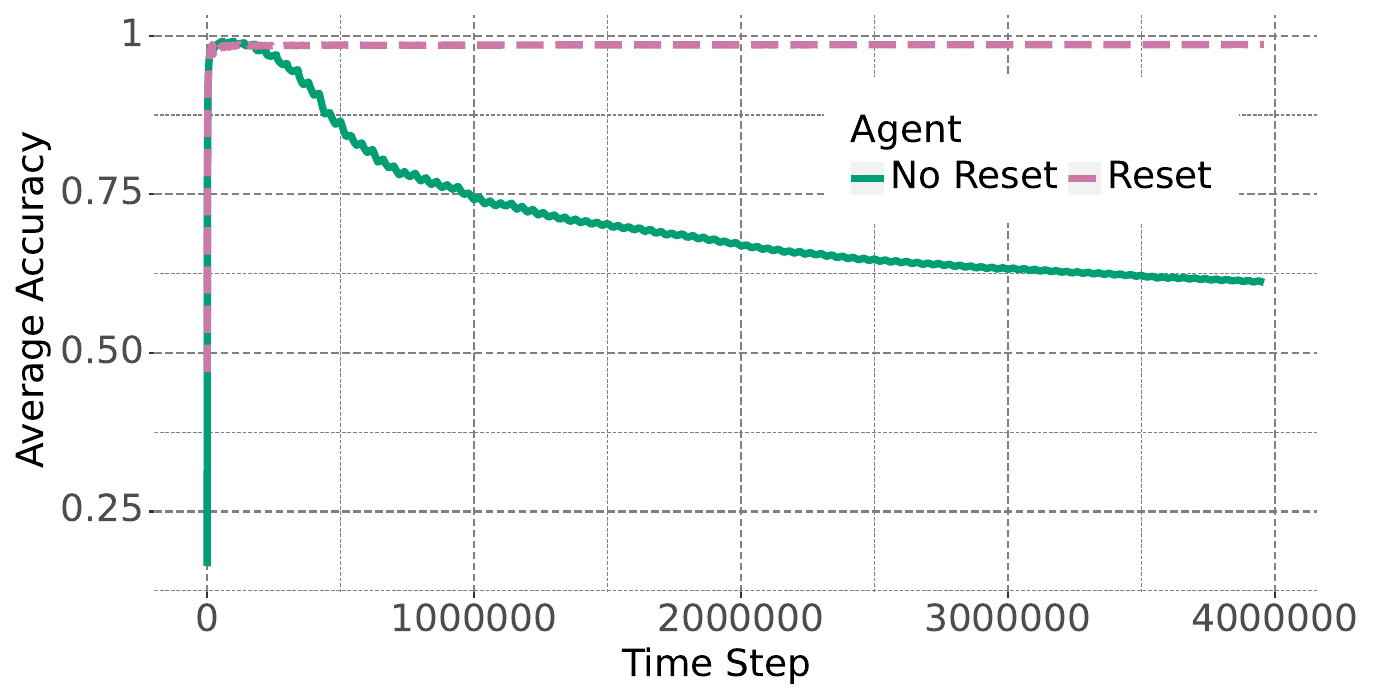}
\caption{Average Accuracy when Hidden Layer Width = 10}
\end{subfigure} 
\end{tabular}
\caption{(a) Average accuracy at the end of evaluation when the neural network has different hidden layer widths. As the agent becomes increasingly capacity constrained (smaller hidden layer width), forgetting becomes more beneficial.  (b) Average accuracy over time when the hidden layer width is $10$. When the agent is severely capacity constrained, resetting prevents loss of plasticity.} 
\label{fig:varying-capacity}
\end{figure}

\subsection{Continual Exploration}\label{sec:continual_exploration}

\henrik{We now turn to environments in which actions may influence future observations. In such environments, active exploration may be required to attain strong performance, as different actions can expose the agent to different information.} As an example, when a recommendation system recommends items to its users, the user behavior that the system observes will vary depending on the recommendations it makes. To improve the quality of recommendations in the long term, a recommendation system may benefit from suggesting a diverse range of items to new users, enabling it to learn more about user preferences.

While seeking out new information can be helpful in the long run, exploration often induces a cost in the short term. Specifically, an agent may sacrifice immediate reward. For instance, a recommendation system may have to recommend multiple different items to a user until it identifies the full range of item types that the user likes. While this is great for the user (and the recommendation system) in the long term, in the short term, the user ends up recommended items they may not like. This trade-off between seeking out information and optimizing immediate reward is commonly known as the exploration-exploitation trade-off. Agents must strike a balance between exploring to seek out information and exploiting existing information to optimize immediate performance.

This problem of balancing exploration and exploitation has primarily been studied in the context of vanishing-regret learning. In this section, we will instead study exploration in the context of continual learning. Specifically, we investigate the implications that nonstationarity has for intelligent exploration. We will argue through didactic examples and simulations that in order to perform well in a nonstationary environment, an agent should (1) continuously engage in exploration, (2) prioritize seeking out information that remains useful over an extended period, and (3) learn about the environment dynamics to guide exploration. \henrik{We also direct readers to the extensive literature on learning in non-stationary bandits~\citep{auer2019adaptively,besbes2014stochastic,bogunovic2016time,chen2019new,chen2023non,luo2018efficient,min2023information,slivkins2008adapting,trovo2020sliding}.\footnote{\henrik{See Chapter 31 "Non-Stationary Bandits" in~\citet{lattimore2020bandit} for a comprehensive review.}}}

\subsubsection*{Exploration in Stationary Environments}
\label{sec:exploration_stationary}
Before considering the implications that nonstationarity has for exploration, let us first consider exploration in a stationary setting.
In a typical stationary environment, the degree to which a performant agent explores typically 
decreases over time. In other words, an agent acquires progressively less information as time goes on. 
For instance, in some stationary environments, the total amount an agent needs to learn to attain optimal or near-optimal average reward is bounded. 
In such an environment, the amount of information acquired per timestep vanishes as time progresses.

To illustrate the trade-off between exploration and exploitation, as well as the typical decrease in exploration over time in a stationary environment, let us examine a special case of the coin tossing example previously discussed in Example~\ref{ex:coins-fixed-biases}. Suppose that the bias $p_1 = 0.8$ of coin $1$ is known, and the prior distribution over the bias of coin $2$ is uniform {\it dyadic} over the set $\{0, 1\}$. Consequently, at each timestep
$t$ before coin $2$ is tossed, the bias $p_2$ is distributed according to a uniform distribution over $\{0, 1\}$; once coin $2$ is tossed, the belief distribution of $p_2$ \henrik{is updated to be equal to the outcome of the second coin's toss.} %
These environment dynamics are characterized by a function $\rho$, defined as follows. For $a = 1$, $\rho(1|h,a) = 0.8$.  For $a = 2$, $\rho(1 | h, a) = p_2$ if coin $2$ was previously tossed according to $h$, and otherwise, $\rho(1 | h, a) = 0.5$.

In this example, an agent can benefit from learning about the bias $p_2$ associated with coin $2$. 
Once the coin is tossed and its bias is revealed, the agent can consistently select the better coin. 
However, the act of exploring and learning about $p_2$ comes at a cost of sacrificing immediate reward; the expected reward of tossing coin $2$ is only $0.5$, which is significantly lower than the expected reward of $0.8$ associated with coin $1$. 
It is worth mentioning that this example also illustrates the typical decrease in exploration over time in a stationary environment. Indeed, this example presents an extreme case where an agent is ``done" with exploring and learning about $p_2$ after the first toss of coin $2$.

\subsubsection*{Exploration in Nonstationary Environments}
In nonstationary environments, the nature of exploration differs from exploration in stationary environments. 
Here, we outline three key implications of nonstationarity on exploration: 
\begin{enumerate}
\item \textbf{Never stop exploring.} In a typical nonstationary environment, new information continually arrives.
Crucially, there is usually a non-diminishing supply of new and valuable information.
As a result, it is common for an optimal agent to engage in continuous exploration to learn about such information. This is in direct contrast to the stationary setting, where agents tend to reduce their exploration over time. 

This idea has been explicitly or implicitly discussed in prior nonstationary bandit learning or nonstationary reinforcement learning literature. For example, many nonstationary learning algorithms are designed to learn about a different latent variable at each timestep, e.g., a different mean reward or a different MDP. 
For this purpose, many nonstationary bandit learning algorithms estimate a mean reward and then adopt a stationary bandit learning algorithm as a subroutine \citep{besbes2019optimal, besson2019generalized, cheung2019learning, garivier2008upper, 9194367, gupta2011thompson, hartland2006multi, kocsis2006discounted, pmlr-v31-mellor13a, raj2017taming, trovo2020sliding, viappiani2013thompson}.

\item \textbf{Seek out durable information.}  
While new information continually arrives in a typical nonstationary environment, it is important to recognize that some information may be transient and loses its relevance over time. 
In order to succeed in a nonstationary environment, an agent must prioritize seeking out information that remains valuable and relevant for a longer duration. We refer to this characteristic as \emph{durability}, which represents the degree to which an agent's acquired information remains useful over time. 
An agent should deprioritize acquiring information that is less durable. 

The concept of information durability was introduced by~\citet{liu2023nonstationary}.  This work also emphasizes the importance of an agent intelligently considering the durability of information when selecting actions through didactic coin tossing games, theoretical results, and simulation experiments.

\item \textbf{Learn about environment dynamics to guide exploration.} 
In order to seek out information that is more durable, an agent needs to determine the durability of information. To achieve this, an agent can benefit from dedicating a portion of its computational budget to learning about aspects of the environment dynamics that determine the durability of information.
\end{enumerate}

\subsubsection*{Coin Replacement Games}
We examine three coin replacement examples to illustrate these three implications that nonstationarity has on exploration. 
These examples are variants of the coin tossing example described in Section 6.2.1. 
Recall that the bias $p_1 = 0.8$ of coin $1$ is known, and the prior distribution over the bias of coin $2$ is uniform dyadic over the set $\{0, 1\}$. 
The key difference is that now in the coin replacement examples, the second coin is replaced at each timestep with probability $q_2$. The coin replacement probability $q_2$ varies across the three examples. 
Note that these games also serve as specific instances of Example~\ref{ex:coins-evolving-biases} in Section~\ref{sec:objective}, where the prior distributions over the bias of each coin are provided in Section 6.2.1.

\textbf{Small replacement probability.} Let us first consider a game where the coin replacement probability $q_2$ for coin $2$ is known to the agent and is small, for instance $q_2 = 0.001$.  In this game, before coin $2$ is tossed for the first time, the expected reward from selecting coin $2$ is $0.5$. 
If the agent has selected coin $2$, and the latest outcome is tails, then the expected reward from selecting coin 2 is $0.0005$. 
In both of these cases, the expected reward from selecting coin 2 is much smaller than the expected reward of $0.8$ associated with coin $1$. 
In addition, the bias of coin $1$ is known, so selecting coin $2$ in this context exemplifies exploration. 

Despite that coin $2$ is associated with a lower expected reward, an optimal agent should eventually select coin $2$, because it is very likely that the coin has eventually been  replaced, possibly with a new coin of bias $1$. 
If the new coin does indeed have a bias of $1$, selecting coin $2$ allows the agent to learn about this bias and then continue selecting the same coin, resulting in a reward of $1$ for a long time---an average of $1000$ consecutive timesteps. 
Since selecting coin $2$ exemplifies exploration, this game serves as an illustration that an optimal agent may need to continuously explore, unlike in stationary environments.

\textbf{Large replacement probability.}  Next, suppose that the coin replacement probability $q_2$ for coin $2$ is instead large, say, $q_2 = 0.999$.
Because coin $2$ is likely to be replaced at each timestep, the information associated with it quickly becomes obsolete. In other words, the information is not very durable.  
Therefore, unlike in the previous game, an agent does not benefit from learning about the bias of coin $2$ anymore in this game. 
Indeed, an optimal agent will only ever select coin $1$ throughout the entire game. 
This variation highlights the importance of only seeking out information to the extent that the information is durable.

\textbf{Unknown Replacement Probability}
Now suppose that the coin replacement probability $q_2$ for coin $2$ is unknown. 
This presents a typical scenario where an agent does not know the environment dynamics \emph{a priori}. 
Recalling the previous two variations of the coin replacement game, we observe that the optimal behaviors differ significantly based on this coin replacement probability. This indicates that understanding and learning about the coin replacement probability is crucial for determining the durability of information and selecting actions accordingly.
This variation of the game emphasizes the importance of learning about the dynamics of the environment in order to guide exploration. By acquiring knowledge about the coin replacement probability, an agent can make informed decisions on how to explore and seek out durable information in a nonstationary environment.

\subsubsection*{Experiments in AR(1) Bandits}
While the coin tossing games serve as a model of environments with abrupt changes and bounded rewards, 
our insights on continual exploration extend beyond such settings. To demonstrate this, we replicate a variation of experiments from~\citet{liu2023nonstationary}. In particular, we conduct experiments on a class of Gaussian bandits that capture continuous or smooth changes in environments with unbounded rewards. These bandits are known as AR(1) Gaussian bandits, and Example~\ref{ex:nonstationary-Gaussian-TS} in Appendix~\ref{app:cl_agents} serves as one specific instance of an AR(1) Gaussian bandit. The AR(1) bandits or similarly constructed nonstationary Gaussian bandits have been studied by \cite{gupta2011thompson, gaussianar1, kuhn2015wireless, liu2023nonstationary, slivkins2008adapting}.

\subsubsection*{Environment}

We consider a two-armed Gaussian bandit described in Example~\ref{ex:nonstationary-Gaussian-TS}. Recall that the environment is characterized by latent random variables 
$\theta_{0,a}$ independently and identically distributed $\mathcal{N}(\mu_{0,a}, \Sigma_{0,a})$,
and 
updated according to 
\begin{align*}
\theta_{t+1,a} = \eta \theta_{t,a} + Z_{t+1,a},
\end{align*}
with each $Z_{t+1,a}$ independently sampled from $\mathcal{N}(0,\zeta^2)$. 
Each reward $R_{t+1} = O_{t+1} = \theta_{t, A_t} + W_{t+1, A_t}$, where each $W_{t+1,a}$ is sampled independently from $\mathcal{N}(0,\sigma^2)$. 
The parameters of the environment include the prior mean $\mu_{0, a} \in \mathbb{R}$,  the prior variance $\Sigma_{0,a} \in \mathbb{R}_+$, the AR(1) parameter $\eta \in \mathbb{R}$, the $Z_{t,a}$ variance $\zeta^2 \in \mathbb{R}_+$, and the observation noise variance $\sigma^2 \in \mathbb{R}_+$.

In our experiments, we let 
$\mu_{0,a} = 0$, 
$\Sigma_{0,a} = 1$, 
$\sigma = 1$, and $\zeta$ be such that each sequence $(\theta_{t,a}: t \in \mathbb{N})$ is a stationary stochastic process.
The AR(1) parameter $\eta$ is the remaining parameter that we vary across experiments. 
It determines the degree to which information about $\theta_{t,a}$ is durable; if $\eta = 1$, then the information about $\theta_{t,a}$ remains useful forever, and if $\eta = 0$, then the information immediately lose its relevance at the next timestep.

\subsubsection*{Agents}
We consider two agents: Thompson sampling \citep{thompson1933likelihood}, which does not take into account the durability of information when selecting actions, and predictive sampling \citep{liu2023nonstationary}, which does. Both agents have privileged access to the environment parameters. 

\textbf{Thompson sampling.} First, we consider a Thompson sampling agent. This agent is representative of those which do not account for the durability of information. 
The agent maintains a posterior distribution of $\theta_{t,a}$ for each action $a$, parameterized by $\mu_{t,a}$ and $\Sigma_{t,a}$. \henrik{Further details on these parameters are in Example~\ref{ex:nonstationary-Gaussian-TS} in Appendix~\ref{app:cl_agents}.}

Thompson sampling updates these parameters according to the following equations: 
\begin{align*}
\mu_{t+1,a} = \left\{\begin{array}{ll}
\eta \mu_{t,a} + \alpha_{t+1} (O_{t+1} - \eta \mu_{t,a}) & \mathrm{if\ } a = A_t \\
\eta \mu_{t,a} & \mathrm{otherwise,}
\end{array}\right.
\qquad
\Sigma_{t+1,a} = \left\{\begin{array}{ll}
\frac{1}{\frac{1}{\eta^2 \Sigma_{t,a} + \zeta^2} + \frac{1}{\sigma^2}} &  \mathrm{if\ } a = A_t \\
\eta^2 \Sigma_{t,a} + \zeta^2 & \mathrm{otherwise,}
\end{array}\right.
\end{align*}
where 
$\alpha_{t+1} = \Sigma_{t+1, A_t}/\sigma^2$. 
The agent then estimates mean rewards and selects the action corresponding to the largest estimate by 
sampling each $\hat{\theta}_{t,a}$ independently from $\mathcal{N}(\mu_{t,a}, \Sigma_{t,a})$, and selecting an action uniformly at random from the set $\arg \max_{a \in \mathcal{A}}\hat{\theta}_{t,a}$.

\textbf{Predictive sampling.} The other agent that we consider is a predictive sampling agent. Predictive sampling can be viewed as a modified version of Thompson sampling that de-prioritizes transient information. 
In an AR(1) bandit, specifically, similar to the Thompson sampling agent, this agent maintains the same set of hyperparameters, and also updates parameters $\mu_{t+1, a}$ and $\Sigma_{t+1,a}$ according the equations above. 
The parameters determine the posterior belief of $\theta_{t,a}$. 
Different from Thompson sampling, the agent then samples each  $\hat{\theta}_{t,a}$ independently from a different distribution, i.e., $\mathcal{N}(\tilde{\mu}_{t,a}, \tilde{\Sigma}_{t,a})$, and selects an action uniformly from the set $\arg \max_{a \in \mathcal{A}}\hat{\theta}_{t,a}$. 

Specifically, predictive sampling updates the parameters  $\tilde{\mu}_{t, a}$ and $\tilde{\Sigma}_{t,a}$ 
of its  
sampling distribution as follows: 
\begin{align*}
&\ \tilde{\mu}_{t,a} = \mu_{t, a} \text{ and } 
\tilde{\Sigma}_{t,a} = \frac{\eta_a^2 \Sigma_{t, a}^2}{\eta_a^2 \Sigma_{t, a} + x_a^*},
\end{align*}
where 
$x_a^* = \frac{1}{2} \left(\zeta_a^2 + \sigma^2 - \eta_a^2 \sigma^2 + \sqrt{(\zeta_a^2 + \sigma^2 - \eta_a^2 \sigma^2)^2 + 4 \eta_a^2 \zeta_a^2 \sigma^2}\right)$. 

The update expression for $\tilde{\Sigma}_{t,a}$ is quite involved, so we provide some intuition for how predictive sampling behaves relative to Thompson sampling. 
We first consider the case where $\eta = 1$. This corresponds to a stationary environment. 
In this setting, predictive sampling executes the same policy as Thompson sampling because $\tilde{\Sigma}_{t,a} = \Sigma_{t,a}$. 

On the other extreme, if $\eta = 0$, $\theta_{t,a}$ is completely determined by the process noise. 
This corresponds to an environment where information about $\theta_{t,a}$ is completely not durable. 
In this setting, predictive sampling's sampling variance $\tilde{\Sigma}_{t,a}$ for each arm $a$ is $0$, and this agent therefore executes a greedy policy. 

More generally, the predictive sampling agent's sampling variance $\tilde{\Sigma}_{t,a}$ lies between $0$ and $\Sigma_{t,a}$, i.e., 
$0 \leq \tilde{\Sigma}_{t,a} \leq \Sigma_{t,a}$. 
In addition, the ratio $\tilde{\Sigma}_{t,a} / \Sigma_{t,a}$ is monotonically increasing in $\eta$.  
Therefore, predictive sampling samples actions from distributions with smaller variances as compared to that of Thompson sampling.
This corresponds to more exploitation versus exploration. 
In summary, in an AR(1) bandit, predictive sampling can be viewed as a variant of Thompson sampling that adjusts how it balances exploration and exploitation according to $\eta$, which determines the durability of information. 
The behavior of PS is motivated by the fact that when information about $\theta_{t,a}$ is less durable, meaning that $\theta_{t,a}$ is less informative in predicting $(\theta_{k,a}: k \geq t+1)$, an agent should deprioritize acquiring information about $\theta_{t,a}$.

\subsubsection*{Results}
We conduct two experiments to investigate the impact of information durability on agent performance. 
In the first experiment, we compare the performance of predictive sampling and Thompson sampling over time. The results demonstrate that predictive sampling 
consistently outperforms Thompson sampling. Furthermore, we observe that predictive sampling tends to take greedy actions more frequently; here, greedy actions refer to actions with higher mean reward estimates. This finding supports our hypothesis that de-prioritizing transient information
leads to improved performance.

In the second experiment, we examine the performance gap between the two agents in environments with varying levels of information durability. The results reveal that the performance gap between the two agents is more significant in environments where information is less durable. This suggests that the choice of exploration strategy becomes even more critical when information is less durable. 

Recall that the environments we examine are parameterized by $\eta$, which determines the degree to which information about $\theta_{t,a}$ is durable. In such environments, the information durability associated with different actions are comparable, and deprioritizing transient information translates to deprioritizing exploration relative to exploitation as information durability decreases.  

\textbf{De-prioritizing transient information is beneficial.} Figure~\ref{fig:ar1_1}(a) plots the action selection frequencies against time, and the band corresponds to a 95\% confidence band. This figure shows that predictive sampling consistently selects greedy actions more than Thompson sampling.
Figure~\ref{fig:ar1_1}(b) plots the average reward collected by each agent against time ($t = 1, ... 200$). This figure shows that predictive sampling consistently outperforms Thompson sampling across time. The two plots suggest that de-prioritizing transient information, 
which in this environment corresponds to exploiting more by taking more greedy actions, leads to better performance in nonstationary environments.

\textbf{The benefit is larger in environments with less durable information.} 
Figure~\ref{fig:ar1_2}(a) plots the action selection frequency of each agent over $200$ timesteps against the AR(1) parameter. The figure shows that the gap between the action selection frequencies also increases as the AR(1) parameter decreases, i.e., as information becomes less durable.
Figure~\ref{fig:ar1_2}(b) plots the average reward collected by each agent over $200$ timesteps against the AR(1) parameter. The figure shows that the performance gap between the agents increases as the AR(1) parameter decreases, i.e., as information becomes less durable. 
The two plots suggest that the performance gain from deprioritizing less durable information is larger in environments where information is less durable.

These experiments provide valuable insights into the relationship between information durability and agent performance, further reinforcing the importance of intelligent exploration in nonstationary environments.

\begin{figure}[!ht]
    \centering
    \begin{tabular}{cc}
        \begin{subfigure}{0.42\linewidth}
        \includegraphics[width=\linewidth]{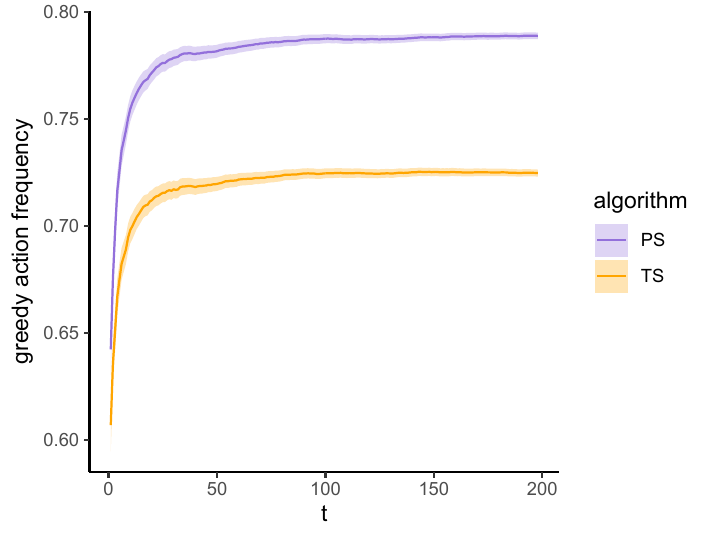}
        \caption{Greedy Action Selection Frequencies}
    \end{subfigure}
    \hfill %
  \begin{subfigure}{0.42\linewidth}
    \includegraphics[width=\linewidth]{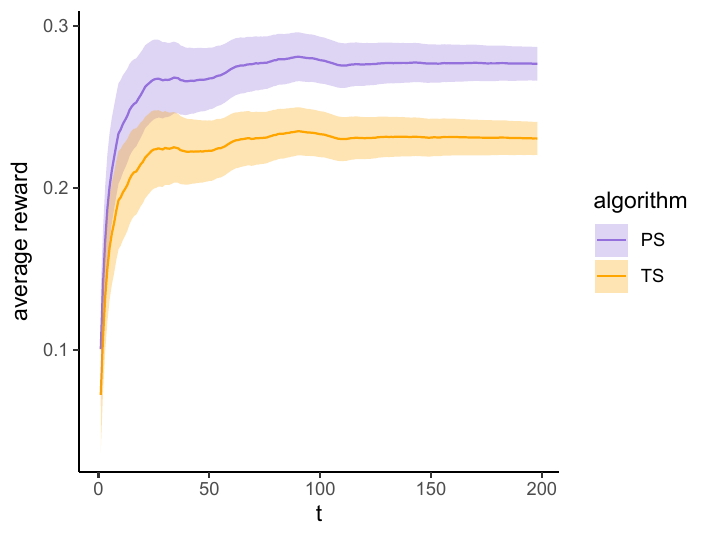}
        \caption{Average Rewards}
    \end{subfigure} 
    \end{tabular}
    \caption{Experiment where $\eta = 0.9$, and $t \in \{1, 2, ... , 200\}$: (a) The frequencies at which Thompson sampling (TS) and predictive sampling (PS) agents select greedy actions. PS consistently selects greedy actions more than TS throughout time. (b) Average reward collected by TS and PS agents. PS consistently attains higher average reward than TS throughout time.} %
    \label{fig:ar1_1}
\end{figure}

\begin{figure}[!ht]
    \centering
    \begin{tabular}{cc}
    \begin{subfigure}{0.4\linewidth}
    \includegraphics[width=\linewidth]{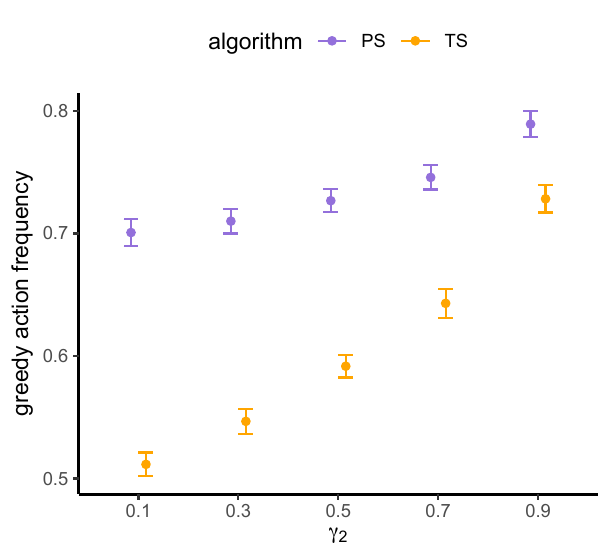}
    \caption{Greedy Action Selection Frequencies}
    \end{subfigure}
    \hfill
    & \begin{subfigure}{0.4\linewidth}
    \includegraphics[width=\linewidth]{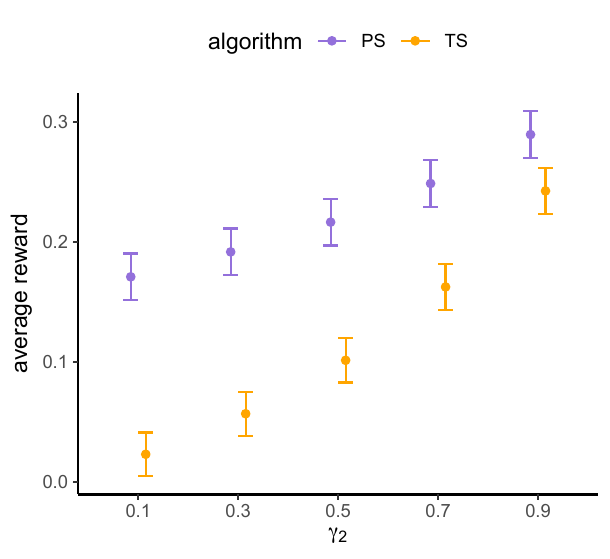}
    \caption{Average Rewards}
    \end{subfigure} 
    \end{tabular}
    \caption{Experiment where $\eta \in \{0.1, 0.3, 0.5, 0.7, 0.9\}$, and $t = 200$: (a) The frequencies at which Thompson sampling (TS) and predictive sampling (PS) agents select greedy actions. PS consistently selects greedy actions more than TS for varying $\eta$, and the gap between the greedy action selection frequencies increases as $\eta$ decreases. (b) Average reward collected by TS and PS agents. PS consistently attains average reward higher than TS for varying $\eta$, and the gap between the average rewards increases as $\eta$ decreases. 
    }
    \label{fig:ar1_2}
\end{figure}

\subsection{Continual Learning from Delayed Consequences} \label{sec:delayed-consequences}

So far, we have considered supervised learning and bandit learning, both of which involve no delayed consequences. 
In contrast, in this section we will consider the case in which actions have delayed consequences. Common examples of this include robotics, game playing (Go, chess, Hanabi, etc), and of course, regular life. In these cases, the agent's actions affect which situations it ends up in, and the reward is often delayed. 
While our framing of ``reinforcement learning" is very broad, we should note that it is precisely this problem formulation with delayed consequences that typically goes under the reinforcement learning rubric. 

Most commonly, this problem is framed as learning in a Markov Decision Process (MDP). Typically, the MDP is stationary and not changing over time. To study continual learning, we will instead consider an MDP that is changing over time. In summary, we consider learning from delayed consequences in a nonstationary MDP as a special case of our computationally constrained RL framework for continual learning. This type of environment has been extensively studied in prior work, surveyed by \citet{padakandla2021survey}.

\subsubsection*{Environment}

Consider an environment $\mathcal{E} = (\aspace, \mathcal{O}, \rho)$ in which observations are generated by an MDP with time-varying transition probabilities.  Each observation $O_t$ is of the current state $S_t$ of the MDP.  Hence, the state space of the MDP is $\mathcal{S} = \mathcal{O}$. We consider a small MDP where $|\mathcal{S}| = 10$ and $|\mathcal{A}| = 3$.

For each state-action pair $(s,a)$, the vector $(P_{t,s,a,s'}:s' \in \sspace)$ represents the transition probabilities at timestep $t$. The initial transition probability vectors are independently sampled from a $\text{Dirichlet}(1/\mathcal{S}, \ldots, 1/\mathcal{S})$ distribution. At every timestep $t$, the MDP is updated with some small probability. Specifically, for each state-action pair $(s,a)$, with some probability $\eta \in (0,1)$, the vector $(P_{t,s,a,s'}:s' \in \sspace)$ is replaced by a new, independent $\text{Dirichlet}(1/\mathcal{S}, \ldots, 1/\mathcal{S})$ sample.

After the action $A_t$ is executed, the environment produces as an observation the next state $S_{t+1} = O_{t+1}$.  This state is generated as though the environment samples from the state distribution $P_{t, S_t, A_t, \cdot}$.  Note that this probability vector is random, and the observation probability function $\rho$ that characterizes environment dynamics is defined by 
$$\rho(o | H_t, A_t) = \E[P_{t,S_t,A_t,o} | H_t, A_t]$$ 
for all $o \in \mathcal{O}$.

There is a known distinguished state $s_* \in \mathcal{S}$, which we will refer to as {\it the goal state}. A nonzero reward is received only upon arrival at the goal state:
$$R_{t+1} = r(H_t, A_t, O_{t+1}) 
= \left\{\begin{array}{ll}
r_{t+1} > 0 \qquad & \text{if } S_{t+1} = s_* \\
0 \qquad & \text{otherwise.}
\end{array}\right.
$$

Because the MDP is changing over time, we can think of the agent as encountering different stationary MDPs (determined by $P_t$) throughout time. We scale the reward $r_{t+1}$ such that the optimal long-term average reward in each MDP is the same ($.5$). This ensures low variance across time and seeds. See details in Appendix \ref{apdx:continual-delayed}. %

\subsubsection*{Agent}

We consider a version of Q-learning called optimistic Q-learning. The agent is initialized with $Q_0(s,a) = 0$ for all $(s,a)$ and updates action values according to
$$Q_{t+1}(s,a) = \left\{\begin{array}{ll}
Q_t(S_t,A_t) + \alpha \left(R_{t+1} + \gamma \max_{a \in \aspace} Q_t(S_{t+1}, a) - Q_t(S_t, A_t)\right) + \zeta \qquad & \text{if } (s,a) = (S_t,A_t) \\
Q_t(s,a) + \zeta & \text{otherwise.}
\end{array}\right.
$$

There are three hyperparameters: the stepsize $\alpha$, the discount factor $\gamma$, and the optimistic boost $\zeta$. The formula is identical to the regular Q-learning formula, with one addition: all action values are incremented with an optimistic boost $\zeta$ at each timestep. This ensures that all actions will eventually be taken upon a revisit to a state. A higher $\zeta$ can be interpreted as leading to more exploration since it leads to all actions being revisited more often.

At each timestep, the agent acts greedily with respect to $Q_t$ and selects the action with the highest action value. If multiple actions have the same action value the agent samples uniformly from those. Thus, $Q_t$ induces a mapping from situational state to action. \henrik{For additional details on optimistic Q-learning, see Example~\ref{ex:optimistic-q-learning} in Appendix~\ref{app:delayed_consequences}.}

In the stationary setting with a constant MDP, there is an optimal action-value function $Q_t^*$ that does not change over time. Therefore, by annealing the stepsize and optimistic boost appropriately, the agent can converge on the optimal action-value function. The difference $|Q_t - Q_t^*|$ vanishes. By acting greedily with respect to the optimal action-value function, the agent can perform optimally. In contrast, in our environment, the MDP is changing at a constant rate, and so is the associated optimal action-value function $Q_t^*$. In this case, annealing the stepsize and optimistic boost will result in premature convergence and hurt performance. By annealing the stepsize and optimistic boost, it will eventually update $Q_t$ at a much slower rate than the optimal action-value function $Q_t^*$ is changing. Indeed, the agent will update at a slower and slower rate, such that it effectively stops learning information that is useful: by the time the agent has made even a single update to $Q_t$, the optimal action-value function has changed many times. Because the objective is to maximize the infinite-horizon average reward, any performance up until a finite time will have no influence on the overall long-term average reward. Thus, since an annealing agent will effectively stop learning after a certain point in time, it will underperform under the long-term average reward objective. This is generally true in the continual learning setting where the optimal mapping from situational state to action keeps changing indefinitely.

Consequently, in our experiments, we only consider constant $\alpha$ and $\zeta$ values. In reinforcement learning, the discount factor $\gamma$ is typically presented as a component of the MDP. In our framing of continual learning, $\gamma$ is instead a hyperparameter of the agent that controls the effective planning horizon. For all experiments, we set $\gamma = 0.9$.

\subsubsection*{Results}

\begin{figure}[htbp]
\centering
\begin{tabular}{cc}
\begin{subfigure}{0.47\linewidth}
\includegraphics[width=\linewidth]{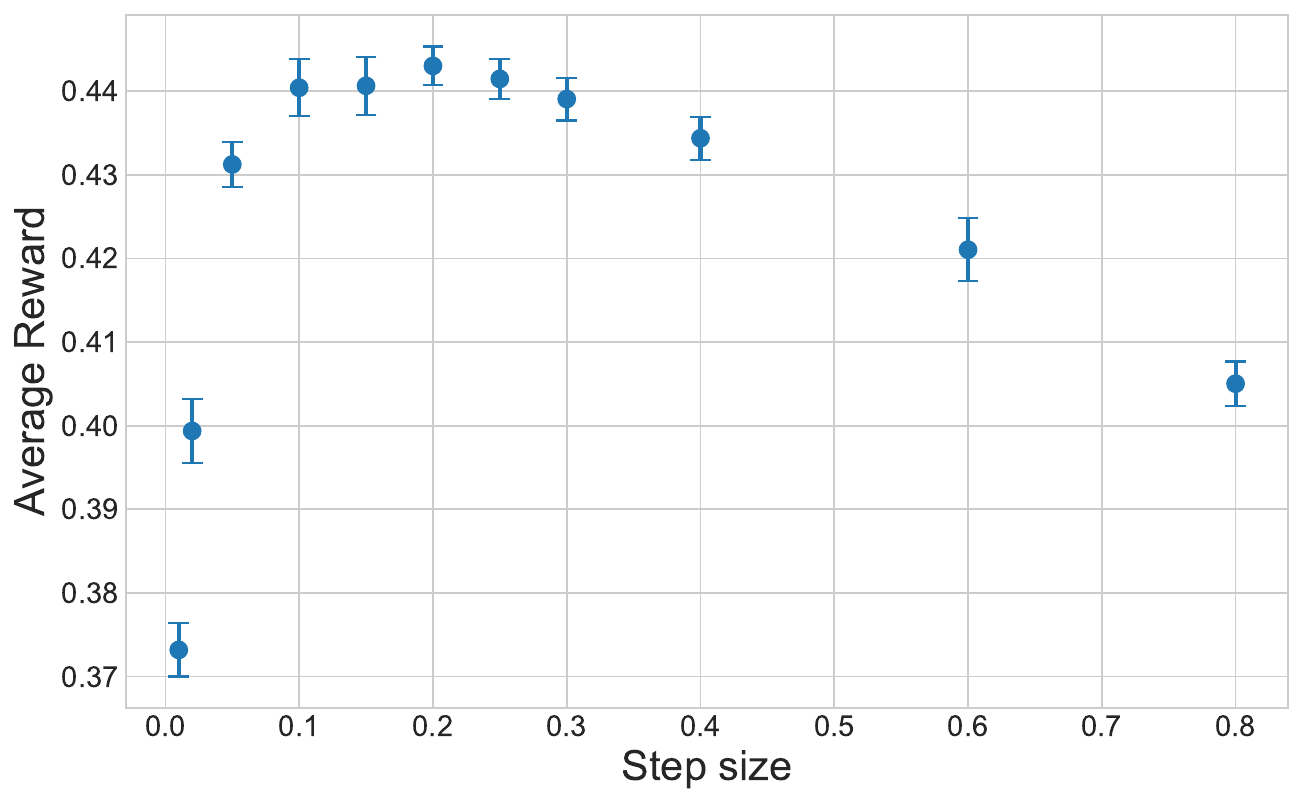}
\caption{$\eta=1e-4$}
\end{subfigure} 
& \begin{subfigure}{0.47\linewidth}
\includegraphics[width=\linewidth]{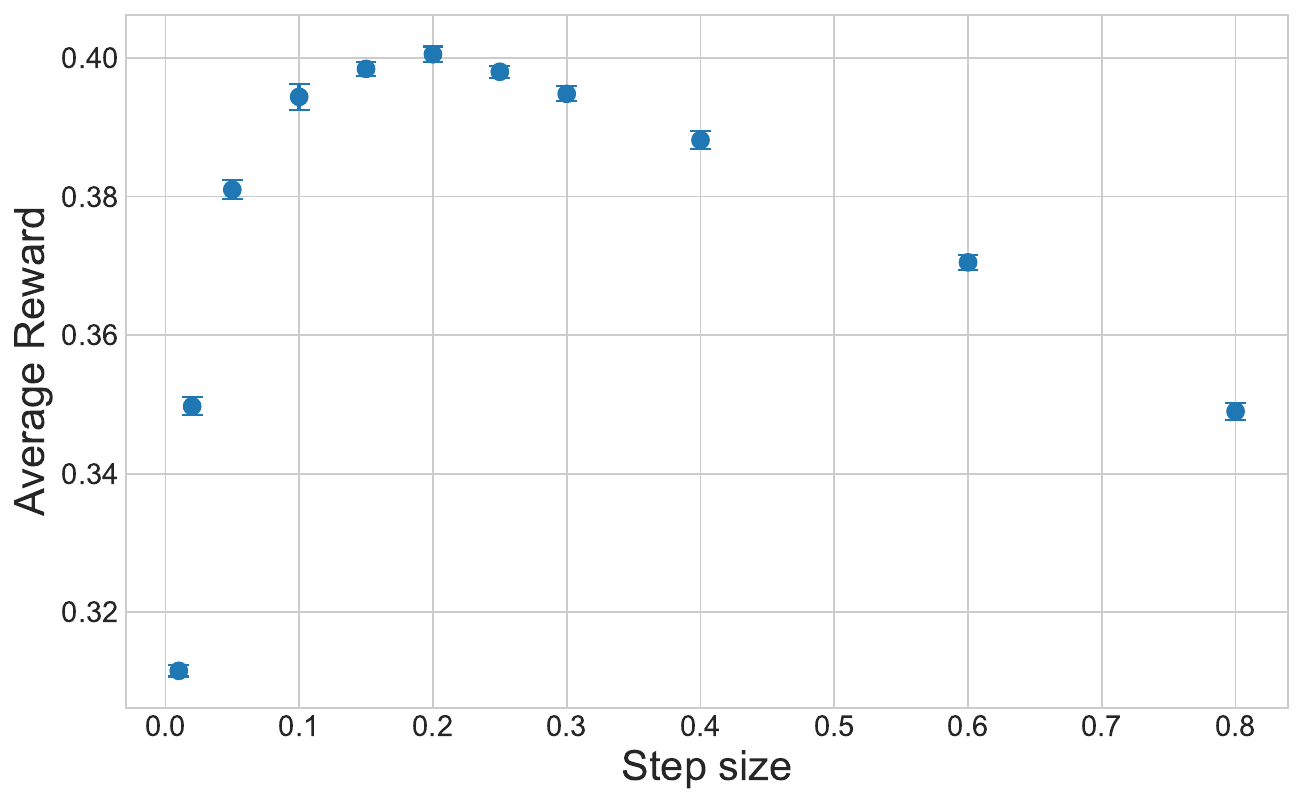}
\caption{$\eta=1e-3$}
\end{subfigure}
\end{tabular}
\caption{Average reward versus the stepsize $\alpha$. Interestingly, optimal stepsize is the same ($.2$) in both environments.
}
\label{fig:delayed-reward-vs-stepsize}
\end{figure}

\begin{figure}[htbp]
\centering
\begin{tabular}{cc}
\begin{subfigure}{0.48\linewidth}
\includegraphics[width=\linewidth]{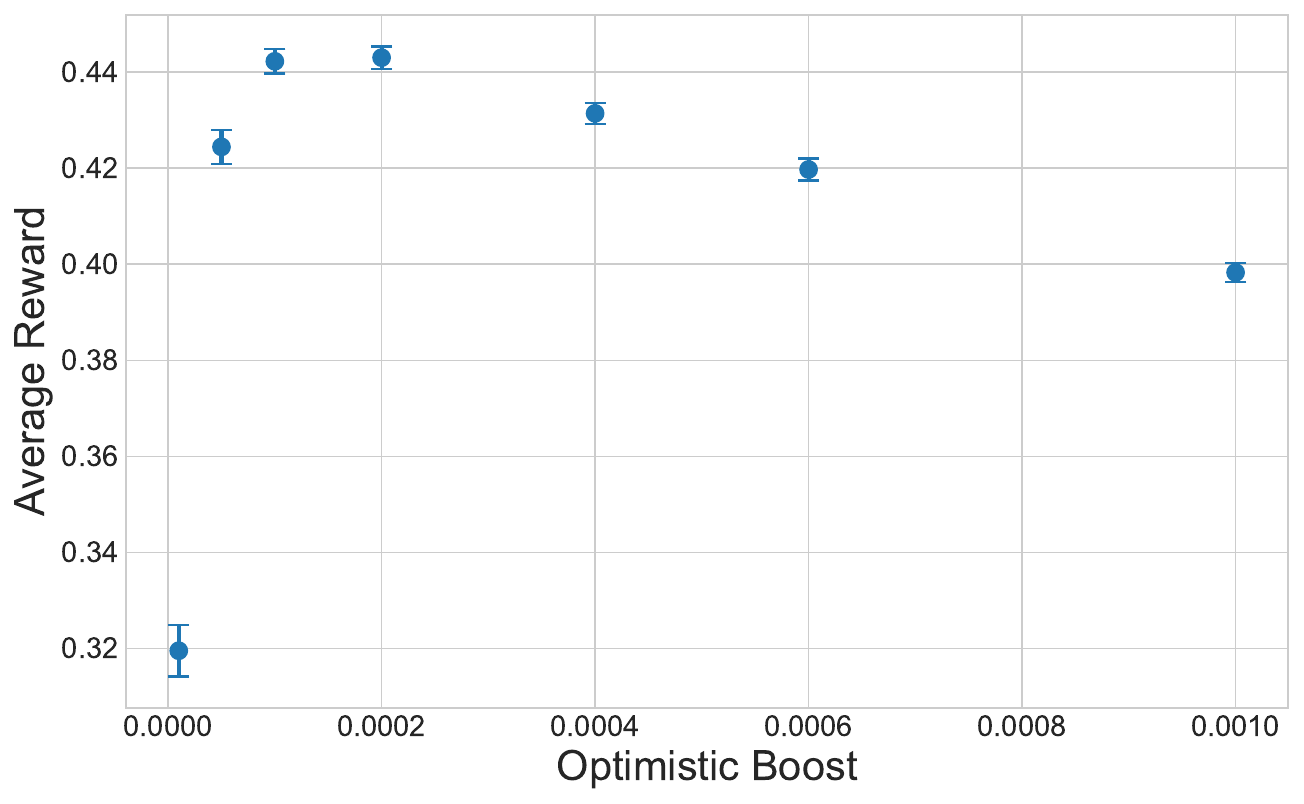}
\caption{$\eta=1e-4$}
\end{subfigure} 
& \begin{subfigure}{0.48\linewidth}
\includegraphics[width=\linewidth]{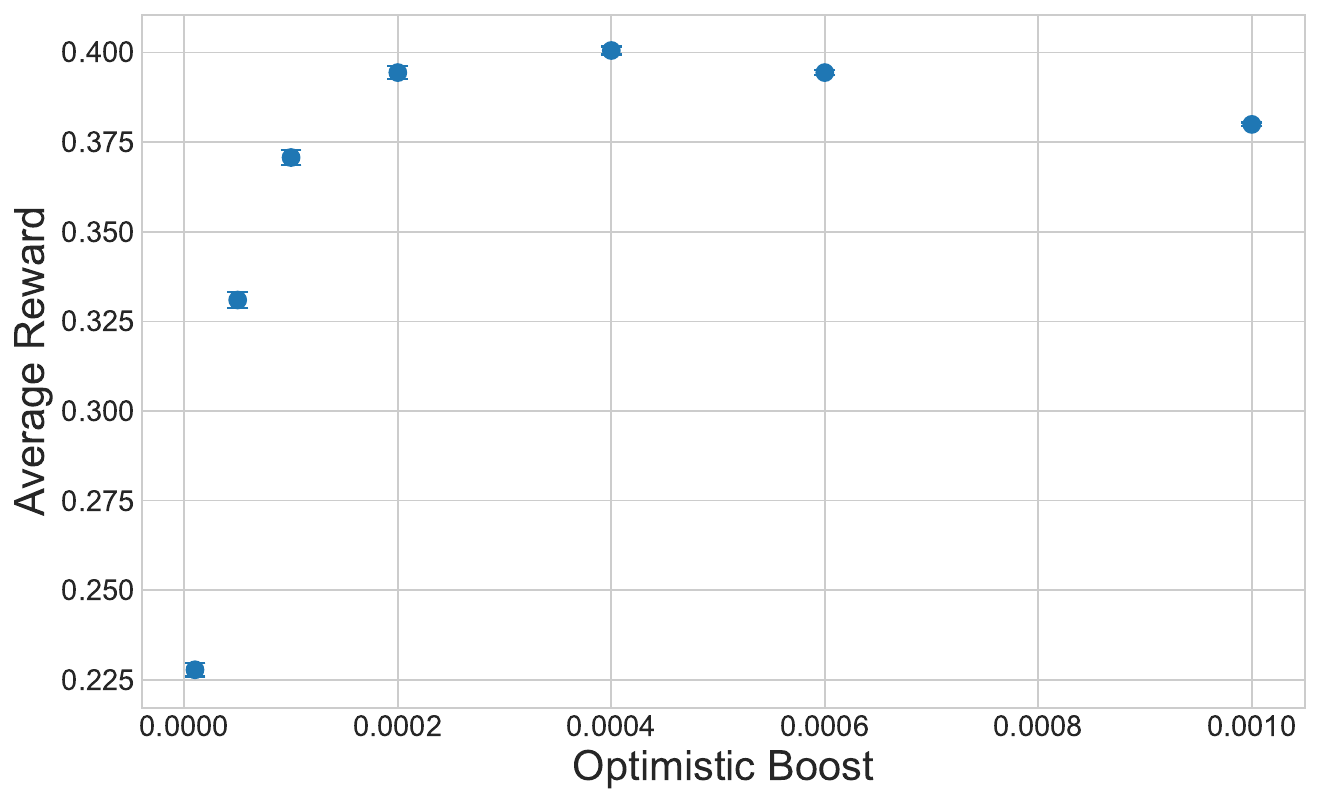}
\caption{$\eta=1e-3$}
\end{subfigure} 

\end{tabular}
\caption{Average reward versus the optimistic boost $\zeta$. As the nonstationarity parameter $\eta$ increases, the optimal optimistic boost $\zeta^*$ increases.
}
\label{fig:delayed-reward-vs-boost}
\end{figure}

On each environment, we perform a sweep over stepsize $\alpha$  and optimistic boost $\zeta$  to select the optimal values. On each environment, we plot how the average reward depends on the stepsize $\alpha$ and optimistic boost $\zeta$, respectively. For each value of $\alpha$, in Figure~\ref{fig:delayed-reward-vs-stepsize}, we plot the largest average reward achievable when sweeping over $\zeta$. Similarly, in Figure~\ref{fig:delayed-reward-vs-boost}, we plot the largest average reward achievable for each $\zeta$. We find that the optimal optimistic boost increases with the degree of nonstationarity in the environment. These results are intuitive. When the environment is changing, old knowledge becomes obsolete, and it is imperative to seek out new knowledge. Higher values of optimistic boost lead to more exploratory behavior needed to accomplish this. In contrast, however, we note that it is a little surprising that the optimal stepsize was the same in both environments.

In more complex environments, it is cumbersome or infeasible to perform a hyperparameter search to find optimal values. We conjecture that a more sophisticated agent, without being initialized with the optimal hyperparameters, could automatically learn them over time, fully online. This is an example of meta-learning which has been explored extensively in previous literature (see e.g \citet{duan2016rl,flennerhag2021bootstrapped,thrun1998learning}). Because our objective considers an infinite horizon, the sophisticated agent would therefore be able to reach the same asymptotic performance as an agent that is initialized with the optimal hyperparameters.

\FloatBarrier

\subsection{Continual Auxiliary Learning}\label{sec:continual_auxiliary_learning}
While an agent's goal is to maximize average reward, a complex environment can offer enormous amounts of feedback beyond reward.  This feedback can be used to accelerate the agent's learning and thus increase its average reward.  For example, sensors of a self-driving car ingest visual and auditory feedback far beyond what is required to determine reward, which, for example, might simply indicate safe arrival to a destination.  By predicting future trajectories of its own vehicle, learning from realizations to improve these predictions, and using these predictions to plan, an agent can learn to maximize reward much more quickly than if it learned only from reward feedback.

Auxiliary tasks are those that are distinct from, though possibly helpful to, the primary task of maximizing average reward.  Prediction of future outcomes other than reward, as we considered in our self-driving car example, serves as an example of an auxiliary task.  Learning to perform auxiliary tasks can accelerate learning for the primary task.

In a complex environment, it is often unclear which auxiliary tasks are helpful and how they relate to the primary task.  However, a long-lived agent can learn this over time.  We refer to this as {\it continual auxiliary learning}.  In the remainder of this section,  we illustrate benefits of {\it continual auxiliary learning} through the following didactic example:
\begin{example}
\label{ex:continual-aux-learning}
{\bf (continual auxiliary learning)}
Consider a modified version of continual SL where the input set is the singleton $\mathcal{X}=\{\emptyset\}$, and the label set consists of $k$-dimensional binary vectors $\mathcal{Y}=\{0,1\}^k$.
The labels are generated via
$$Y_{t+1} \sim \sigma(\phi_t)  ,$$
where $\sigma$ denotes the sigmoid function $\exp(x)/(1+\exp(x))$ applied element-wise.
We assume that $\phi_t = A \theta_t$, where
$A$ is a sparse vector with only $K$ non-zero components (including the first),
and $\theta_t\in \mathbb{R}$ evolves according to the AR(1) process
$$\theta_{t} = \eta \theta_{t-1} + W_t,$$
where $W_t \overset{iid}{\sim} \mathcal{N}(0, 1-\eta^2)$ and $W_t \perp \theta_{t-1}$.
The actions of the agent are predictions about the first component $Y_{t+1,1}$ of the label.  Hence, $\mathcal{A}=\Delta_{\{0, 1\}}$.  We take reward to be the negative log-loss:
$$r(H_t, A_t, O_{t+1}) = \ln P_t(Y_{t+1,1}).$$ 
\end{example}
In this example, prediction of $Y_{t+1,1}$ constitutes the primary task.  Prediction of the remaining label components $Y_{t+1,2:k}$ are possible auxiliary tasks.  These components may offer information that allows an agent to more quickly learn about $\theta_t$, which in turns improves its ability to perform its primary task.  If the number of components $k$ is large and most components are not relevant to the agent's primary task, a long-lived agent can improve its performance by learning which components {\it are} relevant.  In particular, an agent can learn over time about the vector $A$.  Learning about $A$ enhances the agent's ability to quickly learn about $\theta_t$.  Learning about $A$ can be thought of as {\it meta-learning}, as it aims to learn about something that can guide the agent's learning about its primary task.
In the following sections, we will investigate performance gains resulting from this approach.

\subsubsection*{Methods}\label{sec:continual-aux-learning-methods}
\paragraph{No auxiliary learning.}

As a baseline, we consider an agent that ignores all auxiliary components $Y_{t+1,2:k}$.
The agent maintains a vector $\hat{\phi}_t \in \mathbb{R}$. 
At each time, it first scales $\hat{\phi}_t$ by $\mu$ and then updates the vector via gradient descent to maximize reward:
\begin{align*}
    \hat{\phi}_{t}' &= \mu \hat{\phi}_t \\
     L_t &= -Y_{t+1,1} \log (
    \sigma (\hat{\phi}_t'))
    - (1-Y_{t+1,1}) \log(1 - \sigma (\hat{\phi}_t')
    )
    \\
    g_t & = \frac{\partial}{\partial {\hat{\phi}_t'}} L_t\\
    &= \frac{
    -Y_{t+1,1} + (1 - Y_{t+1,1}) \exp(\hat{\phi}_t')
    }{1+\exp(\hat{\phi}_t')}
    \\
    \hat{\phi}_{t+1} &= \hat{\phi}_t' - \alpha g_t,
\end{align*}
where the best learning rate $\alpha$ is found by grid search.
\paragraph{Auxiliary learning with $A$ known.}
We also consider the agent that perfectly learns from the auxiliary information. 
In particular, consider the agent that is given the value of $A$, and maintains $\hat{\theta}_t\in \mathbb{R}$. 
At each timestep, it first decays $\hat{\theta}_t$ by $\mu$, and then updates it by gradient descent to minimize the loss with respect to all components of $Y$:
\begin{align*}
    \hat{\theta}_t' &= \mu \hat{\theta}_t \\
     L_t &= -Y_{t+1}^T \log (
    \sigma (A \hat{\theta}_t'))
    - (1-Y_{t+1})^T \log(1 - \sigma (A \hat{\theta}_t')
    )
    \\
    g_t & = \frac{\partial}{\partial {\hat{\theta}_t'}} L_t\\
    &= A^T \left[\left(
    -Y_{t+1} + (1 - Y_{t+1}) \odot \exp(A \hat{\theta}_t')
    \right)
    \oslash \left(1+\exp(A \hat{\theta}_t')\right)\right]
    \\
    \hat{\theta}_{t+1} &= \hat{\theta}_t' - \alpha g_t,
\end{align*}
where we use $\odot$ and $\oslash$ to denote element-wise multiplication and division, respectively, and the best learning rate $\alpha$ is found by grid search.
\paragraph{Auxiliary learning with $A$ learned.}
This agent internally maintains both $\hat{A}_t$ and $\hat{\theta}_t$. 
It uses vanilla gradient descent on the loss with respect to all components to update $\hat{\theta}_t$, and learns $\hat{A}_t$ via meta gradient descent.
The update rules for $\theta_t$ are
\begin{align*}
    \hat{\theta}_t' &= \mu \hat{\theta}_t \\
     L_t &= -Y_{t+1}^T 
     \log (
    \sigma (\hat{A}_t \hat{\theta}_t'))
    - (1-Y_{t+1})^T \log(1 - \sigma (\hat{A}_t \hat{\theta}_t')
    )
    \\
    g_t & = \frac{\partial}{\partial {\hat{\theta}_t'}} L_t\\
    &= \hat{A}_t^T 
    \left[
    \left(-Y_{t+1} + (1 - Y_{t+1}) \odot \exp(\hat{A}_t \hat{\theta}_t')
    \right)
    \oslash \left(1+\exp(\hat{A}_t \hat{\theta}_t')\right)
    \right]
    \\
    \hat{\theta}_{t+1} &= \hat{\theta}_t' - \alpha g_t,
\end{align*}
and the update rules for $\hat{A}_t$ are
\begin{align*}
    h_t & = \mu h_{t-1} - \alpha \mu \left(
    -Y_{t} + (1 - Y_{t}) \odot \exp(\hat{A}_{t-1} \hat{\theta}_{t-1}')
    \right)
    \oslash \left(1+\exp(\hat{A}_{t-1} \hat{\theta}_{t-1}')\right) \\
    &\qquad\qquad - \alpha \mu \hat{A}_{t-1}^T \left[
    \exp(\hat{A}_{t-1} \hat{\theta}_{t-1}')
    \odot \hat{A}_{t-1} \oslash \left(1+\exp(\hat{A}_{t-1} \hat{\theta}_{t-1}')\right)^{\circ 2} \right] h_{t-1} 
    \\
    \hat{A}_{t+1} & = \hat{A}_{t} - \beta g_t h_t,
\end{align*}
where $A^{\circ 2}$ denotes element-wise square of $A$, and $\beta$ is the meta-learning rate.
Readers are referred to Appendix~\ref{apdx:continual-auxiliary-learning-meta-grads} for derivation of these update rules.

\subsubsection*{Results}

Figure~\ref{fig:continual-aux-results} plots reward versus time.
The results are generated with autoregressive model coefficient $\mu=0.99$, label dimension $k=100$, and $K=10$ useful components.
For the agent with no auxiliary learning and the agent performing auxiliary learning with $A$ known, the best learning rate $\alpha$ found by grid search is used ($0.202$ and $0.092$, respectively).
The agent performing auxiliary learning with $A$ learned uses a learning rate $\alpha=0.01$, which is far from optimal, and a meta-learning rate $\beta=0.05$.

\begin{figure}[htbp]
    \centering
    \includegraphics[width=0.7\linewidth]{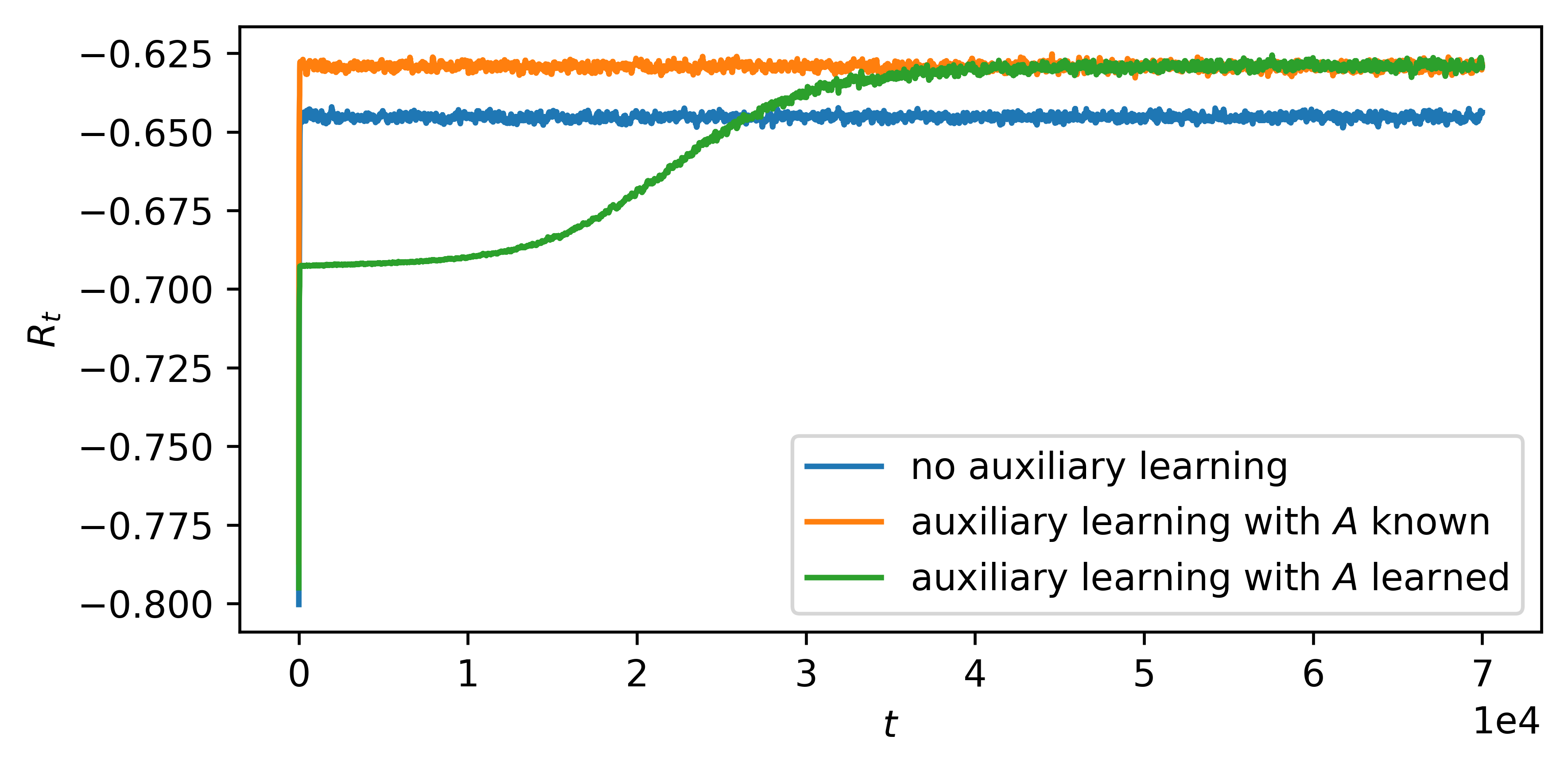}
    \caption{
    Learning $A$ eventually yields performance at the level attained if $A$ were known.  This exceeds the performance of an agent that does not engage in auxiliary learning.
    }
    \label{fig:continual-aux-results}
\end{figure}

From Figure~\ref{fig:continual-aux-results} we can see that learning $A$ eventually yields performance at the level achieved with $A$ known.  Either of these outperform an agent that does not engage in auxiliary learning.
This demonstrates how continual auxiliary learning can help an agent perform well in the long run, even if the agent does not initially know which auxiliary tasks are useful.

However, the performance improvement comes at the cost of extra computation.
In this particular example, the agent that does not engage in auxiliary learning requires $O(1)$ FLOPs per timestep, while the others require $O(k)$ FLOPs per timestep.
Hence, the performance improvement relies on roughly $k$ times more compute.
In the general case, implementing continual auxiliary learning in a scalable manner with only a modest increase in compute remains an interesting topic for future research.

\begin{summary}
\subsubsection*{Continual Supervised Learning}
\begin{itemize}
\item We consider continual supervised learning with an objective of \textbf{maximizing average online accuracy}: \begin{align*}
\begin{array}{ll}
\max_\pi & \liminf_{T\rightarrow\infty} \E_\pi \left[\frac{1}{T} \sum_{t=0}^{T-1} \mathbbm{1}(Y_{t+1} = \hat{Y}_{t+1}) \right] \\
\text{s.t.} & \text{computational constraint}
\end{array}
\end{align*}
\item \textbf{Evaluation Protocol}: we first tune hyperparameters on a development sequence and then train the agent with the best hyperparameters on multiple evaluation sequences. The agent's performance is averaged across all evaluation sequences.
\item Experiments with a variant of Permuted MNIST indicate that:
    \begin{itemize}
        \item A performant agent can \textbf{forget non-recurring information}.
        \item A performant agent can \textbf{forget recurring information if it can relearn that information quickly} relative to the duration of the information's utility.
        \item When computation is constraining, \textbf{forgetting can increase average reward}.
    \end{itemize}
\end{itemize}

\subsubsection*{Continual Exploration}
\begin{itemize}
\item Via experiments with a coin replacement game, we identify three properties of effective exploration in the face of nonstationarity:
\begin{itemize}
    \item An optimal agent \textbf{never stops exploring.}
    \item An agent should \textbf{prioritize acquisition of more durable information -- that is, information that will remain valuable and relevant over a longer duration.}
    \item To assess durability of information, an agent can benefit from \textbf{learning about environment dynamics}. 
\end{itemize}
\item Via experiments with a two-armed Gaussian bandit, we demonstrate that \textbf{prioritizing acquisition of more durable information is beneficial} for maximizing average reward and that the benefit of this is greater in environments with less durable information.
\end{itemize}

\subsubsection*{Continual Learning with Delayed Consequences}
\begin{itemize}
\item Optimistic Q-learning can learn from delayed consequences in a nonstationary Markov decision processes.
\item As the degree of nonstationarity increases, more intense exploration increases average reward.
\item A more sophisticated agent may be able to adapt stepsize, discount, and optimism hyperparameters online to increase average reward.
\end{itemize}

\subsubsection*{Continual Auxiliary Learning}
\begin{itemize}
\item \textbf{Auxiliary tasks} are tasks that are distinct from, though possibly helpful to, the primary task of maximizing average reward.
\item \textbf{Auxiliary learning} refers to identifying useful auxiliary tasks and their relationship to the primary task.
\item A long-lived agent has time to learn not only how to perform auxiliary tasks but also what auxiliary tasks are useful to learn about and how they relate to the primary task.
\end{itemize}

\end{summary}
\clearpage

\section{Conclusion}

\textbf{Summary.} In this monograph, we framed continual learning as computationally constrained reinforcement learning. Under this perspective, we formally introduced an objective for continual learning. This objective is to maximize the infinite-horizon average reward under computational constraints. We formalized the concepts of agent state, information capacity, and learning targets, which play key roles for thinking about continual learning and distinguishing it from traditional vanishing-regret learning. Leveraging information theory, we decomposed the prediction error into forgetting and implasticity components. We concluded with case studies that studied implications of our objective on behaviors of performant agents. 

\textbf{Future Research.} In our case studies, we discussed how different agent capabilities contribute to performance under our objective. These capabilities include balancing forgetting with fast relearning, seeking out durable information when exploring, modeling environment dynamics, and meta-learning hyperparameters. We hope that this work inspires researchers to design agents that exhibit such capabilities and to study the trade-offs between them under computational constraints. In particular, we hope our holistic objective helps researchers reason about these trade-offs and reveal additional desired capabilities.

\section*{Acknowledgements}

Financial support from the Stanford Knight Hennessy Fellowship, the Stanford MS\&E fellowship, and the Army Research Office (ARO) Grant W911NF2010055 is gratefully acknowledged.

We thank participants of the 2023 Barbados Lifelong Reinforcement Learning Workshop, Stanford University students of the 2022 offering of Reinforcement Learning: Frontiers, Dave Abel, Andre Baretto, Dimitri Bertsekas, Shibhansh Dohare, Clare Lyle, Sanjoy Mitter, Razvan Pascanu, Doina Precup, Marc'Aurelio Ranzato, Mark Ring, Satinder Singh, Rich Sutton, John Tsitsiklis, Hado van Hasselt, Tsachy Weissman, and Zheng Wen for stimulating discussions and feedback that greatly benefited this monograph.

\bibliography{references}
\clearpage

\appendix

\section*{Appendix}
\addcontentsline{toc}{section}{Appendix}

\section{Implications on Finite Time Behavior}\label{app:objective}
\henrik{
In this section, we elaborate on our argument in Section \ref{sec:objective} on why a secondary criterion to expected average reward may not be necessary. We will not provide a formal proof but rather provide the basic intuition behind this argument. 

We will begin by making two assumptions.
\begin{enumerate}
    \item {\bf (limited capacity)} The agent is capacity constrained in the sense that it can only take on a finite number of ``agent states'' (introduced in Section \ref{sec:information}). This means it can only store a finite number of bits. 
    \item {\bf (capacity gates performance)} The environment is very complex and more capacity always improves expected average reward. We say that the agent's performance is gated by capacity.
\end{enumerate}

To make things more concrete, let us consider the coin tossing example in Example \ref{ex:coins-evolving-biases}. We reproduce the example below for convenience. 

\begin{example}
{\bf (coin replacement)} \textit{Recall the environment of Example \ref{ex:coins-fixed-biases}, but suppose that, at each timestep $t$, before action $A_t$ is executed, each coin $a$ is replaced with a new coin with some fixed probability $q_a$.  Coin replacement events are independent, and each new coin's bias is independently sampled from its prior distribution. With this change, biases of the two available coins can vary over time.  Hence, we introduce time indices and denote biases by $(p_{t,1}, p_{t,2})$.}
\end{example}

Consider two agents with the same capacity constraint and same expected average reward:
\begin{enumerate}
    \item {\bf (optimal agent)} This agent maximizes expected average reward given capacity constraints.
    \item {\bf (late-bloomer agent)} This agent is identical to the optimal agent, except that for the first $10,000$ timesteps it takes the first action, regardless of what it observes. Therefore, it performs poorly for this period. From timestep $10,001$ onwards it behaves exactly like the first agent. We will call this the 'late-bloomer' agent.
\end{enumerate}

We will now argue that if both agents are computationally constrained, the late-bloomer agent can be improved so as to outperform the first 'optimal' agent on average reward without increasing its capacity. That is, the so-called optimal agent is actually not optimal. This would imply that considering average reward is sufficient.

In Section \ref{sec:information}, we thought of the agent as probabilistically choosing actions, and probabilistically transitioning into the next agent state. Here, in contrast, to make the argument clearer, we will think of the agent as acting deterministically. This just amounts to defining agent state to also include the state of the agent's random number generator. 

Because the late-bloomer agent always takes the same action for the first $10,000$ timesteps, it must explicitly, or implicitly, keep track of which timestep it is in. We can conclude two things about how the late-bloomer agent must allocate agent states:
\begin{enumerate}
    \item {\bf ($\mathbf{10,000}$ 'lazy' agent states are needed)} The agent has to allocate at least $10,000$ agent states corresponding to the first $10,000$ timesteps such that action $1$ is taken regardless of what it observes. As a shorthand, we call these agent states 'lazy'. Why are at least $10,000$ needed? Well, if there were less than $10,000$ agent states allocated to the first $10,000$ timesteps, one of the agent states would have to recur in that time period. This would induce a loop, such that the agent would keep coming back to the same state over and over, and thus taking action $1$ forever.  This would contradict the assumption that it would behave identically to first agent after $10,000$ timesteps. 
    \item {\bf ('lazy' agent states cannot recur after the first $\mathbf{10,000}$ timesteps)} The agent states corresponding to the first $10,000$ timesteps can never recur after the initial time period of $10,000$ timesteps. Why is that? Imagine if the agent state corresponding to the tenth timestep would recur at some later timestep. In that case, the agent would take action $1$ for the next $9,990$ timesteps. That would contradict the assumption that it behaved identically to the first 'optimal' agent, after timestep $10,000$.
\end{enumerate}

Therefore, after timestep $10,000$ the late-bloomer agent's effective capacity is less than the first 'optimal' agent. The late-bloomer is effectively "wasting" agent states. Therefore, the late-bloomer's effective capacity could be increased without increasing actual capacity. Since we assumed that capacity gates performance, this means, we can improve the late-bloomer such that it outperforms the first 'optimal' agent on expected average reward. Thus, it would be sufficient to consider expected average reward to pick out the better agent.

}

\section{Continual Learning Agents}\label{app:cl_agents}

In our abstract formulation, a continual learning agent is characterized by an agent policy $\pi$, which specified the conditional distribution of each action $A_t \sim \pi(\cdot|H_t)$. To crystalize this notion, in this section we describe a few specific instances as concrete examples of agents. In each instance we present the interface $(\aspace, \ospace)$ for which the agent is designed, the algorithm the agent implements to compute each action $A_t$, and an environment in which the agent ought to be effective.

\subsection{Tracking}\label{sec:tracking}
Suppose observations $O_1, O_2, O_3, \ldots$ are noisy measurements of a latent stochastic process \henrik{$\theta_1, \theta_2, \theta_3, \ldots$}.  For example, each \henrik{observation} could be a thermostat reading, which is not exactly equal to the current temperature \henrik{$\theta_t$}. \henrik{The temperature $\theta_t$ at each timestep is unobserved, and is thus a latent variable.} A tracking agent generates estimates $A_t$ of the latent process, each of which can be viewed as predictions of $O_{t+1}$.  The least mean squares (LMS) algorithm implements a simple tracking agent.  While the algorithm more broadly applies to vector-valued observations, to start simple, we consider only the scalar case.
\begin{example}
\label{ex:scalar-lms}
{\bf (scalar LMS)}
This agent interacts with an environment through real-valued actions and observations: $\aspace = \ospace = \mathbb{R}$. Each action $A_t$ represents a prediction of the next observation $O_{t+1}$, and to penalize errors, the reward function is taken to be negative squared error: $r(H_t, A_t, O_{t+1}) = - (O_{t+1} - A_t)^2$.  Initialized with $\mu_0 \in \mathbb{R}$, the agent updates this parameter according to
$$\mu_{t+1} = \eta \mu_t + \alpha (O_{t+1} - \eta \mu_t),$$
where $\eta \in [0,1]$ and $\alpha \in [0,1]$ are shrinkage and stepsize hyperparameters.  The agent executes actions $A_t = \eta \mu_t$.
\end{example}
The LMS algorithm is designed to track a latent process that generates observations.  Let us offer an example of an environment driven by such a process, for which the algorithm is ideally suited. Consider a random sequence $(\theta_t: t \in \mathbb{Z}_+)$, with the initial latent variable $\theta_0$ distributed according to a prior $\mathcal{N}(\mu_0, \Sigma_0)$ and updated according to $\theta_{t+1} = \eta \theta_t + Z_{t+1}$, with each process perturbation $Z_{t+1}$ independently sampled from $\mathcal{N}(0,\zeta^2)$. Further, suppose that $O_{t+1} = \theta_{t+1} + W_{t+1}$, with each observation noise sample $W_{t+1}$ drawn independently from $\mathcal{N}(0,\sigma^2)$. What we have described provides a way of generating each observation $O_{t+1}$. We have thus fully characterized an environment $\env = (\aspace, \ospace, \rho)$. \henrik{Specifically, $\aspace = \ospace = \mathbb{R}$, and $\rho$ is implied by the preceding equations.}  

Note that we are using the parameter $\eta$ for two purposes.  First, it is a hyperparameter of the agent described in Example \ref{ex:scalar-lms}. Second, it serves in the specification of this hypothetical environment that we frame as one for which the agent is ideally suited. For this environment, with an optimal choice of stepsize, the LMS algorithm attains minimal average reward over all agent designs.  Figure \ref{fig:tracking}(a) plots average reward as a function of stepsize. The tracking behavior in Figure \ref{fig:tracking}(b) is attained by the optimal stepsize.

\begin{figure}
\centering
\begin{subfigure}{0.4\textwidth}
    \includegraphics[width=\textwidth]{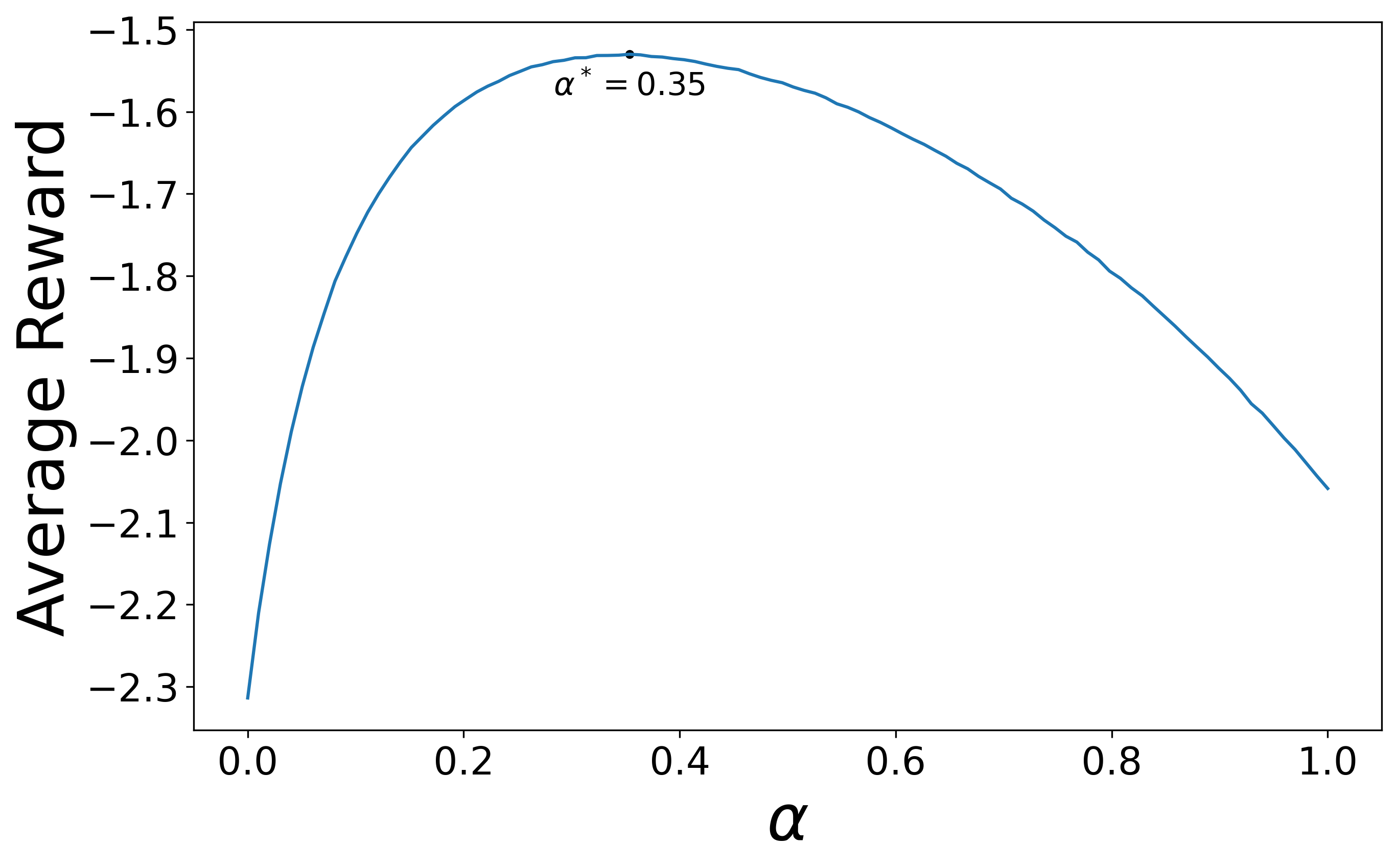}
    \caption{Average reward as a function of stepsize $\alpha$, averaged over $10^4$ time steps and $10^3$ trials.}
\end{subfigure}
\begin{subfigure}{0.4\textwidth}
    \includegraphics[width=\textwidth]{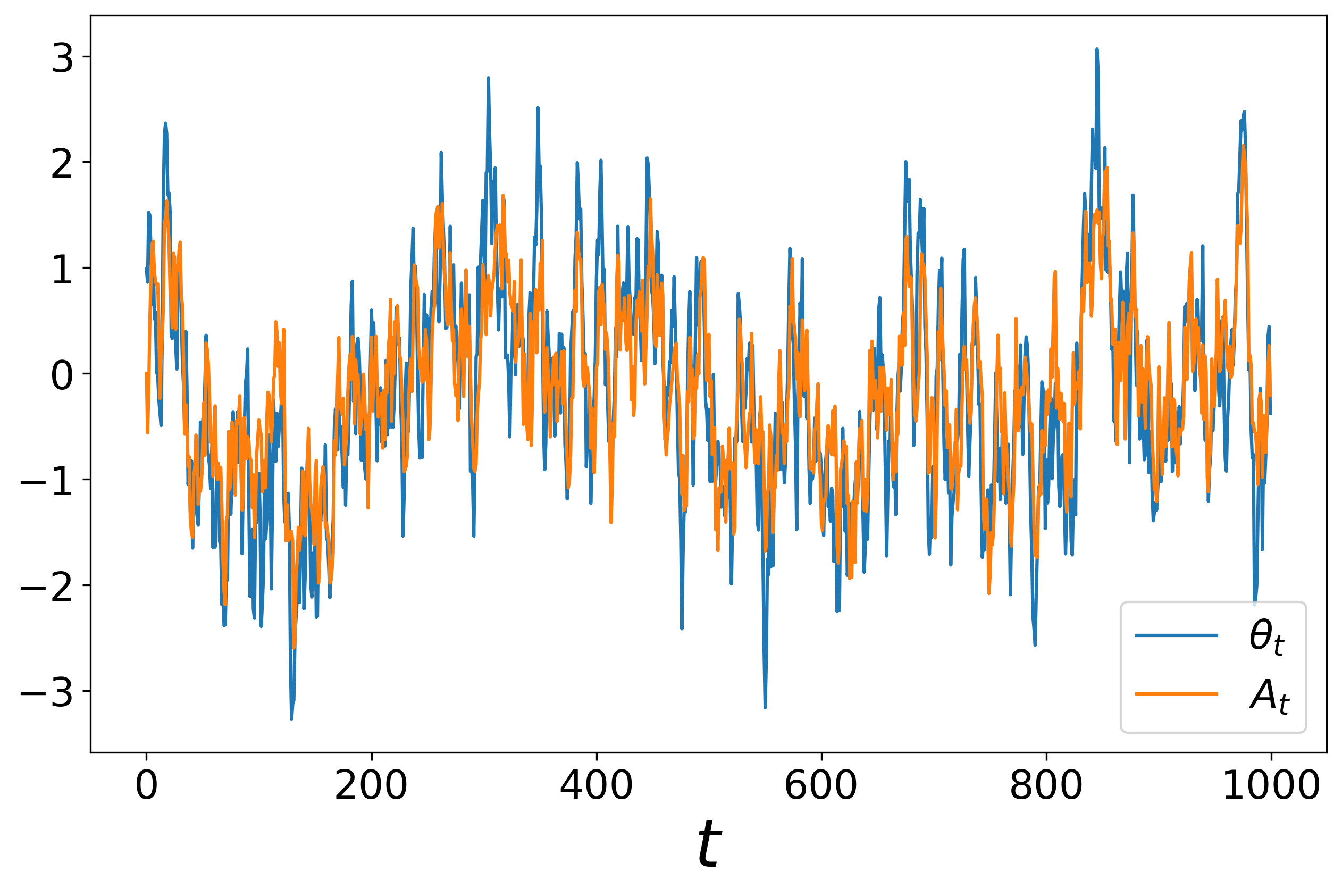}
    \caption{The LMS agent with optimal stepsize ($\alpha=0.35$) tracks the latent scalar process well.}
\end{subfigure}
\caption{Tracking a latent scalar process using the LMS algorithm. The plots are generated with $\mu_0=0$, $\Sigma_0=1$, $\eta=0.9$, $\zeta=0.5$, and $\sigma=1$.}
\label{fig:tracking}
\end{figure}

\subsection{Exploration}\label{sec:exploration}

In Example \ref{ex:scalar-lms}, actions do not impact observations. The following agent updates parameters in a similar incremental manner but then uses them to select actions that determine what information is revealed through observations.
\begin{example}
\label{ex:nonstationary-Gaussian-TS}
{\bf (nonstationary Gaussian Thompson sampling)}
This agent interfaces with a finite action set $\aspace$ and real-valued observations $\ospace = \mathbb{R}$.  The reward is taken to be the observation, so that $R_{t+1} = r(H_t, A_t, O_{t+1}) = O_{t+1}$.  Initialized with $\mu_0 \in \mathbb{R}^\aspace$, the agent updates parameters according to
\begin{align*}
\mu_{t+1,a} = \left\{\begin{array}{ll}
\eta \mu_{t,a} + \alpha_{t+1} (O_{t+1} - \eta \mu_{t,a}) & \qquad \mathrm{if\ } a = A_t \\
\eta \mu_{t,a} & \qquad \mathrm{otherwise.}
\end{array}\right.
\end{align*}
Note that each observation only impacts the component of $\mu_t$ indexed  by the executed action. The stepsize varies with time according to $\alpha_{t+1} = \Sigma_{t+1, A_t} / \sigma^2$, for a sequence initialized with a vector $\Sigma_0 \in \mathbb{R}_+^\aspace$ and updated according to
\begin{align*}
\Sigma_{t+1,a} = \left\{\begin{array}{ll}
\frac{1}{\frac{1}{\eta^2 \Sigma_{t,a} + \zeta^2} + \frac{1}{\sigma^2}} & \qquad \mathrm{if\ } a = A_t \\
\eta^2 \Sigma_{t,a} + \zeta^2 & \qquad \mathrm{otherwise,}
\end{array}\right.
\end{align*}
where $\eta \in \mathbb{R}$, $\zeta \in \mathbb{R}_+$, $\sigma \in \mathbb{R}_+$ are scalar hyperparameters.  
Each $\hat{\theta}_{t,a}$ is drawn independently from $\mathcal{N}(\mu_{t,a}, \Sigma_{t,a})$.  Then, $A_t$ is sampled uniformly from the set $\argmax_{a \in \aspace} \hat{\theta}_{t,a}$.
\end{example}

\henrik{We will now describe an environment this agent can be applied to. The environment is characterized by latent random variables updated according to $\theta_{t+1,a} = \eta \theta_{t,a} + Z_{t+1,a}$, with each $Z_{t+1,a}$ independently sampled from $\mathcal{N}(0,\zeta^2)$.  The agent maintains parameters of the posterior distribution of $\theta_{t,a}$, which is $\mathcal{N}(\mu_{t,a}, \Sigma_{t,a})$.  For each action $a$, the agent samples $\hat{\theta}_{t,a}$ from the posterior distribution.  The agent then executes an action that maximizes among these samples. We call this environment nonstationary since the agent can be interpreted as tracking changing latent variables $\theta_{a,t}$.}

\henrik{Note that we are using the parameters $\eta$ and $\zeta$ for two purposes. First, they are hyperparameters of the agent described in the Example \ref{ex:nonstationary-Gaussian-TS}.  Second, they specify this hypothetical environment that we frame as one for which the agent is well-suited.}

The environment we have described constitutes a {\it bandit}; \henrik{this is a reference to a slot machine.}   With multiple arms, a player chooses at each time an arm $A_t$ and receives a payout $R_{t+1}$.  Because payout distributions are not listed, the player can learn them only by experimenting. As the player learns about each arm's payouts, they face a dilemma: in the immediate future they expect to earn more by exploiting arms that yielded high payouts in the past, but by continuing to explore alternative arms they may learn how to earn higher payouts in the future.

\henrik{If $\eta=1$ and $\zeta=0$, the environment becomes stationary. The distribution for arm $a$ is Gaussian with unknown mean $\theta_a$, which can be learned by observing payouts. The agent is then selecting actions via Thompson sampling \citep{russo2018tutorial,thompson1933likelihood}.  In particular, at each timestep $t$, the posterior distribution of $\theta_a$ is $\mathcal{N}(\mu_{t,a}, \Sigma_{t,a})$, and the agent samples $\hat{\theta}_{t,a}$ from this distribution, then executes an action that maximizes among them.}

This nonstationary Gaussian Thompson sampling agent may be applied in other environments as well, whether or not observations are driven by Gaussian processes.  For example, with a suitable choice of $\zeta$, it may exhibit reasonable behavior in the coin replacement environment of Example \ref{ex:coins-evolving-biases}.  However, the agent does not adequately address environments in which actions induce delayed consequences.

\subsection{Delayed Consequences}\label{app:delayed_consequences}

\henrik{The agent of Example \ref{ex:nonstationary-Gaussian-TS} maintains parameters $\mu_t$ that serve the prediction $\eta \mu_{t,A_t}$ of the expected immediate reward $R_{t+1}$.  These predictions can guide the agent to select actions that earn high immediate reward.  To learn from and manage delayed rewards, the Q-learning algorithm \citep{watkins1989learning} instead maintains predictions of the expected discounted return $\sum_{k=0}^\infty \gamma^k R_{t+k+1}$.  To make such predictions, the agent will condition on both the action $A_t$, and what's known as a \textit{situational state} $S_t$, which provides context for the agent's decision.  The following example elaborates.
}

\begin{example}
\label{ex:optimistic-q-learning}
{\bf (optimistic Q-learning)}
This agent is designed to interface with any observation set $\ospace$ and any finite action set $\aspace$.  It maintains an action value function $Q_t$, which maps the situational state $S_t$, which takes values in a finite sets $\mathcal{S}$, and action $A_t$ to a real value $Q_t(S_t,A_t)$.  The situational state $S_t$ represents features of the history $H_t$ that are predictive of future value.  It is determined by an update function $f$, which forms part of the agent design, according to
$$S_{t+1} = f(S_t,A_t,O_{t+1}),$$
and substitutes for history in the computation of rewards, which take the form $R_{t+1} = r(S_t, A_t, S_{t+1})$.
We consider an optimistic version of Q-learning, which updates the action value function according to
$$Q_{t+1}(s,a) = \left\{\begin{array}{ll}
Q_t(S_t,A_t) + \alpha \left(R_{t+1} + \gamma \max_{a \in \aspace} Q_t(S_{t+1}, a) - Q_t(S_t, A_t)\right) + \zeta \qquad & \text{if } (s,a) = (S_t,A_t) \\
Q_t(s,a) + \zeta & \text{otherwise.}
\end{array}\right.
$$
Hyperparameters include a discount factor $\gamma \in (0,1)$, a stepsize $\alpha \in (0,1)$, and an optimistic boost $\zeta \in \mathbb{R}_+$.  Each action $A_t$ is sampled uniformly from $\argmax_{a \in \aspace} Q_t(S_t,a)$.
\end{example}
To interpret this algorithm, consider situational state dynamics generated by \henrik{an unknown Markov Decision Process (MDP)}, identified by a tuple $(\mathcal{S}, \mathcal{A}, r, P, S_0)$.  Here, $P$ is a random variable that specifies transition probabilities $P_{a,s,s'}$ for actions $a \in \aspace$ and situational states $s,s' \in \mathcal{S}$.  \henrik{That is, the probability of sampling next state $s'$, given the current state $s$ and action $a$ is $P_{a,s,s'}$.}
If the hyperparameters $\gamma$, $\alpha$, and $\zeta$ anneal over time at suitable rates to $1$, $0$, and $0$, along the lines of a more sophisticated version of Q-learning analyzed by \cite{dong2022simple}, the sequence $(Q_t: t \in \mathbb{Z}_+)$ should converge to the optimal action value function of the MDP.  By the same token, the agent would attain optimal average reward.

Versions of Q-learning permeate the RL literature.  %
\henrik{The one we have described presents some features especially relevant for continual learning. First, action values are boosted by exploration bonuses, which incentivizes visits to state-action pairs that have not been visited much recently. To see this, note that the if the agent repeatedly takes the same action in some state, the action value estimate will keep growing for all the other actions. Therefore, the agent will eventually pick one of the other actions. The exploration bonus in this algorithm is similar to that considered by \cite{sutton1990integrated}, except that it incentivizes visits to state-action pairs even if they have already been visited many times, as long as they haven't been visited much recently.}  %
\henrik{Second, this algorithm may perform well even if the Markov property assumption does not hold.}
It suffices for $S_t$ to enable useful predictions of value that guide effective decisions rather than approximate the state of the environment, which could be far more complex.  This suits the spirit of continual learning, since the subject is oriented toward addressing very complex environments.  Another noteworthy feature is that the hyperparameters $\gamma$, $\alpha$, and $\zeta$ are fixed.  If the situational state dynamics were stationary, annealing hyperparameters over time is beneficial.  However, as we have discussed in Section \ref{sec:delayed-consequences}, fixed parameters fare better in the face of nonstationarity.

\section{MDP Exchangeability}\label{app:mdp_exchangeability}
This section presents conditions under which it is natural to characterize dynamics using an unknown MDP, in other words, characterizing the environment as a mixture of MDPs. Suppose each observation is interpreted as a situational state $S_t = O_t$ and 
rewards depend on history only through the situational state, taking the form $R_{t+1} = r(S_t, A_t, S_{t+1})$. 
The environment is a mixture of MDPs if there exists a random variable $P$, where each $P_a$ is a $|\mathcal{S}| \times |\mathcal{S}|$ stochastic transition matrix with $P_{sas^{\prime}}$ denoting its $(s, s^{\prime})$ entry, 
such that for all policies $\pi$, time 
$T \in \mathbb{Z}_{+}$, and histories $h_T 
= (a_0, s_1, a_1, s_2, ... , a_{T-1}, s_T)
\in \mathcal{H}_T$, we have 
\begin{align}
\mathbb{P}_{\pi}(H_T = h_T | P) = 
\prod_{t = 0}^{T-1} P_{s_t a_t s_{t+1}} \prod_{t = 0}^{T-1} \pi(a_t | h_t),
\label{eq:mix_mc}
\end{align}
where $h_t = (a_0, s_1, a_1, s_2, ... , a_{t-1}, s_t)$. This $P$ represents a natural learning target in this case.

We assume that a recurrent-like property holds. Specifically, let $\overline{\pi}$ be defined such that $\overline{\pi}(a | h) = 1 / |\actions|$ for all $a \in \actions$ and $h \in \mathcal{H}$, we assume that 
\begin{align}
\mathbb{P}_{\overline{\pi}}\left(S_t = S_1 \text{ for infinitely many } t\right) = 1. 
\label{eq:recurrence}
\end{align}

We introduce a certain kind of symmetry condition on the set of all histories. Specifically, we introduce a binary relation $\sim$. 
Let $h, h^{\prime} \in \mathcal{H}$ be two histories of finite horizon. We let $h \sim h^{\prime}$ if and only if they exhibit the same number of state-action-state transition counts, for every such state-action-state tuple. 
For instance, let $\mathcal{S} = \mathbb{N}$, and $\actions = \{a, b, c, d\}$, $(2, b, 3) \nsim (3, b, 2)$ but 
$(2, b, 1, a, 3, c, 2) \sim (3, c, 2, b, 1, a, 3)$.

We assume that $S_0$ is deterministic and $S_t \neq S_0$ a.s. for all $t \in \mathbb{N}_{++}$. 

We establish a theorem, which extends a result on partial exchangeability established by \cite{diaconis1980finetti}. The theorem provides a necessary and sufficient condition for the environment to be represented by a mixture of MDPs. 
\begin{theorem}
Suppose Equation \ref{eq:recurrence} holds. Then the environment is a mixture of MDPs in the sense of Equation \ref{eq:mix_mc} if and only if for all policies $\pi$ and $\pi^{\prime}$, all $T \in \mathbb{Z}_{+}$, and all histories $h_T, h^{\prime}_T \in \mathcal{H}_T$, 
\begin{align*}
h_T \sim h_T^{\prime} \quad \mathrm{implies} \quad \frac{\mathbb{P}_{\pi}(H_T = h_T)}{\prod_{t = 0}^{T-1} \pi(a_t | h_t)} = \frac{\mathbb{P}_{\pi^{\prime}}(H_T = h_T^{\prime})}{\prod_{t = 0}^{T-1} \pi^{\prime}{(a_t^{\prime} | h_t^{\prime})}}. 
\end{align*}
\label{th:mdp}
\end{theorem}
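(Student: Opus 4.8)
The plan is to adapt the de Finetti-style argument for partially exchangeable processes (in the spirit of Diaconis and Freedman's treatment of Markov chains) to the general agent-environment interface. The ``only if'' direction is the easy one: if the environment is a mixture of MDPs in the sense of Equation \ref{eq:mix_mc}, then for any policy $\pi$ and history $h_T$ we have $\mathbb{P}_\pi(H_T = h_T) = \mathbb{E}\big[\prod_{t=0}^{T-1} P_{s_t a_t s_{t+1}}\big] \prod_{t=0}^{T-1}\pi(a_t\mid h_t)$, so that the ratio $\mathbb{P}_\pi(H_T = h_T)/\prod_{t}\pi(a_t\mid h_t)$ equals $\mathbb{E}\big[\prod_{(s,a,s')} P_{sas'}^{\,n_{sas'}(h_T)}\big]$, where $n_{sas'}(h_T)$ is the transition count. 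Since $h_T \sim h_T'$ means exactly that all these counts agree, the two ratios coincide regardless of the policies used; this direction needs only bookkeeping.

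For the ``if'' direction, I would first note that the hypothesis lets us \emph{define} an environment-intrinsic quantity: fixing the uniform-random policy $\overline\pi$, set $\nu(h_T) = \mathbb{P}_{\overline\pi}(H_T = h_T)/\prod_t \overline\pi(a_t\mid h_t) = |\mathcal{A}|^{T}\,\mathbb{P}_{\overline\pi}(H_T = h_T)$, and observe that by the symmetry hypothesis $\nu(h_T)$ depends on $h_T$ only through its transition-count array. Moreover the same hypothesis, applied with a general $\pi$ against $\overline\pi$ on the same history, shows $\mathbb{P}_\pi(H_T = h_T) = \nu(h_T)\prod_t \pi(a_t\mid h_t)$ for \emph{every} policy $\pi$, so it suffices to represent $\nu$ as a mixture $\nu(h_T) = \mathbb{E}\big[\prod_{(s,a,s')} P_{sas'}^{\,n_{sas'}(h_T)}\big]$ over a random stochastic-matrix tuple $P$. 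This is precisely where the recurrence assumption \eqref{eq:recurrence} enters: under $\overline\pi$ the distinguished start behavior guarantees the chain returns to $S_1$ infinitely often a.s., which is what makes the Markov-chain de Finetti theorem applicable --- it allows the process to be decomposed into i.i.d.\ excursions between successive visits to $S_1$, and the empirical transition frequencies along these excursions converge a.s.\ by the ergodic/strong-law argument, identifying the de Finetti mixing measure as the law of $P_{sas'} = \lim_{T} n_{sas'}(H_T)/\sum_{s'} n_{sas'}(H_T)$.

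Concretely the key steps, in order, are: (1) reduce to the uniform-policy measure via the displayed identity $\mathbb{P}_\pi(H_T=h_T) = \nu(h_T)\prod_t\pi(a_t\mid h_t)$, so all policy dependence factors out and only the ``counts-only'' function $\nu$ must be analyzed; (2) invoke recurrence under $\overline\pi$ to cut the trajectory into i.i.d.\ excursions from $S_1$ to $S_1$, using that the post-$S_1$ future is conditionally independent of the pre-$S_1$ past given the counts --- this is the genuine partial-exchangeability content; (3) apply the Markov-chain version of de Finetti's theorem (Diaconis--Freedman) to this excursion decomposition to obtain a random transition kernel $P$ with $\mathbb{P}_{\overline\pi}(H_T = h_T\mid P) = \prod_t P_{s_t a_t s_{t+1}}\,\prod_t\overline\pi(a_t\mid h_t)$; (4) transfer this back to arbitrary $\pi$ using step (1), yielding \eqref{eq:mix_mc}. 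The main obstacle I anticipate is step (2)--(3): one must show that the symmetry hypothesis, which is stated only as an equality of \emph{probabilities} of count-equivalent histories across policies, actually delivers the conditional-independence (partial exchangeability) structure needed to run the excursion argument, and one must handle the fact that actions --- not just states --- are part of the ``index'' being permuted, so the Diaconis--Freedman recurrence condition has to be verified for the state process under the specific reference policy $\overline\pi$ rather than assumed outright. Making the passage from ``count-invariant finite-dimensional distributions'' to ``a.s.\ convergent empirical transition frequencies whose law is the mixing measure'' fully rigorous, including measurability of $P$ and the uncountable-history technicalities flagged in the deterministic-environment discussion, is where the real work lies.
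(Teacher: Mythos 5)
Your proposal follows essentially the same route as the paper: reduce to the uniform reference policy $\overline{\pi}$, verify that count-equivalent histories are equiprobable under $\overline{\pi}$ so that the Diaconis--Freedman theorem on mixtures of Markov chains applies to the joint process $X_t=(S_t,A_t)$, and transfer back to arbitrary policies via the factored ratio identity. The ``main obstacle'' you flag in steps (2)--(3) is dispatched in the paper by citing Theorem 7 of Diaconis--Freedman (1980) as a black box after the one-line observation that equal $(s,a,s',a')$-counts imply equal $(s,a,s')$-counts and hence, under $\overline{\pi}$, equal history probabilities; the random kernel $P_{sas'}=\sum_{a'}\overline{P}_{sa,s'a'}$ is then read off from the resulting state--action transition matrix.
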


\begin{proof}
The ``$\Rightarrow$" direction is obvious. We prove the other direction. 
Let $\overline{\pi}$ be defined such that $\overline{\pi}(a | h) = \frac{1}{|\actions|}$ for $a \in \actions$ and $h \in \mathcal{H}$. For any two histories $h_T, h^{\prime}_T \in \mathcal{H}_T$ for all $T \in \mathbb{Z}_{+}$, if the state-action-state-action transition counts in $h_T$ and $h^{\prime}_T$ are the same for all state-action-state-action tuples (we assume that the last action is the same), then $h_T \sim h^{\prime}_T$. This implies that 
\begin{align*}
\frac{\mathbb{P}_{\overline{\pi}}(H_T = h_T)}{\prod_{t = 0}^{T-1} \overline{\pi}(a_t | h_t)} = \frac{\mathbb{P}_{\overline{\pi}}(H_T = h_T^{\prime})}{\prod_{t = 0}^{T-1}\overline{\pi}(a_t^{\prime} | h_t^{\prime})}.
\end{align*}
Hence, $\mathbb{P}_{\overline{\pi}}(H_T = h_T) = \mathbb{P}_{\overline{\pi}}(H_T = h_T^{\prime})$. 

By Theorem 7 of \citep{diaconis1980finetti}, $X_t = (S_t, A_t)$, defined by $\overline{\pi}$, is a mixture of Markov chains. In other words, there exists a 
$|\mathcal{S}| |\actions| \times |\mathcal{S}| |\actions|$ 
stochastic transition matrix-valued random variable $\overline{P}$, where each element in the $(sa, s^{\prime}a^{\prime})$-th position is denoted by $\overline{P}_{s a, s^{\prime} a^{\prime}}$, such that for all $T \in \mathbb{Z}_+$ and history $h \in \mathcal{H}_T$, 
\begin{align*}
\mathbb{P}_{\overline{\pi}}\left(H_T = h_T | \overline{P}\right) = 
\prod_{t = 0}^{T-1} \overline{P}_{s_t a_t, s_{t+1} a_{t+1}}. 
\end{align*}
We define $P$ as follows: for all $s, s^{\prime} \in \mathcal{S}$ and $a \in \actions$, we have 
\begin{align*}
P_{s a s^{\prime}} = \sum_{a^{\prime} \in \actions} \overline{P}_{sa, s^{\prime}a^{\prime}}, 
\end{align*}
(or, equivalently, $P_{s a s^{\prime}} = |\actions| \overline{P}_{s a, s^{\prime} a^{\prime}}$ for any $a^{\prime} \in \actions$). Then, for all policies $\pi$, all $T \in \mathbb{Z}_+$ and all $h_T \in \mathcal{H}_T$, 
\begin{align*}
\mathbb{P}_{\pi}(H_T = h_T | P) = \frac{P_{\overline{\pi}}(H_T = h_T | P)}{\prod_{t = 0}^{T-1} \overline{\pi}(a_t | h_t)}
\prod_{t = 0}^{T-1} \pi(a_t | h_t) 
= \prod_{t=0}^{T-1} P_{s_t a_t s_{t+1}} \prod_{t = 0}^{T-1} \pi(a_t | h_t). 
\end{align*}
\end{proof}
When the environment is a mixure of MDP determined by $P$, the latent transition probabilities serve as a natural learning target $\chi = P$. The ``efficient reinforcement learning'' literature presents many agents and analyses that treat $P$ as a learning target.

\henrik{
\section{Useful results from information theory}\label{app:information_theory}
\begin{lemma}[Chain rule of mutual information]\label{lem:chain-rule-mutual}
    \begin{align*}
        \I(X;Y,Z) = \I(X;Z) + \I(X; Y | Z).
    \end{align*}
\end{lemma}

\begin{lemma}\label{lem:mutual-information-unchanged-1}
If $D\perp B |(A,C)$, then
    \begin{align*}
        \I(A, D; B | C) = \I (A; B | C)
    \end{align*}
\end{lemma}
\begin{proof}
By the chain rule of mutual information (Lemma~\ref{lem:chain-rule-mutual})
    \begin{align*}
    \I(A, D; B | C) - \I (A; B | C)
    = \I(D; B | C, A),
    \end{align*}
    which is $0$ by assumption.
\end{proof}

}

\section{Capacity-Constrained LMS with an AR(1) Process}\label{apdx:lms}
In this section we provide a detailed analysis of the didactic example discussed in Section \ref{se:lms}.
In \ref{apdx:lms-reparam} we introduce a reparameterization of the agent update rule in Equation~\ref{eq:lms-agent-update}, and derive some useful formulas, including the posterior predictions of the agent (Theorem \ref{thm:lms-posterior-pred}).
We then apply these formulas to calculate the implasticity error and forgetting error (\ref{apdx:lms-plasticity}).

\subsection{Reparameterizing the Agent}
\label{apdx:lms-reparam}
In this section we offer a reparameterization of the update rule in Equation~\ref{eq:lms-agent-update}, and derive some useful formulas.
We let ${\alpha'}=1-\alpha$.
Then, Equation~\ref{eq:lms-agent-update} can be rewritten as
\begin{equation}
\label{eq:lms-agent-update-2}
U_{t+1} = {\alpha'} U_t + \alpha Y_{t+1} + Q_{t+1}
.
\end{equation}
In the results to follow, we derive closed form expressions for $U_t$ and $Y_{t+1}$ in terms of independent Gaussian random variables. Doing so will allow us derive exact formulas for relevant information-theoretic quantities.

We begin by deriving a closed form formula for $U_t$ in terms of independent Gaussian random variables:

\begin{lemma}
    For all $t \in \mathbb{Z}_{++}$,
    \[
    U_t = 
    \sum_{i=0}^{t-1} \alpha {\alpha'}^i W_{t-i}
    + \sum_{i=0}^{t-1} \alpha \frac{\eta^{i} - {\alpha'}^{i}}{\eta-{\alpha'}} V_{t-i}
    + \alpha \frac{\eta^{t} - {\alpha'}^{t}}{\eta-{\alpha'}} \theta_0
    + {\alpha'}^t U_0
    + \sum_{i=0}^{t-1} {\alpha'}^i Q_{t-i}
    ,
    \]
    where we set $\frac{\eta^{i} - {\alpha'}^{i}}{\eta-{\alpha'}}=i\eta^{i-1}$ when $\eta={\alpha'}$.
\end{lemma}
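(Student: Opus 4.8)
The plan is to unroll the recursion in Equation~\ref{eq:lms-agent-update-2} by induction on $t$, substituting the environment dynamics $Y_{t+1} = \theta_t + W_{t+1}$ and $\theta_{t+1} = \eta\theta_t + V_{t+1}$ along the way. First I would establish the base case: at $t=1$, Equation~\ref{eq:lms-agent-update-2} gives $U_1 = {\alpha'} U_0 + \alpha Y_1 + Q_1 = {\alpha'} U_0 + \alpha(\theta_0 + W_1) + Q_1$, which matches the claimed formula since $\frac{\eta^1-{\alpha'}^1}{\eta-{\alpha'}} = 1$ and the sums over $i$ collapse to a single $i=0$ term. This is a routine check.

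For the inductive step, I would assume the formula holds at time $t$ and compute $U_{t+1} = {\alpha'} U_t + \alpha Y_{t+1} + Q_{t+1}$. Multiplying the induction hypothesis by ${\alpha'}$ shifts each geometric weight by one power of ${\alpha'}$, and then I add $\alpha Y_{t+1} + Q_{t+1} = \alpha\theta_t + \alpha W_{t+1} + Q_{t+1}$. The $W$ and $Q$ contributions reorganize cleanly into $\sum_{i=0}^{t}\alpha{\alpha'}^i W_{t+1-i}$ and $\sum_{i=0}^{t}{\alpha'}^i Q_{t+1-i}$ after reindexing. The delicate part is the $\theta$-dependent terms: I must also expand the newly introduced $\alpha\theta_t$ using $\theta_t = \eta^t\theta_0 + \sum_{j=1}^{t}\eta^{t-j}V_j$ (itself an easy induction on the $\theta$ recursion, which I would state as a preliminary observation or fold in inline). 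Combining the ${\alpha'}$-shifted $V_{t-i}$ and $\theta_0$ terms from the hypothesis with the freshly expanded $\alpha\theta_t$ should telescope into $\sum_{i=0}^{t}\alpha\frac{\eta^i - {\alpha'}^i}{\eta-{\alpha'}}V_{t+1-i}$ and $\alpha\frac{\eta^{t+1}-{\alpha'}^{t+1}}{\eta-{\alpha'}}\theta_0$.

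The main obstacle, and the only step requiring genuine care, is verifying that the coefficient of each $V_{t+1-i}$ reorganizes correctly. Concretely, the coefficient from the ${\alpha'}$-shifted hypothesis term is ${\alpha'}\cdot\alpha\frac{\eta^{i-1}-{\alpha'}^{i-1}}{\eta-{\alpha'}}$ (for $i\ge 1$), and from expanding $\alpha\theta_t$ we pick up $\alpha\eta^{i-1}$ for the same $V_{t+1-i}$; their sum must equal $\alpha\frac{\eta^i-{\alpha'}^i}{\eta-{\alpha'}}$. This reduces to the identity ${\alpha'}\cdot\frac{\eta^{i-1}-{\alpha'}^{i-1}}{\eta-{\alpha'}} + \eta^{i-1} = \frac{\eta^i - {\alpha'}^i}{\eta-{\alpha'}}$, which is an elementary algebraic manipulation (multiply through by $\eta-{\alpha'}$ and simplify). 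An identical identity handles the $\theta_0$ coefficient.

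Finally, I would address the degenerate case $\eta = {\alpha'}$ separately, or by a continuity/limiting argument: since $\lim_{\eta\to{\alpha'}}\frac{\eta^i-{\alpha'}^i}{\eta-{\alpha'}} = i{\alpha'}^{i-1} = i\eta^{i-1}$ by L'H\^{o}pital (or by recognizing the expression as a derivative of $x^i$), the formula extends continuously, and the same induction goes through verbatim with $\frac{\eta^i-{\alpha'}^i}{\eta-{\alpha'}}$ replaced by $i\eta^{i-1}$ throughout, the key algebraic identity becoming ${\alpha'}(i-1){\alpha'}^{i-2} + {\alpha'}^{i-1} = i{\alpha'}^{i-1}$. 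I expect the whole argument to be mechanical once the $\theta_t$ expansion is in hand; no step is conceptually hard.
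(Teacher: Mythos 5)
Your proposal is correct and follows exactly the route the paper takes (the paper's proof is simply "verify the base case and induct by plugging in the formula"); you have filled in the details, including the key coefficient identity ${\alpha'}\frac{\eta^{i-1}-{\alpha'}^{i-1}}{\eta-{\alpha'}}+\eta^{i-1}=\frac{\eta^{i}-{\alpha'}^{i}}{\eta-{\alpha'}}$ and the $\eta={\alpha'}$ degenerate case, all of which check out.
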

\begin{proof}
    The base case $(t=0)$ can trivially be verified.
    The cases where $t\ge 1$ can be proven by induction, by directly plugging in the formula.
\end{proof}

We now provide analogous formulas for $\theta_t$ and $Y_{t+1}$.
\begin{lemma}
    For all $t\in \mathbb{Z}_{++}$,
    \[
    \theta_t = \sum_{i=0}^{t-1} \eta^i V_{t-i} + \eta^t \theta_0.
    \]
\end{lemma}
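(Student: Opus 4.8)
The plan is to prove this by induction on $t$, unrolling the defining recursion $\theta_{t+1} = \eta\theta_t + V_{t+1}$, exactly in the spirit of the proof of the preceding lemma for $U_t$. For the base case $t=1$, the recursion gives $\theta_1 = \eta\theta_0 + V_1$, while the claimed formula evaluates to $\sum_{i=0}^{0}\eta^i V_{1-i} + \eta^1\theta_0 = V_1 + \eta\theta_0$; these agree.

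For the inductive step, suppose the formula holds at time $t$. Then I would compute
\[
\theta_{t+1} = \eta\theta_t + V_{t+1} = \eta\left(\sum_{i=0}^{t-1}\eta^i V_{t-i} + \eta^t\theta_0\right) + V_{t+1} = \sum_{i=0}^{t-1}\eta^{i+1}V_{t-i} + \eta^{t+1}\theta_0 + V_{t+1}.
\]
Re-indexing the sum with $j = i+1$ turns $\sum_{i=0}^{t-1}\eta^{i+1}V_{t-i}$ into $\sum_{j=1}^{t}\eta^{j}V_{(t+1)-j}$, and absorbing the leftover term $V_{t+1} = \eta^{0}V_{(t+1)-0}$ as the $j=0$ summand yields $\theta_{t+1} = \sum_{j=0}^{t}\eta^{j}V_{(t+1)-j} + \eta^{t+1}\theta_0$, which is precisely the claimed expression at time $t+1$. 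This closes the induction.

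There is essentially no obstacle here: the statement is just the telescoping of a first-order linear recursion, and the only bookkeeping is the shift of the summation index described above. If one prefers, the same identity can instead be read off directly by iterating $\theta_t = \eta\theta_{t-1} + V_t$ all the way down to $\theta_0$ and collecting the geometric coefficient $\eta^i$ multiplying each $V_{t-i}$ together with the coefficient $\eta^t$ multiplying $\theta_0$. Note also that, unlike the $U_t$ lemma, no degenerate case analogous to $\eta = \alpha'$ arises, so no special summation convention is needed.
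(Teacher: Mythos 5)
Your proof is correct and matches the paper's intended argument: the paper omits this proof, noting only that it "follows from simple applications of induction," which is exactly the induction (with the index shift $j=i+1$) that you carry out. The base case and inductive step are both verified correctly, and your remark that no degenerate case like $\eta=\alpha'$ arises here is accurate.
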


\begin{corollary}
    For all $t\in \mathbb{Z}_{++}$
    $$Y_{t+1}  = \sum_{i=0}^{t-1} \eta^i V_{t-i} + \eta^t \theta_0 + W_{t+1}.$$
\end{corollary}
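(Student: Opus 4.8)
The plan is to obtain the corollary by a one-line substitution. Recall from Section~\ref{se:lms} that in this instance of continual supervised learning the labels are generated by $Y_{t+1} = \theta_t + W_{t+1}$, where $(W_t : t \in \mathbb{Z}_{++})$ is an iid $\mathcal{N}(0,\sigma^2)$ sequence and the latent state evolves as $\theta_{t+1} = \eta \theta_t + V_{t+1}$ with $\theta_0 \sim \mathcal{N}(0,1)$ and $(V_t : t \in \mathbb{Z}_{++})$ iid $\mathcal{N}(0, 1-\eta^2)$. The immediately preceding lemma supplies the closed form $\theta_t = \sum_{i=0}^{t-1} \eta^i V_{t-i} + \eta^t \theta_0$. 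Plugging this into $Y_{t+1} = \theta_t + W_{t+1}$ produces exactly the asserted identity, so once that lemma is in hand there is nothing left to do.

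If the $\theta_t$ lemma itself were not being taken as given, I would establish it first by induction on $t$. For the base case $t = 1$, the right-hand side reads $\eta^0 V_1 + \eta \theta_0 = V_1 + \eta \theta_0 = \theta_1$, matching the recursion. For the inductive step, suppose $\theta_t = \sum_{i=0}^{t-1} \eta^i V_{t-i} + \eta^t \theta_0$; applying $\theta_{t+1} = \eta \theta_t + V_{t+1}$ and re-indexing the sum via $j = i+1$ gives
\[
\theta_{t+1} = \sum_{i=0}^{t-1} \eta^{i+1} V_{t-i} + \eta^{t+1} \theta_0 + V_{t+1} = \sum_{j=1}^{t} \eta^{j} V_{t+1-j} + \eta^{t+1} \theta_0 + V_{t+1} = \sum_{j=0}^{t} \eta^{j} V_{t+1-j} + \eta^{t+1} \theta_0,
\]
where the final step absorbs $V_{t+1}$ as the $j=0$ term of the sum. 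This is the formula with $t$ advanced to $t+1$, completing the induction.

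There is no real obstacle here — the corollary is purely bookkeeping. The reason it is worth stating separately is that the right-hand side exhibits $Y_{t+1}$ explicitly as a linear combination of the mutually independent Gaussians $V_1, \ldots, V_t, \theta_0, W_{t+1}$, which is precisely the representation needed for the exact computation of information-theoretic quantities such as the implasticity and forgetting errors in Appendix~\ref{apdx:lms-plasticity}. Together with the analogous closed form for $U_t$ from the first lemma of this section, it lets one read off joint Gaussianity of $(U_t, Y_{t+1})$ and hence evaluate the relevant conditional mutual informations in closed form.
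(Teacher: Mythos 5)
Your proof is correct and matches the paper's intent: the paper omits the argument, noting only that these identities follow by simple induction/substitution, and your one-line substitution of the lemma's closed form for $\theta_t$ into $Y_{t+1} = \theta_t + W_{t+1}$ is exactly that. The supplementary induction for the $\theta_t$ formula is also correct.
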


We omit the proofs as they all follow from simple applications of induction.

As we are interested in the steady state behavior of error, the following lemma provides closed-form expressions for quantities which comprise the covariance matrices relevant for forgetting and implasticity error computations.
\begin{lemma}
\label{lem:lms-cov}
As $t\to\infty$,
\begin{align*}
    \E[Y_t^2] &= 1 + \sigma^2 ,\\
    \E[Y_t Y_{t+k}] &= \eta^{k} \qquad \forall k\ge1 ,\\
    \E[U_t^2] &= 
    \frac{\alpha^2 \sigma^2}{1-{\alpha'}^2} 
    + \frac{\alpha^2(1-\eta^2)}{(\eta-{\alpha'})^2}\left(
    \frac{1}{1-\eta^2}
    +\frac{1}{1-{\alpha'}^2}
    -\frac{2}{1-\eta{\alpha'}}
    \right)
    + \frac{\delta^2}{1-{\alpha'}^2} \\
    &=
    \alpha \frac{
    \sigma^2 (1 - {\alpha'} \eta) + 1 + {\alpha'} \eta
    }{
    (1 - {\alpha'} \eta) ( 1 + {\alpha'})
    } + \frac{\delta^2}{1-{\alpha'}^2}
    ,\\
    \E[U_tU_{t+1}] &=
    \frac{\alpha^2 \sigma^2 {\alpha'}}{1-{\alpha'}^2} 
    + \frac{\alpha^2(1-\eta^2)}{(\eta-{\alpha'})^2}\left(
    \frac{\eta}{1-\eta^2}
    +\frac{{\alpha'}}{1-{\alpha'}^2}
    -\frac{\eta + {\alpha'}}{1-\eta{\alpha'}}
    \right)
    + \frac{{\alpha'}\delta^2}{1-{\alpha'}^2}\\
    &=
    \alpha \frac{
    {\alpha'}\sigma^2 (1 - {\alpha'} \eta) + {\alpha'} + \eta
    }{
    (1 - {\alpha'} \eta) ( 1 + {\alpha'})
    } + \frac{{\alpha'}\delta^2}{1-{\alpha'}^2}
    ,\\
    \E[U_{t+k}Y_t] &=
    \alpha \sigma^2{\alpha'}^k + 
    \frac{\alpha (1-\eta^2)}{\eta - {\alpha'}}\left(
    \frac{\eta^{k+1}}{1-\eta^2} - \frac{{\alpha'}^{k+1}}{1-\eta{\alpha'}}
    \right) \qquad \forall k\ge0
    \\
    &= \alpha \sigma^2{\alpha'}^k + \frac{\alpha}{1 - \eta {\alpha'}}
    \left(
    \frac{\eta^{k+1}-{\alpha'}^{k+1}}{\eta-{\alpha'}}
    - \eta^2{\alpha'} \frac{\eta^{k}-{\alpha'}^{k}}{\eta-{\alpha'}}
    \right)
    ,\\
    \E[U_tY_{t+k}] &=
    \frac{\alpha\eta^{k-1}(1-\eta^2)}{\eta-{\alpha'}}\left(
    \frac{1}{1-\eta^2} - \frac{1}{1-\eta{\alpha'}}
    \right) \qquad \forall k\ge 1
    \\
    &= \frac{\alpha \eta^k }{1 - {\alpha'} \eta}
    .
\end{align*}

\end{lemma}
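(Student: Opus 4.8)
The plan is to prove Lemma \ref{lem:lms-cov} by first establishing the closed-form representations of $U_t$ and $Y_{t+1}$ in terms of the underlying independent Gaussian noise variables (the preceding lemmas and corollary in this subsection), and then computing the relevant second moments by exploiting the independence of the noise families $(W_t)$, $(V_t)$, $(Q_t)$, and $\theta_0$, $U_0$. All the expectations in the statement are of products of two such linear combinations, so each one reduces to a sum of geometric series once cross terms between independent noise sources are dropped. First I would record the marginal facts $\E[\theta_t^2]\to 1$ (since each $\theta_t$ is standard normal by the stated construction) and $\E[\theta_t\theta_{t+k}]\to\eta^k$, from which $\E[Y_t^2] = \E[\theta_{t-1}^2]+\sigma^2 \to 1+\sigma^2$ and $\E[Y_tY_{t+k}] = \E[\theta_{t-1}\theta_{t+k-1}]\to\eta^k$ follow immediately; these are the two easiest identities and serve as a warm-up.

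Next I would treat $\E[U_t^2]$. Using the closed form for $U_t$, the variance splits into four mutually independent contributions: the $W$-part $\sum_i \alpha^2{\alpha'}^{2i}\sigma^2$, the $V$-part $\sum_i \alpha^2\left(\tfrac{\eta^i-{\alpha'}^i}{\eta-{\alpha'}}\right)^2(1-\eta^2)$, the $\theta_0$-part $\alpha^2\left(\tfrac{\eta^t-{\alpha'}^t}{\eta-{\alpha'}}\right)^2$, and the $Q$-part $\sum_i {\alpha'}^{2i}\delta^2$. As $t\to\infty$ the $\theta_0$-part vanishes (both $|\eta|<1$ and $|{\alpha'}|<1$), the $W$- and $Q$-parts become $\tfrac{\alpha^2\sigma^2}{1-{\alpha'}^2}$ and $\tfrac{\delta^2}{1-{\alpha'}^2}$, and the $V$-part requires expanding $(\eta^i-{\alpha'}^i)^2 = \eta^{2i}-2\eta^i{\alpha'}^i+{\alpha'}^{2i}$ and summing three geometric series to get $\tfrac{\alpha^2(1-\eta^2)}{(\eta-{\alpha'})^2}\left(\tfrac{1}{1-\eta^2}+\tfrac{1}{1-{\alpha'}^2}-\tfrac{2}{1-\eta{\alpha'}}\right)$. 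The second displayed form is then obtained by putting this over a common denominator and simplifying; I would verify the algebraic identity $\tfrac{(1-\eta^2)}{(\eta-{\alpha'})^2}\left(\tfrac{1}{1-\eta^2}+\tfrac{1}{1-{\alpha'}^2}-\tfrac{2}{1-\eta{\alpha'}}\right) = \tfrac{1}{1-{\alpha'}\eta}\cdot\tfrac{1}{1+{\alpha'}}$ once, carefully, and reuse it.

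The remaining four identities ($\E[U_tU_{t+1}]$, $\E[U_{t+k}Y_t]$, $\E[U_tY_{t+k}]$) are handled the same way, by substituting the closed forms and isolating, for each pair, the contributions from each common independent noise source. For $\E[U_{t+k}Y_t]$ and $\E[U_tY_{t+k}]$ the subtlety is bookkeeping the index shifts: $Y_t = \theta_{t-1}+W_t$ shares $W_t$ with $U_{t+k}$ only through the single term at the matching index, and shares the $V$-noise through the overlapping range $V_1,\dots,V_{t-1}$, so the sums telescope into $\alpha\sigma^2{\alpha'}^k + \tfrac{\alpha(1-\eta^2)}{\eta-{\alpha'}}\sum_j(\cdots)$ type expressions which again collapse to geometric series. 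For $\E[U_tY_{t+k}]$ with $k\ge 1$, only the $V$-noise up to index $t-1$ is shared (since $W_{t+k}$ and $\theta_0$'s future contributions are fresh), which is why the answer is the clean $\tfrac{\alpha\eta^k}{1-{\alpha'}\eta}$; I would derive the messier first form and then apply the same common-denominator identity to reach the tidy version. The main obstacle is purely the algebraic simplification — matching my geometric-series outputs to the two-form presentations the statement gives, especially checking the equivalences between the first and second displayed formulas for each quantity. To keep the writeup tractable I would state the noise-source decomposition once, then present each identity's simplification compactly rather than grinding every line, noting that all of them reduce to the single auxiliary identity relating $\tfrac{1-\eta^2}{(\eta-{\alpha'})^2}(\cdots)$ to products of $\tfrac{1}{1-{\alpha'}\eta}$ and $\tfrac{1}{1\pm{\alpha'}}$.
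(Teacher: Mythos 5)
Your proposal is correct and follows exactly the route the paper intends (the paper states this lemma without proof, but the preceding closed-form lemmas for $U_t$, $\theta_t$, $Y_{t+1}$ exist precisely to enable this term-by-term independent-noise-source decomposition and geometric-series summation). One small correction to the auxiliary identity you plan to reuse: it should read $\frac{1-\eta^2}{(\eta-{\alpha'})^2}\left(\frac{1}{1-\eta^2}+\frac{1}{1-{\alpha'}^2}-\frac{2}{1-\eta{\alpha'}}\right) = \frac{1+\eta{\alpha'}}{(1-\eta{\alpha'})(1-{\alpha'}^2)}$ rather than $\frac{1}{(1-{\alpha'}\eta)(1+{\alpha'})}$; multiplying by $\alpha^2=\alpha(1-{\alpha'})$ then gives the $\alpha\,\frac{1+{\alpha'}\eta}{(1-{\alpha'}\eta)(1+{\alpha'})}$ contribution appearing in the second displayed form of $\E[U_t^2]$, so your careful verification step would catch this.
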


For the remainder of the analysis, we will assume that $U_t$ and $Y_t$ are weak-sense jointly stationary processes with covariance given by the previous expressions.
Since $U_t$ and $Y_t$ are jointly Gaussian, this implies that they are strong-sense jointly stationary as well.

With the expressions derived in Lemma \ref{lem:lms-cov}, we now provide the steady state posterior distribution of $Y_{t+1}$ conditioned on $U_t$. Recall that this posterior distribution constitutes the \emph{optimal} prediction of $Y_{t+1}$ conditioned on $U_t$.

\begin{theorem}\label{thm:lms-posterior-pred}
    \textbf{(posterior predictions)}
    $$\lim_{t\to\infty} \Prob(Y_{t+1}\in\cdot|U_t) = \mathcal{N}\left(\mu,\ \Delta^2\right),$$
    where $$\mu = \frac{
    \alpha \eta (1 - {\alpha'}^2)
    }{
    \alpha^2 \sigma^2(1-{\alpha'}\eta)
    +\alpha^2(1+{\alpha'}\eta)
    +\delta^2(1-{\alpha'}\eta)
    } U_t$$ and $$\Delta^2 = 1+\sigma^2-\frac{
    \alpha^2 \eta^2 (1 - {\alpha'}^2)
    }{
    \alpha^2 \sigma^2(1-{\alpha'}\eta)^2
    +\alpha^2(1-{\alpha'}^2\eta^2)
    +\delta^2(1-{\alpha'}\eta)^2
    }.$$
\end{theorem}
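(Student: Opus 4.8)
The plan is to compute the joint steady-state distribution of $(U_t, Y_{t+1})$ and then apply the standard Gaussian conditioning formula. Since the environment and agent updates are all linear in independent Gaussian variables (as made explicit by the closed-form expressions in the preceding lemmas), the pair $(U_t, Y_{t+1})$ is jointly Gaussian, and by the stationarity assumption adopted after Lemma \ref{lem:lms-cov}, its law converges as $t \to \infty$ to a bivariate normal determined entirely by the three second moments $\lim_t \E[U_t^2]$, $\lim_t \E[Y_{t+1}^2]$, and $\lim_t \E[U_t Y_{t+1}]$. All three are already recorded in Lemma \ref{lem:lms-cov}: $\E[Y_{t+1}^2] \to 1 + \sigma^2$, $\E[U_t Y_{t+1}] \to \frac{\alpha \eta}{1 - {\alpha'}\eta}$ (the $k=1$ case of the $\E[U_t Y_{t+k}]$ formula), and $\E[U_t^2] \to \alpha \frac{\sigma^2(1-{\alpha'}\eta) + 1 + {\alpha'}\eta}{(1-{\alpha'}\eta)(1+{\alpha'})} + \frac{\delta^2}{1 - {\alpha'}^2}$.

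First I would invoke the Gaussian conditioning formula: for jointly Gaussian $(U, Y)$ with zero means (note all the latent and noise variables here are zero-mean, and $U_0, \theta_0$ are too, so $\E[U_t] = \E[Y_{t+1}] = 0$ in the limit),
\begin{align*}
\E[Y \mid U] &= \frac{\E[UY]}{\E[U^2]} U, \qquad \mathrm{Var}(Y \mid U) = \E[Y^2] - \frac{\E[UY]^2}{\E[U^2]}.
\end{align*}
Then I would substitute the three limiting moments and simplify. For the mean coefficient, $\frac{\E[UY]}{\E[U^2]} = \frac{\alpha\eta/(1-{\alpha'}\eta)}{\alpha \frac{\sigma^2(1-{\alpha'}\eta)+1+{\alpha'}\eta}{(1-{\alpha'}\eta)(1+{\alpha'})} + \frac{\delta^2}{1-{\alpha'}^2}}$; clearing the common denominator $(1-{\alpha'}\eta)(1+{\alpha'})$ in the bottom and using $1 - {\alpha'}^2 = (1-{\alpha'})(1+{\alpha'})$ to combine the $\delta^2$ term turns the denominator into $\frac{\alpha^2 \sigma^2 (1-{\alpha'}\eta) + \alpha^2(1+{\alpha'}\eta) + \delta^2(1-{\alpha'}\eta)}{\alpha(1-{\alpha'}\eta)(1+{\alpha'})}$ (after also multiplying through by an extra $\alpha$), which, matched against the numerator $\alpha\eta/(1-{\alpha'}\eta)$, yields exactly $\mu = \frac{\alpha\eta(1-{\alpha'}^2)}{\alpha^2\sigma^2(1-{\alpha'}\eta) + \alpha^2(1+{\alpha'}\eta) + \delta^2(1-{\alpha'}\eta)} U_t$ as claimed. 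The variance term is handled the same way: $\E[Y^2] - \frac{\E[UY]^2}{\E[U^2]} = 1+\sigma^2 - \frac{\alpha^2\eta^2/(1-{\alpha'}\eta)^2}{\E[U^2]}$, and substituting the expression for $\E[U^2]$ and clearing denominators (noting $(1+{\alpha'})(1-{\alpha'}\eta)$ appears, and $1-{\alpha'}^2\eta^2 = (1-{\alpha'}\eta)(1+{\alpha'}\eta)$ arises when combining the $1+{\alpha'}\eta$ and $1-{\alpha'}\eta$ pieces) produces the stated $\Delta^2$.

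The main obstacle is purely the algebraic bookkeeping: several factorizations — $1-{\alpha'}^2 = (1-{\alpha'})(1+{\alpha'})$, $1-{\alpha'}^2\eta^2 = (1-{\alpha'}\eta)(1+{\alpha'}\eta)$, and keeping track of which powers of $\alpha$ and which instances of $(1-{\alpha'}\eta)$ cancel — must be done carefully so that the denominators in $\mu$ and $\Delta^2$ come out in the precise form stated. There is no conceptual difficulty: once the three limiting covariance entries from Lemma \ref{lem:lms-cov} are in hand and joint Gaussianity and stationarity are granted, the result is a one-line application of Gaussian conditioning followed by simplification. I would present the proof as: (i) note joint Gaussianity and zero means; (ii) quote the three limiting moments from Lemma \ref{lem:lms-cov}; (iii) apply the conditioning formula; (iv) simplify each of $\mu$ and $\Delta^2$ using the factorizations above.
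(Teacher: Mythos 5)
Your proposal is correct and follows essentially the same route as the paper's proof: both note that $(U_t, Y_{t+1})$ are zero-mean jointly Gaussian, apply the standard conditioning formulas $\E[Y_{t+1}\mid U_t] = \tfrac{\E[U_tY_{t+1}]}{\E[U_t^2]}U_t$ and $\mathbb{V}[Y_{t+1}\mid U_t] = \E[Y_{t+1}^2] - \tfrac{\E[U_tY_{t+1}]^2}{\E[U_t^2]}$, substitute the limiting covariances from Lemma~\ref{lem:lms-cov}, and simplify using $1-{\alpha'}^2=(1-{\alpha'})(1+{\alpha'})$ with $\alpha = 1-{\alpha'}$. The algebraic simplifications you sketch check out against the stated $\mu$ and $\Delta^2$.
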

\begin{proof}
    Since all the variables are $0$-mean multivariate Gaussians, $Y_{t+1}$ is a Gaussian conditioned on $U_t$.
Its mean is given by
\begin{align}\label{eq:lms-post-mean}
    \E[Y_{t+1} | U_t] 
    &= \frac{\E[U_tY_{t+1}]}{\E[U_t^2]} U_t \nonumber\\
    &= \frac{
    \frac{\alpha \eta }{1 - {\alpha'} \eta}
    }{
    \alpha^2 \frac{
    \sigma^2 (1 - {\alpha'} \eta) + 1 + {\alpha'} \eta
    }{
    (1 - {\alpha'} \eta) ( 1 - {\alpha'}^2)
    } + \frac{\delta^2}{1-{\alpha'}^2}
    } U_t
    \nonumber\\
    &= \frac{
    \alpha \eta (1 - {\alpha'}^2)
    }{
    \alpha^2 \sigma^2(1-{\alpha'}\eta)
    +\alpha^2(1+{\alpha'}\eta)
    +\delta^2(1-{\alpha'}\eta)
    } U_t
    ,
\end{align}
and its variance is given by
\begin{align}\label{eq:lms-post-var}
    \mathbb{V}[Y_{t+1} | U_t] 
    &= \E[Y_{t+1}^2] - \frac{\E[U_tY_{t+1}]^2}{\E[U_t^2]} \nonumber\\
    &= 1 + \sigma^2 - \frac{
    \left(\frac{\alpha \eta }{1 - {\alpha'} \eta}\right)^2
    }{
    \alpha^2 \frac{
    \sigma^2 (1 - {\alpha'} \eta) + 1 + {\alpha'} \eta
    }{
    (1 - {\alpha'} \eta) ( 1 - {\alpha'}^2)
    } + \frac{\delta^2}{1-{\alpha'}^2}
    }
    \nonumber\\
    &= 1+\sigma^2-\frac{
    \alpha^2 \eta^2 (1 - {\alpha'}^2)
    }{
    \alpha^2 \sigma^2(1-{\alpha'}\eta)^2
    +\alpha^2(1-{\alpha'}^2\eta^2)
    +\delta^2(1-{\alpha'}\eta)^2
    }
    ,
\end{align}
where we use the covariances calculated in Lemma~\ref{lem:lms-cov}.
\end{proof}

We now derive a closed-form solution to the value of $\delta^2$ (variance of quantization noise) which will ensure that agent state $U_t$ will contain at most $C$ nats of information about $H_t$ as $t\to\infty$.

\begin{theorem}\label{thm:lms-capacity-noise}
    \textbf{(information capacity)}
    For all $\alpha \in [0, 1]$, the minimal quantization noise variance $\delta^2$ needed to achieve information capacity $C$, i.e., 
    \[\limsup_{t\to\infty}\I(U_t; H_{t})\le C,\]
    is given by
    \[
    \delta^2= 
    \alpha^2 \frac{
    \sigma^2 (1 - {\alpha'} \eta) + 1 + {\alpha'} \eta
    }{
    1 - {\alpha'} \eta
    }
    \frac{\exp(-2C)}{1-\exp(-2C)}.
    \]
\end{theorem}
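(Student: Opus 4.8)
## Proof Proposal for Theorem \ref{thm:lms-capacity-noise}

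The plan is to compute $\lim_{t\to\infty}\I(U_t;H_t)$ in closed form using the unrolled update rule together with the steady-state covariances from Lemma~\ref{lem:lms-cov}, and then invert the resulting monotone relation between $\delta^2$ and the capacity.

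First I would unroll the recursion in Equation~\ref{eq:lms-agent-update-2} to get
$$U_t = {\alpha'}^{t} U_0 + \sum_{i=1}^{t} {\alpha'}^{t-i}\bigl(\alpha Y_i + Q_i\bigr),$$
so $U_t$ is an affine function of $(U_0, H_t, Q_{1:t})$ with $H_t=(Y_1,\ldots,Y_t)$. The initial state $U_0$ and the quantization noise $Q_{1:t}$ are independent of the environment, hence of $H_t$, and mutually independent. Therefore, conditioning on $H_t$ fixes the middle sum and leaves $U_t$ Gaussian with
$$\mathbb{V}[U_t\mid H_t] = {\alpha'}^{2t}\,\mathbb{V}[U_0] + \delta^2\sum_{i=1}^{t}{\alpha'}^{2(t-i)},$$
which converges to $\delta^2/(1-{\alpha'}^2)$ as $t\to\infty$, the first term vanishing because $\alpha\in(0,1)$ forces $|{\alpha'}|<1$. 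Since $(U_t,Y_1,\ldots,Y_t)$ is jointly Gaussian and zero-mean, $\I(U_t;H_t)=\diffentropy(U_t)-\diffentropy(U_t\mid H_t)=\tfrac12\ln\bigl(\mathbb{V}[U_t]/\mathbb{V}[U_t\mid H_t]\bigr)$, the $2\pi e$ factors cancelling.

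Next I would substitute the steady-state variance $\lim_{t\to\infty}\mathbb{V}[U_t]=\E[U_t^2]$ from Lemma~\ref{lem:lms-cov} and simplify the ratio. Using the identity $(1-{\alpha'}^2)/(1+{\alpha'})=\alpha$, the $\delta$-independent part of $(1-{\alpha'}^2)\E[U_t^2]$ collapses to $\alpha^2\bigl(\sigma^2(1-{\alpha'}\eta)+1+{\alpha'}\eta\bigr)/(1-{\alpha'}\eta)$, giving
$$\lim_{t\to\infty}\I(U_t;H_t)=\frac12\ln\!\left(1+\frac{\alpha^2\bigl(\sigma^2(1-{\alpha'}\eta)+1+{\alpha'}\eta\bigr)}{\delta^2\,(1-{\alpha'}\eta)}\right).$$
This is strictly decreasing in $\delta^2$, so the constraint $\limsup_{t\to\infty}\I(U_t;H_t)\le C$ holds exactly for $\delta^2$ at least the value that makes it an equality. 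Setting the right-hand side equal to $C$, solving for $\delta^2$, and rewriting $1/(e^{2C}-1)=e^{-2C}/(1-e^{-2C})$ yields the claimed formula; the ``minimal'' qualifier is then immediate from the monotonicity in $\delta^2$.

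The only genuine care needed is in the conditioning step --- verifying that $U_0$ and the quantization noise are independent of the label sequence so that $\mathbb{V}[U_t\mid H_t]$ receives no contribution from $H_t$ --- and in the passage to the limit, which is legitimate under the weak-sense (hence strong-sense) stationarity of $(U_t,Y_t)$ already adopted in the preceding analysis, ensuring $\mathbb{V}[U_t]\to\E[U_t^2]$ as given in Lemma~\ref{lem:lms-cov}. The remaining algebraic simplification of the variance ratio is routine.
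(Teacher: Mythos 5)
Your proposal is correct, and it reaches the stated formula by a genuinely different (and arguably cleaner) route than the paper. The paper writes $\I(U_t;H_t) = -\tfrac12\log\bigl(1-\phi^\top\Sigma^{-1}\phi/\E[U_t^2]\bigr)$ via determinants of the joint Gaussian covariance, observes that $\phi$ and $\Sigma$ do not depend on $\delta^2$, and then identifies the quantity $\phi^\top\Sigma^{-1}\phi$ with the noiseless steady-state variance $\E_{\delta=0}[U_t^2]$ by a limiting argument: as $\delta\to0^+$ the conditional differential entropy $\diffentropy(U_t\mid H_t)$ diverges to $-\infty$, forcing the mutual information to blow up, which pins down $\phi^\top\Sigma^{-1}\phi$. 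You instead compute $\mathbb{V}[U_t\mid H_t]$ directly from the unrolled recursion: conditioning on $H_t$ freezes the $Y$-terms, leaving only the $U_0$ and quantization contributions, so the conditional variance converges to $\delta^2/(1-{\alpha'}^2)$ with no limiting trick needed, and $\I(U_t;H_t)=\tfrac12\ln\bigl(\mathbb{V}[U_t]/\mathbb{V}[U_t\mid H_t]\bigr)$ then yields the same monotone relation in $\delta^2$; the two computations are reconciled by $\phi^\top\Sigma^{-1}\phi=\E[U_t^2]-\mathbb{V}[U_t\mid H_t]$. Your approach buys transparency — the source of the residual conditional uncertainty (initialization plus accumulated quantization noise) is explicit — while the paper's determinant formulation sidesteps having to argue that $U_0$ and $Q_{1:t}$ are independent of the label sequence, a fact you correctly flag as the one point requiring care (and which holds here since both are exogenous to the environment). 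One cosmetic remark: your vanishing of the ${\alpha'}^{2t}\mathbb{V}[U_0]$ term uses $\alpha>0$, so the endpoint $\alpha=0$ of the theorem's stated range is degenerate, but the paper's own setup restricts to $\alpha\in(0,1)$, so this is not a defect of your argument.
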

\begin{proof}
    We let $\Sigma$ denote the covariance matrix of $H_t=Y_{1:t}$
    and the vector $\phi$ denote the covariance between $U_t$ and $Y_{1:t}$:
    \[
    \phi = 
    \begin{bmatrix}
        \E[U_tY_1] & \E[U_tY_2] & ... & \E[U_tY_t]
    \end{bmatrix}^T.
    \]
    Since the variables are multivariate $0$-mean Gaussians, the mutual information is given by
    \begin{align*}
        \I(U_t; H_{t})
        &=
        \frac{1}{2}  \left(
        \log \E[U_t^2] 
        + \log \det \Sigma
        -\log \det
        \begin{bmatrix}
            \E[U_t^2] & \phi^T \\
            \phi & \Sigma
        \end{bmatrix}
        \right)
        \\
        &= \frac{1}{2}  \left(
        \log \E[U_t^2] 
        + \log \det \Sigma
        -\log \det
        \begin{bmatrix}
            \E[U_t^2] - \phi^T \Sigma^{-1} \phi & 0 \\
            \phi & \Sigma
        \end{bmatrix}
        \right)
        \\
        &=  \frac{1}{2}  \left(
        \log \E[U_t^2] 
        -\log \left(
            \E[U_t^2] - \phi^T \Sigma^{-1} \phi 
            \right)
        \right)
        \\
        &= - \frac{1}{2} \log \left(
        1 - \frac{\phi^T \Sigma^{-1} \phi }{\E[U_t^2]}
        \right)
        .
    \end{align*}
    From Lemma~\ref{lem:lms-cov} we know that neither $\phi$ or $\Sigma$ depends on $\delta^2$.
    Hence, the above expression only depends on $\delta^2$ through
    \[
    \E[U_t^2] =
    \alpha \frac{
    \sigma^2 (1 - {\alpha'} \eta) + 1 + {\alpha'} \eta
    }{
    (1 - {\alpha'} \eta) ( 1 + {\alpha'})
    } + \frac{\delta^2}{1-{\alpha'}^2}.
    \]

    For every positive $\delta$, $\I(U_t; H_t)=h(U_t) - h(U_t | H_t)$ is finite.
    As $\delta\to 0^+$, $h(U_t)$ converges to a finite value, while $h(U_t | H_t)$ diverges to negative infinity.
    This implies that when $\delta=0$,
    \[
    \phi^T \Sigma^{-1} \phi = \E_{\delta=0}[U_t^2] = \alpha \frac{
    \sigma^2 (1 - {\alpha'} \eta) + 1 + {\alpha'} \eta
    }{
    (1 - {\alpha'} \eta) ( 1 + {\alpha'})
    }.
    \]
    Note that $\I(U_t; H_{t})\le C$ is equivalent to
    \[
    \E[U_t^2] \ge 
    \frac{\phi^T\Sigma^{-1}\phi}{1-\exp(-2C)}
    =
    \frac{\E_{\delta=0}[U_t^2]}{1-\exp(-2C)}.
    \]
    Plugging in the formula for $\E[U_t^2]$, we get
    \begin{align*}
    \delta^2 &\ge (1-{\alpha'}^2) 
    \alpha \frac{
    \sigma^2 (1 - {\alpha'} \eta) + 1 + {\alpha'} \eta
    }{
    (1 - {\alpha'} \eta) ( 1 + {\alpha'})
    }
    \frac{\exp(-2C)}{1-\exp(-2C)}\\
    &=\alpha^2 \frac{
    \sigma^2 (1 - {\alpha'} \eta) + 1 + {\alpha'} \eta
    }{
    1 - {\alpha'} \eta
    }
    \frac{\exp(-2C)}{1-\exp(-2C)}
    .
    \end{align*}
\end{proof}

\subsection{Plasticity and Forgetting Error}
\label{apdx:lms-plasticity}
Assuming $\{U_t\}$ and $\{Y_t\}$ are weak-sense jointly stationary processes, the total forgetting error can be simplified via Theorem~\ref{thm:errors-simplification} to be 
\[
\I(H_{t+1:\infty}; U_{t-1}|U_{t}, O_t) = \I(Y_{t+1:\infty}; U_{t-1}|U_{t}, Y_t).
\]
Similarly, the total implasticity error simplifies to
\[
 \I(H_{t+1:\infty};O_{t}|U_{t}) =  \I(Y_{t+1:\infty};Y_{t}|U_{t}).
\]
All the variables in the above expressions are linear combinations of $0$ mean multivariate normal variables, and so they are also $0$ mean multivariate normal variables.
Hence we can calculate this mutual information using the well known formula for conditional mutual information of multivariate Gaussians:
\begin{lemma}
If $X_1, X_2, X_3$ are jointly Gaussian with covariance matrix $\Sigma_{123}$.
Then,
    \[
    \I(X_1; X_2 | X_3) = \frac{1}{2} \ln \frac{\det \Sigma_{13}\det \Sigma_{23}}{\det\Sigma_{123}\det\Sigma_3},
    \]
    where we use $\Sigma_{ij}$ and $\Sigma_i$ to denote submatrices corresponding to the respective variables $X_i$.
\end{lemma}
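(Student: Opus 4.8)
The plan is to reduce the claim to two standard ingredients: the closed form for the differential entropy of a nondegenerate multivariate Gaussian, $\diffentropy(X) = \tfrac12\ln\!\big((2\pi e)^{d}\det\Sigma_X\big)$ for a $d$-dimensional Gaussian $X$ with covariance $\Sigma_X$, and the elementary identities relating conditional mutual information to joint differential entropies. First I would write
\[
\I(X_1;X_2\mid X_3) = \diffentropy(X_1\mid X_3) + \diffentropy(X_2\mid X_3) - \diffentropy(X_1,X_2\mid X_3),
\]
which is valid for any jointly continuous triple admitting densities, and then expand each conditional entropy via the chain rule $\diffentropy(Y\mid X_3) = \diffentropy(Y,X_3) - \diffentropy(X_3)$ to obtain
\[
\I(X_1;X_2\mid X_3) = \diffentropy(X_1,X_3) + \diffentropy(X_2,X_3) - \diffentropy(X_1,X_2,X_3) - \diffentropy(X_3).
\]

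Next I would substitute the Gaussian entropy formula into each of the four terms, using that $(X_1,X_3)$, $(X_2,X_3)$, $(X_1,X_2,X_3)$, and $X_3$ are all jointly Gaussian — being linear images or marginals of the jointly Gaussian vector $(X_1,X_2,X_3)$ — with covariance matrices $\Sigma_{13}$, $\Sigma_{23}$, $\Sigma_{123}$, and $\Sigma_3$. The additive constants $\tfrac12\ln(2\pi e)^{d}$ carry dimensions $d_1+d_3$, $d_2+d_3$, $d_1+d_2+d_3$, and $d_3$, which enter with signs $+,+,-,-$ and therefore cancel exactly; what remains is $\tfrac12\big(\ln\det\Sigma_{13}+\ln\det\Sigma_{23}-\ln\det\Sigma_{123}-\ln\det\Sigma_3\big)=\tfrac12\ln\frac{\det\Sigma_{13}\det\Sigma_{23}}{\det\Sigma_{123}\det\Sigma_3}$, which is the claimed identity.

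The computation is routine, so there is no real obstacle; the only points needing a word of care are (i) that the decomposition of $\I(\,\cdot\,;\cdot\mid\cdot)$ and the entropy chain rule used above are legitimate here precisely because all the relevant joint densities exist, and (ii) the degenerate case where some covariance matrix is singular. In the LMS application the matrices of interest are positive definite, since the observation and quantization noises have strictly positive variance, so I would state the lemma under the standing assumption that $\Sigma_{123}$, and hence each of its principal submatrices, is positive definite; the singular case, if wanted, follows by perturbing $\Sigma_{123}$ by $\epsilon I$ and letting $\epsilon\to 0^{+}$, as both sides are continuous on the positive-definite cone. An equivalent route, which I would mention but not pursue in detail, is to apply the unconditional formula $\I(X_1;X_2)=\tfrac12\ln\frac{\det\Sigma_1\det\Sigma_2}{\det\Sigma_{12}}$ to the conditional laws of $(X_1,X_2)$ given $X_3$; this works because for jointly Gaussian vectors the conditional covariances are Schur complements that do not depend on the conditioning value, so $\I(X_1;X_2\mid X_3)$ equals $\I$ of those conditional Gaussians, but the entropy-based argument above is cleaner and avoids invoking that fact.
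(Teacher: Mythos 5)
Your proof is correct. Note that the paper itself offers no proof of this lemma --- it simply invokes it as ``the well known formula for conditional mutual information of multivariate Gaussians'' --- so there is nothing to compare against; your derivation via $\I(X_1;X_2\mid X_3) = \diffentropy(X_1,X_3)+\diffentropy(X_2,X_3)-\diffentropy(X_1,X_2,X_3)-\diffentropy(X_3)$ together with the Gaussian entropy formula is the standard argument and correctly fills that gap. Your care about nondegeneracy is also warranted in context: the paper applies the lemma with strictly positive observation and quantization noise, so the relevant covariance matrices are indeed positive definite.
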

We can use the covariances in Lemma~\ref{lem:lms-cov} to calculate these determinants.

\subsection{The Optimal Learning Rate is Independent of Information Capacity}\label{apdx:opt_learning_rate}
In this section we show that for all AR(1) processes, the optimal learning rate is independent of information capacity, and is equal to the optimal learning rate for the noiseless agent.

We begin by proving a lemma with establishes that for the noiseless agent, the optimal learning rate attains the optimal prediction based on the entire history.

\begin{lemma}
    For all AR(1) processes without quantization noise in agent state updates, there exists a learning rate $\alpha^*$ such that under this learning rate, $Y_{t+1}\rightarrow U_t \rightarrow H_t$ is Markov.
    In particular, this means that $\alpha^*$ is an optimal learning rate.
\end{lemma}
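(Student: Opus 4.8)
The plan is to exhibit a stepsize $\alpha^*$ under which the agent state $U_t$ is (in steady state, up to a fixed scaling) exactly the Kalman-filter estimate $\E[\theta_t\mid Y_{1:t}]$, hence a sufficient statistic of $H_t=Y_{1:t}$ for predicting $Y_{t+1}$. Since the agent already predicts $P_t(\cdot)=\Prob(Y_{t+1}\in\cdot\mid U_t)$, its inferential error vanishes identically, so by Theorems~\ref{th:error-decomposition} and \ref{th:error-information} the prediction error equals the informational error $\I(Y_{t+1};H_t\mid U_t)$; making $U_t$ sufficient drives this to $0$, which is the minimum possible prediction error, so $\alpha^*$ is optimal. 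Thus it suffices to find $\alpha^*$ for which $Y_{t+1}\to U_t\to H_t$ is Markov.

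First I would identify the optimal full-history predictor. The pair $(\theta_t,Y_t)$ is a scalar linear--Gaussian state-space model: $Y_{t+1}=\theta_t+W_{t+1}$ observes $\theta_t$ with additive $\mathcal{N}(0,\sigma^2)$ noise, and $\theta_{t+1}=\eta\theta_t+V_{t+1}$. Hence $\Prob(Y_{t+1}\in\cdot\mid H_t)$ is Gaussian with a deterministic variance and mean $g_t:=\E[\theta_t\mid Y_{1:t}]=\E[Y_{t+1}\mid Y_{1:t}]$, so $g_t$ is a sufficient statistic. Running the Kalman recursion for this model and passing to the steady state (the filtered variance converges to the positive root $P$ of the scalar Riccati equation $P=\eta^2 P\sigma^2/(P+\sigma^2)+1-\eta^2$) yields the first-order recursion
\[
g_t=\beta\,g_{t-1}+\gamma\,Y_t,\qquad \beta:=\eta(1-K),\ \ \gamma:=\eta K,\ \ K:=\tfrac{P}{P+\sigma^2}\in(0,1),
\]
so $\beta\in[0,1)$ and $\gamma>0$ for $\eta\in[0,1)$. (Equivalently, $Y_t$ is an ARMA$(1,1)$ process whose optimal one-step linear predictor is first order.)

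Next I would match the agent. Without quantization noise the update is $U_{t+1}=(1-\alpha)U_t+\alpha Y_{t+1}$, that is, $U_t=(1-\alpha)U_{t-1}+\alpha Y_t$ --- the same exponential-smoothing recursion. Set $\alpha^*:=1-\beta=1-\eta(1-K)\in(0,1)$ and $c:=\alpha^*/\gamma$. Then $(c\,g_t)$ and $(U_t)$ satisfy the identical recursion driven by the identical inputs, so in the stationary regime $U_t=c\,g_t$. Because $c$ is a fixed nonzero constant this is a bijection, so $U_t$ and $g_t$ generate the same $\sigma$-algebra; hence $Y_{t+1}\perp H_t\mid U_t$, and $P_t=\Prob(Y_{t+1}\in\cdot\mid U_t)=\Prob(Y_{t+1}\in\cdot\mid H_t)=P^*_t$, so $\E[\KL(P^*_t\|P_t)]=0$ and $\alpha^*$ is optimal. (One can cross-check this against the closed-form posterior of Theorem~\ref{thm:lms-posterior-pred}: for $\delta=0$ and $\alpha=\alpha^*$ its mean and variance coincide with those of $\Prob(Y_{t+1}\in\cdot\mid H_t)$.)

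I expect the main difficulty to be bookkeeping rather than any hard estimate. One must be careful with the timing --- because $Y_{t+1}$ reports $\theta_t$, the filter carries the extra factor $\eta$, and it has to be checked that the steady-state recursion is genuinely first order before the match is legitimate. More importantly, $U_t=c\,g_t$ is exact only under a matched (stationary) initialization; for a generic $U_0$ one has $U_t-c\,g_t=\beta^t(U_0-c\,g_0)$ plus a further geometrically decaying term from the converging Kalman gain, so the Markov property --- and the vanishing of the informational error --- holds in the $t\to\infty$ limit, which is the steady-state regime in which the lemma is intended. Finally, I would note that $\alpha^*=1-\eta(1-K)$ lies in $(0,1)$ exactly for $\eta\in[0,1)$ (the range used in the didactic example; the case $\eta=0$ is trivial, and for $\eta<0$ the matching stepsize would exceed $1$, so the statement should be read for nonnegative $\eta$, or for unconstrained stepsizes).
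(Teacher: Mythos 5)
Your proposal is correct and reaches the same stepsize as the paper, but by a genuinely different route. The paper's proof picks the root $\alpha'^* \in (0,1)$ of the equation $\alpha' + 1/\alpha' = \eta + 1/\eta + 1/(\sigma^2\eta) - \eta/\sigma^2$ and then verifies directly, using the stationary covariance formulas of Lemma~\ref{lem:lms-cov}, that $\eta\,\E[U_tY_{t-k}] - \E[U_tY_{t-k-1}] = 0$, hence $\E[U_tY_{t-k}] \propto \eta^{k+1} = \E[Y_{t+1}Y_{t-k}]$; by joint Gaussianity this makes $U_t$ proportional to the linear projection of $Y_{t+1}$ onto $\mathrm{span}(Y_{1:t})$ and yields the Markov property. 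You instead derive the steady-state Kalman recursion $g_t = \eta(1-K)g_{t-1} + \eta K Y_t$ for $g_t = \E[\theta_t\mid Y_{1:t}]$ and match it to the exponential-smoothing form of LMS by choosing $\alpha^* = 1-\eta(1-K)$, so that $U_t$ is a fixed nonzero multiple of the sufficient statistic $g_t$. One can check that your $\alpha'^* = \eta\sigma^2/(P+\sigma^2)$, with $P$ solving your Riccati equation, satisfies exactly the paper's quadratic, so the two characterizations agree. Your route is more conceptual and makes explicit the remark in Section~\ref{se:lms} that the optimally tuned LMS agent is a steady-state Kalman filter, at the cost of some care with timing conventions and initialization; the paper's route is a self-contained algebraic verification that leans on the already-derived covariances and sidesteps the filter entirely. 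Both arguments, like the paper's, are steady-state statements (the paper explicitly assumes joint stationarity with the limiting covariances after Lemma~\ref{lem:lms-cov}), so your caveat about the exactness of $U_t = c\,g_t$ only in the $t\to\infty$ or matched-initialization regime is consistent with the intended reading rather than a gap.
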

\begin{proof}
Suppose that we have a AR(1) process parameterized by $\eta<1$ and $\sigma$.
Consider the equation for ${\alpha'}$
\begin{equation}\label{eq:lms-beta-optimal}
    {\alpha'} + \frac{1}{{\alpha'}} =
    \eta + \frac{1}{\eta} + \frac{1}{\sigma^2\eta} - \frac{\eta}{\sigma^2}.
\end{equation}
When $\eta<1$, the right hand side is strictly greater than $2$.
Hence, there exists an unique solution of ${\alpha'}$ in $(0, 1)$, which we call ${\alpha'}^*$.
We claim that $\alpha^*=1-{\alpha'}^*$ makes $Y_{t+1}\rightarrow U_t \rightarrow H_t$ Markov.

Since these are jointly Gaussian with mean $0$, it suffices to show that $U_t$ is proportional to the projection of $Y_{t+1}$ onto the linear subspace spanned by $H_t=Y_{1:t}$.
Equivalently, we just need to verify that $\E[U_t Y_{t-k}]$ is proportional to $\E[Y_{t+1} Y_{t-k}]$ for $k=0,...,t-1$.
From Lemma~\ref{lem:lms-cov}, we know that $\E[Y_{t+1} Y_{t-k}]=\eta^{k+1}$.
On the other hand,
\begin{align*}
\eta\E[U_t Y_{t-k}] -\E[U_t Y_{t-k-1}]
    &= \eta \left( 
    \alpha^* \sigma^2{{\alpha'}^*}^k + 
    \frac{\alpha^* (1-\eta^2)}{\eta - {{\alpha'}^*}}\left(
    \frac{\eta^{k+1}}{1-\eta^2} - \frac{{{\alpha'}^*}^{k+1}}{1-\eta{{\alpha'}^*}}
    \right) \right)
    \\
    &\qquad - \left(\alpha^* \sigma^2{{\alpha'}^*}^{k+1} + 
    \frac{\alpha^* (1-\eta^2)}{\eta - {{\alpha'}^*}}\left(
    \frac{\eta^{k+2}}{1-\eta^2} - \frac{{{\alpha'}^*}^{k+2}}{1-\eta{{\alpha'}^*}}
    \right)
    \right)
    \\
    &=
    \alpha^*{{\alpha'}^*}^k \left(
    \eta\sigma^2 - {\alpha'}^*\sigma^2
    - \frac{1-\eta^2}{\eta-{\alpha'}^*} \frac{\eta{\alpha'}^*-{{\alpha'}^*}^2}{1-\eta{\alpha'}^*}
    \right)
    \\
    &= \frac{\alpha^*{{\alpha'}^*}^k}{1-\eta{\alpha'}^*} \left(
    \left(\eta\sigma^2 - {\alpha'}^*\sigma^2\right) (1-\eta{\alpha'}^*)
    - (1-\eta^2){{\alpha'}^*}
    \right)
    \\
    &= \frac{\eta\sigma^2\alpha^*{{\alpha'}^*}^{k+1}}{1-\eta{\alpha'}^*} \left(
    {\alpha'}^* + \frac{1}{{\alpha'}^*}
    - \frac{1}{\eta} - \eta  - \frac{1}{\eta\sigma^2} + \frac{\eta}{\sigma^2}
    \right)
    \\
    &=0 & \text{(by Equation~\ref{eq:lms-beta-optimal})}
    .
\end{align*}
Thus, $\E[U_tY_{t-k}]=\eta^k \E[U_t Y_t]$ is proportional to $\E[Y_{t+1} Y_{t-k}]=\eta^{k+1}$.
Therefore, we have shown that $Y_{t+1}\rightarrow U_t \rightarrow H_t$ is Markov.
This means that $U_t$ is a sufficient statistic of $H_t$ with respect to predicting $Y_{t+1}$, which implies that $\alpha^*$ is an optimal learning rate.
\end{proof}

\lmsLearningRateIndep*

\begin{proof}
    For any AR(1) process parameterized by $\eta$ and $\sigma$, assume that in the noiseless case ($\delta=0$), there exists a learning rate $\alpha^*$ such that under this learning rate, $Y_{t+1}\rightarrow U_t(\alpha^*, 0) \rightarrow H_t$ is Markov, where we use $U_t(\alpha, \delta)$ to denote the agent state $U_t$ under the learning rate $\alpha$ and quantization noise variance $\delta^2$.

    Now consider a bound on capacity of the form $\I(H_t, U_t)\le C$. This is enforced by choosing a quantization noise variance as a function of $\alpha$, $\delta:=\delta(\alpha)$, that attains $\I(H_t, U_t)\le C$.
    Since the updates are linear, 
    \[
    U_t(\alpha, \delta) = U_t(\alpha, 0) + C_\alpha \delta Z,
    \]
    where $Z$ is a standard normal independent of $H_t$ and $C_\alpha$ is a constant that depends on $\alpha$.

    The learning rate that achieves minimal total error is
    \begin{align*}
        \argmin_\alpha \I(Y_{t+1} ; H_t | U_t(\alpha, \delta))
        &= \argmin_\alpha \left(
        \I(Y_{t+1} ; H_t , U_t(\alpha, \delta))
        - \I(Y_{t+1} ;  U_t(\alpha, \delta))
        \right)
        & \text{(chain rule.)}
        \\
        &= \argmin_\alpha \left(
        \I(Y_{t+1} ; H_t)
        - \I(Y_{t+1} ;  U_t(\alpha, \delta))
        \right)
        & \text{(since $U_t(\alpha, \delta)\perp Y_{t+1}|H_t$.)}
        \\
        &= \argmax_\alpha \I(Y_{t+1} ;  U_t(\alpha, \delta))
    \end{align*}
    Since $U_t(\alpha,\delta)\rightarrow H_t \rightarrow U_t(\alpha^*, 0)$ and 
    $H_t \rightarrow U_t(\alpha^*, 0) \rightarrow Y_{t+1}$ are Markov,
    $U_t(\alpha,\delta) \rightarrow U_t(\alpha^*, 0)\rightarrow Y_{t+1}$ is also Markov.
    So we can apply Lemma~\ref{lem:lms-mutual-inf-markov} on $\I(Y_{t+1}; U_t(\alpha, \delta))$ with $Y=U_t(\alpha^*, 0)$.
    By the lemma, $\I(Y_{t+1}; U_t(\alpha, \delta))$ is a strictly increasing function of $\I(Y_{t+1}; U_t(\alpha^*, 0))$ and $\I(U_t(\alpha, \delta); U_t(\alpha^*, 0))$.
    
    The first term $\I(Y_{t+1}; U_t(\alpha^*, 0))$ is a constant independent of $\alpha$ and $\delta$. The second term is bounded above by the data processing inequality via
    \begin{align*}
    \I(U_t(\alpha, \delta); U_t(\alpha^*, 0)) 
    \le \I(U_t(\alpha, \delta); H_t) 
    \le C.
\end{align*}
In particular, equality is attained when $\alpha=\alpha^*$, since 
\begin{align*}
    C &= \I(U_t(\alpha^*, \delta(\alpha^*); H_t)
    & \text{(definition of $\delta(\alpha^*)$)}
    \\
    &= h(U_t(\alpha^*, \delta(\alpha^*))
    - h(U_t(\alpha^*, \delta(\alpha^*) | H_t)
    \\
    &= h(U_t(\alpha^*, \delta(\alpha^*))
    - h(U_t(\alpha^*, \delta(\alpha^*) | H_t, U_t(\alpha^*, 0))
    & \text{(since $U_t(\alpha^*, 0)$ is a function of $H_t$)}
    \\
    &= h(U_t(\alpha^*, \delta(\alpha^*))
    - h(C_{\alpha^*} \delta(\alpha^*) Z | H_t, U_t(\alpha^*, 0))
    \\
    &= h(U_t(\alpha^*, \delta(\alpha^*))
    - h(C_{\alpha^*} \delta(\alpha^*) Z | U_t(\alpha^*, 0))
    & \text{(since $H_t \perp Z \ |\ U_t(\alpha^*, 0)$.)}
    \\
    &= h(U_t(\alpha^*, \delta(\alpha^*))
    - h(U_t(\alpha^*, \delta(\alpha^*) | U_t(\alpha^*, 0))
    \\
    &= \I(U_t(\alpha^*, \delta(\alpha^*) ; U_t(\alpha^*, 0))
    .
\end{align*}
Thus, we have shown that for all information capacity $C=\I(U_t; H_t)$, $\alpha^*$ is the learning rate that minimizes overall error.
\end{proof}

\begin{lemma}
    \label{lem:lms-mutual-inf-markov}
    Suppose that $(X, Y, Z)$ is a $3$-dimensional joint Gaussian satisfying $X\rightarrow Y\rightarrow Z$ is Markov, $\I(X; Y)>0$, and $\I(X;Z)>0$, then
    $\I(X;Z)$ can be expressed in terms of $\I(X; Y)$ and $\I(Z; Y)$ through the following formula
    \[
    \I(X;Z) = - \frac{1}{2} \log \left(
    1 - \left(1 - e^{-2\I(X;Y)}\right)
    \left(1 - e^{-2\I(Z;Y)}\right)
    \right).
    \]
    In particular, this is a strictly increasing function in $\I(X;Y)$ and $\I(Z;Y)$.
\end{lemma}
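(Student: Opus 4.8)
The plan is to reduce the whole statement to a computation with correlation coefficients. Since mutual information, the Markov property $X\to Y\to Z$, and the claimed identity are all invariant under component-wise affine transformations of $(X,Y,Z)$, I would first assume without loss of generality that $X,Y,Z$ are scalar, mean-zero and unit-variance, and write $\rho_{XY}=\E[XY]$, $\rho_{YZ}=\E[YZ]$, $\rho_{XZ}=\E[XZ]$ for the pairwise correlations. The one standard fact I would invoke is that, for a mean-zero unit-variance jointly Gaussian pair $(U,V)$ with correlation $\rho_{UV}$, a short differential-entropy computation gives $\I(U;V)=h(U)-h(U\mid V)=-\tfrac12\ln(1-\rho_{UV}^2)$, i.e.\ $\rho_{UV}^2=1-e^{-2\I(U;V)}$ (working in nats, as elsewhere in the monograph).

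The substantive step is to extract from the Markov property the multiplicative relation $\rho_{XZ}=\rho_{XY}\,\rho_{YZ}$. Because $(X,Y,Z)$ is jointly Gaussian and $X\perp Z\mid Y$, conditioning on $Z$ in addition to $Y$ cannot change the conditional mean of $X$, so $\E[X\mid Y,Z]=\E[X\mid Y]=\rho_{XY}Y$; multiplying by $Z$ and taking expectations via the tower property yields $\rho_{XZ}=\E\big[\E[X\mid Y,Z]\,Z\big]=\rho_{XY}\E[YZ]=\rho_{XY}\rho_{YZ}$. Equivalently, I could phrase this as the vanishing of the partial correlation $\rho_{XZ\cdot Y}$, i.e.\ the zero off-diagonal entry of the precision matrix forced by conditional independence. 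Squaring and substituting the entropy dictionary then gives $\rho_{XZ}^2=\rho_{XY}^2\rho_{YZ}^2=(1-e^{-2\I(X;Y)})(1-e^{-2\I(Z;Y)})$, and applying $\I(X;Z)=-\tfrac12\ln(1-\rho_{XZ}^2)$ produces exactly the claimed formula. The hypotheses $\I(X;Y)>0$ and $\I(X;Z)>0$ ensure $\rho_{XY},\rho_{XZ}\neq0$ (hence $\rho_{YZ}\neq0$), so the chain is genuinely three-way dependent, and finiteness of the mutual informations corresponds to $\rho^2<1$ throughout, so no logarithm argument degenerates.

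For the monotonicity claim I would set $a=e^{-2\I(X;Y)}\in(0,1)$ and $b=e^{-2\I(Z;Y)}\in(0,1)$, so the right-hand side is $-\tfrac12\ln(a+b-ab)$; since $\partial_a(a+b-ab)=1-b>0$ and $\partial_b(a+b-ab)=1-a>0$, the argument is strictly increasing in $a$ and in $b$, while $a,b$ are strictly decreasing in $\I(X;Y),\I(Z;Y)$ respectively and $t\mapsto-\tfrac12\ln t$ is strictly decreasing; composing, $\I(X;Z)$ is strictly increasing in each of $\I(X;Y)$ and $\I(Z;Y)$.

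The only place I expect any friction is the identity $\rho_{XZ}=\rho_{XY}\rho_{YZ}$: one must genuinely use joint (not merely pairwise) Gaussianity so that $X\perp Z\mid Y$ collapses $\E[X\mid Y,Z]$ to $\E[X\mid Y]$. This is classical and can be stated cleanly either through conditional expectations as above or through the precision matrix; everything else is bookkeeping with the $\I\leftrightarrow\rho$ correspondence.
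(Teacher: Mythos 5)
Your proof is correct and follows essentially the same route as the paper's: reduce to scalar, mean-zero, standardized jointly Gaussian variables, apply the dictionary $\I=-\tfrac12\ln(1-\rho^2)$, and use the Markov property to obtain the multiplicative relation among the three pairwise dependencies before substituting back. The only cosmetic difference is that the paper encodes the Markov structure as an additive independent-noise decomposition $X=Y+X'$, $Z=Y+Z'$ and computes with the noise variances $x^2,z^2$ (for which $\tfrac{1}{1+x^2}=\rho_{XY}^2$), whereas you derive the partial-correlation identity $\rho_{XZ}=\rho_{XY}\rho_{YZ}$ directly from $\E[X\mid Y,Z]=\E[X\mid Y]$; the two are equivalent, and your monotonicity argument matches the claim.
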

\begin{proof}
    Since invertible linear transformations do not change mutual information, we can assume without loss of generality that $X,Y,Z$ are all $0$ mean, and $\E[XY]=\E[YZ]=\E[Y^2]=1$.
    Since $X\rightarrow Y\rightarrow Z$ is Markov, we can find $X'$ and $Z'$ such that 
    $X=Y+X'$, $Z=Y+Z'$, and $X',Y,Z'$ are independent zero-mean Gaussians.
    Suppose that $\mathbb{V}[X']=x^2$, and $\mathbb{V}[Z']=z^2$.
    Then, the mutual information $\I(X;Z)$ can be written as
    \begin{align*}
    \I(X;Z) &=
    \frac{1}{2} \log \frac{
    \E[X^2]\E[Z^2]
    }{\E[X^2]\E[Z^2] - \E[XZ]^2}
    \\
    &= \frac{1}{2} \log \frac{
    (1+x^2)(1+z^2)
    }{(1+x^2)(1+z^2) - 1}
    \\
    &= -\frac{1}{2} \log \left(
    1 - \frac{1}{1+x^2}\frac{1}{1+z^2}
    \right).
  \end{align*}
  Similarly, the mutual information $\I(X;Y)$ simplifies to
  \begin{align*}
    \I(X;Y) &=
    \frac{1}{2} \log \frac{
    \E[X^2]\E[Y^2]
    }{\E[X^2]\E[Y^2] - \E[XY]^2}
    \\
    &= \frac{1}{2} \log \frac{
    (1+x^2)
    }{(1+x^2) - 1}
    \\
    &= \frac{1}{2} \log \frac{
    (1+x^2)
    }{x^2}.
  \end{align*}
  This implies that
  \[
  \frac{1}{1+x^2} = 1 - \exp(-2\I(X;Y)).
\]
By symmetry,
\[
  \frac{1}{1+z^2} = 1 - \exp(-2\I(Z;Y)).
\]
Plugging these back into the formula for $\I(X;Z)$, we get the desired result.
\end{proof}

\subsection{Modifying IDBD to Account for Quantization Noise}
\label{se:capacity-constrained-IDBD-derivation}

As illustrated in Figure~\ref{fig:IDBD-convergence}, the standard version of IDBD does not find the optimal step size $\alpha^*$ subject to a capacity constraint $\I(U_t; H_t) \leq C$.
This is because of the quantization error, with variance that depends on the learning rate $\alpha$ in the agent update rule.
Hence, the update rule for the log learning rate $\beta$ needs to be modified accordingly.

In our context, the standard IDBD update rule of \citet{Sutton1992IDBD} takes the form
\[
\beta_{t+1} = \beta_t - \frac{1}{2} \zeta \frac{\partial (U_t - Y_{t+1})^2}{\partial \beta}
.
\]
Since our updates to $U_t$ are random, we propose to take the expectation over $(U_t - Y_{t+1})^2$:
\[
\beta_{t+1} = \beta_t - \frac{1}{2} \zeta \frac{\partial \E\left[(U_t - Y_{t+1})^2\right]}{\partial \beta}
,
\]
where the expectation is taken over the random quantization noise $Q_t$.
Recall that $U_t$ is calculated from the previous state via
\[
U_{t} = U_{t-1} + \alpha_{t} (Y_{t} - U_{t-1}) + Q_{t}.
\]
Since $Q_t$ is uncorrelated with the other terms, $\E\left[(U_t - Y_{t+1})^2\right]$ simplifies to 
$$\E\left[(U_t - Q_t - Y_{t+1})^2\right] + \E[Q_t^2] = \E\left[(U_t - Q_t - Y_{t+1})^2\right] + \delta_*(\alpha_t)^2.$$
The derivative of the second term with respect to $\beta$ is just
\begin{align*}
    \frac{\partial \left(\delta_*(\alpha_t)^2\right)}{\partial\beta} = \frac{\partial \left(\delta_*(\alpha_t)^2\right)}{\partial\alpha_t} 
    \frac{\partial \alpha_t}{\partial \beta} = \alpha_t 
    \frac{d}{d \alpha} \delta_*^2(\alpha_t).
\end{align*}

In the following derivations, we will use $A\approx B$ to mean that $\E[B]=A$. This approximation allows us to obtain a stochastic gradient whose expectation is equal to the true gradient, $\frac{\partial \E\left[(U_t - Y_{t+1})^2\right]}{\partial \beta}$.
Letting $h_t:=\frac{\partial (U_t-Q_t)}{\partial \beta}$, 
the derivative of the first term with respect to $\beta$ can by simplified via the chain rule:
\begin{align*}
\E\left[ \frac{\partial(U_t -Q_t- Y_{t+1})^2}{\partial \beta} \right] 
    &= \E\left[ \frac{\partial(U_t -Q_t- Y_{t+1})^2}{\partial (U_t-Q_t)}
    \frac{\partial (U_t-Q_t)}{\partial \beta} \right] \\
    &= 
    \E\left[ (U_t -Q_t- Y_{t+1})
    \frac{\partial (U_t-Q_t)}{\partial \beta} \right]
    \\
    &= \E\left[ (U_t - Y_{t+1})
    \frac{\partial (U_t-Q_t)}{\partial \beta} \right]
    & \text{(since $Q_t$ is uncorrelated with $ \frac{\partial (U_t-Q_t)}{\partial \beta}$) }
    \\
    &\approx (U_t - Y_{t+1}) h_t.
\end{align*}
Hence, the update rule for $\beta$ is
$$\beta_{t+1} = \beta_t + \zeta (Y_{t+1} - U_t) h_t - \frac{1}{2} \zeta \alpha_t \frac{d}{d \alpha} \delta_*^2(\alpha_t).$$

As in \citet{Sutton1992IDBD}, $h_t$ can be recursively updated via
\begin{align*}
    h_{t+1} &= \frac{\partial (U_{t+1}-Q_{t+1})}{\partial \beta}
    \\
    &= \frac{\partial (U_t + \alpha_{t+1} (Y_{t+1} - U_t))}{\partial \beta}
    \\
    &= (1 - \alpha_{t+1}) \frac{\partial U_t}{\partial \beta}
    + \alpha_{t+1} \left(Y_{t+1} - U_t \right)
    \\
    &= (1 - \alpha_{t+1}) \E\left[\frac{\partial (U_t - Q_t)}{\partial \beta}\right]
    + \alpha_{t+1} \left(Y_{t+1} - U_t \right)
    \\
    &\approx (1 - \alpha_{t+1}) h_t + \alpha_{t+1} (Y_{t+1} - U_t),
\end{align*}
which is exactly the update rule for $h_t$ without clipping $(1 - \alpha_{t+1})$.

\section{Types of Nonstationarity in Continual Supervised Learning}\label{app:types_of_nonstationarity}
\henrik{
It is common to study different types of nonstationarity, since each comes with unique challenges. Below is a categorization commonly used in continual supervised learning~\citep{moreno2012unifying}:  
\begin{enumerate}
    \item Covariate shift: the distribution over input features changes over time~\citep{goodfellow2013empirical}. For example, an agent operating an autonomous vehicle may encounter changes in lighting conditions throughout the day.
    \item Label shift: the distribution over labels changes over time~\citep{baby2024online}. In medical diagnosis applications, which illnesses are common may change over time, with certain infections like the flu becoming more prevalent in colder weather.
    \item Concept shift: the function from inputs to labels changes over time. For example, in financial modeling, the relationship between market variables and stock prices may change.
\end{enumerate}

While real-world continual learning problems may feature a combination of these three kinds of nonstationarity, explicitly considering them in isolation can help better reveal the capabilities of continual learning agents.
}

\section{Case Studies}
\subsection{Continual Supervised Learning}\label{apx:sl-experiments}

\textbf{Environment}

On our modified Permuted MNIST environment, we split all data into two subsets, one used to generate the development sequence and the second used to generate evaluation sequences. Each subset contains $100$ permutations, and the two subsets do not share any permutations. Consequently, there is no overlap in permutations between the development sequence and the evaluation sequences. Each permutation has $400$ unique data pairs. 

The sequence of data pairs the agent sees is generated from the environment as follows. At each timestep, the agent sees a batch of $b_\text{env}$ data pairs from the environment. In our experiments, we set the environment batch size $b_\text{env}$ to be 16. These $16$ data pairs are randomly sampled from the $400$ unique data pairs for the current permutation. We specify a permutation duration for each experiment (either $2,000$ or $20,000$ timesteps). This permutation duration specifies how many timesteps data pairs from a permutation arrive before the next permutation begins. Note that if the permutation duration is $2,000$, then each data pair from every permutation is seen by the agent approximately $\frac{2,000 \text{ timesteps } * 16 \text{ number of data pairs per timestep  }}{400 \text{ number of unique data pairs per permutation }} = 80$ times.

\begin{table}[h]
\centering
\begin{tabular}{|l|l|}
\hline
Parameter & Value  \\ 
\hline
Number of permutations & 100 \\
Number of unique data pairs per permutation  & 400  \\ 
Number of repetitions of the first permutation & 100 \\
Environment batch size & 16 \\
Learning rate  & 0.01  \\ 
Replay batch size & 16  \\ 
\hline
\end{tabular}
\caption{Environment parameters.}
\label{tab:env_params}
\end{table}

\textbf{Agents}

For all agents, we performed a sweep over learning rates in the set $\{0.001, 0.01, 0.1\}$ on the development sequence. We found $0.01$ to result in the largest average reward on the development sequence for all agents. For all runs on the evaulation sequences, we therefore set the learning rate to be $0.01$.

\begin{table}[h]
\centering
\begin{tabular}{|l|l|}
\hline
Hyperparameter & Value  \\ 
\hline
Learning rate  & 0.01  \\ 
Replay batch size & 16  \\ 
\hline
\end{tabular}
\caption{Agent hyperparameters common to all experiments.}
\label{tab:agent_hyperparams}
\end{table}

\saurabh{
\subsubsection{A Brief Note on Information Transfer in Permuted MNIST}\label{app:forward_transfer}
In this section, we discuss how a computationally unrestricted agent on Permuted MNIST may be implemented. This discussion gives an example of how an agent's constraints may affect the degree to which it can leverage information learned on previous tasks to quickly learn new tasks.

Suppose the agent is computationally unconstrained, and further suppose that the agent has learned the first task. For instance, we can imagine that the agent stores all image-label pairs. We can now ask: how much new information is there for the agent to acquire in the next task? It is sufficient for the agent to learn which permutation of the image pixels and, in our modified Permuted MNIST problem, which permutation of the labels the task corresponds to. Therefore, we can split the information to be acquired into two parts:
\begin{enumerate}
    \item Which image permutation has been applied. Since there are $n!$ possible permutations, the number of nats is $\text{ln }n!$.
    \item Which label permutation has been applied. Since there are $10!$ possible label permutations, the number of nats is $\text{ln }10!$.
\end{enumerate}
A computationally unrestricted agent can in principle acquire these bits very quickly, since each new image-label pair will narrow down the number of possible permutations significantly. Once it has acquired these bits, it can make perfect predictions through the following procedure: match each new image with the corresponding stored image-label pair from the first task to identify the correct label. 

However, for agents that are constrained in memory or compute, this approach is infeasible. For instance, if the first task's dataset is very large, it becomes infeasible for the agent to store all the image-label pairs. Additionally, for sufficiently large datasets, the compute required to search over the stored dataset for the purpose of finding a matching image to make a prediction becomes too large. We note that the dataset may have to be quite large for the compute bottleneck to kick in. Furthermore, the nature of the compute budget matters: for instance, what the cost of floating point operations is relative to the cost of memory access affects to what extent the agent can make effective use of its memory.
}

\subsection{Continual Learning with Delayed Consequences}
\label{apdx:continual-delayed}

\textbf{Environment}

The reward for transitioning to the goal state is computed as follows. Each time the MDP is updated, the reward in the goal state is recomputed. For the MDP at timestep $t$, it is computed such that the average long-term reward of the optimal MDP policy is approximately $.5$. This is done in three steps: First, we compute the optimal action value function for the MDP at timestep $t$ assuming a discount factor of $.9$. This is done using value iteration. The greedy actions of that action value function induce a stationary distribution over the states. We compute this stationary distribution. Let the value of this stationary distribution for the goal state be $d_g$. We set the reward in the goal state to be $0.5 / d_g$. This ensures that the average reward is approximately $0.5$ for the MDP.

\textbf{Hyperparameter Sweep}

We set $\gamma=0.9$ and sweep over parameters as follows:
\begin{table}[h]
\centering
\begin{tabular}{|ll|l|}
\hline
Hyperparameter & Value  \\ 
\hline
Step size $\alpha$ & 0.025, 0.05, 0.1, 0.15, 0.2, 0.3, 0.4, 0.6, 0.8   \\ 
Optimistic boost $\zeta$ & 0.00001, 0.00005, 0.0001, 0.0002, 0.0004, 0.0006, 0.010 \\

Discount factor $\gamma$ & 0.90 \\
\hline
\end{tabular}
\label{tab:my_label}
\end{table}

We average results over 8 seeds, with each seed running for 250,000 timesteps.

\subsection{Meta-Gradient Derivation for Continual Auxiliary Learning}
\label{apdx:continual-auxiliary-learning-meta-grads}
To update $\hat{A}_t$, we take the derivative of the loss and perform gradient descent.
We discovered that convergence is the fastest when we pretend that $\hat{A}_t\hat{\theta}_t'$ only depends on $\hat{A}_t$ through $\hat{\theta}_t'$, or in other words, when we put a stop-gradient on $\hat{A}_t$ in $\hat{A}_t\hat{\theta}_t'$.
With this in mind, we derive the update rules for $\hat{A}_t$:
\begin{align*}
    \hat{A}_{t+1} & = \hat{A}_{t} - \beta \frac{d L_t}{d \hat{A}} \\
    &\overset{(a)}{=} \hat{A}_{t} - \beta \frac{\partial L_t}{\partial \hat{\theta}_t'}
    \frac{d \hat{\theta}_t'}{d \hat{A}}
    \\
    &= \hat{A}_{t} - \beta g_t
    \frac{d \hat{\theta}_t'}{d \hat{A}}
    \\
    &\overset{(b)}{=} \hat{A}_{t} - \beta g_t
    h_t
    ,
\end{align*}
where in (a) we discard the term $\frac{\partial L_t}{\partial \hat{A}}$,
and in (b) we use $h_t$ to denote $\frac{d \hat{\theta}_t'}{d \hat{A}}$.
So $h_0=0$, and for $t\ge 1$, $h_t$ is recursively updated via
\begin{align*}
    h_t &= \frac{d \hat{\theta}_t'}{d \hat{A}}
    \\
    &= \frac{d (\mu \hat{\theta}_{t-1}' - \alpha \mu g_{t-1})}{d \hat{A}}
    \\
    &= \mu h_{t-1} - \alpha \mu \frac{d g_{t-1}}{d \hat{A}}
    \\
    &= \mu h_{t-1} - \alpha \mu \frac{\partial g_{t-1}}{\partial \hat{A}} - \alpha \mu \frac{\partial g_{t-1}}{\partial \hat{\theta}_{t-1}'} \frac{d \hat{\theta}_{t-1}'}{d \hat{A}}
    \\
    &\overset{(a)}{=}
    \mu h_{t-1} - \alpha \mu \left(
    -Y_{t} + (1 - Y_{t}) \odot \exp(\hat{A}_{t-1} \hat{\theta}_{t-1}')
    \right)
    \oslash \left(1+\exp(\hat{A}_{t-1} \hat{\theta}_{t-1}')\right) \\
    &\qquad\qquad - \alpha \mu \hat{A}_{t-1}^T \left[
    \exp(\hat{A}_{t-1} \hat{\theta}_{t-1}')
    \odot \hat{A}_{t-1} \oslash \left(1+\exp(\hat{A}_{t-1} \hat{\theta}_{t-1}')\right)^{\circ 2} \right] h_{t-1} 
\end{align*}
where in (a) we discard the term 
$- \alpha \mu \left[\exp(\hat{A}_{t-1} \hat{\theta}_{t-1}') \odot \hat{A}_{t-1} \oslash  \left(1+\exp(\hat{A}_{t-1} \hat{\theta}_{t-1}')\right)^{\circ 2} \right] \hat{\theta}_{t-1}' $, since we put a stop gradient on $\hat{A}_{t-1}$ in $\hat{A}_{t-1}\hat{\theta}_{t-1}'$.
Thus we have derived the update rules presented in Section~\ref{sec:continual_auxiliary_learning}.

\end{document}